\newtheorem{theorem}{Theorem}
\newtheorem{corollary}{Corollary}
\newtheorem{lemma}{Lemma}
\newtheorem{assumption}{Assumption}[section]
\newtheorem{remark}{Remark}[section]
\newtheorem{example}{Example}[section]
\title{Reliable and scalable variable importance estimation via warm-start and early stopping}
\author{ Zexuan Sun \\
	 Department of Statistics\\
  University of Wisconsin, Madison\\
  Madison, WI 53706 \\
  \texttt{zexuan.sun@wisc.edu} \\
	\And
	 Garvesh Raskutti \\
  Department of Statistics\\
  University of Wisconsin, Madison\\
  Madison, WI 53706 \\
  \texttt{raskutti@stat.wisc.edu} \\
}
\begin{document}
\maketitle
\begin{abstract}
	As opaque black-box predictive models become more prevalent, the need to develop interpretations for these models is of great interest. The concept of \textit{variable importance} and \emph{Shapley values} are interpretability measures that applies to any predictive model and assesses how much a variable or set of variables improves prediction performance. When the number of variables is large, estimating variable importance presents a significant computational challenge because re-training neural networks or other black-box algorithms requires significant additional computation. In this paper, we address this challenge for algorithms using gradient descent and gradient boosting (e.g.  neural networks, gradient-boosted decision trees). By using the ideas of early stopping of gradient-based methods in combination with warm-start using the \emph{dropout} method, we develop a scalable method to estimate variable importance for any algorithm that can be expressed as an \textit{iterative kernel update equation}. Importantly, we provide theoretical guarantees by using the  theory for early stopping of kernel-based methods for  neural networks with sufficiently large (but not necessarily infinite) width and gradient-boosting decision trees that use symmetric trees as a weaker learner. We also demonstrate the efficacy of our methods through simulations and a real data example which illustrates the computational benefit of early stopping rather than fully re-training the model as well as the increased accuracy of our approach.
\end{abstract}

 \keywords{early stopping, wart-start, variable importance,  Shapley value,
 iterative kernel update equation, neural network, neural tangent kernel, gradient boosting decision tree }

\section{Introduction}




The ubiquitous application of predictive modeling in critical sectors such as judiciary, healthcare, and education necessitates models that are not only accurate but also interpretable. The emphasis on interpretability arises from the need to ensure transparency and fairness in decisions that significantly impact human lives. This push towards interpretable models is supported by the recent scholarly debate highlighting the inadequacies of black-box methods in sensitive applications
(see e.g.~\citep{Rudin2019Why,bb2, AllenInterpretable, DuInterpretable, MolnarInterpretable, YuInterpretable}). 

Traditional parametric models such as linear or logistic regression often fail to achieve good prediction performance with complex modern data, leading to a shift towards non-parametric methods, particularly neural networks. Interpretability for neural networks and other black-box approaches remains challenging. There are a broad range of approaches for interpretability measures (see e.g.~\citep{varnn2,  varnn3}) some of which are model-specific (see e.g.~\citep{intro3, intro4}) and some of which are model-agnostic. Model-agnostic variable importance (VI) or feature attribution techniques are increasingly important as they evaluate variable impact independent for the prediction algorithm leaving the choice of prediction algorithm up to the user without interpretability considerations. Retraining models without variables of interest can benchmark VI but poses challenges in high-dimensional settings due to computational demands \citep{intro5}. Alternative methods like \emph{dropout}, which plugs
in input data with dropped features to the full model
trained with all features directly, offer computational feasibility but typically underfit the data \citep{intro6}.

In this paper, we address this challenge by developing an algorithm that attempts to provide the computational advantages of the dropout approach whilst offering similar accuracy to re-training approaches. In particular, we use a \emph{warm-start} with the dropout model and apply \emph{early stopping} to improve fit while avoiding the high computational cost of full re-training. Early stopping has a long history and is widely employed in gradient-based algorithms like neural networks \citep{esnn}, non-parametric regression \cite{esnonpara}, and boosting \citep{bl2,esbtyu,esbs}. As an efficient regularization technique, early stopping has lower computational complexity and can address dropout underfitting, making it suitable for estimating VI. However theoretical guarantees for early stopping in the context of estimating VI remains an open challenge.

In this paper, we propose a general scalable strategy for any gradient-based iterative algorithm to estimate variable importance. Drawing ideas from \citep{esnonpara,lazyvi}, we combine warm-start initialization using the dropout method and early-stopping to estimate the VI efficiently. The key idea is to start the training of a new model without variables of interest from the original (full model) training data and then run out approach with the removed variables for a small number of iterations to reduce computational costs.

\subsection{Our Contributions} 
Our main contributions are summarized as follows:

\begin{itemize}
    \item We propose a general scalable framework with supporting theoretical guarantees to estimate VI efficiently for any  iterative algorithm
    that can be expressed as an \textit{iterative kernel update equation}. Importantly we provide theoretical guarantees for this method by leveraging theory for the early stopping
of kernel-based methods (Theorem~\ref{genthmfix} and \ref{genthmpop}).  

\item Utilizing the \textit{neural tangent kernel} for  neural networks with sufficiently large (but not infinite) width we apply our general theoretical bound to feed-forward neural networks (Section~\ref{SecNN}). Further, we use a well-defined kernel to also adapt our bounds to gradient boosted decision trees (Section~\ref{SecGBDT}). Each of these theoretical results is of independent interest. Moreover, if the VI estimator is constructed using neural network,  the asymptotically normality holds and we can build Wald-type confidence interval accordingly (Section \ref{waldnn}).
 \item As an interesting side product of our theoretical results, we find that under warm-start initialization, the global convergence of Neural tangent kernel still hold and the network behave approximately as linear model under mean square error loss (Section \ref{nnlinour}).
 
\item The theoretical bounds are supported in a simulation. 
We then demonstrate the computational advantages over re-training and the the accuracy advantages over dropout for estimating both variable importance and Shapley values for a both simulated data and an application to understanding the variable importance of predicting flue gas emissions. We also test the coverage probability of the Wald-type CI built with neural network for a simulated dataset.  
\end{itemize}

\subsection{Related work} 

The concept of early stopping has been known for some time \citep{esold1, esold2}. In recent years, theoretical insights have emerged, exploring its values in various contexts, such as classification error in boosting algorithms \citep{esboost1, esbtyu, esboost2}, L2-boosting for regression \citep{bl2, l2boost2}, and gradient algorithms in reproducing kernel Hilbert spaces (RKHS) \citep{grdrkhs1, esboost2, esnonpara, esbs}. Notably, \cite{esbtyu} were the first to prove the optimality of early stopping for L2-boosting on spline classes, though their rule was not data-driven. Later, \cite{esnonpara} proposed a data-dependent stopping rule for nonparametric regression under mean-squared error loss. Additionally, \cite{esbs} extended the scope to a wide range of loss functions and boosting algorithms, deriving stopping rules for various kernel classes. In this paper, we use and adapt these early stopping concepts for estimating variable importance for general gradient-based approaches. To the best of our knowledge this is the first time theoretical guarantees have been provided for estimation variable importance for neural networks and gradient boosting.

More recently, advancements in understanding optimization dynamics in overparameterized models have provided new perspectives relevant to early stopping using machinery of RKHS. Deep learning, for instance, typically involves optimizing overparameterized networks using gradient-based methods, a practice that has achieved remarkable empirical success but presents significant theoretical challenges. A major advance came with \cite{jacot}, who introduced the Neural Tangent Kernel (NTK) and showed that, in the infinite-width limit, the kernel remains constant throughout training, governing the optimization dynamics. This framework has been used to establish quantitative guarantees for the global convergence of gradient descent in overparameterized neural networks \citep{ode, ntkre1, ntkre2, ntkre3}. Further, \cite{linearnn} demonstrated that, in this regime, network parameterization becomes approximately linear, while \citep{ode, ntkre2, cntk2} derived NTKs for convolutional networks, albeit focusing on simpler architectures.
In contrast, analogous kernels for gradient-boosting decision trees (GBDTs) remain less explored. \cite{gdbt} defined a kernel for a specific GBDT algorithm using symmetric trees as weak learners. These findings are exploited in this paper to study overparameterized neural networks and GBDTs through a reproducing kernel Hilbert space (RKHS) lens, potentially linking them to theoretical results on early stopping \citep{esnonpara}.

Variable importance (VI) has been a fundamental topic in statistical modeling, with early work focusing on linear models \citep{varlin1,varlin2} where VI is straightforward to compute and interpret. Classical approaches, such as variance-based methods \citep{varbase}, assess the relative importance of variables by quantifying their contributions to model output variance. Recent advances have extended VI to nonparametric settings \citep{remvass}, enabling confidence interval construction using machine learning techniques \citep{willi}. Algorithm-specific measures have also been developed, with extensive studies on random forests \citep{vargbdt1,vargbdt2,vargbdt3} and neural networks \citep{varnn1,varnn2,varnn3,lazyvi}.
With the rise of complex machine learning models, Shapley values \citep{Shapley} have become a prominent tool for feature importance due to their strong theoretical foundation \citep{shapthm,shapthm2}. However, their high computational cost has driven the development of stochastic estimators \citep{shapstoc1,sample} and efficient algorithms tailored to specific model types, such as tree models \citep{shapspe1} and neural networks \citep{shapspe2,shapspe3,shapspe4}. Comprehensive reviews of Shapley-based methods can be found in \cite{shapreview}. Beyond Shapley values, alternative model-agnostic techniques include feature occlusion \citep{willi,flood}, permutation-based measures \citep{permut1,permut2}, and methods like leave-one-covariate-out (LOCO) method \citep{loco1,loco2} and floodgate \citep{flood}, which quantify predictive performance impact. Other approaches focus on conditional independence tests, including generalized covariance measures (GCM) \citep{other1}, rank-based statistics \citep{other2}, and conditional predictive impact (CPI) \citep{other3} that tests whether the feature of interest provides greater predictive power than a corresponding knockoff feature \citep{other4}. In this paper, we provide a scalable approach for estimating VI and Shapley values when the number of features is large.

The work perhaps most closely related to the work in this paper is prior work in ~\cite{lazyvi} which estimates VI through a regularized local linear approximation in a scalable way. This work applied specifically to feed-forward neural networks with sufficiently large width. In this paper, our early stopping approaches to any gradient-based method (e.g. finite but large-width neural networks and gradient-boosted decision trees) and the more precise analysis provides a broader collection of theoretical results for different types of neural tangent kernels.

\section{Notation and Preliminaries}

We adopt similar notation to what is used in \cite{lazyvi} but extend to subsets of features. Suppose we have data $(\mathbf{X}_i, Y_i), i = 1, \dots, N$ for $ (X, Y) \sim P_0$, where $\mathbf{X}_i \in \mathbb{R}^p$ is the $i$-th $p$-dimensional  co-variate, and $Y_i$ is the $i$-th observed predictor. Let $I \subseteq \{ 1, \dots, p \}$, and $k = |I|$, let $X_{- I } \in \mathbb{R}^{p-k }$
denote the features with $j$-th co-variate, $j\in I$, dropped. For simplicity, we drop the $j$-th co-variate by replacing with its marginal mean $\mu_j = \mathbb{E}(X_j)$. Let $P_0, P_{0,-I}$ be the population distributions for $X$ and $X_{-I}$ and let $P_N, P_{N,-I}$ be the empirical distributions of $X$ and $X_{-I}$. 
We denote $\mathbb{E}_0, \mathbb{E}_{0,-I}$ as the expectation taken with respect to $P_0$ and $P_{0,-I}$ respectively.
For notation simplicity, we use
$\boldsymbol{X}$,
$\boldsymbol{X}^{(I)}$ and $\boldsymbol{Y}$  to denote   
$(\mathbf{X}^{ T}_1, \dots, \mathbf{X}^{ T}_N)^T$,
$(\mathbf{X}^{{(I)} T}_1, \dots, \mathbf{X}^{{(I)} T}_N)^T$    and $(Y_1, \dots, Y_N)^T$, respectively.


Let $f_0$ denote the true function mapping $X$ to the expected value of $Y$ conditional on $X$, and let $f_{0,-I}$ denote the function mapping $X_{-I}$ to the expected value of $Y$ conditional on $X_{-I}$ :
\begin{equation}
    \begin{aligned}
    \label{tf}
f_0(X) & :=\mathbb{E}_0[Y \mid X] ; \\
f_{0,-I}\left(X^{(I)}\right) & :=\mathbb{E}_{0,-I}\left[Y \mid X_{-I}\right] .
\end{aligned}
\end{equation}

We assume that the samples take the following form with respect to the $X$ and $X^{(I)}$:
\begin{equation}
    \begin{aligned}
       Y_i &= f_0( \mathbf{X}_i) + w_i \\
       Y_i &= f_{0,-I}( \mathbf{X}^{(I)}_i) + w^{(I)}_i 
    \end{aligned}
    \label{wdef}
\end{equation}
where $w_i$ and $w^{(I)}_i$ are noise random variables independent of $\mathbf{X}_i$ and $\mathbf{X}^{(I)}_i$ respectively. 



To measure the accuracy of an approximation $\hat{f}$ to its target function $f^*$, we use two different ways: the $L^2(P_{N})$-norm
\begin{equation}
\| \hat{f} - f^*\|_N^2 =  \frac{1}{N} \sum_{i=1}^N\left(\hat{f}(\mathbf{X}^{}_i)-f^* \left(\mathbf{X}^{}_i\right)\right)^2
\end{equation}
which evaluates the functions solely at the observed design points, and $L^2(P)$-norm
\begin{equation}
\| \hat{f} - f^*\|_2^2 = \mathbb{E}\left[
\left(
\hat{f}(\mathbf{X})-f^* \left(\mathbf{X}\right)\right)^2\right]
\end{equation}
which is the standard mean-squared error.

Similar to \citep{willi}, the variable importance (VI) measure is defined in terms of a predictive skill metric. The primary predictive skill measure we consider is the negative mean squared error (MSE) as the loss we use is the least-squares loss:
\begin{equation}
    V(f, P)=-\mathbb{E}_{(X, Y) \sim P}[Y-f(X)]^2.
    \label{nmse}
\end{equation}
And the VI measure is defined as 
\begin{equation}
    \mathrm{VI}_I:=V\left(f_0, P_0\right)-V\left(f_{0,-I}, P_{0,-I}\right).
    \label{videf}
\end{equation}

\subsection{Reproducing Kernel Hilbert Spaces}
 
Leveraging theory for the early stopping
of kernel-based methods requires the use
 use of the machinery of reproducing kernel Hilbert spaces (Aronszajn, 1950; Wahba, 1990; Gu and Zhu, 2001).
We consider a Hilbert space $\mathcal{H} \subset L^2(P_{0,-I})$, meaning a family of functions $g: \mathcal{X} \rightarrow \mathbb{R}$, with $\|g\|_{L^2(P_{0,-I})}<\infty$, and an associated inner product $\langle\cdot, \cdot\rangle_{\mathcal{H}}$ under which $\mathcal{H}$ is complete. The space $\mathcal{H}$ is a reproducing kernel Hilbert space (RKHS) if there exists a symmetric kernel function $\mathbb{K}: \mathcal{X} \times \mathcal{X} \rightarrow \mathbb{R}_{+}$such that: (a) for each $x \in \mathcal{X}$, the function $\mathbb{K}(\cdot, x)$ belongs to the Hilbert space $\mathcal{H}$, and (b) we have the reproducing relation $f(x)=\langle f, \mathbb{K}(\cdot, x)\rangle_{\mathcal{H}}$ for all $f \in \mathcal{H}$. Any such kernel function must be positive semidefinite. 
Furthermore, under appropriate regularity conditions, Mercer's theorem \citep{mercer} assures that the kernel can be expressed as an eigen-expansion given by
\begin{equation}
    \mathbb{K}\left(x, x^{\prime}\right)=\sum_{k=1}^{\infty} \lambda_k \phi_k(x) \phi_k\left(x^{\prime}\right),
    \label{polamb}
\end{equation}
where the sequence of eigenvalues
$\lambda_1 \geq \lambda_2 \geq \lambda_3 \geq \ldots \geq 0$ is non-negative, and $\left\{\phi_k\right\}_{k=1}^{\infty}$ represents the corresponding orthonormal eigenfunctions in $L^2(P)$. The rate at which the eigenvalues decay will be critical to our analysis.

We define the associated \textit{empirical kernel matrix} $K^{(I)} \in \mathbb{R}^{N \times N}$ for kernel 
$\mathbb{K}^{(I)}$ on $X_{-I}$ with entries\footnote{The superscript is to denote the dependence of the kernel on dropped data.}
\begin{equation}
\label{eqn:EmpKer}
     K^{(I)}(i,j) = \frac{1}{N} \mathbb{K}^{(I)}(\mathbf{X}^{(I)}_i, \mathbf{X}^{(I)}_j).
\end{equation} 
The eigenvalues of $K^{(I)}$ converge to the population eigenvalues $(\lambda_i)_{i \in \mathbb{N}}$ in (\ref{polamb}) \citep{eigenconver}.
The \textit{empirical kernel matrix} plays a crucial role in our analysis, serving as the foundation for reparameterizing the gradient update equation and establishing the optimal stopping rule for the early stopping procedure. 

We define our algorithm in terms of general kernel matrices and then later discuss how both neural networks and gradient boosted decision trees can be expressed as iterative kernel updates.

\section{Algorithm: Warm-start plus early stopping}
Our primary focus is on estimating $\text{VI}_I$ where $I \subset \{1,2,...,p\}$ for general 
algorithms that employ gradient updates, such as neural networks and gradient-boosting decision trees.

\paragraph{Warm-start.} 
The first step for estimating 
$\text{VI}_I$ is to obtain an estimation of $f_0$. Consider a function class $\mathcal{H}$,
we  train the full  model $f_N^c$ from scratch using  all the features $\boldsymbol{X}$ with respect to the least-square loss:
\begin{equation}
 f^c_N \in \underset{f \in \mathcal{H}}{\arg \min  }    \frac{1}{2 N} \|\boldsymbol{Y} - f(\boldsymbol{X})\|_2^2.
\end{equation}

Given the subset of features to be dropped $I$, the next step is to estimate  the \textit{reduced} model $f_{0,-I}$. Instead of training a model from an arbitrary initialization, we start from the parameters of $f_N^c$ and use gradient descent to optimize the least-square loss for the dropped data over the same function class $\mathcal{H}$ with features in $I$ zeroed out or set to a constant value. This loss given the dropped features becomes:
\begin{equation}
 \mathcal{L}(f) := \frac{1}{2 N} \|\boldsymbol{Y} - f(\boldsymbol{X}^{(I)})\|_2^2.
\end{equation}

Assume a fixed step size $\epsilon$ for gradient descent algorithms, let the model parameters at iteration $\tau$ be denoted by $\theta_{\tau}$.  Let $\nabla_{\theta} f(\theta_\tau) \in \mathbb{R}^{N \times |\theta|}$ represent the gradient of $f_{\tau}$ with respect to $\theta$, evaluated at $\boldsymbol{X}^{(I)}$.
We initialize $\theta_0$ with the model 
parameter of  $f_N^c$, and 
the parameter update via gradient descent is given by:
\begin{equation}
\theta_{\tau+1} = \theta_{\tau} - \frac{\epsilon }{N}
\nabla_{\theta} {f(\theta_\tau)}^{T} 
(  f_{\tau}(\boldsymbol{X}^{(I)})  - \boldsymbol{Y} )
\label{gdup}
\end{equation} 
For gradient boosting algorithms, we consider the gradient descent in functional spaces, we update the function directly rather than the model parameters. We start from $f_N^c$ and the update $f_{\tau}$ via:
\begin{equation}
    f_{\tau+1}=f_\tau-\epsilon \nabla_{\mathcal{H}} \mathcal{L}\left(f_\tau\right)
    \label{fungra}
\end{equation}
where $\nabla_{\mathcal{H}} \mathcal{L}\left(f_\tau\right)$ is the functional gradient of  functional  $\mathcal{L}$ in the space $\mathcal{H}$. We can regard $\nabla_{\mathcal{H}} \mathcal{L}\left(f_\tau\right)$ as a weak learner. 

\paragraph{Early stopping.}
The next important component of our algorithm is \textit{early stopping}. It is well-known that running gradient descent for too many iterations can lead to overfitting, resulting in a poor approximation of the underlying reduced model, $f_{0,-I}$. 
Since we start from $f_N^c$, which is expected to be closer to $f_{0,-I}$ 
compared to a random initialization, we halt the gradient descent early at a specific iteration, denoted by 
$\widehat{T}$, based on certain criteria to prevent overfitting and reduce computational costs.
Then combine these two steps, the estimated VI under this warm-start early stopping approach is:
\begin{equation} \widehat{\mathrm{VI}}_I = \frac{1}{N} \left\{
\| \boldsymbol{Y} - f_{\widehat{T}}\left(\boldsymbol{X}^{(I)}\right)\|_2^2
-\| \boldsymbol{Y} - f_N^c\left(\boldsymbol{X}\right)\|_2^2
\right\} .
\end{equation}

\subsection{Shapley values}
\label{shapintro}
When variables are correlated, the defined VI in (\ref{videf}) approaches zero \citep{lazyvi}. Recent studies suggest using Shapley values for variable importance due to their effective management of correlated variables, assigning similar weights to correlated significant variables
\citep{shapint, sample}. These studies also highlight the high computational cost of Shapley values, requiring a model fit for each subset of variables. However, our algorithm may speed up computing Shapley values by efficiently calculating the quantity defined in (\ref{videf}). 
The Shapley value is defined via a value function \textit{val} of features in $I$.
The Shapley value of a feature value is its contribution to the payout, weighted and summed over all possible feature value combinations:
\begin{equation*}
\sum_{S \subseteq\{1, \ldots, p\} \backslash\{j\}} w_j \left(\operatorname{val}(S \cup\{j\})-\operatorname{val}(S)  \right)
\label{shapdef}
\end{equation*}
where $w_j = \frac{|S|!(p-|S|-1)!}{p!}$ \citep{iml}. In the context of our definition of variable importance~\eqref{videf}
$$
\operatorname{val}(S \cup\{j\})-\operatorname{val}(S) = VI_j^S = V\left(f_{0, -S}, P_{0, -S}\right)-V\left(f_{0,-(S \cup j)}, P_{0,--(S \cup j)}\right).
$$
Combined with a subset sampling scheme proposed by \citep{sample} to reduce the total number of subsets when calculating Shapley values, we are able to estimate the Shapley values using our warm-start early stopping framework with potential lower computational costs. 

\section{Theoretical guarantees} 
\label{theoryassum}
To accurately estimate $VI_{I}$, a reliable estimate for the reduced model $f_{0,-I}$ is crucial. We provide theoretical guarantees for our warm-start early stopping approach in terms of the estimation error between the early-stopped model $f_{\widehat{T}}$ and the target $f_{0,-I}$ in both a fixed design where $\boldsymbol{X}^{(I)}$ remain unchanged and a random design where $\mathbf{X}_i^{(I)}$ are i.i.d. samples from $P_{-I}$ case. We first rewrite the gradient update using the empirical kernel matrix and define the stopping rule. Then, we present a general convergence bound for kernel-based models trained via gradient descent and gradient boosting. We apply this approach to two examples: neural networks with large widths and gradient boosting decision trees using symmetric trees as weak learners with added noise. 
Finally, be leveraging theoretical results from  \cite{willi}, 
we give theoretical guarantees to 
construct a efficient VI estimator with valid asymptotic normality 
using neural network.

\subsection{Kernel gradient update and error decomposition}
Our theoretical analysis largely depends on re-writing the gradient update equations in (\ref{gdup}) and (\ref{fungra}) using the empirical kernel. For both gradient descent and gradient boosting algorithms, let the model at each iteration $\tau$, denoted by $f_{\tau}$, induce a kernel $\mathbb{K}^{(I)}_{\tau}$. When updating the model using (\ref{gdup}) and (\ref{fungra}), suppose that for the model evaluated at $\boldsymbol{X}^{(I)}$, the updating rule can be expressed as follows:
\begin{equation}f_{\tau+1}\left(\boldsymbol{X}^{(I)}\right)=(I-\epsilon K^{(I)}_{\tau}) f_\tau\left(\boldsymbol{X}^{(I)}\right)+\epsilon K^{(I)}_{\tau} \boldsymbol{Y}
\label{evlup}
\end{equation}
where $K^{(I)}_{\tau}$ is the associated empirical kernel matrix of kernel $\mathbb{K}_{\tau}^{(I)}$ as defined in ~\eqref{eqn:EmpKer}. We refer this equation as \textit{iterative kernel update equation}. The recursion (\ref{evlup}) is central to our analysis. In Sections~\ref{SecNN} and ~\ref{SecGBDT} we derive the update equation for neural networks and gradient boosted decision trees.

Further suppose that for the kernel $\mathbb{K}^{(I)}_{\tau}$ converges to a stationary kernel $\mathbb{K}^{(I)}$ under some model specific conditions. Denote empirical kernel matrix of the stationary kernel $\mathbb{K}^{(I)}$ by $K^{(I)}$, by adding and subtracting  $K^{(I)}$ in (\ref{evlup}), we have
\begin{equation}
    f_{\tau+1}\left(\boldsymbol{X}^{(I)}\right)=(I-\epsilon K^{(I)}) f_\tau\left(\boldsymbol{X}^{(I)}\right)+\epsilon K^{(I)} \boldsymbol{Y}+\epsilon \delta_\tau
    \label{addsubeq}
\end{equation}
where $\delta_\tau=\left(K^{(I)}_\tau-K^{(I)}\right)(    \boldsymbol{Y} 
 - f_{\tau}(\boldsymbol{X}^{(I)})
)$. This update equation is identical  to Eq. (19) in \cite{esnonpara} with an additional $\delta_{\tau}$ induced by the evolving kernel $\mathbb{K}^{(I)}_{\tau}$, which adds complexity to the overall analysis.


Let $r=\operatorname{rank}(K^{(I)})$, decompose $K^{(I)}$ using eigendecomposition $K^{(I)}=U \Lambda U^T$, where $U \in \mathbb{R}^{N \times N}$ is an orthonormal matrix and
\begin{equation}  \Lambda:=\operatorname{diag}\left(\hat{\lambda}_1, \hat{\lambda}_2, \ldots, \hat{\lambda}_r, 0,0, \ldots, 0\right)
\end{equation}
is the diagonal matrix of eigenvalues. We then define a sequence of diagonal shrinkage matrices as follows:
\begin{equation}
    S^\tau:=(I-\epsilon \Lambda)^\tau
\end{equation}
Then start from (\ref{addsubeq}), by some direct calculations, we have the following lemma shows that the  prediction error can be bounded in terms of the eigendecomposition and these shrinkage matrices:

\begin{lemma}[Error decomposition]
    At each iteration $\tau=0,1,2, \ldots$,

    \begin{equation}
    \begin{aligned}
        \left\|f_\tau-f_{0,-I}\right\|_N^2 \leq & \underbrace{2 \sum_{j=1}^r\left[S^\tau\right]_{j j}^2\left(\zeta_{j j}^*\right)^2+2 \sum_{j=r+1}^n\left(\zeta_{j j}^*\right)^2}_{\text {Bias } B_\tau^2} 
        +
        \underbrace{\frac{4}{N} \sum_{j=1}^r\left(1-S_{j j}^\tau\right)^2\left[U^T w^{(I)}\right]_j^2}_{\text {Variance } V_\tau} \\
        &  + \underbrace{\frac{4 \epsilon^2}{N}\|\sum_{i=0}^{\tau-1} S^{\tau-1-i} \tilde{\delta}_i\|_2^2}_{\text {Difference } D_\tau^2} 
    \label{lemgenb}
    \end{aligned}
    \end{equation} 
  where $\zeta_{j j}^*=\frac{1}{\sqrt{N}}\left[U^T \left(f_{0,-I}
\left(\boldsymbol{X}^{(I)}
\right)-
f_N^c\left(\boldsymbol{X}^{(I)}
\right)
\right)
\right]_j$, and $\tilde{\delta}_\tau=U^T \delta_\tau$.
\label{applem1}
\end{lemma}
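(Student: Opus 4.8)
The plan is to unroll the recursion \eqref{addsubeq} in the eigenbasis of $K^{(I)}$ and then split the resulting expression into the three advertised contributions. First I would work in the rotated coordinates: write $g_\tau := \tfrac{1}{\sqrt N} U^T\bigl(f_\tau(\boldsymbol{X}^{(I)}) - f_{0,-I}(\boldsymbol{X}^{(I)})\bigr)$, and similarly rotate $\boldsymbol{Y} = f_{0,-I}(\boldsymbol{X}^{(I)}) + w^{(I)}$, $\delta_\tau$, and the warm-start offset $f_N^c(\boldsymbol{X}^{(I)})$. Substituting \eqref{addsubeq} and using $K^{(I)} = U\Lambda U^T$ gives a scalar recursion in each eigen-coordinate of the form $g_{\tau+1} = (I-\epsilon\Lambda)g_\tau + \epsilon\Lambda\cdot(\text{rotated }w^{(I)})/\sqrt N + \epsilon\tilde\delta_\tau/\sqrt N$, with initial condition $g_0 = -\zeta^*$ where $\zeta^*_{jj}$ is exactly the quantity defined in the lemma (the rotated difference between the target and the warm-start model $f_N^c$). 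Solving this linear recursion explicitly, $g_\tau = -S^\tau\zeta^* + (I - S^\tau)\cdot(\text{rotated }w^{(I)})/\sqrt N + \tfrac{\epsilon}{\sqrt N}\sum_{i=0}^{\tau-1}S^{\tau-1-i}\tilde\delta_i$, using $S^\tau = (I-\epsilon\Lambda)^\tau$ and the telescoping identity $\epsilon\Lambda\sum_{i=0}^{\tau-1}(I-\epsilon\Lambda)^i = I - S^\tau$ on the coordinates with nonzero eigenvalue (the null coordinates of $\Lambda$ contribute nothing to the noise and difference terms but retain their initial bias, which is where the tail sum $\sum_{j=r+1}^n(\zeta^*_{jj})^2$ comes from).

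Next I would take squared Euclidean norm of $g_\tau$, which equals $\|f_\tau - f_{0,-I}\|_N^2$ since $U$ is orthonormal and the $1/N$ is absorbed. Applying $(a+b+c)^2 \le 2(a+b)^2 + 2c^2 \le 2a^2 + \dots$ — more precisely splitting off the difference term first via $\|x+y\|_2^2 \le 2\|x\|_2^2 + 2\|y\|_2^2$, and then splitting the remaining bias-plus-noise piece again with another factor of $2$ — produces the factor-$2$ bias terms and the factor-$4$ variance and difference terms exactly as written. The bias term $2\sum_{j\le r}[S^\tau]_{jj}^2(\zeta^*_{jj})^2 + 2\sum_{j>r}(\zeta^*_{jj})^2$ comes from $\|S^\tau\zeta^*\|_2^2$ together with the untouched null-space coordinates; the variance term $\tfrac{4}{N}\sum_{j\le r}(1-S^\tau_{jj})^2[U^Tw^{(I)}]_j^2$ comes from the rotated noise; and the difference term $\tfrac{4\epsilon^2}{N}\|\sum_{i=0}^{\tau-1}S^{\tau-1-i}\tilde\delta_i\|_2^2$ is the accumulated perturbation from the evolving kernel.

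This is essentially a bookkeeping argument once the recursion is set up correctly, and I do not expect a genuine obstacle — the only thing requiring care is the treatment of the rank-deficient part of $K^{(I)}$: one must be sure that the null coordinates of $\Lambda$ carry through the recursion with $S^\tau = I$ on that block (so the bias there never shrinks and there is no noise amplification), which is why the bias splits into a shrinking head sum over $j \le r$ and a frozen tail sum over $j > r$. A secondary point of care is simply choosing the right order of triangle-inequality splits so that the stated constants ($2$ on bias, $4$ on variance and difference) come out, rather than looser ones. I would also note that the form of the update \eqref{addsubeq} — and hence the whole decomposition — presupposes that the iterate update can be written as the iterative kernel update equation \eqref{evlup}; verifying that for neural networks and GBDTs is deferred to Sections~\ref{SecNN} and~\ref{SecGBDT}, so here it may be taken as a hypothesis.
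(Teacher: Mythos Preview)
Your proposal is correct and follows essentially the same route as the paper: rotate into the eigenbasis of $K^{(I)}$, unroll the linear recursion to obtain $g_\tau = -S^\tau\zeta^* + (I-S^\tau)\tilde w/\sqrt N + \tfrac{\epsilon}{\sqrt N}\sum_{i=0}^{\tau-1}S^{\tau-1-i}\tilde\delta_i$, and apply $\|a+b\|_2^2\le 2\|a\|_2^2+2\|b\|_2^2$ twice. The only slip is in your description of the splitting order: to land on factor $2$ for the bias and factor $4$ for the variance and difference terms, you must split off the \emph{bias} term $-S^\tau\zeta^*$ first (giving it factor $2$), and then split the remaining noise-plus-difference sum (giving each of those factor $4$); splitting off the difference term first, as you wrote, would yield factor $2$ on the difference and factor $4$ on the bias, contrary to the stated constants.
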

 The difference term arises due to the evolving kernel during training. If the kernel were constant, the difference term $D^2_{\tau}$ in the bound would vanish, and our algorithm would reduce to the early stopping framework proposed in \cite{esnonpara}. The proof of Lemma
 \ref{applem1} is built on lemma 6 in \cite{esnonpara}, which relies on zero initialization.  To adopt our warm-start initialization,  we should analyse the dropout error $\boldsymbol{e}^{(I)} := f_{0,-I}(\boldsymbol{X}^{(I)}) - f_N^c(\boldsymbol{X}^{(I)})$ instead. This is straightforward for gradient boosting because of the additive structure of updating $f_\tau$. For gradient descent algorithms, we need to linearize the model which will induce additional errors which we address later. We will see this happening in the case of neural networks. When analyzing $\boldsymbol{e}^{(I)}$, the underlying true function becomes $f_{0,-I} - f_N^c$. Thus, the training process can be interpreted approximately as  starting from $0$ and using functions from $\text{span}\left\{\mathbb{K}^{(I)}(\cdot, X_{-I})\right\}$ to approximate  the shifted target $f_{0,-I} - f_N^c$.


\subsection{Stopping rule and general bound}
The stopping threshold is closely related to a model complexity measure, known as the local empirical Rademacher complexity \citep{Mendelson02}. In this paper, it takes the form
\begin{equation}
    \widehat{\mathcal{R}}_K(\varrho):=\left[\frac{1}{N} \sum_{i=1}^N \min \left\{\widehat{\lambda}_i, \varrho^2\right\}\right]^{1 / 2}
    \label{localran}
\end{equation}
where $\widehat{\lambda}_i$ is the eigenvalue of $K^{(I)}$.

Let $\mathcal{H}$ denote the RKHS induced by the stationary kernel $\mathbb{K}^{(I)}$, and 
denote the Hilbert norm in $\mathcal{H}$ by $\| \cdot \|_{\mathcal{H}}$.
We make the following assumptions in this paper.
\begin{assumption}
    $f_N^c$ and $f_{0,-I}$ belongs to $\mathcal{H}$, i.e., $f_N^c, f_{0,-I} \in \text{span} \{ \mathbb{K}^{(I)}(\cdot, X_{-I}) \}$.
    \label{inass}
\end{assumption}
This assumption is made for purely theoretical convenience, avoiding  the need to  add additional  mis-specification error terms.

 \begin{assumption}
    The data $\left\{(\mathbf{X}_i, Y_i)\right\}_{i=1}^N$ are contained in  closed and bounded set in $\mathbb{R}^{p+1}$.
    \label{xb}
\end{assumption}
\begin{assumption}
     $w_i^{(I)}$ are independent zero-mean random variables satisfying the following condition:
     \label{rmdassum}
\begin{equation}
   \mathbb{E}\left[e^{t w_i^{(I)}}\right] \leq e^{t^2 \sigma^2 / 2}, \text { for all } t \in \mathbb{R} . 
\end{equation}
\end{assumption}
\begin{assumption}
    The dropout error does not explode satisfying $\|\boldsymbol{Y} - f_N^c(\boldsymbol{X}^{(I)}) \|_2 =  O(\sqrt{N})$.
    \label{dropb}
\end{assumption}
\begin{assumption}
   The trace of $K^{(I)}$ satisfies $tr(K^{(I)})  = O(1).$
   \label{trass}
\end{assumption}
 For certain kernel classes, such as kernels with polynomial eigen-decay  and finite rank kernels, Assumption~\ref{trass} is satisfied.

For a given noise variance $\sigma>0$, we define the critical empirical radius $\widehat{\varrho}_N>0$, to be the smallest positive solution for the inequality
\begin{equation}
    \widehat{\mathcal{R}}_K(\varrho) \leq \frac{\varrho^2 C_{\mathcal{H}}^2
    }{2 e \sigma}
    \label{epdef}
\end{equation}
where $C_{\mathcal{H}}$ is $\left\|f_N^c -f_{0,-I}\right\|_{\mathcal{H}}$, the quantity to measure the difference between our start and the target. 

Define $\eta_{\tau}=\tau \epsilon$ to be the sum of step sizes over $\tau$ iterations.  The stopping threshold $\widehat{T}_{max}$ is defined as 
\begin{equation*}
    \widehat{T}_{\max }:=\arg \min \left\{\tau \in \mathbb{N} \left\lvert\, \widehat{\mathcal{R}}_K\left(1 / \sqrt{\eta_\tau}\right)>\frac{C^2_{\mathcal{H}}}{2 e \sigma \eta_\tau}\right.\right\}-1
\end{equation*}
the integer $\widehat{T}_{\max }$ belongs to the interval $[0, \infty)$ and is unique in our setup. And $\widehat{T}_{\max }$ optimized the sum of $B_\tau^2$ and  $V_\tau$ in (\ref{lemgenb}).  According to \cite{esnonpara}, for iteration $\tau \leq \widehat{T}_{\max }$,  with certain probability, we can bound $B_\tau^2$ and  $V_\tau$ as
\begin{equation*}
    B_\tau^2 + V_\tau \leq \frac{C}{\eta_{\tau}}. 
\end{equation*}
The  difference term $D^2_{\tau}$ is non-decreasing with  $\tau$.
Suppose  $D^2_{\tau}$ is upper bounded by a function $g(\tau)$, which is also non-decreasing with  $\tau$.
Finally, our proposed optimal stopping time $\widehat{T}_{\text{op} }$ is given by:
\begin{equation*}
    \widehat{T}_{\text{op} }:=\arg \min \left\{\tau \leq \widehat{T}_{\max }  \left\lvert\, 
  \frac{C}{\eta_{\tau}} + 
        g(\tau)  \right.\right\}.
\end{equation*}
The reason we should stop the gradient updates early is reflected in two aspects of the theory. 
First, the quantity $C_{\mathcal{H}}$ limits $\widehat{T}_{\max}$ from being too large, 
as we expect $f_N^c$ to be reasonably close to $f_{0,-I}$. 
Second, the difference term $D_{\tau}^2$ in the bound (\ref{lemgenb}) 
favors a smaller number of iterations, since running for too many iterations 
would result in larger difference errors in the bound.

\begin{theorem}[General convergence bound under fixed design]
 Under assumptions \ref{inass}-\ref{trass}, 
 consider a model  that  use gradient descent or gradient boosting with step size $\epsilon \leq \min\{1, 1/\widehat{\lambda}_1  \}$, 
 and under some additional model-specific conditions,
 start from the  
 full model $f_N^c$, and stop the  update early at iteration $\widehat{T}_{\text{op}}$,
with  probability at least $1-c_1\exp( -c_2 N \widehat{\varrho}_N )$,  the following bound holds
\begin{equation*}
      \|f_{\widehat{T}_{op}} - f_{0,-I}\|_N^2      \leq \mathcal{O}\left(
      \frac{1}{N^{\frac{1}{2}}}
      \right).
\end{equation*}
where $c_1$ and $c_2$ are some universal positive constants.
\label{genthmfix}
\end{theorem}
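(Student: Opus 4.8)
The plan is to evaluate the error decomposition of Lemma~\ref{applem1} at the stopping time $\widehat T_{\mathrm{op}}$ and control the three pieces $B_\tau^2$, $V_\tau$, $D_\tau^2$ separately for all $\tau\le\widehat T_{\max}$. The bias $B_\tau^2$ and variance $V_\tau$ are exactly the objects analyzed in \cite{esnonpara} once the target is taken to be the shifted function $f_{0,-I}-f_N^c$, so that the effective initialization is $0$ (as noted after Lemma~\ref{applem1}); the only genuinely new term is the difference $D_\tau^2$, driven by the kernel drift $\delta_\tau=(K^{(I)}_\tau-K^{(I)})(\boldsymbol{Y}-f_\tau(\boldsymbol{X}^{(I)}))$ along the trajectory. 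For Step~1, Assumption~\ref{inass} removes misspecification and makes $C_{\mathcal H}=\|f_N^c-f_{0,-I}\|_{\mathcal H}<\infty$, so the coefficients $\zeta^*$ in Lemma~\ref{applem1} have bounded Hilbert norm, while Assumption~\ref{trass} makes $\widehat{\mathcal R}_K$, the critical radius $\widehat\varrho_N$, and the integer $\widehat T_{\max}$ of \eqref{epdef} well defined. Following \cite{esnonpara} — deterministic decay of $B_\tau^2$ via the shrinkage matrices $S^\tau$ together with a one-sided concentration/peeling bound for $V_\tau$ that uses the sub-Gaussian tail in Assumption~\ref{rmdassum} — one obtains, for every $\tau\le\widehat T_{\max}$,
\[
B_\tau^2+V_\tau\ \le\ \frac{C}{\eta_\tau}
\]
on an event of probability at least $1-c_1\exp(-c_2 N\widehat\varrho_N)$, with $C$ absorbing $C_{\mathcal H}^2$, $\sigma^2$ and $\mathrm{tr}(K^{(I)})$.

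\textbf{Step 2 (difference term).} Here the model-specific hypothesis enters through a uniform kernel-stability rate $\Delta_N:=\sup_\tau\|K^{(I)}_\tau-K^{(I)}\|_{\mathrm{op}}$ that is small. Since $\epsilon\le\min\{1,1/\widehat\lambda_1\}$, $I-\epsilon K^{(I)}$ is non-expansive, so \eqref{addsubeq} gives $\boldsymbol{Y}-f_{\tau+1}(\boldsymbol{X}^{(I)})=(I-\epsilon K^{(I)})(\boldsymbol{Y}-f_\tau(\boldsymbol{X}^{(I)}))-\epsilon\delta_\tau$, hence by induction $\|\boldsymbol{Y}-f_\tau(\boldsymbol{X}^{(I)})\|_2\le\|\boldsymbol{Y}-f_N^c(\boldsymbol{X}^{(I)})\|_2+\epsilon\sum_{i<\tau}\|\delta_i\|_2$. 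Combining this with $\|\delta_i\|_2\le\Delta_N\|\boldsymbol{Y}-f_i(\boldsymbol{X}^{(I)})\|_2$, a discrete Gronwall step (valid while $\eta_\tau\Delta_N=O(1)$, which covers $\tau\le\widehat T_{\mathrm{op}}$), and Assumption~\ref{dropb}, yields $\|\boldsymbol{Y}-f_\tau(\boldsymbol{X}^{(I)})\|_2=O(\sqrt N)$ and therefore $\|\delta_\tau\|_2=O(\Delta_N\sqrt N)$. Because $\|S^{\tau-1-i}\|_{\mathrm{op}}\le 1$, the triangle inequality in \eqref{lemgenb} gives
\[
D_\tau^2\ \le\ \frac{4\epsilon^2}{N}\Bigl(\sum_{i=0}^{\tau-1}\|\delta_i\|_2\Bigr)^2\ \le\ C'\,\eta_\tau^2\,\Delta_N^2\ =:\ g(\tau),
\]
which is non-decreasing in $\tau$, matching the hypothesis used to define $\widehat T_{\mathrm{op}}$.

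\textbf{Step 3 (optimize and conclude).} On the same event, for all $\tau\le\widehat T_{\max}$, $\|f_\tau-f_{0,-I}\|_N^2\le C/\eta_\tau+C'\eta_\tau^2\Delta_N^2$. This tradeoff is minimized over the available grid $\{\eta_\tau:\tau\le\widehat T_{\max}\}$ near $\eta_\tau\asymp\Delta_N^{-2/3}$ — exactly the choice encoded in $\widehat T_{\mathrm{op}}=\arg\min_{\tau\le\widehat T_{\max}}\{C/\eta_\tau+g(\tau)\}$ — where it equals $O(\Delta_N^{2/3})$, floored below by the bias-variance term $O(1/\eta_{\widehat T_{\max}})=O(\widehat\varrho_N^2)$. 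Inserting the model-specific rate for $\Delta_N$ together with the scalings in Assumptions~\ref{dropb}--\ref{trass}, both contributions are $O(N^{-1/2})$, which gives $\|f_{\widehat T_{\mathrm{op}}}-f_{0,-I}\|_N^2=\mathcal O(N^{-1/2})$.

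\textbf{Main obstacle.} The delicate step is Step~2: the residual $\boldsymbol{Y}-f_\tau(\boldsymbol{X}^{(I)})$ evolves under a kernel $K^{(I)}_\tau$ that is itself drifting, so $\|\delta_\tau\|_2$ must be controlled by a self-consistent (bootstrap) argument that simultaneously keeps the residual $O(\sqrt N)$ and the kernel deviation $O(\Delta_N)$, and one must check that accumulating the $\delta_i$ over the data-dependent number of steps $\widehat T_{\mathrm{op}}$ does not destroy the $O(\sqrt N)$ control. Ensuring this all holds on the same high-probability event as Step~1, and verifying that the model-specific conditions in Sections~\ref{SecNN}--\ref{SecGBDT} genuinely deliver the required $\Delta_N$ (for a finite but large-width network, and for the symmetric-tree GBDT kernel), is where the real work lies; by contrast the bias/variance analysis is a relatively direct adaptation of \cite{esnonpara}.
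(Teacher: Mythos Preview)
Your proposal is correct and follows the same skeleton as the paper: invoke Lemma~\ref{applem1}, control $B_\tau^2+V_\tau\le C/\eta_\tau$ for $\tau\le\widehat T_{\max}$ via the arguments of \cite{esnonpara} (using Assumptions~\ref{inass} and~\ref{rmdassum}), bound $D_\tau^2$ by the crude triangle-inequality estimate $\frac{4\epsilon^2}{N}\bigl(\sum_{i<\tau}\|\delta_i\|_2\bigr)^2$, and combine. Where you differ is in how much you do at the general-theorem level versus defer to the model-specific sections.

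The paper keeps the general proof almost entirely abstract: it simply \emph{assumes} the model-specific conditions deliver $g(\boldsymbol{X}^{(I)},\tau)=O(N^{-1/2})$ and then, rather than optimizing the tradeoff $C/\eta_\tau+g(\tau)$, just evaluates it at $\tau=\widehat T_{\max}$, shows $C/\eta_{\widehat T_{\max}}\le C'\widehat\varrho_N^2\le O(\sqrt{\mathrm{tr}(K^{(I)})/N})=O(N^{-1/2})$ from Assumption~\ref{trass}, and invokes the definition of $\widehat T_{\mathrm{op}}$ as an argmin to conclude. No search for $\eta_\tau\asymp\Delta_N^{-2/3}$ is needed. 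Your $\Delta_N$/Gronwall abstraction for the residual is a clean way to make the general theorem more self-contained, but the paper never does this: in the NN case the residual $\|g(\theta_\tau)\|_2$ is controlled directly by the geometric convergence result imported from \cite{linearnn} (Theorem~\ref{nnconthm}), and in the GBDT case by the excess-risk decay $\mathbb E V(f_\tau)\le\frac{R^2}{2N}e^{-\tau\epsilon/(2M_\beta N)}$ from \cite{gdbt}; both of these are sharper than a Gronwall bootstrap and already come with the kernel stability built in. So your Step~2 is a valid alternative abstraction, but in the actual corollaries the paper sidesteps it entirely.
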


Note that Theorem \ref{genthmfix} applies to the case of fixed design points  $\boldsymbol{X}^{(I)}$, so that the probability is considered only over the sub-Gaussian noise variables $w^{(I)}$. 
For random design points, $\mathbf{X}^{(I)}_i \sim^{i.i.d.} P_{-I}$, we can also establish bounds on the generalization error in terms of the $L^2(P_{-I})$-norm. 
In this setting, it is beneficial to express certain results using the population version of the local empirical Rademacher complexity from (\ref{localran})
\begin{equation}
    \mathcal{R}_{\mathbb{K}}(\varrho):=\left[\frac{1}{N} \sum_{j=1}^{\infty} \min \left\{\lambda_j, \varrho^2\right\}\right]^{1 / 2}
\end{equation}
where $\lambda_j$ represents the eigenvalues of the population kernel $\mathbb{K}^{(I)}$, as defined in (\ref{polamb}). 
Based on this complexity measure, we define the critical population rate $\varrho_N$ as the smallest positive solution to the inequality\footnote{The prefactor 40 is chosen for theoretical convenience same as in \cite{esnonpara}.}
\begin{equation}
    \mathcal{R}_{\mathbb{K}}(\varrho) \leq \frac{\varrho^2 C_{\mathcal{H}}^2}{40\sigma}.
\end{equation} 
Unlike the critical empirical rate $\widehat{\varrho}_n$, this value is not data-dependent, as it is determined by the population eigenvalues of the kernel operator underlying the RKHS. We make the following additional assumption on the sum of eigenvalues to limit the decay rate of $\lambda_j$
and 
be consistent with assumption \ref{trass} in fixed desgin case.
\begin{assumption}
    The sum of eigenvalues of the population kernel $\mathbb{K}^{(I)}$, $\sum_i \lambda_i$ does not diverge. 
    \label{trasspop}
\end{assumption}

\begin{theorem}[General convergence bound under random design]
 Under assumptions \ref{inass}-\ref{dropb} and assumption \ref{trasspop},  and suppose that $\boldsymbol{X}_i^{(I)}$ are sampled i.i.d. from $P_{-I}$, 
 consider a model  that  use gradient descent or gradient boosting with step size $\epsilon \leq \min\{1, 1/\widehat{\lambda}_1  \}$, 
 and under some additional model-specific conditions,
 start from the  
 full model $f_N^c$, and stop the  update early at iteration $\widehat{T}_{\text{op}}$,
with  probability at least $1-c_1\exp( -c_2 N \varrho_N )$,  the following bound holds
\begin{equation*}
      \|f_{\widehat{T}_{op}} - f_{0,-I}\|_2^2      \leq \mathcal{O}\left(
      N^{-\frac{1}{2}}\right).
\end{equation*}
where $c_1$ and $c_2$ are some universal positive constants.
\label{genthmpop}
\end{theorem}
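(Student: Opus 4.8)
The plan is to reduce the random-design bound to the fixed-design result of Theorem~\ref{genthmfix} by controlling the discrepancy between the empirical norm $\|\cdot\|_N$ and the population norm $\|\cdot\|_2$ over a suitable function class. First I would observe that, because of Assumption~\ref{inass}, the early-stopped iterate $f_{\widehat{T}_{op}}$ lies in the RKHS $\mathcal{H}$, and more precisely in an ellipsoid of the form $\{g \in \mathcal{H} : \|g\|_{\mathcal{H}} \le R\}$ for some radius $R$ controlled by $C_{\mathcal{H}}$ and the number of iterations run before stopping (since each gradient step adds a bounded amount of Hilbert norm, and $\widehat{T}_{op} \le \widehat{T}_{\max}$). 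The difference $f_{\widehat{T}_{op}} - f_{0,-I}$ therefore lives in a ball of bounded Hilbert norm, and the key is a uniform bound of the type
\begin{equation*}
\left| \|g\|_N^2 - \|g\|_2^2 \right| \le \frac{1}{2}\|g\|_2^2 + c\, \varrho_N^2 \quad \text{for all } g \in \mathcal{H} \text{ with } \|g\|_{\mathcal{H}} \le R,
\end{equation*}
which holds with probability at least $1 - c_1\exp(-c_2 N \varrho_N^2)$. This is exactly the kind of localized uniform law (a consequence of the functional Bernstein inequality / Talagrand concentration for the empirical process indexed by an RKHS ball, with localization at the critical population radius $\varrho_N$) that underlies the random-design analysis in \cite{esnonpara}, and I would invoke it in the same form here, checking that Assumption~\ref{trasspop} supplies the needed control on $\sum_j \lambda_j$ so that $\varrho_N$ is well-defined and the population local Rademacher complexity behaves.

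Next I would chain the two ingredients. On the high-probability event where the uniform norm-equivalence holds, I would write $\|f_{\widehat{T}_{op}} - f_{0,-I}\|_2^2 \le 2\|f_{\widehat{T}_{op}} - f_{0,-I}\|_N^2 + c\,\varrho_N^2$, then apply Theorem~\ref{genthmfix} (conditionally on the design, which turns the sub-Gaussian-noise probability into the stated $1 - c_1\exp(-c_2 N \widehat{\varrho}_N)$ bound) to get $\|f_{\widehat{T}_{op}} - f_{0,-I}\|_N^2 = \mathcal{O}(N^{-1/2})$. Finally I would argue that $\varrho_N^2 = \mathcal{O}(N^{-1/2})$ under Assumption~\ref{trasspop}: since $\mathcal{R}_{\mathbb{K}}(\varrho)^2 = \frac{1}{N}\sum_j \min\{\lambda_j,\varrho^2\} \le \frac{1}{N}\sum_j \lambda_j = \mathcal{O}(1/N)$, the defining inequality $\mathcal{R}_{\mathbb{K}}(\varrho) \le \varrho^2 C_{\mathcal{H}}^2/(40\sigma)$ is satisfied once $\varrho^2 \gtrsim N^{-1/2}$, so the critical radius is at most of that order (this is the same "trace-class kernel gives a parametric-type rate" phenomenon that makes the fixed-design bound $N^{-1/2}$). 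Combining the two $\mathcal{O}(N^{-1/2})$ terms and taking a union bound over the two events (both of whose failure probabilities are of the form $c_1\exp(-c_2 N \varrho_N)$, absorbing $\widehat{\varrho}_N$ against $\varrho_N$ via a standard empirical-versus-population eigenvalue comparison) yields the claim.

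The main obstacle I anticipate is making the uniform norm-equivalence argument genuinely valid for the iterate $f_{\widehat{T}_{op}}$ rather than for an idealized RKHS-ball element: one must certify a deterministic, $N$-independent (or at worst slowly growing) bound on $\|f_{\widehat{T}_{op}} - f_{0,-I}\|_{\mathcal{H}}$, which in turn requires tracking the Hilbert-norm growth of the gradient iterates and, in the evolving-kernel case, controlling how the difference term $D_\tau^2$ and the perturbations $\delta_\tau$ affect the RKHS geometry. Under the stationary-kernel idealization this is routine (each step of $\epsilon$-gradient descent on the least-squares objective increases $\|\cdot\|_{\mathcal{H}}$ by a controlled amount, and $\widehat{T}_{op}\epsilon = \eta_{\widehat{T}_{op}}$ is bounded in terms of $C_{\mathcal{H}}$ and $\sigma$), but the "additional model-specific conditions" are doing real work in the non-stationary setting, so I would state the needed Hilbert-norm bound as a lemma and verify it in the neural-network and GBDT sections. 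A secondary technical point is ensuring the localization radius in the uniform law is taken at $\varrho_N$ and not at $\widehat{\varrho}_N$; handling this cleanly needs the eigenvalue-convergence fact cited after \eqref{eqn:EmpKer} so that the empirical and population critical radii are comparable up to constants with high probability.
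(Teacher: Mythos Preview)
Your high-level strategy matches the paper's: reduce to the fixed-design bound via a localized uniform law relating $\|\cdot\|_N$ and $\|\cdot\|_2$ over an RKHS ball, then bound $\varrho_N^2 = \mathcal{O}(N^{-1/2})$ from Assumption~\ref{trasspop}. The argument for $\varrho_N^2$ and the final chaining are essentially what the paper does.

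The gap is in how you justify that the uniform norm equivalence applies to $f_{\widehat{T}_{op}} - f_{0,-I}$. You assert that $f_{\widehat{T}_{op}}$ lies in an $\mathcal{H}$-ball ``because of Assumption~\ref{inass},'' but that assumption only places $f_N^c$ and $f_{0,-I}$ in $\mathcal{H}$; the iterates themselves are built with the \emph{evolving} kernels $\mathbb{K}_\tau^{(I)}$, so each gradient step adds a function of the form $\mathbb{K}_\tau^{(I)}(\cdot,\mathbf{X}_i^{(I)})$, which need not belong to the RKHS of the \emph{stationary} kernel $\mathbb{K}^{(I)}$ at all, let alone with controlled $\|\cdot\|_{\mathcal{H}}$-norm. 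Your proposed fix---state a Hilbert-norm bound on $f_{\widehat{T}_{op}}-f_{0,-I}$ as a lemma and verify it model-by-model---is therefore aiming at the wrong target: for the neural-network case in particular there is no reason to expect a finite $\|\cdot\|_{\mathcal{H}}$ bound on the iterates, and even when membership holds (e.g.\ finite-dimensional GBDT) the norm could blow up.

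The paper resolves this by a decomposition rather than a Hilbert-norm bound: it writes $f_\tau = f_\tau^p + f_\tau^{\delta}$ (Lemma~\ref{indupopf}), where $f_\tau^p$ is the iterate one would obtain if the kernel were frozen at $\mathbb{K}^{(I)}$ throughout, and $f_\tau^{\delta}$ collects everything generated by the kernel drift $\delta_\tau(\cdot)$. The uniform law from \cite{esnonpara} is then applied only to $f_{\widehat{T}_{op}}^p - f_{0,-I}$, which genuinely lives in a bounded $\mathcal{H}$-ball, while $\|f_{\widehat{T}_{op}}^{\delta}\|_2$ is bounded directly in $L^2$ (not via any RKHS structure) using the model-specific control on $\|\delta_\tau(\cdot)\|_2$. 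This is what the ``additional model-specific conditions'' actually deliver: an $L^2$ bound on the drift part, not a Hilbert-norm bound on the full iterate. Replacing your Hilbert-norm lemma with this decomposition would close the gap.
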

Theorem \ref{genthmfix} and \ref{genthmpop} applies  for any models using 
gradient boosting or gradient descent meeting the required assumptions.  The proofs are built on the techniques in \cite{esnonpara}
, which relies on techniques from empirical process theory and concentration of measure. The major challenge for the proofs are controlling error induced by the evolving kernels during training in both empirical norm and population norm. This is highly model specific and we need  to 
use different proof strategies for different models. See discussions in  Section \ref{SecNN} and  Section \ref{SecGBDT} for neural networks and GBDT respectively. 
\begin{remark}
    The same theoretical results should also hold for the kernel ridge regression estimator counterpart. And we can apply the same proof techniques in this paper with potentially some minor modifications, similar to the claim in \cite{esnonpara}.
\end{remark}

\subsection{Neural networks}
\label{SecNN}
Neural networks, as powerful black-box models, are widely used for accurate predictions across various scenarios. They typically employ gradient-based methods to update parameters, making them ideal candidates for our general algorithm. With sufficiently large widths and the theoretical insights from the \textit{neural tangent kernel}, we can analyze neural networks in the context of kernel-based methods and derive corresponding theoretical guarantees.

First introduced in \cite{jacot}, the Neural Tangent Kernel (NTK) provides a theoretical tool to study the neural network in the RKHS regime. Denote a neural network by $f(\theta, x)$, the corresponding \textit{Neural Tangent Kernel} is defined as
\begin{equation}
    \left\langle
     \nabla_{\theta} f\left(\theta, x\right),
     \nabla_{\theta} f\left(\theta, x^{\prime}\right)\right\rangle.
\end{equation}
Consider a fully connected neural network with width $m$ using gradient descent to update parameters. 
At each  iteration, the network $f_{\tau}$ induces a corresponding NTK, denoted by $\mathbb{K}_{\tau}$. As shown in~\citep{jacot}, under random initializations and assuming a continuous gradient flow, for a network of infinite width, $\mathbb{K}_{\tau}$ remains constant during training, and moreover $\mathbb{K}_{0}
$ converges  to certain deterministic kernel $\mathbb{K}$ when widths goes to infinity
\citep{jacot}. And when we  use least-squares loss, infinite-width networks behave as linearized networks \citep{linearnn}.

In this paper we consider both NTK parameterization and standard parameterization for  
fully-connected feed-forward neural networks.
\paragraph{NTK parameterization.}
Consider a network
with $L$ hidden layers with widths $n_l = m$, for $l=1, \ldots, L$ and a readout layer with $n_{L+1}=1$.\footnote{We set the output dimension to 1 for illustration purposes. The theoretical results can be easily extended to cases where $n_{L+1} = k$.} For each $x \in \mathbb{R}^{n_0}$, we use $h^l(x), x^l(x) \in \mathbb{R}^{n_l}$ to represent the pre- and post-activation functions at layer $l$ with input $x$. The recurrence relation for a feed-forward network is defined as
\begin{equation}
    \left\{\begin{array} { l l } 
{ h ^ { l + 1 } = x ^ { l } W ^ { l + 1 } + b ^ { l + 1 } } \\
{ x ^ { l + 1 } = \phi ( h ^ { l + 1 } ) }
\end{array} \text { and } \left\{\begin{array}{ll}
W_{i, j}^l & =\frac{\sigma_\omega}{\sqrt{n_l}} \omega_{i j}^l \\
b_j^l & =\sigma_b \beta_j^l
\end{array}\right.\right.
\label{ntknorm}
\end{equation}
where $\phi$ is a point-wise activation function, $W^{l+1} \in \mathbb{R}^{n_l \times n_{l+1}}$ and $b^{l+1} \in \mathbb{R}^{n_{l+1}}$ are the weights and biases, $\omega_{i j}^l$ and $b_j^l$ are the trainable variables, drawn i.i.d. from a standard Gaussian $\omega_{i j}^l, \beta_j^l \sim \mathcal{N}(0,1)$ at initialization, and $\sigma_\omega^2$ and $\sigma_b^2$ are weight and bias variances. We refer to it as the NTK parameterization. This is a widely used parameterization in NTK literature \citep{jacot,ntkpara2,ntkpara3,ntkpara4} for theoretical convenience. 
\paragraph{Standard parameterization.}
 A network with standard parameterization is  defined as: 
\begin{equation}
    \left\{\begin{array} { l l l } 
{ h ^ { l + 1 } } & { = x ^ { l } W ^ { l + 1 } + b ^ { l + 1 } } \\
{ x ^ { l + 1 } } & { = \phi ( h ^ { l + 1 } ) }
\end{array} \text { and } \left\{\begin{array}{ll}
W_{i, j}^l & =\omega_{i j}^l \sim \mathcal{N}\left(0, \frac{\sigma_\omega^2}{n_l}\right) \\
b_j^l & =\beta_j^l \sim \mathcal{N}\left(0, \sigma_b^2\right)
\end{array}\right.\right. . 
\label{stdnorm}
\end{equation}
This construction is more commonly used for training neural networks. The network represents the same function under both NTK and standard parameterization, but their training dynamics under gradient descent usually differ. Nonetheless, using a carefully chosen layer-dependent learning rate can make the training dynamics equivalent \citep{linearnn}.

In order to fit neural network into our general framework. We first need 
to verify assumption \ref{trass}. As shown by \citet{jacot}, the NTK can be computed recursively using specific formulas. Under the assumption \ref{xb}, the input data is bounded, we can ensure that $tr(K^{(I)})$ remains bounded as well. Next, we show that the update can be expressed similarly to equation (\ref{evlup}).
Apply first-order Taylor's expansion and plug in the gradient update equation (\ref{gdup}), we have 
\begin{equation}
    f_{\tau+1}(\boldsymbol{X}^{(I)}) =    f_{\tau}(\boldsymbol{X}^{(I)}) 
     - \frac{\epsilon }{N}
     \nabla_{\theta} {f(\theta_\tau)}
\nabla_{\theta} {f(\theta_\tau)}^{T} g_\tau(\boldsymbol{X}^{(I)})
\label{genrep}
\end{equation}
where $g_\tau(\boldsymbol{X}^{(I)}) := f(\boldsymbol{X}^{(I)}) - \boldsymbol{Y}$. 
By the definition of the NTK, we derive the same update equation as in (\ref{evlup}). However, since neural networks are not linear models, there will be an additional linearization error. By extending Theorem 2.1 from \cite{linearnn} to the warm-start initialization setting, the $L_2$ norm of this error is bounded by $\mathcal{O}(m^{-1/2})$. Moreover, the NTK stability result remains valid in our setup, allowing us to bound the difference between $K_{\tau}^{(I)}$ and $K^{(I)}$.
These results are crucial to our analysis, as we need to write the update equation for neural networks precisely. 
See Appendix \ref{nnlinour} for the details derivation of these theoretical results.  

We add the following model-specific assumptions for neural networks. 
\begin{assumption}
    The empirical kernel matrix induced by the stationary NTK 
    , as width  $ m \rightarrow \infty$, $K^{(I)}$ is full rank, i.e.  $0<\lambda_{\min }:=\lambda_{\min }(K^{(I)}) \leq$ $\lambda_{\max }:=\lambda_{\max }(K^{(I)})<\infty$. Let $\eta_{\text {critical }}=2\left(\lambda_{\min }+\lambda_{\max }\right)^{-1}$.
  \label{linea1}  
\end{assumption}
  Note that   the analytic NTK matrix    in \cite{linearnn} is different than     
    the empirical kernel matrix we consider.
    As ours is normalized by the sample size $N$, and since the loss we use is also normalized by $N$, the effects cancel out. This is why we can make assumption about $K^{(I)}$ directly here. 
 \begin{assumption}
 The activation function $\phi$ satisfies
\begin{equation}
    |\phi(0)|, \quad\left\|\phi^{\prime}\right\|_{\infty}, \quad \sup _{x \neq \tilde{x}}\left|\phi^{\prime}(x)-\phi^{\prime}(\tilde{x})\right| /|x-\tilde{x}|<\infty.
\end{equation}
 \end{assumption}
 \begin{assumption}
  \label{linea4}
     The full model $f_N^c$ has same structure as the reduced model, i.e. width, layer, dimension, 
     and 
     is trained under normal random initialization as in (\ref{ntknorm})
     or 
     (\ref{stdnorm}) using complete features $\boldsymbol{X}$ using gradient descent with learning rate $\epsilon=\frac{\eta_0}{m}$ or $\frac{\eta_0}{m}$ for NTK and standard parameterization, respectively.  \end{assumption}
The following lemma enables us to write the \textit{iterative kernel update equation} for neural network in a rigorous sense. 
\begin{lemma}
\label{nnlemma}
Under assumptions \ref{linea1}-\ref{linea4}, for a neural network applying warm-start initialization and use 
gradient descent with learning rate $\epsilon = \frac{\eta_0}{m}$ for standard parameterization 
and $\epsilon = \eta_0$ for NTK parameterization,  
and $\eta_0 < \eta_{\text {critical }}$, 
the iterative kernel update equation for neural network can be written as
\begin{equation}
  f_{\tau+1}({\boldsymbol{X}^{(I)}}) = (I-\eta_0 K^{(I)}) f_{\tau}({\boldsymbol{X}^{(I)}}) + \eta_0 K^{(I)} \boldsymbol{Y} + \eta_0  \delta_{\tau}
  \label{nnupdaeq}
\end{equation}
where $\delta_{\tau}$ is a term containing errors caused by the changing kernel and linearization of neural networks. 
And
for any $\gamma > 0$, there exists $M \in \mathbb{N}$, for  a network described above
with width $m \geq M$, 
the following holds with probability at least $1-\gamma$ 
\begin{equation}
    \|\delta_{\tau}\|_2 \leq \mathcal{O}\left(m^{-\frac{1}{2}}\right).
\end{equation}

\end{lemma}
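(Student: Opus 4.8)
The plan is to follow the exact gradient-descent trajectory, collect the linearization error and the kernel-drift error into the single term $\delta_\tau$, and then control each piece with width-dependent NTK estimates adapted to the warm-start initialization. First I would write the exact one-step change of the network output: starting from the parameter update~\eqref{gdup} and expanding $\theta\mapsto f(\theta,\boldsymbol{X}^{(I)})$ to first order around $\theta_\tau$ as in~\eqref{genrep}, one obtains
\begin{equation*}
f_{\tau+1}(\boldsymbol{X}^{(I)}) = f_\tau(\boldsymbol{X}^{(I)}) - \eta_0 K^{(I)}_\tau\bigl(f_\tau(\boldsymbol{X}^{(I)})-\boldsymbol{Y}\bigr) + R_\tau ,
\end{equation*}
where $K^{(I)}_\tau = \tfrac{1}{N}\nabla_\theta f(\theta_\tau)\nabla_\theta f(\theta_\tau)^T$ is exactly the empirical NTK matrix of $f_\tau$ (for the NTK parameterization; under the standard parameterization an extra $1/m$ appears in this product and is precisely cancelled by the $1/m$ in the step size $\epsilon=\eta_0/m$, so $K^{(I)}_\tau$ is an $O(1)$ object consistent with Assumption~\ref{trass}), and $R_\tau\in\mathbb{R}^N$ is the Taylor remainder of the output map. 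Adding and subtracting the stationary empirical NTK matrix $K^{(I)}$ gives~\eqref{nnupdaeq} with
\begin{equation*}
\delta_\tau = \bigl(K^{(I)}-K^{(I)}_\tau\bigr)\bigl(f_\tau(\boldsymbol{X}^{(I)})-\boldsymbol{Y}\bigr) + \eta_0^{-1}R_\tau ,
\end{equation*}
so the task reduces to bounding these two terms.

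For the remainder $R_\tau$, I would use the vanishing of the network Hessian in the NTK scaling, $\sup_{\theta}\|\nabla^2_\theta f(\theta,\mathbf{X}^{(I)}_i)\|_{op} = O(m^{-1/2})$ with high probability, together with the per-step bound $\|\theta_{\tau+1}-\theta_\tau\|_2 = O(\eta_0)$ — which follows from $\|K^{(I)}_\tau\|_{op}=O(1)$, $\|f_\tau(\boldsymbol{X}^{(I)})-\boldsymbol{Y}\|_2 = O(\sqrt N)$, and the explicit form of the update~\eqref{gdup} — to conclude $\|R_\tau\|_2 = O(m^{-1/2})$ for fixed $N$; alternatively one can invoke the output-linearization bound of Theorem~2.1 of~\cite{linearnn}, extended to the warm-start initialization as developed in Appendix~\ref{nnlinour}. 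For the kernel-drift term I would combine two facts. First, at the warm start the empirical NTK is $O(m^{-1/2})$-close to the stationary NTK $\mathbb{K}^{(I)}$: training the full model $f_N^c$ from the Gaussian initialization~\eqref{ntknorm}/\eqref{stdnorm} (Assumption~\ref{linea4}) keeps the parameters within an $O(1)$ ball and perturbs the kernel by only $O(m^{-1/2})$ — the warm-start analogue of the global-convergence and NTK-stability statements of~\cite{jacot,ode} — so the empirical NTK evaluated on $\boldsymbol{X}^{(I)}$ still concentrates around $\mathbb{K}^{(I)}$ at rate $m^{-1/2}$. Second, NTK stability along the reduced-model trajectory gives $\sup_{\tau}\|K^{(I)}_\tau-K^{(I)}_0\|_{op} = O(m^{-1/2})$, the adaptation of Theorem~2.1 of~\cite{linearnn} to starting from $\theta_0$ equal to the parameters of $f_N^c$. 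Combining these, $\|K^{(I)}-K^{(I)}_\tau\|_{op}=O(m^{-1/2})$, and together with $\|f_\tau(\boldsymbol{X}^{(I)})-\boldsymbol{Y}\|_2=O(\sqrt N)$ — which holds at $\tau=0$ by Assumption~\ref{dropb} and propagates inductively because the update map is a contraction when $\eta_0<\eta_{\text{critical}}$ (Assumption~\ref{linea1}) up to an $O(m^{-1/2})$ perturbation — this yields $\|\delta_\tau\|_2 = O(m^{-1/2})$. Taking $m\geq M$ large enough and a union bound over the finitely many width-dependent concentration events (and over iterations up to the stopping horizon) gives the bound with probability at least $1-\gamma$.

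The main obstacle is establishing the kernel-drift bound in the warm-start regime: the standard NTK machinery presumes a fresh Gaussian initialization, whereas here the ``initialization'' of the reduced-model dynamics is the \emph{data-dependent} trained network $f_N^c$. Making this rigorous requires a coupled induction — the kernel staying $O(m^{-1/2})$-close to $\mathbb{K}^{(I)}$ implies the residual stays $O(\sqrt N)$, which implies the parameters move only $O(1)$, which in turn keeps the kernel close — carried out across both the full-model training phase and the subsequent reduced-model phase, and using Assumption~\ref{dropb} to certify that the warm start has not already escaped the NTK ball. This is precisely the content of the extension of Theorem~2.1 of~\cite{linearnn} to warm-start initialization in Appendix~\ref{nnlinour}; once it is in place, the linearization-remainder bound and the inductive residual control are comparatively routine.
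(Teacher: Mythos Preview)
Your proposal is correct and follows essentially the same route as the paper: isolate a kernel-drift piece and a linearization piece, then control both via the warm-start extensions of the NTK stability and Jacobian-Lipschitz results from \cite{linearnn} (the paper's Lemma~\ref{liplemma} and Theorem~\ref{nnconthm}). The only difference is in where you linearize: you expand $f(\theta,\cdot)$ around $\theta_\tau$ at each step, producing the time-varying kernel $K^{(I)}_\tau$ and a one-step Taylor remainder $R_\tau$ controlled by the Hessian bound; the paper instead invokes its global linearization result (Theorem~\ref{linthm}) around the fixed warm-start point $\theta_0$, so the exact linear dynamics carry the kernel $K^{(I)}_0$ and the error enters through the differences $e_{\tau+1}-e_\tau$ of the cumulative linearization errors, giving $\delta_\tau=(K^{(I)}-K^{(I)}_0)g(\theta_\tau)+K^{(I)}e_\tau+\eta_0^{-1}(e_{\tau+1}-e_\tau)$. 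Both decompositions reduce to the same three ingredients you list --- kernel stability $O(m^{-1/2})$, linearization error $O(m^{-1/2})$, and bounded residuals --- so the distinction is cosmetic; your per-step version is arguably more transparent, while the paper's version leverages a theorem it has already proven.
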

Making use of this result and applying the same proof technique for the general algorithm, we are able to derive the following convergence bound for feed-forward neural networks under fixed design.
\begin{corollary}[Error bound for neural network under fixed design] 
For any $\gamma > 0$, there exists an $M \in \mathbb{N}$, such that for a fully-connected neural network with width $m \geq M$, if we use our warm-start initialization when applying gradient descent with learning rate $\epsilon = \frac{\eta_0}{m}$ for standard parameterization and $\epsilon = \eta_0$ for NTK parameterization, then
\label{cor1}
\begin{equation}
     \|f_{\widehat{T}_{op}} - f_{0,-I}\|_N^2       \leq \mathcal{O}\left( N^{-\frac{1}{2}}\right)
\end{equation}
with probability at least $1- \gamma - c_1 \exp \left(-c_2 N \widehat{\varrho}_N^2\right)$.
\end{corollary}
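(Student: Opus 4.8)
The plan is to derive Corollary~\ref{cor1} as a direct specialization of Theorem~\ref{genthmfix} to the neural-network setting, using Lemma~\ref{nnlemma} to verify that all the ingredients of the general theorem are in place. First I would check the structural hypotheses: Assumption~\ref{xb} (bounded data) together with the recursive NTK formulas of \cite{jacot} gives $\mathrm{tr}(K^{(I)}) = O(1)$, so Assumption~\ref{trass} holds; Assumptions~\ref{inass}, \ref{rmdassum}, \ref{dropb} are carried over directly as standing hypotheses; and Assumption~\ref{linea1} gives that $K^{(I)}$ is full rank with bounded eigenvalues, so the step-size condition $\epsilon \le \min\{1, 1/\widehat{\lambda}_1\}$ is compatible with the choice $\eta_0 < \eta_{\text{critical}}$ after the normalization discussion following Assumption~\ref{linea1}. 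The key point is that Lemma~\ref{nnlemma} supplies exactly the \emph{iterative kernel update equation}~\eqref{nnupdaeq} in the form~\eqref{evlup}--\eqref{addsubeq} required by Theorem~\ref{genthmfix}, with the residual term $\delta_\tau$ playing the role of the evolving-kernel perturbation.

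The heart of the argument is to control the Difference term $D_\tau^2$ in the error decomposition of Lemma~\ref{applem1}. By Lemma~\ref{nnlemma}, for any $\gamma > 0$ there is $M$ so that for width $m \ge M$ we have $\|\delta_\tau\|_2 \le O(m^{-1/2})$ with probability at least $1-\gamma$, uniformly over the iterations up to $\widehat{T}_{\max}$ (here I would note that $\widehat{T}_{\max}$ is finite and deterministic given the kernel, so a union bound over iterations is harmless, or alternatively the bound on $\delta_\tau$ already holds uniformly). Since $\tilde\delta_i = U^T\delta_i$ and $U$ is orthonormal, $\|\tilde\delta_i\|_2 = \|\delta_i\|_2$, and because $\|S^{\tau-1-i}\|_{\mathrm{op}} \le 1$ (as $0 \preceq I - \epsilon\Lambda \preceq I$ under the step-size condition), we get
\begin{equation*}
    D_\tau^2 = \frac{4\epsilon^2}{N}\Big\|\sum_{i=0}^{\tau-1} S^{\tau-1-i}\tilde\delta_i\Big\|_2^2 \le \frac{4\epsilon^2 \tau^2}{N}\,O(m^{-1}) = \frac{4\eta_\tau^2}{N}\,O(m^{-1}).
\end{equation*}
So $D_\tau^2 \le g(\tau) := c\,\eta_\tau^2 m^{-1}/N$, which is non-decreasing in $\tau$ as required by the definition of $\widehat{T}_{\mathrm{op}}$. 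For fixed $N$ this can be made an arbitrarily small $O(m^{-1/2})$-type contribution by taking $m$ large; more to the point, I would choose $M$ large enough (depending on $N$, $\gamma$) that $g(\widehat{T}_{\max}) \le N^{-1/2}$, which is possible because $\widehat{T}_{\max}$ depends only on the kernel and $\sigma$, not on $m$.

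With $D_\tau^2$ handled, the remaining Bias and Variance terms $B_\tau^2 + V_\tau$ are controlled exactly as in \cite{esnonpara} and as invoked in the proof of Theorem~\ref{genthmfix}: for $\tau \le \widehat{T}_{\max}$, with probability at least $1 - c_1\exp(-c_2 N \widehat{\varrho}_N^2)$ one has $B_\tau^2 + V_\tau \le C/\eta_\tau$, and the optimal stopping time $\widehat{T}_{\mathrm{op}}$ balances $C/\eta_\tau + g(\tau)$; evaluating the resulting bound at the critical radius gives $\|f_{\widehat{T}_{\mathrm{op}}} - f_{0,-I}\|_N^2 \le O(N^{-1/2})$. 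Intersecting the two good events and relabeling constants yields the stated probability $1 - \gamma - c_1\exp(-c_2 N \widehat{\varrho}_N^2)$. I expect the main obstacle to be the \emph{uniformity in $\tau$} of the $\delta_\tau$ bound from Lemma~\ref{nnlemma}: one must ensure that the linearization error and NTK drift stay $O(m^{-1/2})$ over all iterations up to $\widehat{T}_{\max}$ simultaneously (not just for a single fixed $\tau$), which is where the extension of Theorem~2.1 of \cite{linearnn} to warm-start initialization (Section~\ref{nnlinour}) does the real work — the warm-start point $f_N^c$ is itself random and close-to-linear only up to $O(m^{-1/2})$, so the triangle-inequality bookkeeping propagating that through $\widehat{T}_{\max}$ gradient steps must be done carefully to avoid the error compounding with $\tau$.
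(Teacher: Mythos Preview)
Your proposal is correct and follows essentially the same route as the paper's proof: verify $\mathrm{tr}(K^{(I)})=O(1)$ via the recursive NTK formulas and bounded data, invoke Lemma~\ref{nnlemma} to get $\|\delta_\tau\|_2 = O(m^{-1/2})$ uniformly in $\tau$, bound $D_\tau^2 \le O(\tau^2/(Nm))$ at $\tau=\widehat{T}_{\max}$, and then choose $M$ large enough that this is $O(N^{-1/2})$ before appealing to Theorem~\ref{genthmfix} for the bias--variance part. One minor point: in the update equation~\eqref{nnupdaeq} the effective step size is $\eta_0$ (not the raw learning rate $\epsilon=\eta_0/m$ in the standard parameterization), so the prefactor in $D_\tau^2$ should be $\eta_0^2$ rather than $\epsilon^2$; this is consistent with the paper's~\eqref{borb} and does not change your argument.
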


To derive the population norm convergence bound we need to know the specific eigenvalue decay rate  for the RKHS induced by NTK, which is a complicated problem. Existing results consider input data normalized as $x = \frac{x}{\|x\|_2} \in \mathbb{S}^{p-1}$. For a two-layer ReLu network without bias, 
\citep{egdecnnused, nndecay3} proved that the eigenvalues $\lambda_i$ decay at a rate of $O\left(i^{-p}\right)$. 
\citep{laplace2} showed  that the NTK and Laplace Kernel have the same RKHS, which means that $\lambda_i \sim i^{-p}$; 
\citep{nnedecay2} proved  that the  eigenvalues decay at a rate no faster than  $O\left(i^{-p}\right)$ and gave empirical results showing that the
eigenvalues
decay exactly as  $\Theta \left(i^{-p}\right)$. However they all assume that the bias term $b$ in the feed-forward neural network parameterization are 
initialized to zero, which is different than our setup. To ensure theoretical preciseness, we use result from \cite{egdecnnused} and \cite{egedecay}, and give the following population norm bound for ReLu network with bias $b=0$.

\begin{corollary} In addition to conditions of corollary \ref{cor1}, normalize input data as $x = \frac{x}{\|x\|_2} \in \mathbb{S}^{p-1}$,  
for any $\gamma > 0$, there exists $M \in \mathbb{N}$, for  ReLu network with bias $b=0$ and width $m \geq M$, the following holds with probability at least $1- \gamma - c_1 \exp \left(-c_2 N {\varrho}_N^2\right)$ 
\begin{equation}
     \|f_{\widehat{T}_{op}} - f_{0,-I}\|_2^2       \leq \mathcal{O}\left( N^{-\frac{p}{p+1}}\right).
\end{equation}
\label{popnn}
\end{corollary}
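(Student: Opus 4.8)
\textbf{Proof proposal for Corollary~\ref{popnn}.}

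The plan is to obtain the corollary by specializing the random-design bound of Theorem~\ref{genthmpop} to the ReLU NTK, the only new ingredient being the explicit polynomial eigen-decay of that kernel's integral operator. Revisiting the proof of Theorem~\ref{genthmpop} (which follows \cite{esnonpara}), one gets, at the optimal stopping time $\widehat{T}_{\text{op}}$ and on an event of probability at least $1-c_1\exp(-c_2 N\varrho_N^2)$, a bound of the shape
\[
\|f_{\widehat{T}_{\text{op}}} - f_{0,-I}\|_2^2 \;\lesssim\; \varrho_N^2 \;+\; D_{\widehat{T}_{\text{op}}}^2,
\]
where $\varrho_N$ is the critical population radius and $D_\tau^2$ is the difference term of Lemma~\ref{applem1} coming from the evolving kernel and the linearization of the network. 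So there are two things to do: (i) evaluate $\varrho_N$ for the ReLU NTK with $b=0$ on $\mathbb{S}^{p-1}$, and (ii) show that, for sufficiently large width, $D_{\widehat{T}_{\text{op}}}^2$ is of order at most $N^{-p/(p+1)}$ — which is exactly what Lemma~\ref{nnlemma} and the warm-start extension of Theorem~2.1 of \cite{linearnn} in Section~\ref{nnlinour} are designed to supply.

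For step (i) I would invoke \cite{egdecnnused} and \cite{egedecay}: for the ReLU NTK with zero bias on the sphere the eigenvalues satisfy $\lambda_i \asymp i^{-p}$, which also verifies Assumption~\ref{trasspop} (needing the reduced input dimension $\ge 2$ so that $\sum_i i^{-p}<\infty$). Plugging $\lambda_i \asymp i^{-p}$ into $\mathcal{R}_{\mathbb{K}}(\varrho)^2 = \frac1N\sum_{j\ge 1}\min\{\lambda_j,\varrho^2\}$ and splitting the sum at $j^\star \asymp \varrho^{-2/p}$ (where $\lambda_j$ crosses $\varrho^2$): the head contributes $\asymp \varrho^2 j^\star \asymp \varrho^{2-2/p}$ and the tail $\asymp \int_{j^\star}^\infty x^{-p}\,dx \asymp \varrho^{2-2/p}$, so $\mathcal{R}_{\mathbb{K}}(\varrho) \asymp N^{-1/2}\varrho^{1-1/p}$. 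Solving the critical inequality $\mathcal{R}_{\mathbb{K}}(\varrho)\le \varrho^2 C_{\mathcal{H}}^2/(40\sigma)$ for its smallest positive root (with $C_{\mathcal{H}},\sigma=O(1)$) gives $\varrho_N \asymp N^{-\frac{p}{2(p+1)}}$, i.e.\ $\varrho_N^2 \asymp N^{-\frac{p}{p+1}}$, the claimed rate.

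For step (ii) I would use Lemma~\ref{nnlemma}: for any $\gamma>0$ there is $M$ such that for width $m\ge M$ one has $\|\delta_\tau\|_2 \le \mathcal{O}(m^{-1/2})$ uniformly in $\tau$ on an event of probability at least $1-\gamma$, which accounts for the extra term in the stated success probability. In $D_\tau^2 = \frac{4\epsilon^2}{N}\big\|\sum_{i=0}^{\tau-1} S^{\tau-1-i}\tilde{\delta}_i\big\|_2^2$ the shrinkage factors $S^{\tau-1-i} = (I-\epsilon\Lambda)^{\tau-1-i}$ contract geometrically along the nonzero eigen-directions, so on those directions the partial sum is $O(1/\epsilon)$ times the per-step size of $\delta_i$; along the kernel's null space one instead uses that the projection of $\delta_i$ there stays controlled, which is part of the NTK-stability estimates underlying Lemma~\ref{nnlemma}. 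Putting these together, $D_{\widehat{T}_{\text{op}}}^2 = \mathcal{O}(m^{-1/2})$ up to factors polynomial in $\eta_{\widehat{T}_{\text{op}}} \asymp 1/\varrho_N^2 \asymp N^{p/(p+1)}$, so taking $M=M(N,\gamma)$ large enough forces $D_{\widehat{T}_{\text{op}}}^2 \lesssim N^{-p/(p+1)}$; combining with step (i) and collecting probabilities yields the corollary.

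The hard part will be step (ii): I must ensure the accumulated linearization-plus-kernel-drift error $D_{\widehat{T}_{\text{op}}}^2$ does not overwhelm the statistical rate, and since $\widehat{T}_{\text{op}}$ (equivalently $\eta_{\widehat{T}_{\text{op}}}$) grows polynomially in $N$, the required width $M$ must grow with $N$; the delicate bookkeeping is on the kernel's null space, where there is no geometric contraction and a naive triangle inequality over the $\tau$ terms would lose too much, so the NTK-stability bounds must be threaded through carefully. A minor but worth-noting point: here $p$ should be read as the input dimension of the reduced model $X_{-I}$ (the $p-|I|$ of Section~2), so the eigen-decay and the final rate are stated in that dimension; and since \cite{egdecnnused,egedecay} describe the population (analytic) NTK, which is precisely the object defining $\varrho_N$ in Theorem~\ref{genthmpop}, no further empirical-to-population translation is needed.
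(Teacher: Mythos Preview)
Your step~(i) is essentially the paper's computation: split $\sum_j \min\{\lambda_j,\varrho^2\}$ at $j^\star\asymp\varrho^{-2/p}$, obtain $\mathcal{R}_{\mathbb{K}}(\varrho)\asymp N^{-1/2}\varrho^{1-1/p}$, and solve for $\varrho_N^2\asymp N^{-p/(p+1)}$. That part is fine.

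There is, however, a genuine gap in step~(ii). You summarize the output of Theorem~\ref{genthmpop} as $\|f_{\widehat{T}_{\text{op}}}-f_{0,-I}\|_2^2 \lesssim \varrho_N^2 + D_{\widehat{T}_{\text{op}}}^2$, but that is not what the proof of Theorem~\ref{genthmpop} actually gives. The route there is to split $f_\tau = f_\tau^{\delta} + (f_\tau - f_\tau^{\delta})$ via Lemma~\ref{indupopf}, and the final bound carries \emph{both} the empirical difference $D_{\widehat{T}_{\text{op}}}^2$ \emph{and} the population quantity $\|f_{\widehat{T}_{\text{op}}}^{\delta}\|_2^2$. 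The empirical $D_\tau^2$ of Lemma~\ref{applem1} only controls $\|f_\tau^{\delta}\|_N^2$, and you cannot pass from $\|\cdot\|_N$ to $\|\cdot\|_2$ by the localization lemmas of \cite{esnonpara} here, because $f_\tau^{\delta}$ is built from the \emph{evolving} kernel $\mathbb{K}_\tau^{(I)}$ and need not sit in a fixed RKHS ball. Consequently, invoking Lemma~\ref{nnlemma} (which bounds the vector $\|\delta_\tau\|_2\in\mathbb{R}$) is not enough; you need its function-level analogue, namely a bound on $\|\delta_\tau(\cdot)\|_{L^2(P)}$, together with the recursion for $f_\tau^{\delta}(\cdot)$ in Lemma~\ref{indupopf}, to show inductively that $\|f_{\widehat{T}_{\text{op}}}^{\delta}\|_2^2$ is of the desired order. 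That function-level NTK stability statement is precisely what the paper supplies as Lemma~\ref{nnlemmapop}, and it is the missing ingredient in your plan.

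A second, smaller point: your worry that ``a naive triangle inequality over the $\tau$ terms would lose too much'' is unfounded, and your geometric-contraction refinement is unnecessary. The paper simply uses $\|S^{\tau-1-i}\|\le 1$ and the triangle inequality to get $D_{\widehat{T}_{\max}}^2 \lesssim \widehat{T}_{\max}^{\,2}/(Nm)$ (and similarly for $\|f_\tau^{\delta}\|_2$), and then absorbs the polynomial-in-$N$ factor into the width by taking $M=M(N,\gamma)$ large enough. Since the statement only asserts existence of such an $M$, this crude bound already suffices; no special handling of the kernel's null space is needed.
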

Even though this population convergence bound is restricted to ReLU networks with $0$ bias, empirical results for networks with non-zero bias \citep{nnedecay2} suggest that the above results should also apply to ReLU networks with bias. For general feed-forward neural networks with features not normalized to lie on the sphere, we can expect a convergence rate of at least $O(N^{-\frac{1}{2}})$. This is because the general constraint, specifically the bounded sum of eigenvalues, should hold in general since the trace of the empirical kernel matrix is always bounded in our setup.  And 
by classical results \citep{eigenconver}, the empirical eigenvalues $\widehat{\lambda}_i$ would converge to the population counterpart $\lambda_i$, so the $\sum_i \lambda_i$ should also converge, which lead to a convergence rate of at least $O(N^{-\frac{1}{2}})$.


The convergence results may seem counterintuitive at first glance since rates get faster as $p$ grows, but the underlying reason is that the eigenvalue decay rate is faster in higher dimensions. This is due to the increasing complexity of function spaces on the hypersphere. Since the RKHS of the NTK matches that of the Laplace kernel \citep{laplace2}, and the Laplace kernel enforces smoothness, becoming more stringent as dimensionality increases, this leads to a rapid drop-off in eigenvalues as $p$ grows. Essentially, higher dimensional spheres ``squeeze'' the RKHS, resulting in a faster decay in the spectrum of the kernel matrix. In other words, the magnitude of the enforcing smoothness because of the constrain of hyper-sphere is stronger than the magnitude of increasing function class in higher dimensions.




We do not specify a particular rate for $m$ in relation to $N$. However, we can always select a sufficiently large width to ensure that the difference term $D^2_{\tau}$ is bounded within the desired rate. Our results are expressed in terms of the existence of $M \in \mathbb{N}$. If the width $m \rightarrow \infty$, the difference term would become zero, reducing to the original framework of \cite{esnonpara}. However, this is impractical in real-world applications, and such a strong condition is unnecessary.

One major advantage of using neural networks is that, due to universal approximation theory for arbitrary width and bounded depth \citep{unapp1, unapp2}, we can approximate any continuous target function. This makes our approach a model-agnostic VI estimation method, as it doesn't rely on strong assumptions about $f_{0,-I}$.

 \subsection{Gradient boosted decision trees}
 \label{SecGBDT}
A classic gradient boosting algorithm \citep{bsalgo} iteratively minimizes the expected loss $\mathcal{L}(f) = \mathbb{E}[L(f(x), y)]$ using weak learners. In each iteration $\tau$, the model is updated as $f_\tau(x) = f_{\tau-1}(x) + \epsilon w_\tau(x)$, where the weak learner $w_\tau$ is selected from a function class $\mathcal{W}$ to approximate the negative gradient of the loss function. When $\mathcal{W}$ consists of decision trees, the algorithm is known as Gradient Boosting Decision Trees (GBDT). A decision tree recursively partitions the feature space into disjoint regions called leaves, with each leaf $R_j$ assigned a value representing the estimated response $y$ for that region \citep{gdbt}.

For theoretical convenience, given the complexity of standard GBDT, we explore a simpler GBDT algorithm that uses symmetric trees as weak learners and incorporates additional noise introduced in \cite{gdbt}. Each weak learner, denoted by $\nu$ is a symmetric, oblivious tree, i.e., all nodes at a given level share the same splitting criterion (feature and threshold). Specifically, each weak learner $\nu$ is chosen by a \textit{SampleTree} algorithm with   noise level  controlled by a parameter $\beta$, which is referred to as random strength.
To limit the number of candidate splits, each feature is quantized into $n+1$ bins. The maximum tree depth is limited by $d$.  

The parameter random strength, $\beta$ is an important parameter both in  the model implementations and theoretical analysis. For 
typical GBDT algorithms, the new weak learner is chosen by maximizing a predefined score function. Suppose we want to maximize score $D$. The random noise is then added to the score as follows (see 
Algorithm \ref{stalgo} in Appendix \ref{sampletreealgo}
for details about this procedure):
\begin{equation}
\label{chosetree}
    D -\beta \log(-\log(u))
\end{equation}
where $u$ is a sample from uniform distribution $U[0,1]$. Because $u$ is a random variable, each time we sample a different $u$, we would choose  a different  week learner, which induces a distribution over all possible trees at each iteration, which we denote as $p(\nu | f_{\tau}, \beta)$. These distributions are associated with the kernels to be defined for the GBDT we are analyzing.  And the parameter $\beta$ enables us to control the difference between between $\mathbb{K}^{(I)}_{\tau}$ and $\mathbb{K}_{\tau}$, which will become evident later.  Generally speaking, the parameter $\beta$ for GBDT acts as the same way as the width $m$ for neural network, which is mainly a hyperparameter to control the difference term $D^2_{\tau}$ in the final bound.

\paragraph{Tree structure.}
Let $\mathcal{V}$ represent all tree structures. Each $\nu$ corresponds to a decision tree. Let 
$\phi_{\nu}: X \rightarrow \{0,1\}^{L_{\nu}}$.  Having $\phi_{\nu}$, define a weak learner associated with it as $x \mapsto \left \langle\theta, \phi_{\nu}(x)  \right \rangle_{\mathbb{R}^{L_{\nu}}}$, for $\theta \in \mathbb{R}^{L_{\nu}}$, which is leaf values. Define a linear space $\mathcal{F} \subset L_2(\rho)$
 of all possible ensembles of trees from $\mathcal{V}$:
\begin{equation}
    \mathcal{F} = \text{span}\left\{ \phi_{\nu}^{(j)}(\cdot): X \rightarrow \{0,1 \} | \nu \in \mathcal{V}, j \in\{1, \dots, L_{\nu}\} \right\}.
\end{equation}
Since $\mathcal{V}$ is finite and therefore $\mathcal{F}$ is finite-dimensional and thus topologically closed.

\paragraph{RKHS structure.}
The RKHS structure for this specific GBDT is based on a kernel defined for each tree structure, i.e. weak learner,  denoted by $k_{\nu}(\cdot, \cdot)$, which is defined as 
\begin{equation}
    k_\nu\left(x, x^{\prime}\right)=\sum_{j=1}^{L_\nu} w_\nu^{(j)} \phi_\nu^{(j)}(x) \phi_\nu^{(j)}\left(x^{\prime}\right), \text { where } w_\nu^{(j)}=\frac{N}{\max \left\{N_\nu^{(j)}, 1\right\}}, N_\nu^{(j)}=\sum_{i=1}^N \phi_\nu^{(j)}\left(x_i\right) .
    \label{wekernel}
\end{equation}
Then $\mathbb{K}^{(I)}_{\tau}$ is given by
\begin{equation}
    \mathbb{K}_{\tau}\left(x, x^{\prime}\right)=\sum_{\nu \in \mathcal{V}} k_\nu\left(x, x^{\prime}\right) p(\nu \mid f_\tau, \beta) .
\end{equation}
where $p(\nu \mid f_\tau, \beta)$ is the induced distribution at iteration $\tau$ by  the added noise.
 The kernel during training  $\mathbb{K}_{\tau}^{(I)}$ converges to a certain stationary kernel  $\mathbb{K}^{(I)}$ as the
model approaches the empirical minimizer $f_{*}$, i.e. when $\tau \rightarrow \infty$. 
The stationary kernel $\mathbb{K}$ is defined as
\begin{equation}
    \mathbb{K}\left(x, x^{\prime}\right)=\sum_{\nu \in \mathcal{V}} k_\nu\left(x, x^{\prime}\right) \pi(\nu)
\end{equation}
where $\pi(\nu)$ is
a uniform distribution over $\mathcal{V}$. Since $\pi(\nu)$ does not involve any $f_\tau$,  $\mathbb{K}^{(I)}$ is independent with  $ \mathbb{K}^{(I)}_{\tau}$. 
It turns out that the kernel class for the GBDT belongs to the finite rank kernels, and we can bound $tr(K^{(I)})$ by $2^d$, which satisfies the assumption \ref{trass}. 

In terms of the iterative kernel update equation,
Lemma 3.7 in \cite{gdbt} allows us to express the model updates similarly to equation (\ref{evlup}), but incorporating the distribution $p(\nu \mid f_{\tau},\beta)$. To align with our general framework, we take the expectation with respect to the algorithm's randomness on both sides of the update equations.
Note that since $f_N^c$ is random, the dropout error $\boldsymbol{e}^{(I)}$ is also random.  The \textit{iterative kernel update equation} for the GBDT is given by the following lemma. 
\begin{lemma}
\label{treekernel}
    The iterative kernel update equation for  the particular GBDT we consider is:
    \begin{equation}
    \begin{aligned}\mathbb{E}_u f_{\tau+1}(\boldsymbol{X}^{(I)}) 
&= (I-\epsilon \mathbb{E}K^{(I)})\mathbb{E}_u f_{\tau}(\boldsymbol{X}^{(I)}) + \epsilon \mathbb{E}_u K^{(I)} \mathbb{E}_u \boldsymbol{e}^{(I)} + \epsilon\delta_\tau
\end{aligned}
\end{equation}
where $\delta_{\tau} = \mathbb{E}_u(K^{(I)}-K^{(I)}_{\tau})[f_{\tau}(\boldsymbol{X}^{(I)}) - \boldsymbol{e}^{(I)}]$ and the expectation is taken w.r.t. the randomness of the \textit{SampleTree} algorithm.
\end{lemma}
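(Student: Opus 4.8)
\emph{Proof proposal.} The plan is to derive the recursion from the one-step functional gradient step of the noisy symmetric-tree GBDT, average over the \emph{SampleTree} randomness in two stages, and isolate the (deterministic) stationary kernel so that all remaining fluctuation is collected into the single term $\delta_\tau$.

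First I would record the one-step update at the design points in random-kernel form. At iteration $\tau$ the algorithm samples a tree structure $\nu_\tau \sim p(\cdot \mid f_\tau, \beta)$ via the scoring rule~\eqref{chosetree} and then sets the leaf values to their least-squares-optimal choices, i.e.\ the per-leaf averages of the current residual $\boldsymbol{Y} - f_\tau(\boldsymbol{X}^{(I)})$. By Lemma~3.7 of \cite{gdbt} together with the per-tree kernel $k_\nu$ of~\eqref{wekernel}, this is exactly
\[
f_{\tau+1}(\boldsymbol{X}^{(I)}) = \bigl(I - \epsilon K^{(I)}_{\nu_\tau}\bigr) f_\tau(\boldsymbol{X}^{(I)}) + \epsilon K^{(I)}_{\nu_\tau}\boldsymbol{Y},
\]
where $K^{(I)}_{\nu_\tau}$ is the empirical kernel matrix of $k_{\nu_\tau}$. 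Using the warm-start decomposition $\boldsymbol{Y} = f_N^c(\boldsymbol{X}^{(I)}) + \boldsymbol{e}^{(I)}$ and the additive structure of boosting, I would pass to the iterate shifted by $f_N^c$ (equivalently, track $f_\tau$ relative to the warm start), whose recursion has the same form but with the effective target $\boldsymbol{e}^{(I)}$ in place of $\boldsymbol{Y}$; crucially the tree-sampling law $p(\cdot\mid f_\tau,\beta)$ depends on the history only through the residual $\boldsymbol{Y}-f_\tau(\boldsymbol{X}^{(I)}) = \boldsymbol{e}^{(I)} - \bigl(f_\tau(\boldsymbol{X}^{(I)})-f_N^c(\boldsymbol{X}^{(I)})\bigr)$, so this reparameterization leaves the kernel's functional dependence intact --- this is the step the text flags as ``straightforward for gradient boosting.''

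Next I would condition on the natural filtration $\mathcal{F}_\tau = \sigma\bigl(\nu_0,\dots,\nu_{\tau-1},\boldsymbol{X},\boldsymbol{Y},f_N^c\bigr)$ and take the expectation over the fresh uniform variable $u$ used to draw $\nu_\tau$. Since $\mathbb{E}[K^{(I)}_{\nu_\tau}\mid \mathcal{F}_\tau] = \sum_{\nu\in\mathcal{V}} k_\nu\, p(\nu\mid f_\tau,\beta) = K^{(I)}_\tau$ by definition of $\mathbb{K}^{(I)}_\tau$, and $f_\tau$, $K^{(I)}_\tau$, $\boldsymbol{e}^{(I)}$ are $\mathcal{F}_\tau$-measurable, this gives $\mathbb{E}[f_{\tau+1}(\boldsymbol{X}^{(I)})\mid\mathcal{F}_\tau] = (I-\epsilon K^{(I)}_\tau)f_\tau(\boldsymbol{X}^{(I)}) + \epsilon K^{(I)}_\tau\boldsymbol{e}^{(I)}$. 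Adding and subtracting the stationary kernel matrix $K^{(I)}$ (the empirical matrix of $\mathbb{K}^{(I)} = \sum_\nu k_\nu\pi(\nu)$ with $\pi$ uniform over $\mathcal{V}$, hence \emph{not} depending on the \emph{SampleTree} randomness) produces the bias-type terms plus $\epsilon\bigl(K^{(I)}-K^{(I)}_\tau\bigr)\bigl(f_\tau(\boldsymbol{X}^{(I)}) - \boldsymbol{e}^{(I)}\bigr)$. Finally I would take the full expectation $\mathbb{E}_u$ over all of the \emph{SampleTree} randomness and apply the tower property; because $K^{(I)}$ is $u$-independent we may pull it out, $\mathbb{E}_u[K^{(I)}f_\tau(\boldsymbol{X}^{(I)})] = K^{(I)}\mathbb{E}_u[f_\tau(\boldsymbol{X}^{(I)})]$ and likewise on $\boldsymbol{e}^{(I)}$, while the product of the \emph{dependent} quantities $K^{(I)}_\tau$ and $f_\tau$ must be kept intact. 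Setting $\delta_\tau := \mathbb{E}_u\bigl[(K^{(I)}-K^{(I)}_\tau)(f_\tau(\boldsymbol{X}^{(I)})-\boldsymbol{e}^{(I)})\bigr]$ yields exactly the stated identity, with $\mathbb{E}K^{(I)} = \mathbb{E}_uK^{(I)} = K^{(I)}$.

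The main obstacle is not the algebra but the dependence bookkeeping: $f_N^c$ is itself random, $K^{(I)}_\tau$ is a nonlinear functional of the entire sampling history, and the temptation to factor $\mathbb{E}_u[K^{(I)}_\tau\, f_\tau(\boldsymbol{X}^{(I)})]$ must be resisted. The deliberate choice to leave the difference kernel $K^{(I)}_\tau - K^{(I)}$ paired with $f_\tau(\boldsymbol{X}^{(I)})-\boldsymbol{e}^{(I)}$ inside the expectation is precisely what later makes $\delta_\tau$ controllable through the random strength $\beta$ (which governs $\|K^{(I)}_\tau - K^{(I)}\|$), so this grouping should be justified carefully rather than smoothed over. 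A secondary point to check is that the noise injection~\eqref{chosetree} perturbs only the \emph{distribution over tree structures} and not the optimal leaf \emph{values}, so that the per-tree kernel representation of \cite{gdbt} applies verbatim at each step.
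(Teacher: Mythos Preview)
Your proposal is correct and follows essentially the same route as the paper: invoke Lemma~3.7 of \cite{gdbt} to write the one-step update in per-tree kernel form, shift by the warm start so the target becomes $\boldsymbol{e}^{(I)}$, add and subtract the stationary $K^{(I)}$, and average over the \emph{SampleTree} randomness using that $K^{(I)}$ is independent of $f_\tau$ and $\boldsymbol{e}^{(I)}$. Your explicit two-stage conditioning (first on $\mathcal{F}_\tau$, then the tower property) is slightly more careful than the paper's one-line passage to expectations, but the substance is identical.
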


When $\beta \rightarrow \infty$, $p(\nu \mid f_{\tau},\beta)$  would be simply  $\pi(\nu)$. This means we can bound
The difference between $p(\nu \mid f_{\tau},\beta)$ and $\pi(\nu)$ by setting the random strength $\beta$ to certain values, so we can control the term $\delta_\tau$. By setting $\beta$  greater than a certain threshold we can bound the difference term $D_\tau^2$ in the case of GBDT. We then have the following corollary stating the convergence result under fixed design. 
 \begin{corollary}
 For the GBDT algorithm mentioned above with depth $d$, and random strength $\beta \geq  N^{5/4}$, the following holds with probability at least $1- c_3 \exp \left(-c_4 N \widehat{\varrho}_N^2\right)$
 over warm-start initialization when applying gradient boosting with learning rate $\epsilon \leq \frac{1}{4 M_\beta N}$
\begin{equation}
  \left\|\mathbb{E}_u  f_{\widehat{T}_{op}}-f_{0,-I}\right\|_N^2 \leq \mathcal{O}\left(
   \sqrt{\frac{2^d}{N}}
   \right).
\end{equation}
where $M_{\beta} =  e^{\frac{ d \cdot \|f_{*}(\boldsymbol{X}^{(I)})\|_2^2}{N \beta}}$ and $f_{*}$ is the empirical minimizer. 
\label{treecorfix}
\end{corollary}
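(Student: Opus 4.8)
The plan is to obtain Corollary~\ref{treecorfix} as an instance of the general fixed-design bound in Theorem~\ref{genthmfix}, so the work reduces to (a) checking that this GBDT satisfies Assumptions~\ref{inass}--\ref{trass}, (b) supplying the ``additional model-specific conditions'' the theorem leaves open --- namely the iterative kernel update equation and a quantitative bound on the difference term $D_\tau^2$ of Lemma~\ref{applem1} --- and (c) tracking how the trace bound $tr(K^{(I)})\le 2^d$ propagates into the critical radius and hence the rate. Item (a) is largely bookkeeping: the stationary kernel $\mathbb{K}^{(I)}=\sum_\nu k_\nu\pi(\nu)$ is finite rank and $tr(K^{(I)})\le 2^d$, because each $k_\nu$ is PSD of rank $\le 2^d$ with normalized trace $O(1)$ (using the weights $w_\nu^{(j)}=N/\max\{N_\nu^{(j)},1\}$ in~\eqref{wekernel} together with the $1/N$ normalization of the empirical kernel in~\eqref{eqn:EmpKer}), so Assumption~\ref{trass} holds with constant $2^d$; since $Y_i$ is bounded by Assumption~\ref{xb} and $f_N^c$ is an ensemble of finitely many bounded leaves, $\|\boldsymbol{Y}-f_N^c(\boldsymbol{X}^{(I)})\|_2=O(\sqrt N)$, giving Assumption~\ref{dropb}; and Assumptions~\ref{inass} and~\ref{rmdassum} are taken as hypotheses. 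Lemma~\ref{treekernel} supplies the update equation for $\mathbb{E}_u f_\tau$ in the form~\eqref{addsubeq} with $\delta_\tau=\mathbb{E}_u(K^{(I)}-K^{(I)}_\tau)[f_\tau(\boldsymbol{X}^{(I)})-\boldsymbol{e}^{(I)}]$, so the derivation of Lemma~\ref{applem1} applies verbatim with $f_\tau$ replaced by $\mathbb{E}_u f_\tau$, which is why the conclusion is stated for $\mathbb{E}_u f_{\widehat{T}_{op}}$.

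The core estimate is a uniform bound on $\|\delta_\tau\|_2$ over the iterations preceding $\widehat{T}_{op}$. I would use the Gumbel-max structure of~\eqref{chosetree}: perturbing the score $D$ by $-\beta\log(-\log u)$ and taking the argmax samples the weak learner from the softmax $p(\nu\mid f_\tau,\beta)\propto\exp(D_\nu(f_\tau)/\beta)$ over the finite set $\mathcal V$. The oscillation of the score across $\mathcal V$ is controlled by the current residual and the tree depth, $\sup_{\nu,\nu'}|D_\nu(f_\tau)-D_{\nu'}(f_\tau)|=O\!\big(d\,\|f_\tau(\boldsymbol{X}^{(I)})\|_2^2/N\big)$, and since the early-stopped iterates remain bounded --- the recursion of Lemma~\ref{treekernel} is a strict contraction plus a bounded forcing term once $\epsilon\,\|\mathbb{E}_uK^{(I)}_\tau\|_{op}<2$, which $\epsilon\le 1/(4M_\beta N)$ guarantees --- this oscillation is $O(d\,\|f_*(\boldsymbol{X}^{(I)})\|_2^2/N)$ uniformly in $\tau$. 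Hence $p(\nu\mid f_\tau,\beta)\le M_\beta\,\pi(\nu)$ with $M_\beta=\exp\!\big(d\,\|f_*(\boldsymbol{X}^{(I)})\|_2^2/(N\beta)\big)$, and therefore $\|K^{(I)}_\tau-K^{(I)}\|_{op}=\big\|\textstyle\sum_\nu k_\nu(p(\nu\mid f_\tau,\beta)-\pi(\nu))\big\|_{op}\le (M_\beta-1)\cdot O(2^d)$. Combining with $\|f_\tau(\boldsymbol{X}^{(I)})-\boldsymbol{e}^{(I)}\|_2=O(\sqrt N)$ gives $\|\delta_\tau\|_2=O\!\big(2^d d\sqrt N\,(M_\beta-1)\big)$, and taking $\beta\ge N^{5/4}$ with $\|f_*(\boldsymbol{X}^{(I)})\|_2^2=O(N)$, so that $M_\beta-1=O(N^{-5/4})$, yields $\|\delta_\tau\|_2=O(2^d d\,N^{-3/4})$.

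It remains to feed this into Lemma~\ref{applem1} and the stopping rule. Using $\|S^k\|_{op}\le 1$ and $\tau\epsilon=\eta_\tau$, the difference term obeys $D_\tau^2\le \tfrac{4\eta_\tau^2}{N}\max_{i<\tau}\|\delta_i\|_2^2$, which is non-decreasing in $\tau$ and so plays the role of the function $g(\tau)$ in the stopping rule. For a kernel of rank $\le 2^d$ the defining inequality~\eqref{epdef} gives $\widehat{\varrho}_N^2=O(2^d/N)$ and, exactly as in~\cite{esnonpara}, $B_\tau^2+V_\tau\le C/\eta_\tau$ for all $\tau\le\widehat{T}_{\max}$ with $\eta_{\widehat{T}_{\max}}=O(N/2^d)$. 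Choosing $\widehat{T}_{op}$ to balance $C/\eta_\tau$ against $g(\tau)$, one checks that at that iteration both contributions are $O(\sqrt{2^d/N})$ (the bias--variance part being in fact of smaller order $O(2^d/N)$ and the difference part of order $O(d^2 N^{-1/2})$, both dominated by $\sqrt{2^d/N}$); summing the three terms of~\eqref{lemgenb} then gives $\|\mathbb{E}_u f_{\widehat{T}_{op}}-f_{0,-I}\|_N^2=O(\sqrt{2^d/N})$. The probability $1-c_3\exp(-c_4 N\widehat{\varrho}_N^2)$ is inherited unchanged from the sub-Gaussian concentration of the variance term $V_\tau$ as in~\cite{esnonpara}; everything else is deterministic once the randomness of \textit{SampleTree} has been absorbed into the operators $\mathbb{E}_u$.

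The main obstacle I anticipate is the self-referential nature of the bound on $\delta_\tau$: the sampling law $p(\nu\mid f_\tau,\beta)$, hence $K^{(I)}_\tau$, depends on $f_\tau$, while $f_\tau$ is itself driven by $K^{(I)}_\tau$ through Lemma~\ref{treekernel}. Breaking this loop needs an a priori, uniform-in-$\tau$ bound on $\|f_\tau(\boldsymbol{X}^{(I)})\|_2$ so that the score oscillation --- and therefore $M_\beta$ --- is controlled along the whole trajectory; I would establish it by induction on $\tau\le\widehat{T}_{\max}$ carrying the joint invariant ``$\|\mathbb{E}_u f_\tau(\boldsymbol{X}^{(I)})\|_2=O(\sqrt N)$ and $\|K^{(I)}_\tau-K^{(I)}\|_{op}=O(2^d(M_\beta-1))$''. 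The learning-rate constraint $\epsilon\le 1/(4M_\beta N)$ is precisely what makes this induction close (it keeps the per-step map a strict contraction despite the inflated eigenvalues of $K^{(I)}_\tau$) while still leaving the window $[0,\widehat{T}_{\max}]=[0,O(N/2^d)]$ wide enough to drive $B_\tau^2+V_\tau$ down to the critical rate. A secondary technical point, required for the above, is the score-oscillation estimate $\sup_{\nu,\nu'}|D_\nu-D_{\nu'}|=O(d\,\|f_*(\boldsymbol{X}^{(I)})\|_2^2/N)$, which must be read off from the explicit \textit{SampleTree} score and the leaf-count normalization in~\eqref{wekernel}.
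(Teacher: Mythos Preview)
Your overall architecture is right and matches the paper: verify the trace bound $\operatorname{tr}(\mathbb{E}K^{(I)})\le 2^d$, invoke Lemma~\ref{treekernel} so that Lemma~\ref{applem1} applies to $\mathbb{E}_u f_\tau$, and then control the difference term $D_\tau^2$ so that Theorem~\ref{genthmfix} delivers the rate.

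Where you diverge from the paper is in the treatment of $\|\delta_\tau\|_2$. You bound it \emph{uniformly in $\tau$} by $(M_\beta-1)\cdot O(\sqrt N)$ and then pay a factor $\tau^2$ when summing. The paper instead imports from \cite{gdbt} the excess-risk decay $\mathbb{E}V(f_\tau)\le \tfrac{R^2}{2N}e^{-\tau\epsilon/(2M_\beta N)}$ together with the inequality $\max_\nu|1-p(\nu\mid f_\tau,\beta)/\pi(\nu)|\le C\,V(f_\tau)$ (their Lemma~F.2), which gives a \emph{geometrically decaying} $\|\delta_\tau\|_2\le C'(M_\beta-1)e^{-\tau\epsilon/(2M_\beta N)}$; summing the geometric series yields Lemma~\ref{treedt}. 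Both routes end up at $D_\tau^2=O(N^2/\beta^2)$, so your more elementary argument would also close the proof once $\beta\ge N^{5/4}$. What the paper's route buys is that the a~priori boundedness of the iterates and the ratio $p/\pi$ are read off directly from \cite{gdbt} (their Lemma~G.20 and Corollary~G.21), so no separate induction is needed; what your route buys is that you never invoke the convergence rate of the underlying GBDT.

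Two points to tighten. First, the Gumbel--max description $p(\nu\mid f_\tau,\beta)\propto\exp(D_\nu/\beta)$ is only literally true at depth $1$; for depth $d$ the \textit{SampleTree} procedure is sequential and $p(\nu\mid f,\beta)$ is the product-of-softmaxes displayed in the appendix. The two-sided bound $p/\pi\in[M_\beta^{-1},M_\beta]$ still holds (this is exactly what the factor $d$ in the exponent of $M_\beta$ absorbs), but you should state it via that product form, not via a single softmax. Second, and more substantively, your proposed induction carries the invariant $\|\mathbb{E}_u f_\tau(\boldsymbol{X}^{(I)})\|_2=O(\sqrt N)$, but $\delta_\tau=\mathbb{E}_u\big[(K^{(I)}-K^{(I)}_\tau)(f_\tau(\boldsymbol{X}^{(I)})-\boldsymbol{e}^{(I)})\big]$ couples the random $K^{(I)}_\tau$ (which depends on $f_\tau$ through $p(\cdot\mid f_\tau,\beta)$) with the random $f_\tau$ inside the expectation, so controlling only the mean of $f_\tau$ is not enough; you need a pathwise bound $\|f_\tau(\boldsymbol{X}^{(I)})\|_2\le 2R$ almost surely. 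The paper obtains this by citing \cite{gdbt}; if you want to keep your self-contained induction, it must be run on the pathwise iterates (the contraction argument you sketch does in fact give this once $\epsilon\le 1/(4M_\beta N)$, since $\|k_\nu/N\|_{op}=1$ for every $\nu$, hence $\|K^{(I)}_\tau\|_{op}\le 1$ deterministically). Incidentally, for the same reason your estimate $\|K^{(I)}_\tau-K^{(I)}\|_{op}\le (M_\beta-1)\cdot O(2^d)$ is loose: since each $\|k_\nu/N\|_{op}=1$ and $|p-\pi|\le(M_\beta-1)\pi$, one gets $\|K^{(I)}_\tau-K^{(I)}\|_{op}\le M_\beta-1$ with no $2^d$ factor, though this does not affect the final order.
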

\begin{remark}
 If we do not account for the randomness of the GBDT 
 algorithm, $ \mathbb{E} \left\| f_{\widehat{T}_{op}}-f_{0,-I}\right\|_N^2$ will include a variance component that does not converge to zero, even as $N \rightarrow \infty$, due to the algorithm's inherent randomness. However, this inherent variance is quite small as we display in the experimental results.
\end{remark}
To derive the population norm bound, we need to know the exact eigenvalue decay rate for GBDT.  By the definition of the kernels, it is not hard to see that the GBDT kernel belongs to finite kernel class. So we have the following population norm bound for GBDT.
 \begin{corollary}
 For the GBDT algorithm mentioned above with depth $d$, and random strength $\beta \geq N^{15/4}$, the following holds with probability at least $1- c_3 \exp \left(-c_4 N {\varrho}_N^2\right)$
 over warm-start initialization when applying gradient boosting with learning rate $\epsilon \leq \frac{1}{4 M_\beta N}$
\begin{equation}
  \left\|\mathbb{E}_u  f_{\widehat{T}_{op}}-f_{0,-I}\right\|_2^2 \leq \mathcal{O}\left(
  \sqrt{\frac{1}{N}}
   \right).
\end{equation}
\label{treecorpop}
\end{corollary}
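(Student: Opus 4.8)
The plan is to obtain Corollary~\ref{treecorpop} as a direct instantiation of the general random-design bound of Theorem~\ref{genthmpop}: the work is to (i) verify that the noisy-symmetric-tree GBDT of Section~\ref{SecGBDT} satisfies every hypothesis of that theorem, and (ii) supply the ``additional model-specific conditions,'' which for this model amounts to controlling the difference term $D_\tau^2$ in the error decomposition of Lemma~\ref{applem1} uniformly over $\tau \le \widehat{T}_{\max}$. First I would recall from Lemma~\ref{treekernel} that, after averaging over the \textit{SampleTree} randomness, the iterates obey the iterative kernel update equation with driving error $\delta_\tau = \mathbb{E}_u(K^{(I)} - K^{(I)}_\tau)\bigl[f_\tau(\boldsymbol{X}^{(I)}) - \boldsymbol{e}^{(I)}\bigr]$ and stationary kernel $\mathbb{K}^{(I)} = \sum_\nu k_\nu \pi(\nu)$. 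Assumptions~\ref{inass}--\ref{xb} then hold because $\mathcal{F}$ is finite-dimensional and the design is bounded; Assumption~\ref{rmdassum} is the hypothesis on the reduced-model noise; Assumption~\ref{dropb} is imposed on $\boldsymbol{e}^{(I)}$, read in expectation since $f_N^c$ is random; and Assumption~\ref{trasspop} is immediate, since each $k_\nu$ is supported on at most $2^d$ leaves so $\sum_i \lambda_i = \operatorname{tr}(\mathbb{K}^{(I)}) \le 2^d < \infty$ and $\mathbb{K}^{(I)}$ has finite rank. In particular the critical population radius, which solves $\mathcal{R}_{\mathbb{K}}(\varrho) \le \varrho^2 C_{\mathcal{H}}^2/(40\sigma)$, satisfies $\varrho_N^2 = \mathcal{O}(2^d/N)$ for this finite-rank kernel, so on the event of Theorem~\ref{genthmpop} the bias--variance part of the bound, $B_{\widehat{T}_{\text{op}}}^2 + V_{\widehat{T}_{\text{op}}}$, is $\mathcal{O}(\varrho_N^2) = \mathcal{O}(2^d/N)$, which is $o(N^{-1/2})$.

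The core of the argument is then the bound $D_\tau^2 = \tfrac{4\epsilon^2}{N}\bigl\|\sum_{i=0}^{\tau-1} S^{\tau-1-i}\tilde\delta_i\bigr\|_2^2$. I would estimate $\|\delta_\tau\|_2 \le \bigl\|\mathbb{E}_u(K^{(I)} - K^{(I)}_\tau)\bigr\|_{\mathrm{op}}\,\bigl\|f_\tau(\boldsymbol{X}^{(I)}) - \boldsymbol{e}^{(I)}\bigr\|_2$. For the second factor, the restriction $\epsilon \le 1/(4 M_\beta N)$ keeps the gradient-boosting recursion non-expansive, so the iterates stay bounded along the trajectory and, with Assumption~\ref{dropb}, $\|f_\tau(\boldsymbol{X}^{(I)}) - \boldsymbol{e}^{(I)}\|_2 = \mathcal{O}(\sqrt{N})$. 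For the first factor, write $\mathbb{K}^{(I)} - \mathbb{K}^{(I)}_\tau = \sum_\nu k_\nu(\cdot,\cdot)\bigl(\pi(\nu) - p(\nu\mid f_\tau,\beta)\bigr)$; since each entry of $K^{(I)}$ coming from a single $k_\nu$ is at most $\tfrac1N\cdot N/\max\{N_\nu^{(j)},1\} \le 1$, a Frobenius bound gives $\bigl\|\mathbb{E}_u(K^{(I)} - K^{(I)}_\tau)\bigr\|_{\mathrm{op}} = \mathcal{O}\bigl(N\,d_{\mathrm{TV}}(p(\cdot\mid f_\tau,\beta),\pi)\bigr)$, and the quantitative closeness of the sampling distribution to uniform as $\beta$ grows --- governed by $M_\beta = e^{d\|f_*(\boldsymbol{X}^{(I)})\|_2^2/(N\beta)}$, with $\|f_*(\boldsymbol{X}^{(I)})\|_2^2 = \mathcal{O}(N)$ so that $M_\beta - 1 = \mathcal{O}(d/\beta)$ --- yields $d_{\mathrm{TV}}(p(\cdot\mid f_\tau,\beta),\pi) = \mathcal{O}(d/\beta)$. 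Since $K^{(I)}$ is rank-deficient, $S^{\tau-1-i}$ acts as the identity on its null space, so only the crude telescoping estimate $\bigl\|\sum_i S^{\tau-1-i}\tilde\delta_i\bigr\|_2 \le \tau\,\max_i\|\delta_i\|_2$ is available; combined with $\eta_{\widehat{T}_{\max}} = \epsilon\widehat{T}_{\max} = \mathcal{O}(1/\varrho_N^2) = \mathcal{O}(N/2^d)$ this gives $D_\tau^2 = \mathcal{O}\bigl(\tfrac1N(\epsilon\tau)^2\,\max_\tau\|\delta_\tau\|_2^2\bigr) = \mathcal{O}\bigl(\operatorname{poly}(N)\cdot(d/\beta)^2\bigr)$. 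Choosing $\beta \ge N^{15/4}$ --- the larger exponent compared with the $N^{5/4}$ of Corollary~\ref{treecorfix} being the price of the coarser, null-space-polluted telescoping needed for the population-norm statement --- drives $D_\tau^2 = \mathcal{O}(N^{-1/2})$.

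Assembling the pieces: on the event of probability at least $1 - c_3\exp(-c_4 N\varrho_N^2)$ furnished by Theorem~\ref{genthmpop}, Lemma~\ref{applem1} (transferred from $\|\cdot\|_N$ to $\|\cdot\|_2$ via the localized empirical-process comparison already carried out inside Theorem~\ref{genthmpop}) gives $\|\mathbb{E}_u f_{\widehat{T}_{\text{op}}} - f_{0,-I}\|_2^2 \le B_{\widehat{T}_{\text{op}}}^2 + V_{\widehat{T}_{\text{op}}} + D_{\widehat{T}_{\text{op}}}^2 = \mathcal{O}(2^d/N) + \mathcal{O}(N^{-1/2}) = \mathcal{O}(N^{-1/2})$, where the difference term dominates and $2^d$ is absorbed into the constant. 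I expect the main obstacle to be exactly the population-norm control of $D_\tau^2$: in the fixed-design case $\delta_\tau$ need only be small at the $N$ design points, whereas here the evolving-kernel error must not contaminate the generalization bound, which forces the conservative $\beta \ge N^{15/4}$ and, through $M_\beta$, a correspondingly small learning rate; keeping track of the interaction between the rank deficiency of $K^{(I)}$ (its null space is undamped by $S^{\tau-1-i}$, so the telescoped error accumulates roughly linearly in $\tau$ up to $\widehat{T}_{\max}$) and the requirement to carry every estimate under $\mathbb{E}_u$ --- since $\boldsymbol{e}^{(I)}$ and $f_\tau$ are random through $f_N^c$ and the tree sampling, as flagged in the remark after Corollary~\ref{treecorfix} --- is what makes the bookkeeping delicate.
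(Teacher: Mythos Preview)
Your overall plan---invoke Theorem~\ref{genthmpop} and supply the GBDT-specific control of the evolving-kernel error---is correct, and your verification of Assumptions~\ref{inass}--\ref{dropb} and~\ref{trasspop} is fine.  The gap is in how you pass from the empirical difference term $D_\tau^2$ to a population-norm statement.  You write that Lemma~\ref{applem1} is ``transferred from $\|\cdot\|_N$ to $\|\cdot\|_2$ via the localized empirical-process comparison already carried out inside Theorem~\ref{genthmpop}.''  But that comparison (Lemmas~9--10 of \cite{esnonpara}) is only valid for functions lying in the RKHS of the \emph{stationary} kernel $\mathbb{K}^{(I)}$, because it relies on a uniform bound over the localized ball in that RKHS.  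The part of $f_\tau$ produced by the evolving kernels $\mathbb{K}_\tau^{(I)}$ need not live there, so you cannot simply push $D_\tau^2$ through the localization.  This is exactly why the paper introduces Lemma~\ref{indupopf}: it splits $f_\tau = f_\tau^p + f_\tau^\delta$ with $f_\tau^p$ in $\mathrm{span}\{\mathbb{K}^{(I)}(\cdot,X_{-I})\}$ (to which localization applies) and $f_\tau^\delta$ carrying all the evolving-kernel contamination.  The quantity $\|f_\tau^\delta(\cdot)\|_2$ must then be bounded \emph{directly} in $L^2(P_{-I})$, which is Lemma~\ref{treedtpop}.  That direct bound costs extra powers of $N$ relative to the empirical case (the kernel evaluation $\tfrac1N \mathbb{K}^{(I)}(\cdot,\boldsymbol X^{(I)})$ applied to a vector of size $N$ has $L^2$ norm of order $\sqrt N$ times its $\ell_2$ norm, and the recursion for $f_\tau^\delta$ accumulates this), and the resulting rate is $\mathcal O(N^7/\beta^2)$ rather than the $\mathcal O(N^2/\beta^2)$ of the fixed-design corollary---which is precisely where the exponent $15/4$ comes from.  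Your proposal does not supply this step; it only bounds the empirical $D_\tau^2$ and appeals to a transfer that is not available.

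A secondary point: your control of $\|K^{(I)}-K_\tau^{(I)}\|_{\mathrm{op}}$ via total variation is coarser than what the paper uses.  The paper bounds $\max_\nu\bigl|1-p(\nu\mid f_\tau,\beta)/\pi(\nu)\bigr|\le C\,V(f_\tau)$ and then exploits the exponential decay $\mathbb E\,V(f_\tau)\le \tfrac{R^2}{2N}e^{-\tau\epsilon/(2M_\beta N)}$ of the excess risk along the trajectory; summing this geometric series is what produces the factor $(1-e^{-\epsilon/(2M_\beta N)})^{-1}$ in Lemmas~\ref{treedt} and~\ref{treedtpop}.  Your uniform $\mathcal O(d/\beta)$ bound, combined with the crude $\tau$-fold telescoping, would still work in spirit but would not reproduce the paper's constants or the exact $N^{15/4}$ threshold without redoing the arithmetic carefully.
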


\subsection{Extension to hypothesis testing} \label{waldnn}
According to  Theorem  1
in \citep{willi}, when the empirical estimates converges to the targets $f_0$ and $f_{0,-I}$ in $L_2$ norm at the rate of $O_p(N^{-1/4})$, we
 can obtain an asymptotically normal and efficient estimator
for the VI measure. This means that for certain models that we intend to use, the probability of the convergence bound we proved above should converge to zero as $N \rightarrow \infty$, i.e. the term $N\varrho^2 \rightarrow \infty$. This requirement is satisfied for neural network but not met by GBDT since for GBDT $\varrho^2$ is at rate of  $O(1/N)$, which results a constant for  $N\varrho^2$. Therefore we have the following result ensuring an   efficient VI estimator with valid asymptotically normality if we use neural networks. 

\begin{corollary}
\label{testcor}
Same as corollary \ref{popnn}, 
   the warm-start early stopping training method using 
   a  ReLu neural network without bias
   can accurately predict the reduced model, i.e.,
\begin{equation}
    \left\|f_{\widehat{T}_{op}} - f_{0,-I}\right\|_2=O_p\left(N^{-1 / 4}\right) .
\end{equation}
Then if $\mathrm{VI}_I \neq 0$, 
our variable importance estimator $\widehat{\mathrm{VI}}_I$ is asymptotically normal and has an error rate $O_p\left(N^{-1 / 2}\right)$:
\begin{equation}
    \widehat{\mathrm{VI}}_I^{}-\mathrm{VI}_I=\Delta_{N, I}+O_p\left(N^{-1 / 2}\right) 
\end{equation}
where
\begin{equation}
    \Delta_{N, I}\rightarrow_d \mathcal{N}\left(0, \tau_{N, j}^2\right)
    \label{varpop}
\end{equation}
here the variance is $\tau_{N, j}^2=\operatorname{Var}\left(w^{(I)^2}-w^2\right) / N$, where $w$ and $w^{(I)}$ are the population version of the residuals defined in eq. (\ref{wdef}).
\end{corollary}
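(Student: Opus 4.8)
The plan is to prove the two assertions in turn: first the $L^2$ convergence rate for the early-stopped reduced model (together with the matching rate for the full model), and then the asymptotic-normality statement, which I would obtain by invoking the algorithm-agnostic inference theory in \citep{willi}.

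\textbf{Step 1: the $N^{-1/4}$ rates.} I would start from Corollary~\ref{popnn}, which gives $\|f_{\widehat T_{op}}-f_{0,-I}\|_2^2 \le \mathcal{O}(N^{-p/(p+1)})$ with probability at least $1-\gamma-c_1\exp(-c_2 N\varrho_N^2)$. Since $p/(p+1)\ge 1/2$ for every integer $p\ge 1$, on that event $\|f_{\widehat T_{op}}-f_{0,-I}\|_2 \le \mathcal{O}(N^{-p/(2p+2)})\le \mathcal{O}(N^{-1/4})$. To turn this into a genuine $O_p$ statement I would check that the failure probability vanishes: for the bias-free ReLU NTK the eigenvalues obey $\lambda_j\asymp j^{-p}$ \citep{laplace2,egdecnnused}, so the defining inequality $\mathcal{R}_{\mathbb{K}}(\varrho)\le \varrho^2 C_{\mathcal{H}}^2/(40\sigma)$ is solved at $\varrho_N^2\asymp N^{-p/(p+1)}$, whence $N\varrho_N^2\asymp N^{1/(p+1)}\to\infty$ and $c_1\exp(-c_2 N\varrho_N^2)\to 0$; since $\gamma$ is free, taking $\gamma\downarrow 0$ along a sequence of widths $m_N\uparrow\infty$ with $m_N\ge M(\gamma_N)$ sends the total failure probability to zero. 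This gives $\|f_{\widehat T_{op}}-f_{0,-I}\|_2 = O_p(N^{-1/4})$, indeed $o_p(N^{-1/4})$ once $p\ge 2$ (the case $p=1$ is the borderline one, where Assumption~\ref{trasspop} only barely holds). The same NTK/early-stopping machinery applied to $f_N^c$ trained from random initialization on all features (with $C_{\mathcal{H}}=\|f_0\|_{\mathcal{H}}$ up to vanishing initialization error) gives $\|f_N^c-f_0\|_2 = O_p(N^{-1/4})$ as well.

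\textbf{Step 2: asymptotic normality.} With the negative-MSE predictiveness $V(f,P)=-\mathbb{E}_{(X,Y)\sim P}[(Y-f(X))^2]$, I would verify the hypotheses of Theorem~1 in \citep{willi}: $V$ is a smooth quadratic functional; the population residuals have finite fourth moments (Assumption~\ref{rmdassum} makes $w^{(I)}$ sub-Gaussian and Assumption~\ref{xb} bounds $f_0,f_{0,-I}$); $\mathrm{VI}_I\ne 0$, so the estimand lies in the interior of $[0,\infty)$ and the regular $\sqrt N$-asymptotics hold rather than the degenerate boundary regime; and the nuisance estimators converge at $o_p(N^{-1/4})$ by Step~1. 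Using the cross-fitting device of \citep{willi} to neutralise the empirical-process term $\tfrac{2}{N}\sum_i w_i^{(I)}\bigl(f_{0,-I}-f_{\widehat T_{op}}\bigr)(\boldsymbol{X}_i^{(I)}) = O_p(N^{-3/4})$ (and its full-model analogue), and noting that for squared-error loss the efficient-influence-function correction is identically zero so that $\widehat{\mathrm{VI}}_I$ already coincides with the one-step estimator, I would reach the expansion
\begin{equation*}
\widehat{\mathrm{VI}}_I-\mathrm{VI}_I=\frac{1}{N}\sum_{i=1}^N\bigl[(w_i^{(I)})^2-w_i^2-\mathrm{VI}_I\bigr]+R_N,\qquad R_N=o_p(N^{-1/2}),
\end{equation*}
where $R_N$ collects $\|f_N^c-f_0\|_N^2$ and $\|f_{\widehat T_{op}}-f_{0,-I}\|_N^2$ (each $o_p(N^{-1/2})$ for $p\ge2$) plus the cross-fitted empirical-process terms. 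The leading average is $\Delta_{N,I}$, a sum of i.i.d.\ mean-zero summands — mean zero because $\mathbb{E}[(w^{(I)})^2-w^2]=V(f_0,P_0)-V(f_{0,-I},P_{0,-I})=\mathrm{VI}_I$ — so the Lindeberg–L\'evy CLT yields $\Delta_{N,I}\rightarrow_d \mathcal{N}(0,\tau_{N,j}^2)$ with $\tau_{N,j}^2=\operatorname{Var}\bigl((w^{(I)})^2-w^2\bigr)/N$, which is the claim.

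\textbf{Expected main obstacle.} The delicate point is the upgrade inside Step~1: converting the conditional, width- and $\gamma$-dependent bound of Corollary~\ref{popnn} into a true $O_p$ rate requires both pinning down the NTK eigendecay to show $N\varrho_N^2\to\infty$ and reconciling the ``for each $\gamma$ there is $M(\gamma)$'' quantifier with $N\to\infty$, which forces the width to grow along some sequence $m_N\to\infty$ — a point I would state explicitly. A secondary subtlety is that the $\sqrt N$-asymptotics of \citep{willi} need cross-fitting to control the empirical-process remainder, so $\widehat{\mathrm{VI}}_I$ must be read in its cross-fitted form; once that is granted, the one-step/plug-in coincidence under squared-error loss, the second-order remainder bookkeeping, and the CLT are all routine.
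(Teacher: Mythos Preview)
Your proposal is correct and follows essentially the same route as the paper: derive the $L^2(P)$ rate from Corollary~\ref{popnn}, verify that $N\varrho_N^2\asymp N^{1/(p+1)}\to\infty$ so the failure probability vanishes, and then invoke Theorem~1 of \citep{willi} to obtain the asymptotic normality. The paper in fact gives no separate proof section for this corollary; its entire argument is the short paragraph in Section~\ref{waldnn} preceding the statement, which does exactly these three steps.

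Your write-up is considerably more careful than the paper on two points it leaves implicit. First, you correctly isolate the quantifier issue with $\gamma$ and the width $m$: Corollary~\ref{popnn} is a ``for every $\gamma$ there exists $M(\gamma)$'' statement, and turning that into a genuine $O_p$ rate does require letting $m_N\to\infty$ along a sequence with $m_N\ge M(\gamma_N)$, $\gamma_N\downarrow 0$; the paper never spells this out. Second, you flag the sample-splitting / cross-fitting needed to kill the empirical-process cross term; the paper does build this into Algorithm~\ref{esalgo} (the $N_1/N_2$ split) but does not connect it to the theory here. Your observation that for squared-error loss the plug-in already equals the one-step estimator is the right reason the naive $\widehat{\mathrm{VI}}_I$ works, and your edge-case remark about $p=1$ (where $\sum\lambda_j$ diverges and Assumption~\ref{trasspop} fails) is a valid caveat the paper omits.
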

For general networks, even though we do not provide rigorous 
theoretical results, we should expect the same asymptotical normality to hold. We are able to construct Wald-type confidence  intervals around the estimated VI using our proposed framework. Specifically, 
the $\alpha$-level confidence intervals are given by
\begin{equation}
    \widehat{\mathrm{VI}}_I\pm z_{\frac{\alpha}{2}} \cdot \hat{\tau}_{N, I}
\end{equation}
where $\hat{\tau}_{N, I}$ is the plug-in estimate of $\tau_{N, I}$ in (\ref{varpop}) and $z_{\frac{\alpha}{2}}$ is the $\alpha / 2$ quantile of the standard normal distribution.



The condition that the true VI, ${\mathrm{VI}}_I \neq 0$, is essential for this result. Constructing a confidence interval that remains valid when ${\mathrm{VI}}I = 0$ is challenging. In such scenarios, standard Wald-type confidence intervals based on $\hat{\tau}{N, I}$ typically exhibit incorrect coverage or Type I error rates, as $\widehat{\mathrm{VI}}_I$ does not generally converge to a non-degenerate distribution \citep{willi}.  To deal with this case, we need to use sample splitting, i.e., use different data to train the full model and the reduced model before constructing the confidence interval, see  section  3.4 in \cite{willi} for detailed discussions.

Besides this,
the predictive skill measure in Corollary \ref{testcor} is not strictly limited to the negative MSE emphasized in this paper. According to the theoretical results in \cite{willi}, the same conclusion applies to various other predictive skill measures, including $R^2$, classification accuracy, and the area under the ROC curve (AUC). While accuracy and AUC are typically used for binary outcomes, a model can still be trained using the MSE loss in classification settings. The same convergence rates hold, with $\| f_{\widehat{T}{\text{op}}} - f_{0,I}\|_2$ achieving at least $O_p\left(N^{-1/4}\right)$,because we  do not assume any special structure for $Y$.


\section{Experiments}
\label{bigexp}
The theoretically optimal stopping time $\widehat{T}_{\text{op}}$ is typically impractical to compute due to the lack of knowledge of $f_{0,-I}$ and the noise level $\sigma$. In practice, we recommend using the hold-out method, where the training data is split into a validation set, and updates are halted when the validation loss shows no improvement over several iterations.
Specifically, we use data splitting for training and estimating VI (see Algorithm \ref{esalgo} for details). Further discussion is provided in the Appendix \ref{dicprac}.

We compare our method with two baseline approaches \textit{dropout} and \textit{retrain}. The dropout method estimates $f_{N,-I}$ by applying the full model $f_N^c$ to the dropped data $\boldsymbol{X}^{(I)}$, calculating variable importance as
\begin{equation*}
\widehat{\mathrm{VI}}_I^{\mathrm{DR}}=V\left(f_N^c, P_N\right)-V\left(f_N^c, P_{N,-I}\right).
\end{equation*}
\textit{Retrain}, on the other hand, estimates variable importance by training separate models for each subset $I$, with VI given by
\begin{equation*}
    \widehat{\mathrm{VI}}_I^{\mathrm{RT}}=V\left(f_N, P_N\right)-V\left(f_{N,-I}, P_{N,-I}\right)
\end{equation*}
where $f_{N,-I}$ is derived by training separate models using $\boldsymbol{X}^{(I)}$ from scratch.

In our experiments, we empirically verify the theoretical bounds and evaluate our algorithm's performance on simulated data and a real-world example to demonstrate its practical effectiveness in estimating variable importance.  The more intricate details of the experiments setup are provided in Appendix 
\ref{exp}.

For these experiments, unless otherwise specified we use the following model structures: for neural networks, we train a three-layer fully connected neural network with ReLU activation, where the hidden layer has a width of 2048; for GBDT, the random strength $\beta$ we use is 10,000 and tree depth is $2$. And instead of  computing $\mathbb{E}_u  f_{\widehat{T}}$, we simply use $f_{\widehat{T}}$ in our experiments. We use $3/4$ of all the data to train the model, 
while the remaining samples are employed to estimate the VI and construct Wald-type CI. 
Specifically, we set $q= 0.75$ in Algorithm \ref{esalgo}. The implementation of the proposed algorithm and all the experiments are available in \url{https://github.com/ZexuanSun/Early-stopping-VI}.
\begin{algorithm}
\caption{Early stopping training for $VI_{I}$}
\label{esalgo}
\begin{algorithmic}[1]
\State Input Data $\left\{(\boldsymbol{X}_i, Y_i)\right\}_{i=1}^N$, training size $ N_1 = q N$; 
$N_2  = (1-q) N$; Kernel based model: $f_{\theta}(\cdot)$;  Drop features set $I \subseteq \{ 1, \dots, p \}$; Patience $P$;

\State Train full model $f_{N_1}^c$ using training data $\left\{(\boldsymbol{X}_i, Y_i)\right\}_{i=1}^{N_1}$;
\State Replace feature $j \in I$ with its empirical mean to get $\boldsymbol{X}^{(I)}_i$;
\State Split $\left\{(\boldsymbol{X}^{(I)}_i, Y_i)\right\}_{i=1}^{N_1}$ into a training set $\mathcal{D}_1$ of sample size $q N_1$
and a validation set $\mathcal{D}_2$ of sample size $(1-q) N_1$;
\State Initialize  model with $ f_{N_1}^c$, for each epoch $\tau$, train on $\mathcal{D}_1$,  evaluate on 
$\mathcal{D}_2$;
\State If the loss evaluated on $\mathcal{D}_2$ 
at epoch $\widehat{T}$ has no improvement  after $P$ epochs, stop and return $f_{\widehat{T}}$ as the estimator for $f_{0,-I}$;
\State Use remaining $N_2$ instances to construct $\widehat{VI}_{I}$ and plug in estimate of $\tau_{N,I}$
\begin{equation*}
 \begin{aligned}
 \widehat{\text{VI}}_I&=\frac{1}{N_2} \sum_{i=1}^{N_2}\left[Y_i-
f_{\widehat{T}}(\mathbf{X}_i^{(I)})
\right]^2-\left[Y_i-f^c_{N_1}(\mathbf{X}_i)\right]^2\\
 t_{i, I} &= \left(Y_i-
 f_{\widehat{T}}
 \left(\mathbf{X}_i^{(I)}\right)\right)^2-\left(Y_i-f^c_{N_1}\left(\mathbf{X}_i\right)\right)^2 \\
\hat{\tau}_{N, I} &= \frac{1}{N_2} \sum_{i=1}^{N_2}\left(t_{i, I}-\bar{t}_I\right)^2  / N_2
\end{aligned}
\end{equation*}
\State Construct $\alpha$-level Wald-type CI as 
$ \widehat{\mathrm{VI}}_I^{} \pm z_{\frac{\alpha}{2}} \cdot \hat{\tau}_{N, I}$.
\end{algorithmic}
\end{algorithm}



\subsection{Verifying theoretical bounds}
To verify the theoretical bounds, we consider independent features, dropping only the first feature. Construct $f_0$ as $f(\boldsymbol{X}^{(1)}) + \beta_1 X_1$. For neural networks, $f$ is a two-layer fully connected neural network with ReLU activation, where the hidden layer has a width of 2048. For GBDT, we use a model with a random strength 
$\beta$
of 10,000 and a depth of 2. This design is to satisfy assumption \ref{inass} for both models. We use shallow neural network and GDBT here to reduce the cost for computing the empirical kernel matrix.
Note that our theory does not  assume feature independence, we use independent data for implementation convenience. Since computing the optimal stopping time $\widehat{T}_{\text{op}}$ is difficult, we use $\widehat{T}_{\text{max}}$, which also achieves convergence rate $\mathcal{O}(N^{-1/2})$. For the population bound experiment, we do not normalize data to be on the hyper-sphere and use networks including bias term. 
We repeat the experiment 10 times, and compared the empirical and population norm with the upper bound $\mathcal{O}(N^{-1/2})$. Results are shown in Figure \ref{loglog}. We can see that the loss curve exhibits some stochastic patterns however we can see that 
the overall convergence rate of 
the population bound for both neural networks and GBDT are closer to or even faster than $O(N^{-1})$.  
\begin{figure}[htp!]
     \centering
     \begin{subfigure}[b]{0.4\textwidth}
         \centering
\includegraphics[width=0.9\textwidth]{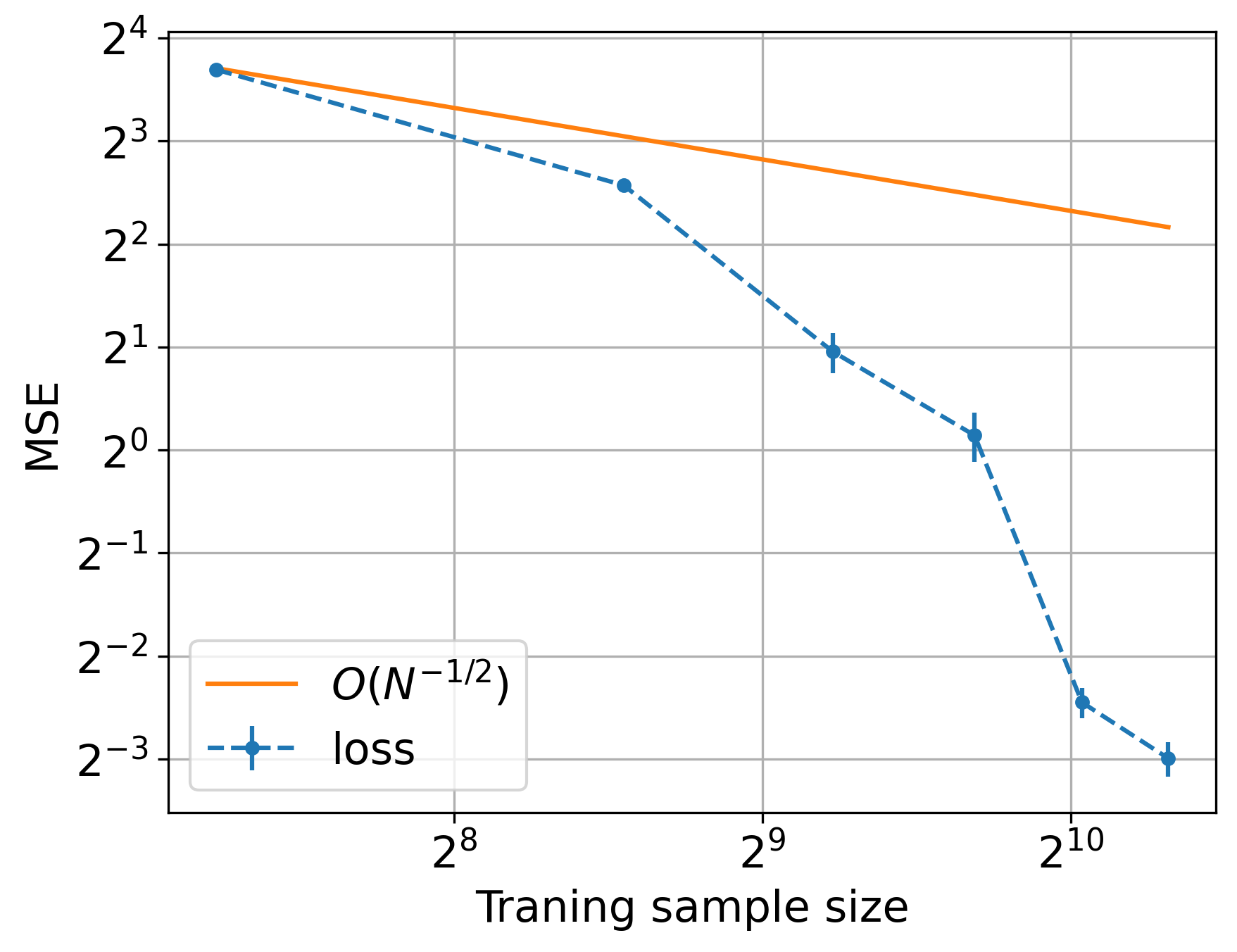}
         \caption{Neural Networks empirical bound}  
     \end{subfigure}
     \begin{subfigure}[b]{0.4\textwidth}
         \centering
         \includegraphics[width=0.9\textwidth]{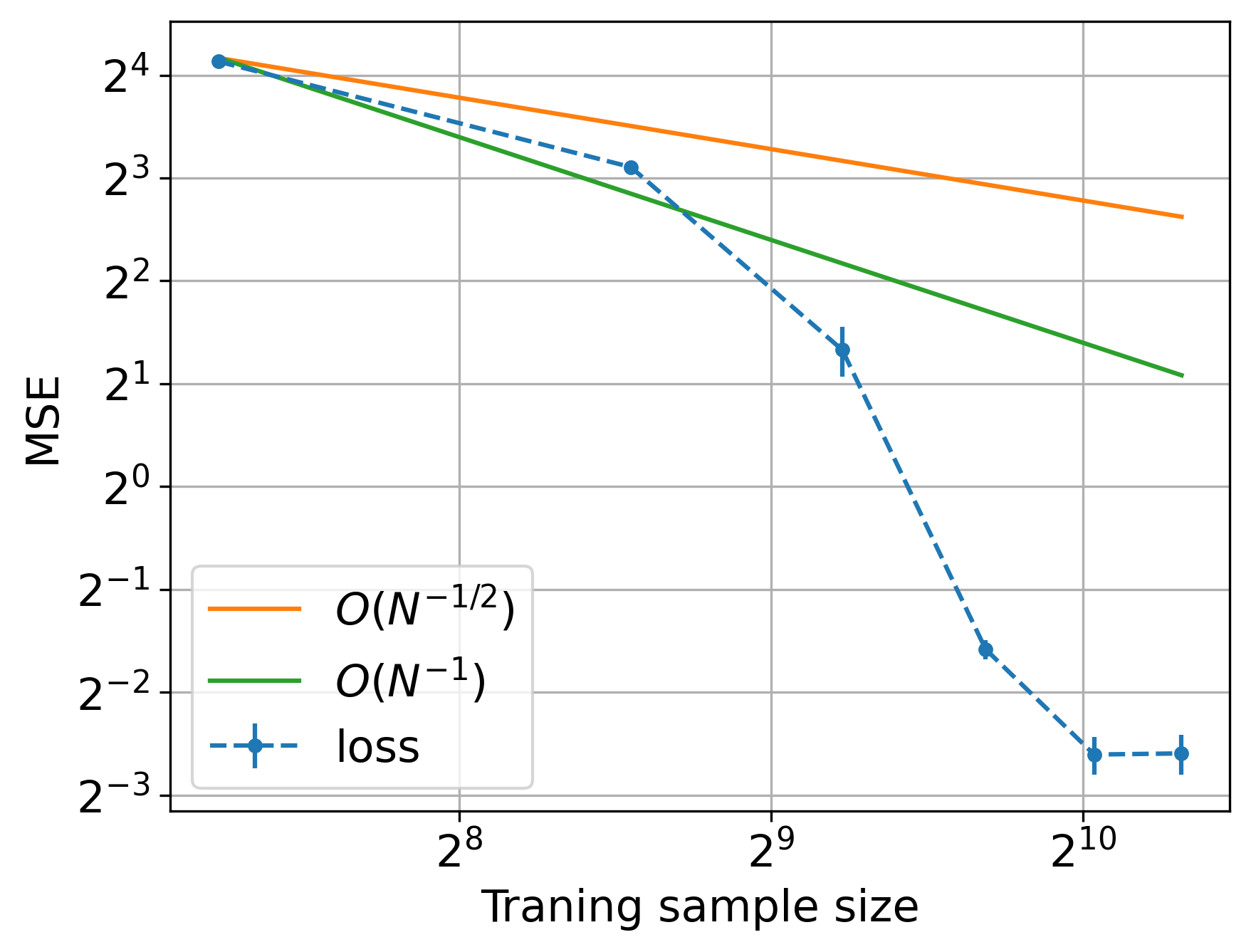}  
         \caption{Neural Networks population error bound}
     \end{subfigure}

\begin{subfigure}[b]{0.4\textwidth}
         \centering
         \includegraphics[width=0.9\textwidth]{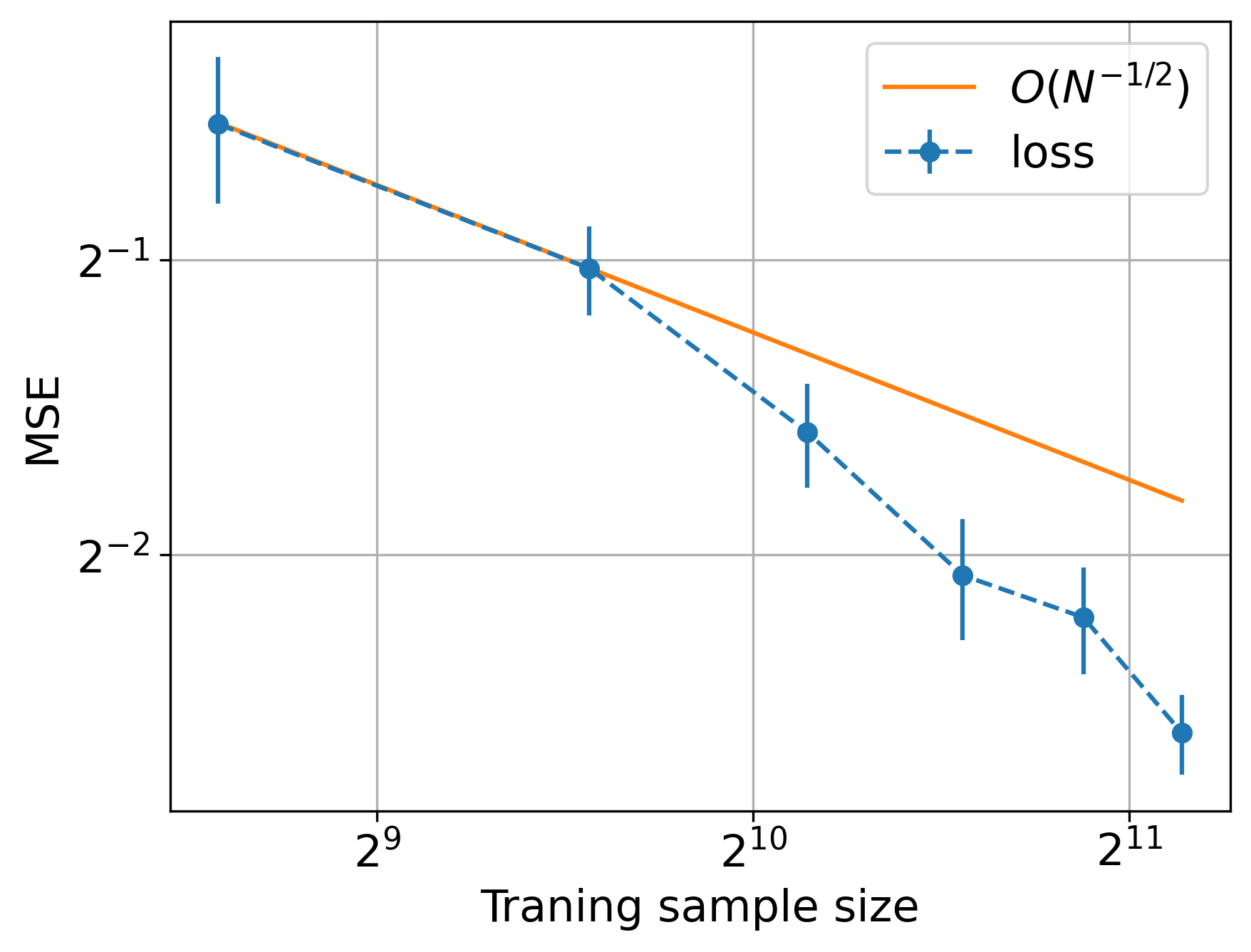}  
         \caption{GBDT empirical bound}
     \end{subfigure}
     \begin{subfigure}[b]{0.4\textwidth}
         \centering
         \includegraphics[width=0.9\textwidth]{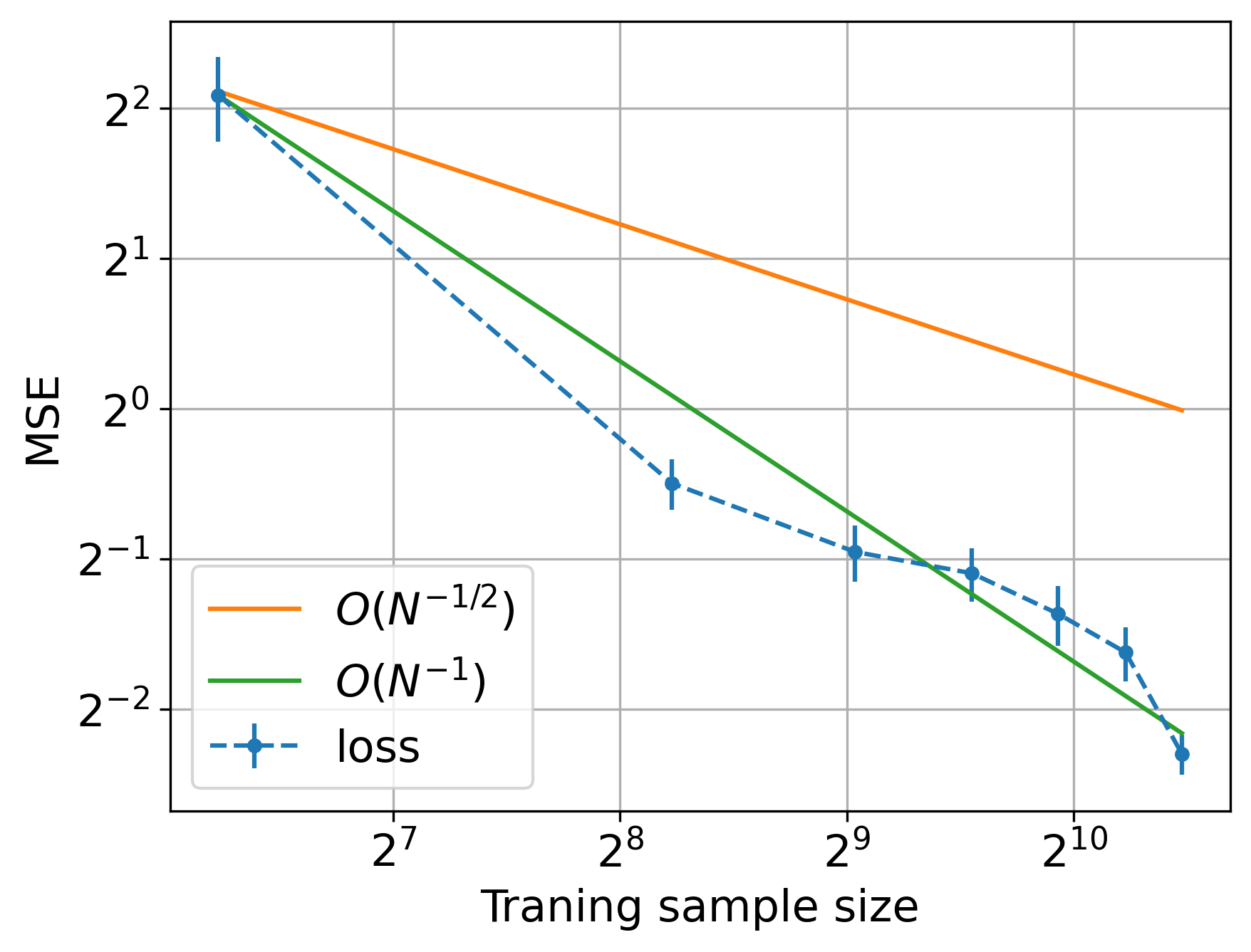}  
         \caption{GBDT population error bound}
     \end{subfigure}
\caption{Log-log plot comparing theoretical  bounds.}
\label{loglog}
\end{figure}

\subsection{Linear models with correlation}
This simulation is based on the following example from \cite{lazyvi}:
\begin{example}
\label{exglazy}
Suppose $Y=\beta_1 X_1+\beta_2 X_2+\epsilon$, where $X_i \sim \mathcal{N}\left(0, \sigma^2\right), i=1,2, \operatorname{Cov}\left(X_1, X_2\right)=\rho$, and $\epsilon$ is a $\mathcal{N}\left(0, \sigma_\epsilon^2\right)$ noise that is independent of the features. The variable importance for the first variable is
\begin{equation}
    \mathrm{VI}_1=\beta_1^2 \cdot \operatorname{Var}\left(X_1 \mid X_2\right)=\beta_1^2\left(1-\rho^2\right) \sigma^2.
\end{equation}
\end{example}
Let $\beta = (1.5, 1.2, 1, 0, 0, 0)^T$, we generate $Y_i = X_i^T \beta + \epsilon$. We drop only the first feature and estimate its VI using our method and dropout. The ground truth VI can be calculated according to the above example. We generate a simulated dataset of sample size 5000 and repeat the experiments 10 times for each $\rho$.
The results for neural networks and GBDT are shown in Figure \ref{corvi}.  
We observe that for both algorithms, as the correlation $\rho$ increases, the estimation of dropout becomes increasingly unreliable and tends to overestimate the variable importance (VI).
However, our early stopping approach maintains its accuracy despite the increase in correlation.


\begin{figure}[htp!]
     \centering
     \begin{subfigure}[b]{0.45\textwidth}
         \centering
\includegraphics[width=1\textwidth]{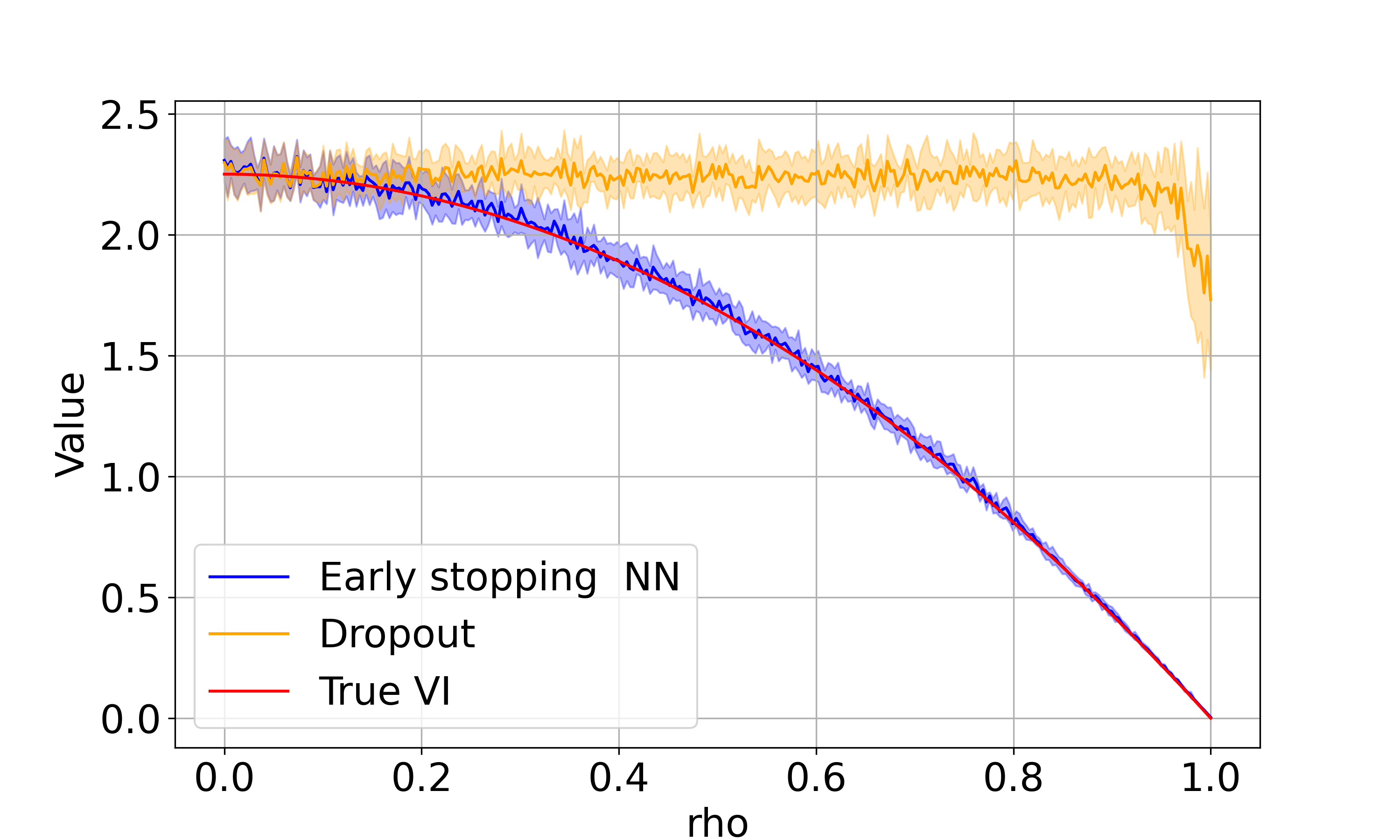}
         \caption{Neural Networks}  
     \end{subfigure}
     \begin{subfigure}[b]{0.45\textwidth}
         \centering
         \includegraphics[width=1\textwidth]{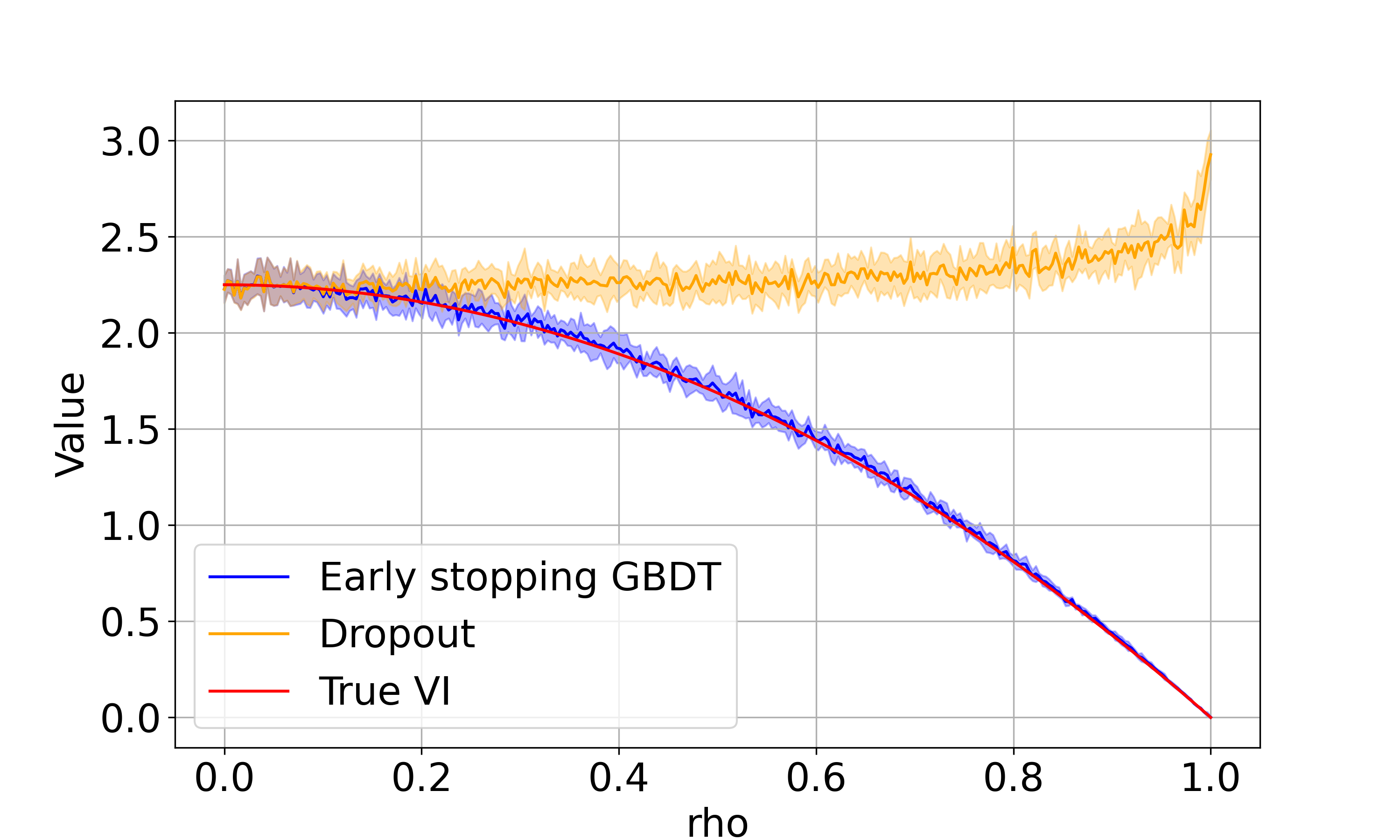}  
         \caption{GBDT}
     \end{subfigure}
\caption{VI estimation comparison for correlated linear model.}
\label{corvi}
\end{figure}

\subsection{High-dimensional regression }


The computational burden of \textit{retrain} is most pronounced in high-dimensional settings, since at least $p$ models are needed to estimate VI \citep{lazyvi}. For this simulation, we generate various high-dimensional datasets for neural networks and GDBT to evaluate their performance in high-dimensional settings. The different simulated datasets are designed to better align the experiments with assumption \ref{inass}.

We generate
$X \sim N\left(0, \Sigma_{100 \times 100}\right)$, where variables are independent except $\operatorname{Corr}\left(X_1, X_2\right)=0.5$.
For neural networks, we let $\boldsymbol{\beta}=(5,4,3,2,1,0, \ldots, 0)^{\top} \in \mathbb{R}^{100}$, we construct a weight matrix $W \in \mathbb{R}^{m \times p}$ such that $W_{:, j} \sim \mathcal{N}\left(\beta_j, \sigma^2\right)$. Let $V \sim \mathcal{N}(0,1)$, we generate the response $Y_i=V \sigma\left(W \mathbf{X}_i\right)+\epsilon_i$ where $\sigma$ is the ReLU function.   For GDBT, we generate the response $Y_i =\mathbf{X}_i \boldsymbol{\beta} + \epsilon_i$.
We take retrain estimation results as ground truth. 
We assessed VI for $X_1$  on 10 simulated datasets with sample size 5000, and our method outperforms \textit{dropout} and is faster than \textit{retrain}, as shown in Figure \ref{hdsim}.

\begin{figure}[htp!]
     \centering
     \begin{subfigure}[b]{0.4\textwidth}
         \centering
         \includegraphics[width=1\textwidth]{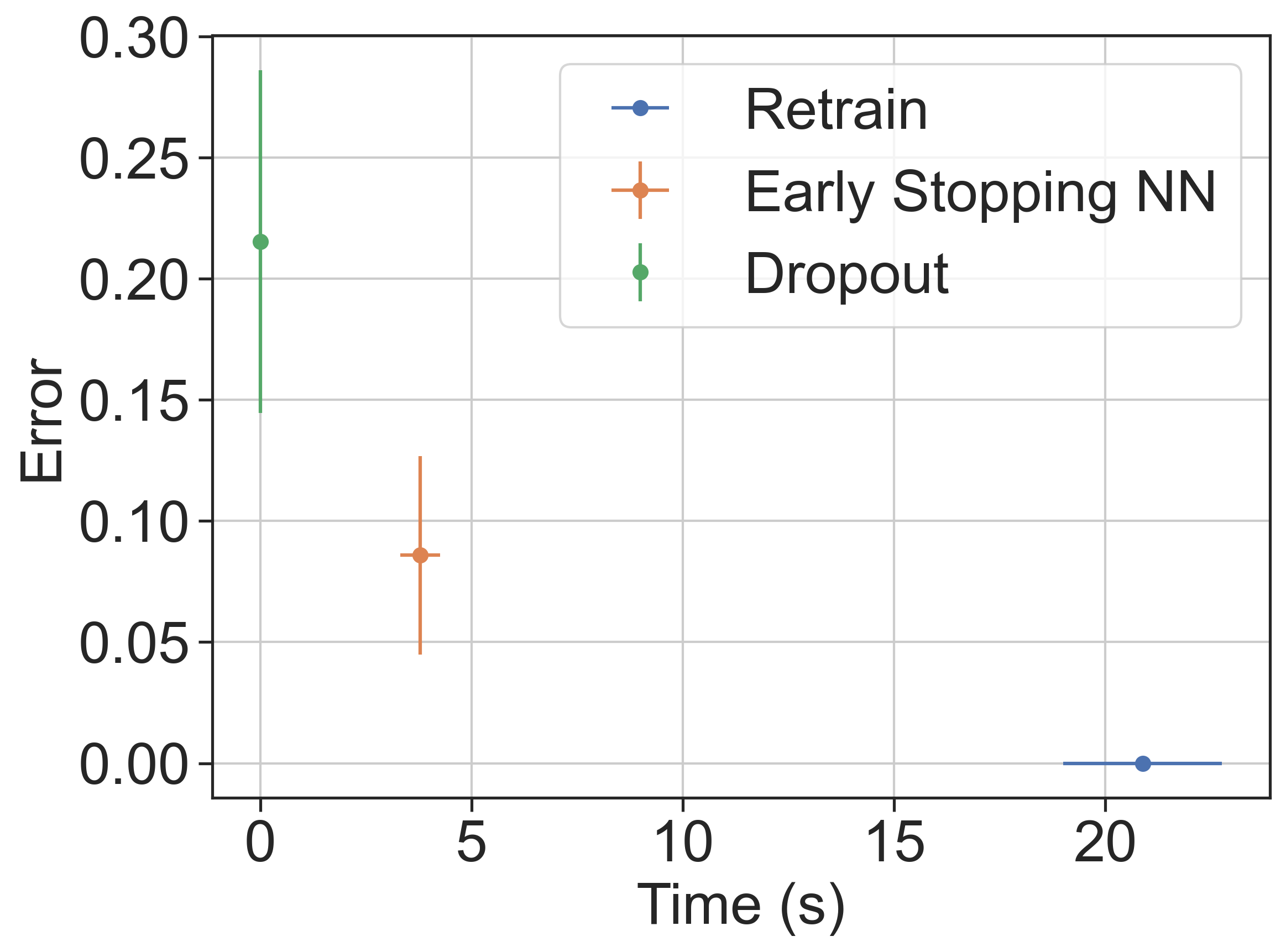}
         \caption{Neural Networks}  
     \end{subfigure}
     \begin{subfigure}[b]{0.4\textwidth}
         \centering
         \includegraphics[width=1.235\textwidth]{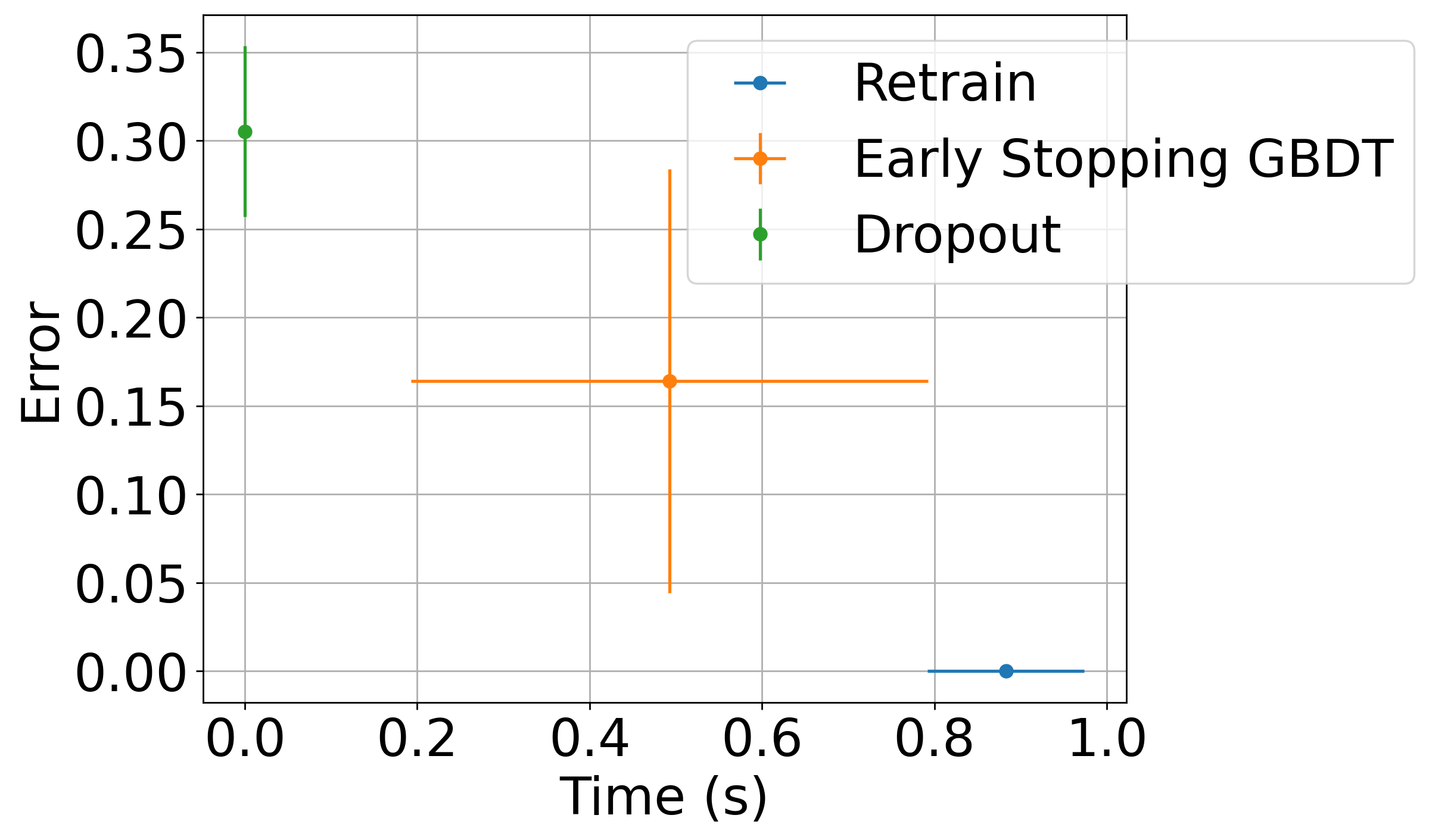}  
         \caption{GBDT}
     \end{subfigure}
\caption{Distribution of computation time vs. normalized estimation error relative to retrain for the VI of $X_1$.}
\label{hdsim}
\end{figure}

\subsection{Comparison with LazyVI}
We compare our method with the LazyVI method \citep{lazyvi} using the same high-dimensional regression simulation described in Section 5.3 of the main text. The results are shown in Figure \ref{compall}. While the LazyVI method for neural networks achieves similar precision to our approach, it requires significantly more computation time. The primary issue with LazyVI is that it relies on k-fold cross-validation to select the ridge penalty parameter $\lambda$, which becomes highly time-consuming when the sample size is large. In our case, with $N = 5000$, the process takes an exceptionally long time to run (note that in \citep{lazyvi} a much smaller sample size was used).
\begin{figure}[htp!]
     \centering
 \begin{subfigure}[b]{0.45\textwidth}
         \centering
         \includegraphics[width=1\textwidth]{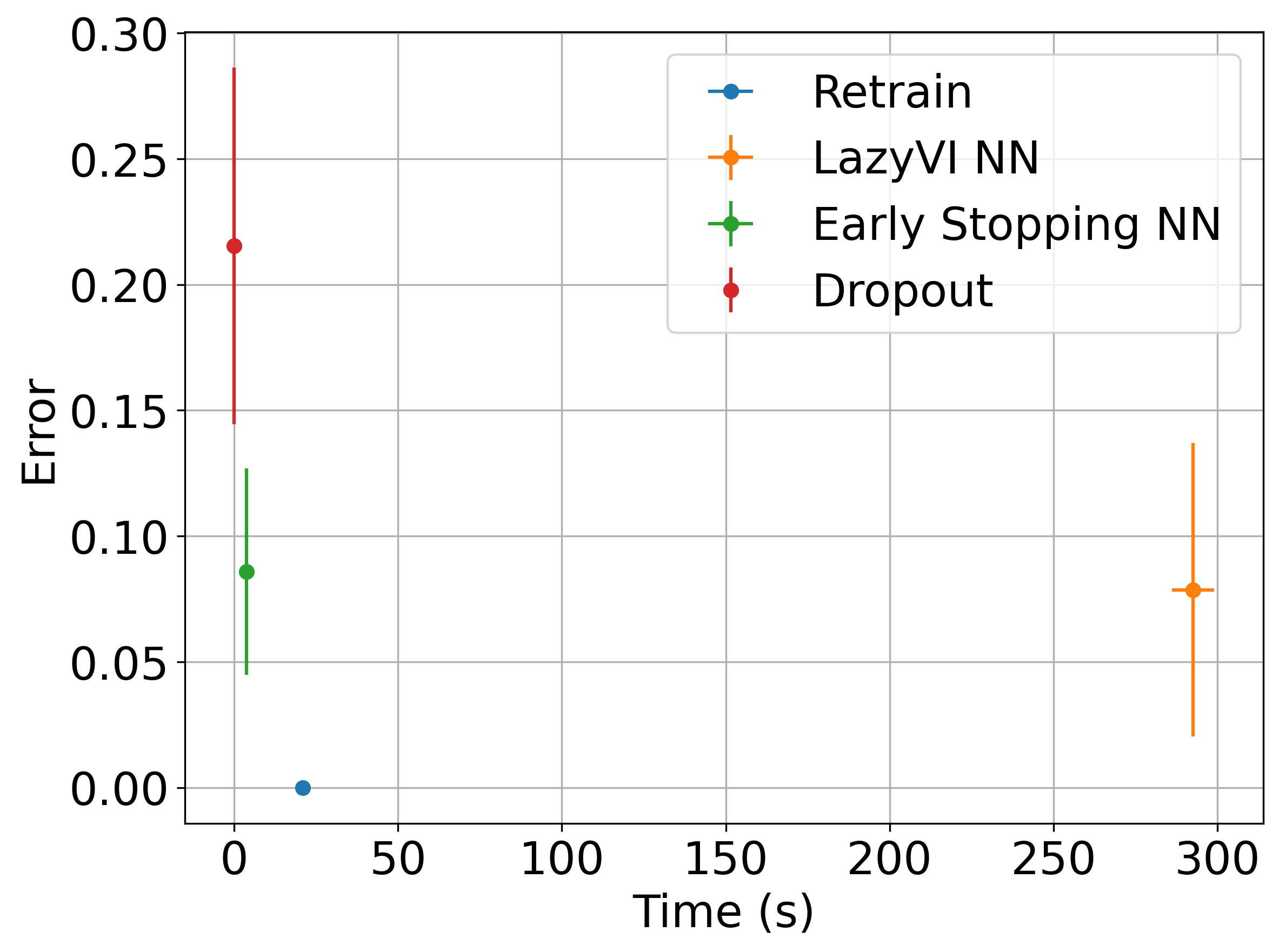}
     \end{subfigure}
\caption{Distribution of computation time vs. normalized estimation error relative to retrain for the VI of $X_1$.}
\label{compall}
\end{figure}

\subsection{Shapley values estimation}
As mentioned in subsection \ref{shapintro}, our method can be used to reduce the high computational cost of Shapley values. 
In our paper, we consider the 
\textit{val}  function in (\ref{shapdef}) to be the negative MSE.  
Consider a logistic regression model, we generate   $X \sim \mathcal{N}\left(0, \Sigma_{10 \times 10}\right)$, where the variables are independent except $\operatorname{Corr}\left(X_1, X_2\right)=0.5$. The responses are  generated from a logistic model: $\log \frac{\mathbb{P}(Y=1)}{1-\mathbb{P}(Y=1)}=X \boldsymbol{\beta}$, where $\boldsymbol{\beta}=10 \times (0, 1, 2,\dots, 9)^{\top} \in \mathbb{R}^{10}$. The dataset we generate is of size $N=5000$.
We use the subset sampling scheme proposed by \citep{sample} when calculating Shapley values. We generate 50 samples to estimate each \(\phi_j(\text{val})\), repeating this process 10 times.
We can see that our proposed approach is closer to the \textit{retrain} compared with \textit{dropout} method, as shown in Figure \ref{shappic}. The run-time for Shapley values estimates is 2-3 times faster using our early stopping approach compared to re-training.

\begin{figure}[htp!]
     \centering
     \begin{subfigure}[b]{0.4\textwidth}
         \centering
         \includegraphics[width=1.\textwidth]{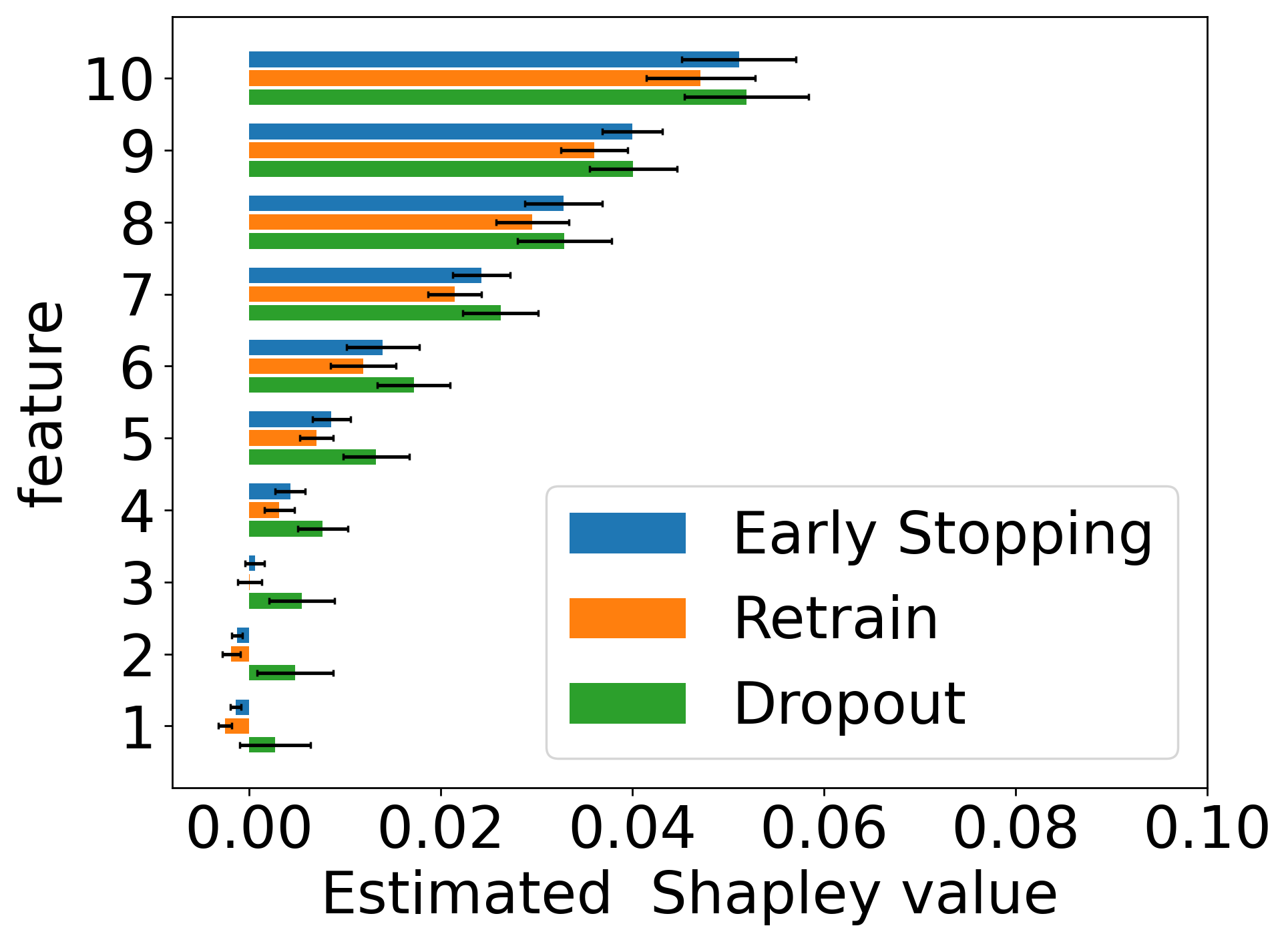}
         \caption{Neural Networks}  
     \end{subfigure}
     \begin{subfigure}[b]{0.4\textwidth}
         \centering
         \includegraphics[width=1.\textwidth]{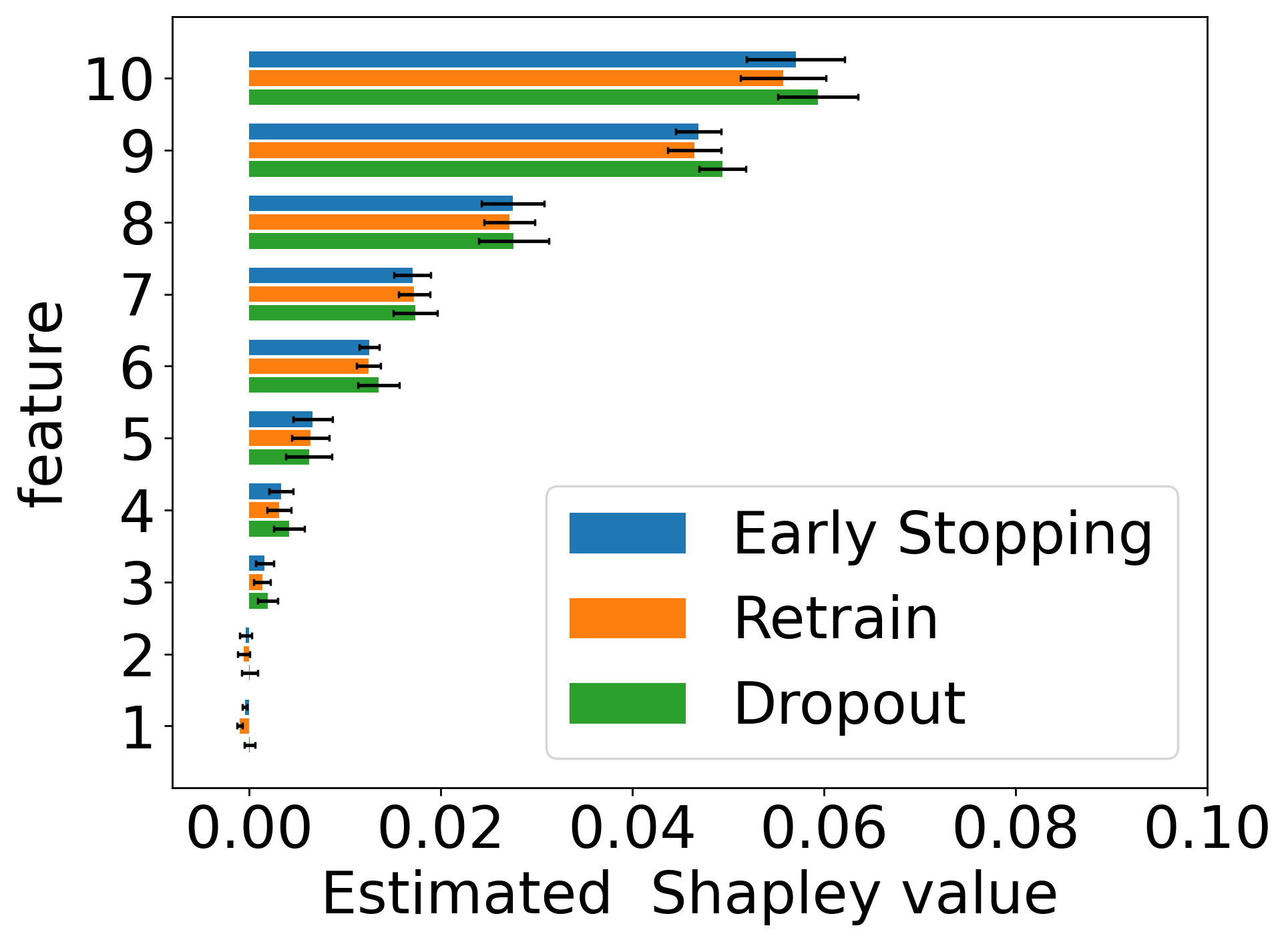}  
         \caption{GBDT}
     \end{subfigure}
\caption{Shapley value estimation for logisitic model.}
\label{shappic}
\end{figure}
\subsection{Wald type confidence interval}
We use example \ref{exglazy} again here to test the Wald-type confidence interval result given in section \ref{waldnn}.  We drop the first variable $X_1$, and the true VI is given by $1.5^2 \cdot (1-\rho^2)$. We test on $\rho = (0, 0.2,0.5,0.8,1.0)$. Specifically, we compute the empirical $95\%$ Wald-type confidence interval across 100 simulated datasets of sample size $N = 5000$ and calculate the coverage probability of the CI. The results are displayed in Figure \ref{waldci}. We can see that when $\rho \neq 1$, i.e., when the true VI is not $0$, the coverage probabilities of neural network are quite close to the desired probability $95\%$. And the coverage drops significantly when the true VI is $0$, which supports our claim that the Wald-type CI result only hold for the case when true VI is not $0$. The results with GBDT are also reasonable for small $\rho$, but when $\rho$ is large, the coverage probabilities drop. 

\begin{figure}[htp!]
     \centering
     \begin{subfigure}[b]{0.4\textwidth}
         \centering
         \includegraphics[width=1.\textwidth]{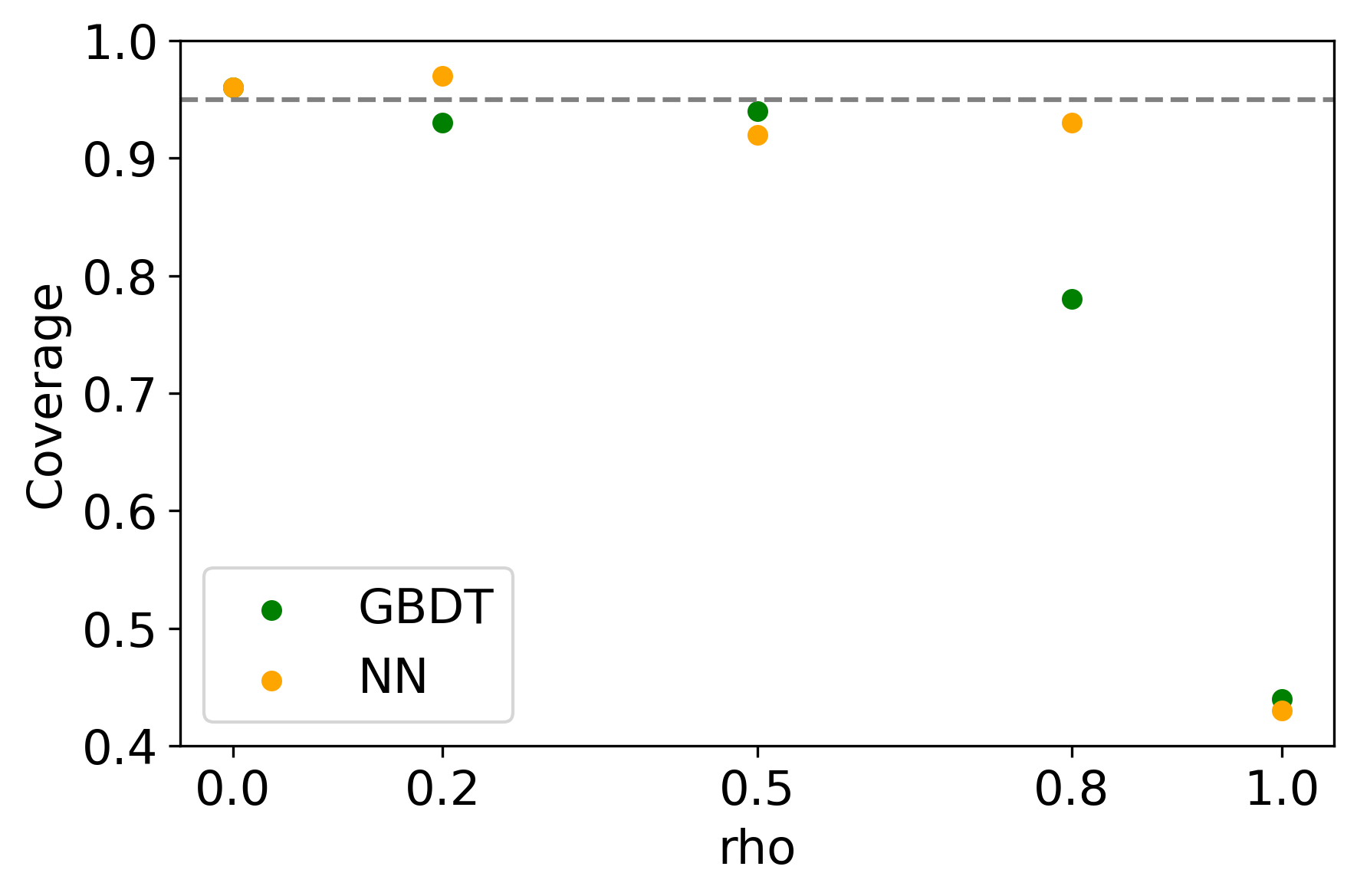}
     \end{subfigure}
 \caption{Wald type CI coverage experiment.}  
\label{waldci}
\end{figure}

\subsection{Predicting flue gas emissions}
Assessing variable importance in predicting CO and NOx emissions from gas turbines is crucial for optimizing efficiency, reducing environmental impact, and ensuring compliance. It helps operators adjust turbine settings and prioritize maintenance on key variables.
To demonstrate the value of our framework, we used data from a gas turbine in northwestern Turkey, collected in 2015, to study NOx emissions \citep{gasdat}. The dataset includes 7,384 instances of 9 sensor measures, such as turbine inlet temperature and compressor discharge pressure, aggregated hourly. Initial variable importance  using negative MSE estimates were low, as shown in 
Figure \ref{gaspic}(a)
due to high feature correlation, prompting us to apply Shapley value estimation using our proposed methods.
According to the Shapley value estimates using neural networks in Figure \ref{gaspic}(b), Ambient Humidity (AH) and Ambient Pressure (AP) are not significant features, while Turbine Inlet Temperature (TIT) is the most important. Other features have similar importance and contribute to predicting NOx emissions. These findings align with previous studies \citep{supp1, supp2}, which highlight the significance of temperature-related variables, such as TIT, in predicting NOx emissions, with ambient conditions like humidity and pressure having a lesser impact. Once again we observe how our early stopping approach closely mimics the estimates for re-training while dropout tends to often over-estimate both VI and Shapley values. The feature correlation heat map and estimation results using GBDT are available in Appendix \ref{exp}.





\begin{figure}[htp!]
     \centering
    \begin{subfigure}[b]{0.4\textwidth}
         \centering
         \includegraphics[width=1\textwidth]{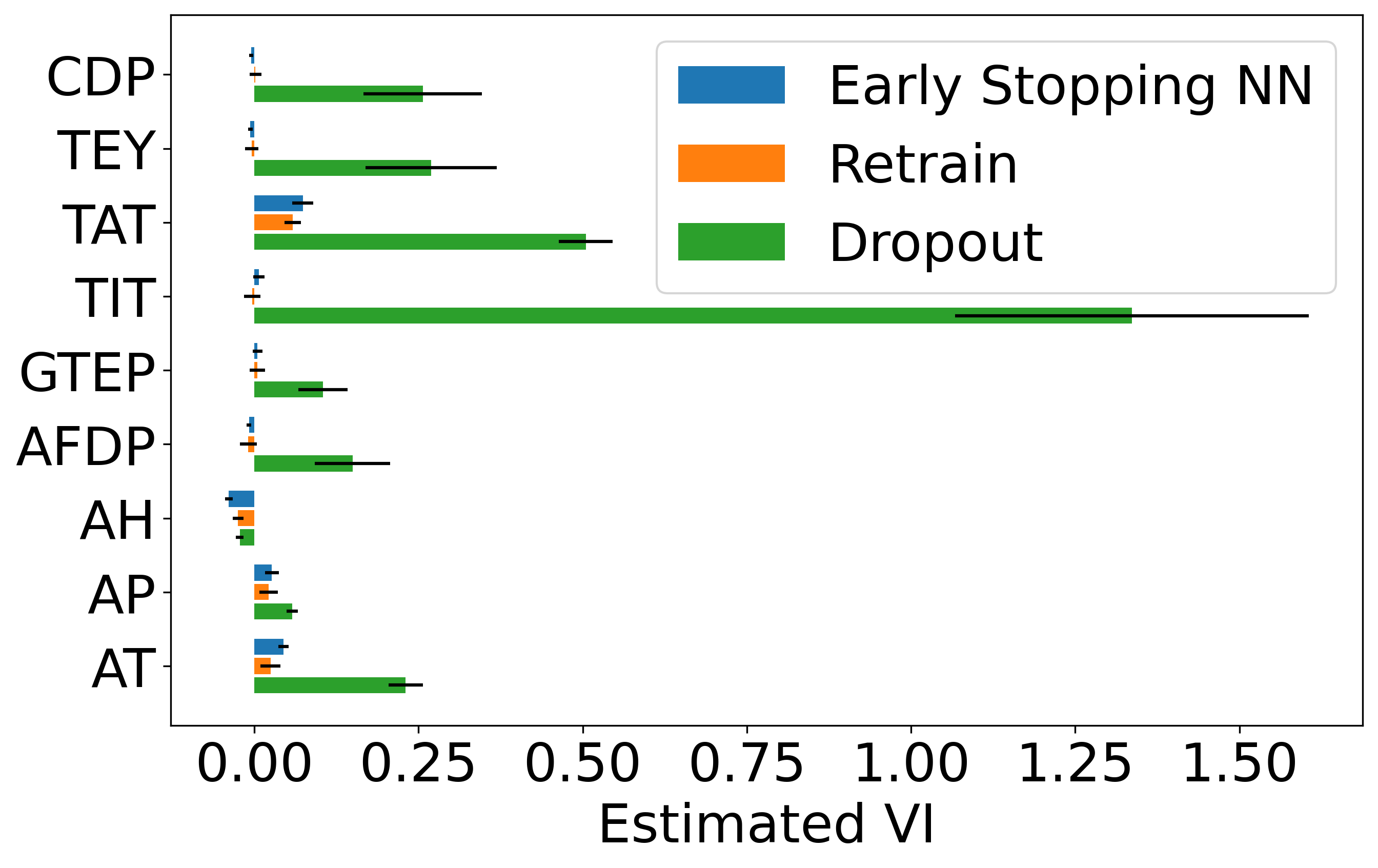}
         \caption{VI estimation}  
     \end{subfigure}
     \begin{subfigure}[b]{0.4\textwidth}
         \centering
         \includegraphics[width=1\textwidth]{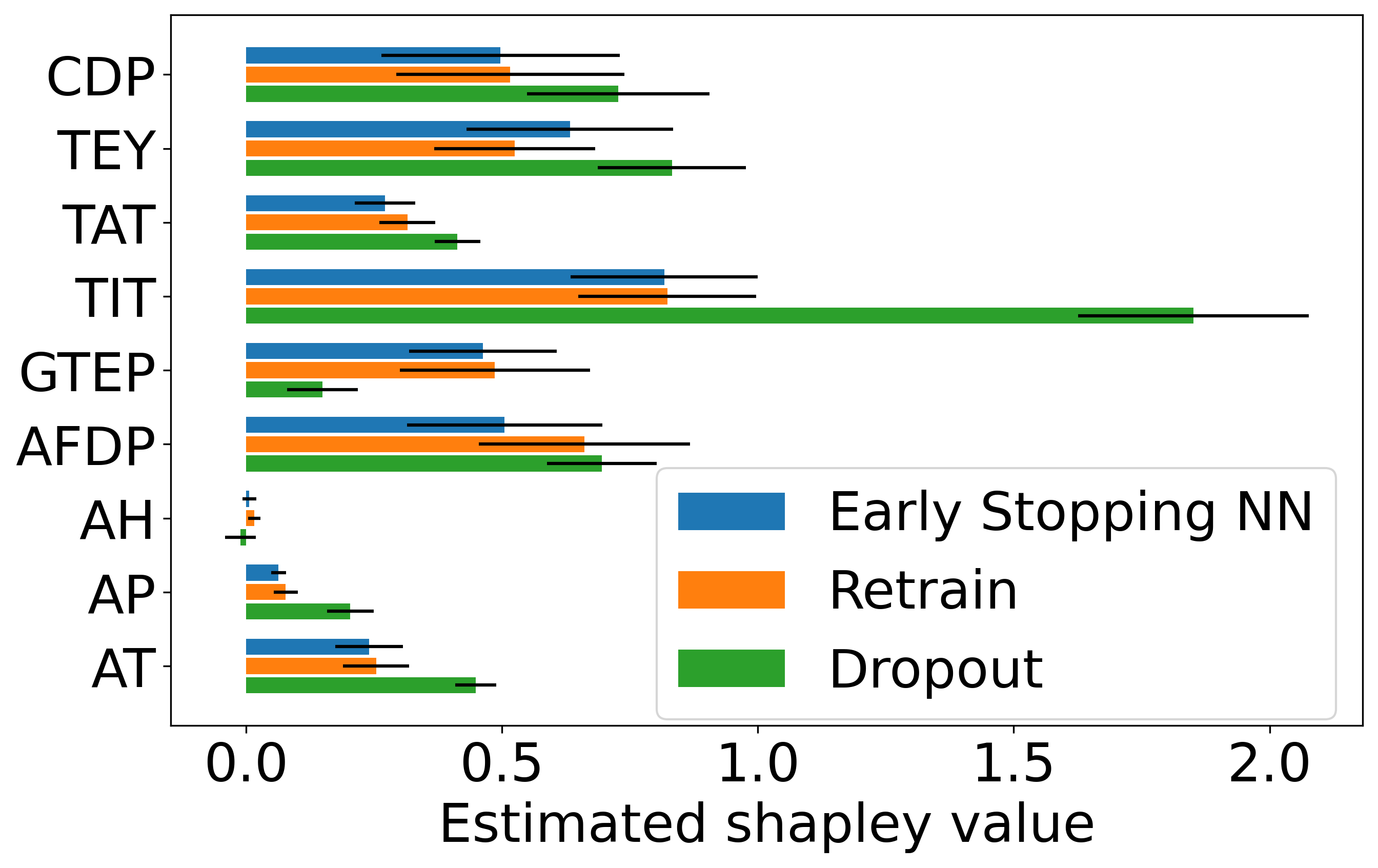}  
         \caption{Shapley value estimation}
     \end{subfigure}
\caption{Flue gas emissions application.}
\label{gaspic}
\end{figure}

\section{Proofs}
We present the proofs of our primary results in this section. The key steps for each proof are outlined in the main text, while the more technical details are included in the appendix.
\subsection{General algorithm}
\subsubsection{Proof of Theorem \ref{genthmfix}}
We first show that  with probability at least $1-c_1 \exp \left(-c_2 N \widehat{\varrho}_N^2\right)$
 for $\tau = 1,2, \dots, \widehat{T}_{max}$, the following holds:
\begin{equation}
   \|f_{\tau} - f_{0,-I}\|_N^2     \leq \frac{C}{\epsilon \tau} + g(\boldsymbol{X}^{(I)},\tau)
       \label{gensumb2}
\end{equation}
where $g(\boldsymbol{X}^{(I)}, \tau)$ is a non-decreasing function that depends on $\tau$, $c_1, c_2$ and $C$ are some  universal positive constants. 
We prove this claim by bounding each term on the RHS of ~\eqref{lemgenb} separately. 

By Lemma 7 in \cite{esnonpara},
for $\tau = 1, \dots, \widehat{T}$, 
the squared bias is upper bounded by
\begin{equation}
    B_{\tau}^2 \leq \frac{C_1}{e \eta_\tau}.
\end{equation}
Moreover, because of Assumption~\ref{rmdassum},
according to the proof of Lemma 7 in \cite{esnonpara}, with probability at least $1-c_1 \exp \left(-c_2 N \widehat{\varrho}_N^2\right)$, 
the variance term is upper bounded as
\begin{equation}
V_{\tau} \leq \frac{C_2}{e \eta_\tau}.
\end{equation}

Then it suffices to show that the difference term is upper bounded by a non-decreasing $g(\tau)$.  Recall that 
\begin{equation}
    S^{\tau} := (I - \epsilon\Lambda)^{\tau}
\end{equation}
so we have for the difference term $D_{\tau}$
\begin{equation}
    \begin{aligned}
    \frac{2\epsilon}{\sqrt{N}}\|\sum_{i=0}^{\tau-1} S^{\tau - 1 -i}\tilde{\delta}_{i}\|_2 \leq \frac{2\epsilon}{\sqrt{N}} 
    \sum_{i=0}^{\tau-1} \| S^{\tau - 1 -i} \|_2  \|\tilde{\delta}_{i}\|     \leq \frac{2\epsilon}{\sqrt{N}} \sum_{i=0}^{\tau-1} \|\tilde{\delta}_{i}\|_2 
     = \frac{2\epsilon}{\sqrt{N}}\sum_{i=0}^{\tau-1} \|{\delta}_{i}\|_2 := \left(g(\boldsymbol{X}^{(I)}, \tau)\right)^{1/2}
\end{aligned} 
\label{bddt}
\end{equation}
where we use the condition that $\epsilon \leq \min \left\{1,1 / \hat{\lambda}_1\right\}$.  Observe that $g(\boldsymbol{X}^{(I)}, \tau)$ is a non-decreasing function.

We next show that for $\tau = \widehat{T}_{\max}$, $f_{\widehat{T}_{\max}}$ satisfies the desired bound. Then since $\widehat{T}_{\text{op}}$ is the optimal stopping time within $[0, \widehat{T}_{\max}]$, it should at least have the same convergence bound. 

The model-specific conditions 
control the term $g(\boldsymbol{X}^{(I)}, \tau)$. Specifically, we require that $g(\boldsymbol{X}^{(I)}, \tau) \in \mathcal{O}(\frac{1}{\sqrt{N}})$. We will see how to control this in the examples of neural network and GBDT in later proofs.

Then we only need to show that 
\begin{equation}
      \frac{C}{ \eta_{ \widehat{T}_{\max}}} \leq \mathcal{O}\left(\frac{1}{\sqrt{N}}\right). 
\end{equation}
According to the proof of Theorem 1 in \cite{esnonpara}
\begin{equation}
    \frac{C}{ \eta_{ \widehat{T}_{\max}}} \leq C^{\prime} \widehat{\varrho}^2_N.  
    \label{eqsum}
\end{equation}
And by the properties of the empirical Rademacher complexity (Appendix D in \cite{esnonpara}), the critical radius satisfies
\begin{equation}\widehat{\mathcal{R}}_K^{(I)}\left(\widehat{\varrho}_N\right)=
\frac{\widehat{\varrho}_N^2 C^2_{\mathcal{H}}}{2 e \sigma}.
\end{equation}
Because we have $\text{tr}(K^{(I)}) = O(1)$ by Assumption~\ref{trass}, then 
\begin{equation}
\frac{\widehat{\varrho}_N^2 C^2_{\mathcal{H}}}{2 e \sigma} = 
\widehat{\mathcal{R}}_K(\widehat{\varrho}_N)=\left[\frac{1}{N} \sum_{i=1}^N \min \left\{\widehat{\lambda}_i, \widehat{\varrho}_N\right\}\right]^{1 / 2} \leq \left[\frac{\text{tr}(K^{(I)})}{N}    \right]^{1 / 2}  = \mathcal{O}\left(\frac{1}{\sqrt{N}}\right). 
\end{equation}
This shows that 
\begin{equation}
    \frac{\widehat{\varrho}_N^2 C^2_{\mathcal{H}}}{2 e \sigma} \leq \mathcal{O}\left(\frac{1}{\sqrt{N}}\right). 
\end{equation}
Combining with ~\eqref{eqsum} completes the proof.

\subsubsection{Proof for Theorem~\ref{genthmpop}}
To prove Theorem \ref{genthmpop}, the key is to write $f_{\tau}$ as $f_{\tau}^{\delta} + f_{\tau} - f_{\tau}^{\delta}$ where $f_{\tau}^{\delta}$ is the part accounting for the evolving kernel and $f_{\tau} - f_{\tau}^{\delta}$ can be interpreted as 
the pure updating function with constant kernel. Then we can use the same proof techniques as Theorem 2  in \cite{esnonpara}  to bound $f_{\tau} - f_{\tau}^{\delta}$ in $L_2$ norm. It remains to bound the $L_2$ norm of $f_{\tau}^{\delta}$. We have the following lemma to decompose $f_\tau$, which may be viewed as a population version  of Lemma~\ref{lemgenb}. 
\begin{lemma}
\label{indupopf}
Let $f_{\tau}^{\delta}$ represent the difference function accounting for the changing kernel,
    we can write $f_\tau$ as 
    \begin{equation}
        f_\tau(\cdot) =  f_{\tau}^{\delta}(\cdot) +  f_\tau(\cdot)-f_{\tau}^{\delta}(\cdot).
    \end{equation}
Define $\delta_{\tau}(\cdot)$ as 
\begin{equation}
  \delta_{\tau}(\cdot) :=\frac{1}{N} \left(\mathbb{K}^{(I)}(\cdot, \boldsymbol{X}^{(I)})   - \mathbb{K}^{(I)}(\cdot, \boldsymbol{X}^{(I)})  \right) [\boldsymbol{Y} - f_{\tau}(\boldsymbol{X}^{(I)})]
\end{equation}
where $\mathbb{K}^{(I)}(\cdot, \boldsymbol{X}^{(I)})$ and $\mathbb{K}_{\tau}^{(I)}(\cdot, \boldsymbol{X}^{(I)})$ represents a function in $\mathbb{R}^N$, with each entry being $\mathbb{K}^{(I)}(\cdot, \mathbf{X}_i^{(I)})$ and $\mathbb{K}_{\tau}^{(I)}(\cdot, \mathbf{X}_i^{(I)})$, respectively. 

Then $f_{\tau}^{\delta}(\cdot)$
 has the following recursion formula:
 \begin{equation}
     \begin{aligned}
         f_1^{\delta}(\cdot) &= \epsilon \delta_0(\cdot) \\
        f_{\tau + 1}^{\delta}(\cdot) &= f_{\tau }^{\delta}(\cdot) + 
       \epsilon  \delta_{\tau}(\cdot) - \frac{\epsilon}{N} \mathbb{K}^{(I)}_\tau (\cdot, \boldsymbol{X}^{(I)}) f_{\tau }^{\delta}(\boldsymbol{X}^{(I)}).
     \end{aligned}
 \end{equation}
\end{lemma}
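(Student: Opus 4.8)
The plan is to run the \emph{functional} version of the gradient update, peel off the contribution of the stationary kernel $\mathbb{K}^{(I)}$, and collect everything generated by the kernel mismatch $\mathbb{K}^{(I)}_\tau-\mathbb{K}^{(I)}$ into the single sequence $f_\tau^\delta$ via a short induction. The decomposition $f_\tau = f_\tau^\delta + (f_\tau - f_\tau^\delta)$ will then be tautological once $f_\tau^\delta$ is defined, so the substance of the lemma is (i) the recursion for $f_\tau^\delta$ and (ii) the observation that $f_\tau - f_\tau^\delta$ is precisely the ``frozen-kernel'' iterate, i.e.\ ordinary gradient descent with the stationary kernel.

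First I would lift the iterative kernel update equation~\eqref{evlup}, which is stated only at the design points, to an identity for functions on the whole input space. Using the functional gradient in~\eqref{fungra} together with the reproducing property, each step adds $\tfrac{\epsilon}{N}\mathbb{K}^{(I)}_\tau(\cdot,\boldsymbol{X}^{(I)})\bigl(\boldsymbol{Y}-f_\tau(\boldsymbol{X}^{(I)})\bigr)$, whose restriction to $\boldsymbol{X}^{(I)}$ reproduces~\eqref{evlup}; for neural networks the same identity holds after first-order linearization of $f(\theta_\tau,\cdot)$ about $\theta_0$, with the linearization residual absorbed into $\delta_\tau(\cdot)$ exactly as in Lemma~\ref{nnlemma}. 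Adding and subtracting $\tfrac{\epsilon}{N}\mathbb{K}^{(I)}(\cdot,\boldsymbol{X}^{(I)})\bigl(\boldsymbol{Y}-f_\tau(\boldsymbol{X}^{(I)})\bigr)$ then produces the functional analogue of~\eqref{addsubeq}, namely $f_{\tau+1}(\cdot)=f_\tau(\cdot)+\tfrac{\epsilon}{N}\mathbb{K}^{(I)}(\cdot,\boldsymbol{X}^{(I)})\bigl(\boldsymbol{Y}-f_\tau(\boldsymbol{X}^{(I)})\bigr)+\epsilon\,\delta_\tau(\cdot)$, with $\delta_\tau(\cdot)$ as in the statement.

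Next I would set $h_0 := f_N^c = f_0$ and define $h_\tau$ by $h_{\tau+1}(\cdot) := h_\tau(\cdot) + \tfrac{\epsilon}{N}\mathbb{K}^{(I)}(\cdot,\boldsymbol{X}^{(I)})\bigl(\boldsymbol{Y}-h_\tau(\boldsymbol{X}^{(I)})\bigr)$ --- plain functional gradient descent with the stationary kernel, started from the warm start --- and put $f_\tau^\delta := f_\tau - h_\tau$. Then $f_\tau - f_\tau^\delta = h_\tau$ is exactly the iterate to which the random-design argument of Theorem~2 in \cite{esnonpara} applies verbatim. Since $f_0 = h_0$ we get $f_0^\delta = 0$, hence $f_1^\delta = \epsilon\,\delta_0$; and subtracting the $h$-recursion from the $f$-recursion above, using $\bigl(\boldsymbol{Y}-f_\tau(\boldsymbol{X}^{(I)})\bigr)-\bigl(\boldsymbol{Y}-h_\tau(\boldsymbol{X}^{(I)})\bigr) = -f_\tau^\delta(\boldsymbol{X}^{(I)})$, yields $f_{\tau+1}^\delta(\cdot) = f_\tau^\delta(\cdot) + \epsilon\,\delta_\tau(\cdot) - \tfrac{\epsilon}{N}\mathbb{K}^{(I)}(\cdot,\boldsymbol{X}^{(I)})f_\tau^\delta(\boldsymbol{X}^{(I)})$, which is the claimed recursion. (Keeping $\mathbb{K}^{(I)}_\tau$ in the last term, as written, only introduces the second-order correction $\tfrac{\epsilon}{N}\bigl(\mathbb{K}^{(I)}_\tau-\mathbb{K}^{(I)}\bigr)(\cdot,\boldsymbol{X}^{(I)})f_\tau^\delta(\boldsymbol{X}^{(I)})$, a product of two small quantities that is of strictly smaller order and is harmless for the subsequent $L^2(P_{-I})$-bound on $f_\tau^\delta$.)

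I expect the only genuinely delicate step to be the lift in the second paragraph: one must check that~\eqref{evlup} really does arise from a functional identity of the stated shape. For gradient boosting this is automatic from the additive structure of the updates (Lemma~3.7 of \cite{gdbt}, after taking expectation over the \textit{SampleTree} randomness as in Lemma~\ref{treekernel}); for neural networks it is exactly where the NTK-stability bound and the $\mathcal{O}(m^{-1/2})$ linearization error enter, both of which are already packaged into $\delta_\tau$ by Lemma~\ref{nnlemma}. Everything after the lift is bookkeeping.
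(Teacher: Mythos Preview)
Your proposal is correct and follows essentially the same route as the paper: both lift the update~\eqref{evlup} to a functional identity, add and subtract the stationary-kernel term to produce $\delta_\tau(\cdot)$, and then split $f_\tau$ into a ``pure'' frozen-kernel iterate (your $h_\tau$, the paper's $f_\tau^p$) plus the remainder $f_\tau^\delta$, verifying the recursion by induction / subtraction of the two recursions. Your observation about $\mathbb{K}^{(I)}$ versus $\mathbb{K}^{(I)}_\tau$ in the last term of the recursion is also apt --- the paper's own proof in fact arrives at $\mathbb{K}^{(I)}$ there (matching what you derive), so the $\mathbb{K}^{(I)}_\tau$ in the statement is a typo, and your remark that the difference is a second-order correction is the right way to reconcile the two.
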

 Suppose we are able to bound the $L_2$ norm of $\delta_\tau(\cdot)$, then use poof by induction, it is not difficult to show that the 
$f_{\widehat{T}_{\text{op}}}^{\delta}$ can be bounded in $L_2$ norm. Here we assume that $\|f_{\widehat{T}_{\text{op}}}^{\delta}(\cdot)\|_2^2$ can be upper bounded by $O(1 / \sqrt{N})$.  The empirical norm of $f_{\tau}^{\delta}$ is actually upper bounded by the difference term $D^2_\tau$ in Lemma \ref{applem1}, which can also be bounded by $O(1 / \sqrt{N})$. Then it suffices to bound $ \left\|f_{\widehat{T}_{\text{op}}} - f_{\widehat{T}_{\text{op}}}^{\delta}
    -f_{0,-I}\right\|_2^2  $. 

According to the proof of Theorem 2 in \cite{esnonpara}, we have 
\begin{equation}
\begin{aligned}
    \left\|f_{\widehat{T}_{\text{op}}} - f_{\widehat{T}_{\text{op}}}^{\delta}
    -f_{0,-I}\right\|_2^2  
    &\stackrel{(a)}{\leq}  \left\|f_{\widehat{T}_{\text{op}}} - f_{\widehat{T}_{\text{op}}}^{\delta}
    -f_{0,-I}\right\|_N^2 + c_1 \varrho_n^2  \\
    & \stackrel{(b)}{\leq} 2 \left\|f_{\widehat{T}_{\text{op}}}
    -f_{0,-I}\right\|_N^2 + 2 \left\| 
    f_{\widehat{T}_{\text{op}}}^{\delta}
    \right\|_N^2 + c_1 \varrho_n^2 \\
  & \stackrel{(c)}{\leq}  c_4 \widehat{\varrho}_N^2 + 2D_\tau^2  + 2 \left\| 
    f_{\widehat{T}_{\text{op}}}^{\delta}
    \right\|_N^2 + c_1 \varrho_N^2  \\ 
 & \stackrel{(d)}{\leq} 4D_{\widehat{T}_{\text{op}}}^2   + c \varrho_N^2 
    \end{aligned}
\end{equation}
with probability at least $1-c_2 \exp \left(-c_3 n \varrho_n^2\right)$. In equality (a), we apply Lemma 9 and 10 in \cite{esnonpara} as $\widehat{T}_{\text{op}} \leq \widehat{T}_{\max}$ by definition; In equality (c), we use Lemma \ref{applem1} and the same claim as in the proof as Theorem \ref{genthmfix}, i.e. eq. ( \ref{eqsum}); In equality (d), we apply lemma 11 in \cite{esnonpara} and the fact that $\left\| 
    f_{\widehat{T}_{\text{op}}}^{\delta}
    \right\|_N^2 \leq D_{\widehat{T}_{\text{op}}}^2$.

It remains to bound $\varrho_n^2$. By Assumption \ref{trasspop}, apply the same strategy as in the  proof of  Theorem \ref{genthmfix}, we are able to show that  
\begin{equation}
    \varrho_n^2 \leq O\left(\frac{1}{\sqrt{N}}\right).
\end{equation}

\subsection{Neural networks}
\subsubsection{ Proof of Corollary \ref{cor1}}
 First, we  need to show that $\text{tr}(K^{(I)}) = O(1)$ in the case of neural networks. For NTK parameterization, by definition
    \begin{equation}
        \text{tr}(K^{(I)}) =\lim_{m\rightarrow \infty}
        \frac{1}{N} \sum_{i=1}^N \left\langle
     \nabla_{\theta} f\left(\theta(0), \mathbf{X}_i^{(I)}\right),
     \nabla_{\theta} f\left(\theta(0), \mathbf{X}_i^{(I)}\right)\right\rangle.
    \end{equation}
Observe that in our setting, the NTK expression is the same as the one stated in Theorem 1 in \cite{jacot}. So each entry of $K^{(I)}$ is $O(1)$ because of Assumption~\ref{xb}. This implies that $\text{tr}(K^{(I)})$ is also $O(1)$.  We can also interpret this as a result of CLT.
Note that for standard parameterization, there would be a scaling factor $\frac{1}{m}$, and by ~\cite{linearnn}, we verify the same result.   

Next, we need to bound the difference term $D^2_{\tau}$. By ~\eqref{bddt} in the proof of Theorem~\ref{genthmfix},
\begin{equation}
\label{borb}
  D^2_{\tau}  \leq
    \frac{4\eta_0^2}{N} \left(\sum_{i=0}^{\tau-1} \|{\delta}_{i}\|_2\right)^2.
\end{equation}
Apply Lemma~\ref{nnlemma}, there exists $M_1 \in \mathbb{N}$, for any $m\geq M_1$, with probability at least $1-\gamma$
\begin{equation}
    D^2_{\widehat{T}_{\max}} \leq \mathcal{O}\left( 
    \frac{{\widehat{T}_{\max}}^2}{ N m}\right)
\end{equation}
then we can choose a large $M_2 \in \mathbb{R}$, for any $m \geq M_2$, $D^2_{\tau}$ is bounded by $\mathcal{O}(N^{-\frac{1}{2}})$. 
We set $M = \max\{M_1, M_2\}$ to satisfy all requirements. 
\subsubsection{Proof of  Corollary \ref{popnn}}
First we need to prove that $f_{\tau}^{\delta}$ is bounded in $L_2$ norm within the desired rate. 

\begin{lemma}
    We can write the function update of neural networks as
    \begin{equation}
        f_{\tau + 1}(\cdot) = 
        f_{\tau}(\cdot) - \eta_0 \mathbb{K}^{(I)}(\cdot, \boldsymbol{X}^{(I)}) (f_{\tau}(\boldsymbol{X}^{(I)}) - \boldsymbol{Y}) +\eta_0 \delta_{\tau}(\cdot)
    \end{equation}
where $\delta_{\tau}(\cdot)$ is the function form of $\delta_\tau$ in (\ref{nnupdaeq})  containing additional linearization error, see appendix for details. 

And for any $\gamma > 0$, there exists $M \in \mathbb{N}$, for  a network described in Lemma \ref{nnlemma}
with width $m \geq M$, 
the following holds with probability at least $1-\gamma$ 
\begin{equation}
    \|\delta_{\tau}(\cdot)\|_2 \leq \mathcal{O}\left(m^{-\frac{1}{2}}\right).
\end{equation}
\label{nnlemmapop}
\end{lemma}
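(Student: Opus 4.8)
The plan is to repeat the derivation behind Lemma~\ref{nnlemma}, but to evaluate the first-order Taylor expansion of $f(\theta,\cdot)$ at an arbitrary input $x$ rather than only at the design matrix $\boldsymbol{X}^{(I)}$. Plugging the parameter update~\eqref{gdup} into this expansion would give, exactly as in~\eqref{genrep},
\[
f_{\tau+1}(x) = f_\tau(x) - \tfrac{\epsilon}{N}\,\nabla_\theta f(\theta_\tau,x)\,\nabla_\theta f(\theta_\tau)^T\big(f_\tau(\boldsymbol{X}^{(I)})-\boldsymbol{Y}\big) + R_\tau(x),
\]
where $R_\tau(x)$ is the Taylor remainder. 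Recognizing $\tfrac1N\nabla_\theta f(\theta_\tau,x)\nabla_\theta f(\theta_\tau)^T$ as the empirically normalized time-$\tau$ NTK vector $\mathbb{K}^{(I)}_\tau(x,\boldsymbol{X}^{(I)})$, using the learning-rate rescaling ($\epsilon=\eta_0/m$ for standard, $\epsilon=\eta_0$ for NTK parameterization, the $1/m$ cancelling against the unnormalized analytic kernel as explained after Assumption~\ref{linea1}), and adding and subtracting the stationary NTK vector $\mathbb{K}^{(I)}(x,\boldsymbol{X}^{(I)})$, I would obtain the stated recursion with
\[
\delta_\tau(x) = \tfrac1N\big(\mathbb{K}^{(I)}(x,\boldsymbol{X}^{(I)})-\mathbb{K}^{(I)}_\tau(x,\boldsymbol{X}^{(I)})\big)\big(\boldsymbol{Y}-f_\tau(\boldsymbol{X}^{(I)})\big) + \tfrac{1}{\eta_0}R_\tau(x).
\]

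\textbf{Step 2 (bounding $\|\delta_\tau(\cdot)\|_2$).} Since $P_{-I}$ is supported on a compact set (Assumption~\ref{xb}), it suffices to bound $\sup_x|\delta_\tau(x)|$ over that set, which dominates the $L^2(P_{-I})$-norm, so the argument reduces to three per-iteration estimates that hold uniformly over $\tau\le\widehat T_{\max}$. First, the residual: Assumption~\ref{dropb} gives $\|\boldsymbol{Y}-f_N^c(\boldsymbol{X}^{(I)})\|_2=O(\sqrt N)$, and since the empirical loss is non-increasing along the gradient-descent trajectory in this regime (learning rate below $\eta_{\text{critical}}$), $\|\boldsymbol{Y}-f_\tau(\boldsymbol{X}^{(I)})\|_2=O(\sqrt N)$ for every $\tau\le\widehat T_{\max}$. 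Second, the kernel deviation: extending the NTK stability estimate of Theorem~2.1 in~\cite{linearnn} to the warm-start initialization, for any $\gamma>0$ there should be $M_1$ so that for $m\ge M_1$, with probability $1-\gamma$, $\sup_x\|\mathbb{K}^{(I)}(x,\boldsymbol{X}^{(I)})-\mathbb{K}^{(I)}_\tau(x,\boldsymbol{X}^{(I)})\|_2 = O(m^{-1/2}\sqrt N)$ uniformly over $\tau\le\widehat T_{\max}$; together with Cauchy--Schwarz and the residual bound, the first term of $\delta_\tau$ is $O(m^{-1/2})$. Third, the linearization remainder: the same extension would give $\sup_x|R_\tau(x)|=O(m^{-1/2})$ uniformly over $\tau\le\widehat T_{\max}$. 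Combining these and choosing $M$ large enough (relative to $\widehat T_{\max}$ and $N$) to absorb the accumulated per-iteration errors would yield $\|\delta_\tau(\cdot)\|_2\le O(m^{-1/2})$ with probability at least $1-\gamma$; this is the function-space analogue of the vector bound already stated in Lemma~\ref{nnlemma}.

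\textbf{Main obstacle.} The delicate part is transplanting the lazy-training / NTK-constancy machinery of~\cite{linearnn}, established for a fresh Gaussian initialization, to the warm-start point $\theta_0=\theta_N^c$. This requires showing that training the full model $f_N^c$ (per Assumption~\ref{linea4}) leaves the parameters inside the $O(m^{-1/2})$-perturbation neighborhood of the original random initialization on which linearization and kernel stability hold --- i.e., controlling the cumulative parameter movement across both training phases --- and that every estimate is uniform in $\tau$ up to $\widehat T_{\max}$, so that summing $\widehat T_{\max}$ per-step errors of size $O(m^{-1/2})$ does not destroy the rate once $m$ is taken large relative to $\widehat T_{\max}^2$ and $N$. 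This is exactly the ``extension of Theorem~2.1 of~\cite{linearnn} to warm start'' carried out in Appendix~\ref{nnlinour}; everything else above is the same bookkeeping as in the proof of Lemma~\ref{nnlemma}, together with the sup-norm-to-$L^2$ reduction enabled by Assumption~\ref{xb}.
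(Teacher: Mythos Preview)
Your proposal is correct and follows essentially the same approach as the paper's proof, which is a one-line remark: repeat the argument of Lemma~\ref{nnlemma} with the evaluation point $\boldsymbol{X}^{(I)}$ replaced by a general $x$, then invoke the same warm-start NTK stability and linearization bounds from Appendix~\ref{nnlinour}. The only minor difference is that you Taylor-expand around $\theta_\tau$ (so the intermediate kernel is $\mathbb{K}^{(I)}_\tau$), whereas the paper's detailed proof of Lemma~\ref{nnlemma} expands around $\theta_0$ via the linearized network $f^{\mathrm{lin}}_\tau$ (so the intermediate kernel is $\mathbb{K}^{(I)}_0$ and the remainder is $e_{\tau+1}-e_\tau$ rather than your $R_\tau$); both decompositions give $\|\delta_\tau(\cdot)\|_2=\mathcal{O}(m^{-1/2})$ by the same stability and Lipschitz estimates, and your explicit sup-to-$L^2$ reduction via Assumption~\ref{xb} is a clean way to finish.
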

Then we can use the proof by induction to show that the $L_2$ norm of $f_\tau^{\delta}$ is bounded by the desired rate of convergence by choosing a sufficiently large width $M$.

Next, we bound $\varrho_N$ in the case of neural network. By Proposition 5 in \cite{egdecnnused}, we know that  for a two-layer ReLu network,
$\lambda_1, \lambda_2 >0$, 
 $\lambda_j = 0$ for $j = 2k, k \geq 2$ and $\lambda_j \sim C j^{-p}$ otherwise.  
Then for any $\varrho$, when $j=  2$ or $
j = 2k+1, k \geq 0$, if $j \leq \left(\frac{C}{\varrho^2}\right)^{1/p}$, we have  $\lambda_j \geq \varrho^2$.
Let $M = (\frac{C}{\varrho^2})^{1/p}$.
Then we calculate  $\mathcal{R}_{\mathbb{K}}(\varrho)$ directly by definition as follows:
\begin{equation}
    \begin{aligned}
\mathcal{R}_{\mathbb{K}}(\varrho)=\frac{1}{\sqrt{N}} \sqrt{\sum_{j=1}^{\infty} \min \left\{\lambda_j, \varrho^2\right\}} & \asymp \sqrt{\frac{\lceil M\rceil}{2N}} \varrho+\sqrt{\frac{C}{N}} \sqrt{\sum_{2k+1=\lceil M\rceil}^{\infty} (2k+1)^{-p}} \\
& \asymp \sqrt{\frac{M}{2N}} \varrho+\sqrt{\frac{C^{\prime}}{N}} \sqrt{\int_M^{\infty} (2t+1)^{-p} d t} \\
& \asymp \sqrt{\frac{M}{2N}} \varrho+C^{\prime \prime} \frac{1}{\sqrt{N}}(\frac{1}{2M+1})^{\frac{p - 1}{2}} \\
& \asymp  \frac{\varrho^{1-\frac{1}{p}}}{\sqrt{N}} 
\end{aligned}
\end{equation}
Same as the critical empirical rate, the population rate is obtained when
$\mathcal{R}_{\mathbb{K}}(\varrho) = \frac{ \varrho^2 C_{\mathcal{H}^2}}{40 \sigma}$. So we have $\varrho^2_N \asymp\left(\sigma^2 / N\right)^{\frac{p}{p+1}}$, which gives the claimed rate. 

For a ReLu network with number of layers $L\geq 3$, by Corollary 3 in \cite{egedecay}, we know that $\lambda_i \sim i^{-p}$. By a similar claim as above, we obtain the same rate. 

\subsection{Gradient boosted decision trees}
\subsubsection{Proof of Corollary \ref{treecorfix}}

We first show that $\text{tr}(\mathbb{E}K^{(I)}) \leq 2^d$. The trace is bounded as follows by the definition of kernels:
\begin{equation}
    \sum_i \hat{\lambda}_i=\frac{1}{N} \sum_\nu \sum_{i=1}^{L_\nu} \frac{N}{N_\nu^{(i)}} N_\nu^{(i)} \pi(v)=\sum_\nu L_\nu \pi(v) \leq 2^d.
\end{equation}

Next, we show how to bound $D_{\tau}^2$. It suffices to show that in the limit of $N \rightarrow \infty$, the order of convergence for $D_{\tau}^2$ is at least  $\mathcal{O}\left( \frac{1}{\sqrt{N}}   \right)$. 

\begin{lemma}
\label{treedt}
Same as conditions for Corollary \ref{treecorfix}, we can bound the difference term $D_{\tau}^2$ for GBDT as follows:
\begin{equation}
    D_{\tau}^2 \leq \frac{4\epsilon^2}{N} {C^{\prime2}}(M_{\beta} - 1 )^2  \left(\frac{1-e^{-\frac{\tau \epsilon}{2M_{\beta}N}}}{1-   e^{-\frac{\epsilon}{2M_{\beta}N}}}
\right)^2
\end{equation}
where $C^{\prime}$ is a constant of order $O(\sqrt{N})$, and 
$M_{\beta} =  e^{\frac{ d \cdot \|f_{*}(\boldsymbol{X}^{(I)})\|_2^2}{N \beta}}$ where $f_{*}$ is the empirical minimizer.
\end{lemma}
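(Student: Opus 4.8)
The plan is to trace $D_\tau^2$ back to a sum of the terms $\|\delta_i\|_2$, show that each $\|\delta_i\|_2$ decays geometrically in $i$ with a rate controlled jointly by $M_\beta$ and the step size, and then evaluate the resulting geometric series. Exactly as in the chain of inequalities \eqref{bddt} in the proof of Theorem~\ref{genthmfix}, the hypothesis $\epsilon \le \min\{1,1/\widehat\lambda_1\}$ forces $\|S^{\tau-1-i}\|_2 \le 1$ for every $i$, hence $D_\tau \le \frac{2\epsilon}{\sqrt N}\sum_{i=0}^{\tau-1}\|\delta_i\|_2$, so it suffices to bound $\|\delta_i\|_2$. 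By Lemma~\ref{treekernel}, $\delta_i = \mathbb{E}_u\big(K^{(I)}-K^{(I)}_i\big)\big[f_i(\boldsymbol X^{(I)})-\boldsymbol e^{(I)}\big]$. Passing to the shifted coordinates $\tilde f_i := f_i - f_N^c$ (so $\tilde f_0 = 0$ and the target is $\boldsymbol e^{(I)} = f_{0,-I}(\boldsymbol X^{(I)})-f_N^c(\boldsymbol X^{(I)})$), set the residual $r_i := \mathbb{E}_u\tilde f_i(\boldsymbol X^{(I)}) - \mathbb{E}_u\boldsymbol e^{(I)}$; then $\delta_i = -\mathbb{E}_u\big(K^{(I)}-K^{(I)}_i\big)r_i$ is linear in $r_i$, so $\|\delta_i\|_2 \le \|\mathbb{E}_u(K^{(I)}-K^{(I)}_i)\|_2\,\|r_i\|_2$, and the two factors are handled separately.

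For the kernel factor, the \emph{SampleTree} rule picks $\nu$ by maximizing a score perturbed by $-\beta\log(-\log u)$, which by the Gumbel--max identity means $p(\nu\mid f_i,\beta)\propto \exp(D_\nu(f_i)/\beta)$, recovering $\pi(\nu)$ as $\beta\to\infty$. The spread of the scores $\{D_\nu(f_i)\}_{\nu\in\mathcal V}$ is at most the largest single-step loss decrease attainable by a depth-$d$ weak learner, which in the least-squares setting is $O\!\big(d\,\|f_{*}(\boldsymbol X^{(I)})\|_2^2/N\big)$; consequently $M_\beta^{-1}\pi(\nu)\le p(\nu\mid f_i,\beta)\le M_\beta\,\pi(\nu)$ with $M_\beta = e^{d\|f_{*}(\boldsymbol X^{(I)})\|_2^2/(N\beta)}$, hence $|p(\nu\mid f_i,\beta)-\pi(\nu)|\le \pi(\nu)(M_\beta-1)$. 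Summing over $\nu$ and using that each normalized per-tree matrix $\tfrac1N[k_\nu]$ is a projection-type matrix with $\tfrac1N\,\mathrm{tr}[k_\nu]\le L_\nu\le 2^d$, we get $\|\mathbb{E}_u(K^{(I)}-K^{(I)}_i)\|_2 \le (M_\beta-1)\,\mathrm{tr}(K^{(I)}) \le 2^d(M_\beta-1)$, reusing the trace bound $\mathrm{tr}(K^{(I)})\le 2^d$ established at the start of the proof of Corollary~\ref{treecorfix}. For the residual factor, the lower envelope $p(\nu\mid f_i,\beta)\ge \pi(\nu)/M_\beta$ gives the PSD ordering $\mathbb{E}_u K^{(I)}_i \succeq M_\beta^{-1}K^{(I)}$, and $\mathbb{E}_u K^{(I)}_i$ shares the range of $K^{(I)}$, which by Assumption~\ref{inass} contains $\boldsymbol e^{(I)}$ and hence every $r_i$. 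On that subspace $K^{(I)}$ has smallest eigenvalue of order at least $1/N$ (a consequence of the weights $w_\nu^{(j)}\ge 1$ in \eqref{wekernel}), while $\epsilon\le\frac{1}{4M_\beta N}$ keeps $\epsilon\,\lambda_{\max}(\mathbb{E}_u K^{(I)}_i)\le \epsilon\,2^d\le 1$; so the recursion $r_{i+1}=(I-\epsilon\,\mathbb{E}_u K^{(I)}_i)r_i$ contracts by a factor at most $1-\frac{\epsilon}{2M_\beta N}\le e^{-\epsilon/(2M_\beta N)}$ on that subspace, whence $\|r_i\|_2\le e^{-i\epsilon/(2M_\beta N)}\|r_0\|_2$ with $\|r_0\|_2 = \|\boldsymbol e^{(I)}\|_2 = O(\sqrt N)$ by Assumption~\ref{dropb} and the sub-Gaussian noise bound.

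Combining the two estimates gives $\|\delta_i\|_2\le C'(M_\beta-1)\,e^{-i\epsilon/(2M_\beta N)}$ for a constant $C' = O(\sqrt N)$ absorbing $2^d$ and $\|r_0\|_2$, so
\[ D_\tau \le \frac{2\epsilon}{\sqrt N}\,C'(M_\beta-1)\sum_{i=0}^{\tau-1}e^{-i\epsilon/(2M_\beta N)} = \frac{2\epsilon}{\sqrt N}\,C'(M_\beta-1)\,\frac{1-e^{-\tau\epsilon/(2M_\beta N)}}{1-e^{-\epsilon/(2M_\beta N)}}, \]
and squaring yields the stated bound on $D_\tau^2$. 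The crux — and the place where model-specific work is unavoidable — is the middle paragraph: converting the randomized tree-selection rule into the multiplicative envelope $M_\beta^{\pm1}\pi(\nu)$ needs a clean, depth-$d$–uniform bound on the per-step loss decrease in terms of $\|f_{*}(\boldsymbol X^{(I)})\|_2$, and extracting a genuine geometric contraction for $r_i$ requires both the PSD comparison $\mathbb{E}_u K^{(I)}_i\succeq M_\beta^{-1}K^{(I)}$ and a non-degeneracy lower bound on the spectrum of $K^{(I)}$ on the subspace in which the shifted target and residuals live, all while carrying the algorithm's randomness through $\mathbb{E}_u$ consistently.
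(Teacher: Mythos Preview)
There is a genuine gap in your factoring step. You write $\delta_i = -\mathbb{E}_u(K^{(I)}-K^{(I)}_i)r_i$ with $r_i$ a deterministic vector, and then bound $\|\delta_i\|_2 \le \|\mathbb{E}_u(K^{(I)}-K^{(I)}_i)\|_2\,\|r_i\|_2$. But by Lemma~\ref{treekernel}, $\delta_i = \mathbb{E}_u\big[(K^{(I)}-K^{(I)}_i)(f_i(\boldsymbol X^{(I)})-\boldsymbol e^{(I)})\big]$, and the two factors inside the expectation are correlated: the tree distribution $p(\nu\mid f_i,\beta)$ that determines $K^{(I)}_i$ depends on the random $f_i$. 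You cannot replace $f_i-\boldsymbol e^{(I)}$ by its mean and pull it outside $\mathbb{E}_u$. The same correlation issue invalidates your residual recursion $r_{i+1}=(I-\epsilon\,\mathbb{E}_u K^{(I)}_i)r_i$, which again presumes $\mathbb{E}_u[K^{(I)}_i f_i]=\mathbb{E}_u[K^{(I)}_i]\,\mathbb{E}_u[f_i]$. A second, independent gap is the contraction rate: you assert that the smallest eigenvalue of $K^{(I)}$ on the relevant subspace is at least of order $1/N$ ``as a consequence of the weights $w_\nu^{(j)}\ge 1$'', but that bounds each per-tree projector, not the smallest eigenvalue of the $\pi$-average restricted to a particular subspace; without this you do not obtain the specific rate $e^{-\epsilon/(2M_\beta N)}$.

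The paper avoids both problems by a different decomposition. It keeps the expectation outside and bounds
\[
\|\delta_i\|_2 \;\le\; \mathbb{E}_u\!\left[\max_{\nu\in\mathcal V}\Big|1-\tfrac{p(\nu\mid f_i,\beta)}{\pi(\nu)}\Big|\cdot \|f_i(\boldsymbol X^{(I)})\|_2\right]
\]
(plus an analogous term for $\boldsymbol e^{(I)}$). The key point is that the ratio bound is not the uniform $(M_\beta-1)$ you use but, via Lemma~F.2 of \cite{gdbt}, is proportional to the excess risk: $\max_\nu|1-p/\pi|\le \tfrac{2N}{R^2}(M_\beta-1)\,V(f_i)$. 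The geometric decay then comes from the boosting convergence $\mathbb{E}_u V(f_i)\le \tfrac{R^2}{2N}e^{-i\epsilon/(2M_\beta N)}$ (Theorem~G.22 of \cite{gdbt}), while $\|f_i\|_2\le 2R$ uniformly. In other words, the paper places all the $i$-dependence in the kernel-difference factor rather than in the residual, and never needs a lower spectral bound on $K^{(I)}$.
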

According to the above lemma, 
\begin{equation}
\begin{aligned}
    D_{\tau}^2  
& \leq \frac{4\epsilon^2}{N} {C^{\prime2}}(M_{\beta} - 1 )^2 
\frac{1}{        \left(  1-   e^{\frac{\epsilon}{-2M_{\beta}N}}\right)^2 }
\end{aligned}
\end{equation}
When $N  \rightarrow \infty $, $\frac{C^{\prime2}}{N}$ tends to a constant, and $M_\beta - 1\sim \mathcal{O}(\frac{1}{\beta})$.
Then apply $(1-e^x)^2 \sim x^2, \text{when } x \rightarrow 0$, we have 
\begin{equation}
\begin{aligned}
     \frac{4\epsilon^2}{N} {C^{\prime2}}(M_{\beta} - 1 )^2 
\frac{1}{        \left(  1-   e^{-\frac{\epsilon}{2M_{\beta}N}}\right)^2 } \sim \mathcal{O}\left(   \frac{N^2}{\beta^2}
\right).
\label{treecalbeta}
\end{aligned}
\end{equation}
To have at least order $\mathcal{O}\left( \frac{1}{\sqrt{N}}   \right)$, we need $\beta \geq N^{5/4}$.

\subsubsection{ Proof of Corollary \ref{treecorpop} }
In this case, since GDBT belongs to the finite kernel function class, so $\sum_i \lambda_i$ is definitely finite, which means that
\begin{equation}
    \varrho_N^2 \leq O\left(\frac{1}{\sqrt{N}}\right).
\end{equation}

Next, we use the following lemma to bound $ \| E_u f_{\tau}^{\delta}(\cdot) \|_2$ in the case of GBDT.
\begin{lemma}
\label{treedtpop}
    The $L_2$ norm of $E_u f_{\tau}^{\delta} $ can be bounded with the same rate as the empirical norm $D_{\tau}^2$:
 \begin{equation}
        \| E_u f_{\tau}^{\delta}(\cdot) \|_2^2 \leq \epsilon^2N  {C^{\prime2}}(M_{\beta} - 1 )^2 
\frac{\tau^2}{        \left(  1-   e^{\frac{\epsilon}{-2M_{\beta}N}}\right)^2 }.
    \end{equation}
\end{lemma}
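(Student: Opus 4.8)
The plan is to carry out, in the population $L^2(P_{-I})$ norm, the same kind of argument used for the empirical-norm bound in Lemma~\ref{treedt}, working from the $\mathbb{E}_u$-version of the functional recursion of Lemma~\ref{indupopf}: $\mathbb{E}_u f_{\tau+1}^{\delta}(\cdot) = \mathbb{E}_u f_{\tau}^{\delta}(\cdot) + \epsilon\,\mathbb{E}_u\delta_{\tau}(\cdot) - \tfrac{\epsilon}{N}\,\mathbb{E}_u[\mathbb{K}^{(I)}_{\tau}(\cdot,\boldsymbol{X}^{(I)})\,f_{\tau}^{\delta}(\boldsymbol{X}^{(I)})]$, with $\mathbb{E}_u f_1^{\delta}(\cdot)=\epsilon\,\mathbb{E}_u\delta_0(\cdot)$. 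First I would unroll this recursion, which writes $\mathbb{E}_u f_{\tau}^{\delta}$ as a sum over $i=0,\dots,\tau-1$ of a ``fresh'' piece $\epsilon\,\mathbb{E}_u\delta_i(\cdot)$ and a ``propagated'' piece $\tfrac{\epsilon}{N}\,\mathbb{E}_u[\mathbb{K}^{(I)}_i(\cdot,\boldsymbol{X}^{(I)})f_i^{\delta}(\boldsymbol{X}^{(I)})]$; after moving $\mathbb{E}_u$ inside every norm by Jensen, it then suffices to bound the $L^2(P_{-I})$ norm of each piece and sum.

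Both bounds come down to two uniform estimates for the GBDT kernels. Since each leaf weight satisfies $w_\nu^{(j)}=N/\max\{N_\nu^{(j)},1\}\le N$, we get the crude pointwise bound $\mathbb{K}^{(I)}(x,x')\le N$ (and likewise $\mathbb{K}^{(I)}_\tau(x,x')\le N$) for all $x,x'$, hence $\|\mathbb{K}^{(I)}(x,\boldsymbol{X}^{(I)})\|_2\le N^{3/2}$ uniformly in $x$; and the Radon--Nikodym-type control of the \textit{SampleTree} distribution in \cite{gdbt} gives $|p(\nu\mid f_\tau,\beta)-\pi(\nu)|\le (M_\beta-1)\,\pi(\nu)$, hence $|\mathbb{K}^{(I)}_\tau(x,x')-\mathbb{K}^{(I)}(x,x')|\le (M_\beta-1)\,\mathbb{K}^{(I)}(x,x')$ entrywise. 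Feeding these, together with Cauchy--Schwarz and the non-explosion of the residual $\|\boldsymbol{Y}-f_i(\boldsymbol{X}^{(I)})\|_2\le C'=O(\sqrt N)$ (Assumption~\ref{dropb} and monotonicity of the gradient-boosting loss), into the unrolled recursion gives $\|\mathbb{E}_u\delta_i(\cdot)\|_2\le \sqrt N\,(M_\beta-1)\,C'$ for the fresh pieces and $\tfrac{1}{N}\|\mathbb{E}_u[\mathbb{K}^{(I)}_i(\cdot,\boldsymbol{X}^{(I)})f_i^{\delta}(\boldsymbol{X}^{(I)})]\|_2\le N\,\|f_i^{\delta}\|_N$ for the propagated pieces, where $\|f_i^{\delta}\|_N^2\le D_i^2$ is already known from Lemma~\ref{treedt}. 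Summing over $i=0,\dots,\tau-1$, using the step-size bound $\epsilon\le\tfrac{1}{4M_\beta N}$ (which, combined with the geometric factor $1/(1-e^{-\epsilon/(2M_\beta N)})$ inside the $D_i$ bound of Lemma~\ref{treedt}, keeps $\epsilon\,D_i$ of the same order as the fresh terms), and squaring, one arrives at $\|\mathbb{E}_u f_{\tau}^{\delta}(\cdot)\|_2^2 \le \epsilon^2 N\,C'^2\,(M_\beta-1)^2\,\tau^2/(1-e^{-\epsilon/(2M_\beta N)})^2$, which is the claimed inequality; since $C'^2=O(N)$ and $M_\beta-1=O(1/\beta)$ this is of the same order in $N,\tau,\beta$ as $D_\tau^2$ apart from an extra factor $N$.

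The step I expect to be the main obstacle is the one that produces that extra factor $N$, namely controlling $\|\mathbb{K}^{(I)}(\cdot,\mathbf{X}^{(I)}_j)\|_2$ and the propagation term $\tfrac{\epsilon}{N}\mathbb{K}^{(I)}_\tau(\cdot,\boldsymbol{X}^{(I)})f_\tau^{\delta}(\boldsymbol{X}^{(I)})$ in the population norm. In the fixed-design analysis one only ever meets the normalized empirical object $\tfrac{1}{N}\mathbb{K}^{(I)}(\mathbf{X}^{(I)}_i,\mathbf{X}^{(I)}_j)$, whose rows sum to $1$, so the relevant quantities are $O(1)$; in the population norm the argument ranges over the whole support of $P_{-I}$ and this cancellation is unavailable, so one must fall back on the worst-case leaf-weight bound $w_\nu^{(j)}\le N$ --- precisely what inflates the estimate by $N$ --- and check that this still suffices to obtain the stated bound (and, downstream, the $O(N^{-1/2})$ rate of Corollary~\ref{treecorpop} under $\beta\ge N^{15/4}$). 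The remaining ingredients --- the induction turning the unrolled recursion into the displayed inequality, the Jensen steps moving $\mathbb{E}_u$ through $\|\cdot\|_2$ and $\|\cdot\|_N$, and the elementary geometric summation --- are routine.
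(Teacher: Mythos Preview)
Your proposal is correct and follows essentially the same route as the paper: both start from the $\mathbb{E}_u$-version of the recursion in Lemma~\ref{indupopf}, bound the propagation term $\tfrac{\epsilon}{N}\mathbb{K}^{(I)}(\cdot,\boldsymbol{X}^{(I)})f_\tau^\delta(\boldsymbol{X}^{(I)})$ in $L^2(P_{-I})$ via the pointwise estimate $0\le \tfrac1N k_\nu(\cdot,\mathbf{X}_i^{(I)})\le 1$ together with the already-known empirical control $\|f_\tau^\delta\|_N\le D_\tau$ from Lemma~\ref{treedt}, and then induct/sum. The only noticeable difference is that for the fresh pieces $\epsilon\,\mathbb{E}_u\delta_i(\cdot)$ the paper keeps the sharper decaying bound $(M_\beta-1)e^{-i\epsilon/(2M_\beta N)}$ coming from $\mathbb{E}V(f_i)$, whereas you use only the uniform Radon--Nikodym control $|p/\pi-1|\le M_\beta-1$; since the target inequality already carries the large factor $\tau^2/(1-e^{-\epsilon/(2M_\beta N)})^2$, your cruder estimate is sufficient and the argument closes as you indicate.
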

Note that we need to take expectation of $f_\tau^{\delta}$ w.r.t. the randomness of the algorithm.   

Note that we have
\begin{equation}
    \frac{1}{\eta_{\widehat{T}_{\max}+1}} \leq \widehat{\varrho}_n^2 \leq \frac{1}{\eta_{\widehat{T}_{\max}}}
\end{equation}
therefore
\begin{equation}
    \widehat{T}_{\max} \asymp  \frac{\sqrt{N}}{\epsilon}.
\end{equation}
So we can apply the same as in the proof of empirical bound to obtain
\begin{equation}
      \| E_u f_{\widehat{T}_{\text{op}}}^{\delta}(\cdot) \|_2^2 \leq \epsilon^2 N  {C^{\prime2}}(M_{\beta} - 1 )^2 
\frac{{\widehat{T}_{\text{max}}}^2}{     \left(  1-   e^{\frac{\epsilon}{-2M_{\beta}N}}\right)^2 } \sim
O\left( \frac{N^7}{\beta^2} \right),
\end{equation}
where we also use the fact that $\epsilon = O\left(\frac{1}{N}\right)$. 
To get the desired convergence result, we need $\beta \geq N^{15/4}$.

\section{Conclusion and Discussion}


Evaluating the importance of variables in machine learning is essential as these technologies are increasingly used in impactful areas such as autonomous driving, financial and healthcare decisions, and social and criminal justice systems. 
In this paper, we introduce a general algorithm called warm-starting early-stopping, designed for any gradient-based iterative process to efficiently estimate variable importance (VI). We demonstrate that our method accurately estimates VI and matches the precision of more computationally demanding retrain methods. We showcase the general warm-start early-stopping idea works and give a guide of how to use this algorithm in practice.

The theory presented in this paper is a significant advancement towards enhancing the computational efficiency of interpretability in machine learning models. We believe that this theoretical framework could be extended in various other directions as well.  An interesting extension of this is in finite sample case, could we derive the exact or approximate distribution of the VI estimator and then perform a hypothesis testing to test if our estimates are reliable. Second, for GBDT our theory is based on a simple decision tree algorithm, extending the analysis to regular decision tree structure is of great interest. Third, the GBDT is a classic boosting algorithm, extending the lazy training idea to bagging algorithms, like random forest, is also an appealing direction to explore. Fourth, we can expect that
the stability results of NTK to be extended to more general initialization settings and more general network architecture.

\section{Acknowledgment}
Support for this research was provided by American Family Insurance through a research partnership with the University of Wisconsin–Madison’s American Family Insurance Data Science Institute.

\bibliographystyle{unsrtnat}
\bibliography{references}

\begin{thebibliography}{81}
\providecommand{\natexlab}[1]{#1}
\providecommand{\url}[1]{\texttt{#1}}
\expandafter\ifx\csname urlstyle\endcsname\relax
  \providecommand{\doi}[1]{doi: #1}\else
  \providecommand{\doi}{doi: \begingroup \urlstyle{rm}\Url}\fi

\bibitem[Rudin and Radin(2019)]{Rudin2019Why}
Cynthia Rudin and Joanna Radin.
\newblock Why {Are} {We} {Using} {Black} {Box} {Models} in {AI} {When} {We} {Don}\textquoteright{}t {Need} {To}? {A} {Lesson} {From} an {Explainable} {AI} {Competition}.
\newblock \emph{Harvard Data Science Review}, 1\penalty0 (2), nov 22 2019.

\bibitem[Guidotti et~al.(2018)Guidotti, Monreale, Ruggieri, Turini, Giannotti, and Pedreschi]{bb2}
Riccardo Guidotti, Anna Monreale, Salvatore Ruggieri, Franco Turini, Fosca Giannotti, and Dino Pedreschi.
\newblock A survey of methods for explaining black box models.
\newblock \emph{ACM Computing Surveys (CSUR)}, 51\penalty0 (5), aug 2018.
\newblock ISSN 0360-0300.
\newblock \doi{10.1145/3236009}.

\bibitem[Allen et~al.(2024)Allen, Gan, and Zheng]{AllenInterpretable}
Genevera~I. Allen, Luqin Gan, and Lili Zheng.
\newblock Interpretable machine learning for discovery: Statistical challenges \& opportunities.
\newblock Technical report, 2024.

\bibitem[M.~Du and Hu(2019)]{DuInterpretable}
N.~Liu M.~Du and X.~Hu.
\newblock Techniques for interpretable machine learning.
\newblock \emph{Communication (ACM)}, 63(1):\penalty0 68–77, 2019.

\bibitem[Molnar(2022{\natexlab{a}})]{MolnarInterpretable}
C.~Molnar.
\newblock \emph{Interpretable Machine Learning: A Guide For Making Black Box Models Explainable}.
\newblock 2nd Ed, 2022{\natexlab{a}}.

\bibitem[Murdoch et~al.(2019)Murdoch, Singh, K.~Kumbier, , and Yu]{YuInterpretable}
W.~J. Murdoch, C.~Singh, R.~Abbassi-Asi K.~Kumbier, , and B.~Yu.
\newblock Interpretable machine learning: definitions, methods, and applications.
\newblock \emph{Proceedings of National Academy of Sciences}, 116(44):\penalty0 22071--22080, 2019.

\bibitem[Shrikumar et~al.(2017{\natexlab{a}})Shrikumar, Greenside, and Kundaje]{varnn2}
Avanti Shrikumar, Peyton Greenside, and Anshul Kundaje.
\newblock Learning important features through propagating activation differences.
\newblock In Doina Precup and Yee~Whye Teh, editors, \emph{Proceedings of the 34th International Conference on Machine Learning}, volume~70 of \emph{Proceedings of Machine Learning Research}, pages 3145--3153. PMLR, 06--11 Aug 2017{\natexlab{a}}.

\bibitem[Sundararajan et~al.(2017{\natexlab{a}})Sundararajan, Taly, and Yan]{varnn3}
Mukund Sundararajan, Ankur Taly, and Qiqi Yan.
\newblock Axiomatic attribution for deep networks.
\newblock In Doina Precup and Yee~Whye Teh, editors, \emph{Proceedings of the 34th International Conference on Machine Learning}, volume~70 of \emph{Proceedings of Machine Learning Research}, pages 3319--3328. PMLR, 06--11 Aug 2017{\natexlab{a}}.
\newblock URL \url{https://proceedings.mlr.press/v70/sundararajan17a.html}.

\bibitem[Shrikumar et~al.(2017{\natexlab{b}})Shrikumar, Greenside, and Kundaje]{intro3}
Avanti Shrikumar, Peyton Greenside, and Anshul Kundaje.
\newblock Learning important features through propagating activation differences.
\newblock In \emph{Proceedings of the 34th International Conference on Machine Learning}, 2017{\natexlab{b}}.

\bibitem[Sundararajan et~al.(2017{\natexlab{b}})Sundararajan, Taly, and Yan]{intro4}
Mukund Sundararajan, Ankur Taly, and Qiqi Yan.
\newblock Axiomatic attribution for deep networks.
\newblock In \emph{Proceedings of the 34th International Conference on Machine Learning}, 2017{\natexlab{b}}.

\bibitem[Feng et~al.(2018)Feng, Williamson, Simon, and Carone]{intro5}
Jean Feng, Brian Williamson, Noah Simon, and Marco Carone.
\newblock Nonparametric variable importance using an augmented neural network with multi-task learning.
\newblock In \emph{Proceedings of the 35th International Conference on Machine Learning}, volume~80 of \emph{Proceedings of Machine Learning Research}, pages 1496--1505. PMLR, 10--15 Jul 2018.

\bibitem[Chang et~al.(2018)Chang, Rampasek, and Goldenberg]{intro6}
Chun-Hao Chang, Ladislav Rampasek, and Anna Goldenberg.
\newblock Dropout feature ranking for deep learning models, 2018.

\bibitem[Morgan and Bourlard(1989)]{esnn}
N.~Morgan and H.~Bourlard.
\newblock Generalization and parameter estimation in feedforward nets: Some experiments.
\newblock In \emph{Proceedings of Neural Information Processing Systems}, 1989.

\bibitem[Raskutti et~al.(2014)Raskutti, Wainwright, and Yu]{esnonpara}
Garvesh Raskutti, Martin~J. Wainwright, and Bin Yu.
\newblock Early stopping and non-parametric regression: An optimal data-dependent stopping rule.
\newblock \emph{Journal of Machine Learning Research}, 15:\penalty0 335--366, 2014.

\bibitem[Bühlmann and Yu(2003)]{bl2}
Peter Bühlmann and Bin Yu.
\newblock Boosting with the l2 loss.
\newblock \emph{Journal of the American Statistical Association}, 98\penalty0 (462):\penalty0 324--339, 2003.
\newblock \doi{10.1198/016214503000125}.

\bibitem[Zhang and Yu(2005)]{esbtyu}
Tong Zhang and Bin Yu.
\newblock {Boosting with early stopping: Convergence and consistency}.
\newblock \emph{The Annals of Statistics}, 33\penalty0 (4):\penalty0 1538 -- 1579, 2005.
\newblock \doi{10.1214/009053605000000255}.

\bibitem[Wei et~al.(2017)Wei, Yang, and Wainwright1]{esbs}
Yuting Wei, Fanny Yang, and Martin~J. Wainwright1.
\newblock Early stopping for kernel boosting algorithms: A general analysis with localized complexities.
\newblock In \emph{31st Conference on Neural Information Processing Systems}, Long Beach, CA, USA, 2017.

\bibitem[Gao et~al.(2022)Gao, Stevens, Raskutti, and Willett]{lazyvi}
Yue Gao, Abby Stevens, Garvesh Raskutti, and Rebecca Willett.
\newblock Lazy {Estimation} of {VI} for {Large} {NNs}.
\newblock In \emph{Proceedings of the 39th International Conference on Machine Learning}, Baltimore, Maryland, USA, 2022.

\bibitem[Anderssen and Prenter(1981)]{esold1}
R.~S. Anderssen and P.~M. Prenter.
\newblock A formal comparison of methods proposed for the numerical solution of first kind integral equations.
\newblock \emph{The Journal of the Australian Mathematical Society. Series B. Applied Mathematics}, 22\penalty0 (4):\penalty0 488–500, 1981.
\newblock \doi{10.1017/S0334270000002824}.

\bibitem[Strand(1974)]{esold2}
Otto~Neall Strand.
\newblock Theory and methods related to the singular-function expansion and landweber's iteration for integral equations of the first kind.
\newblock \emph{SIAM Journal on Numerical Analysis}, 11\penalty0 (4):\penalty0 798--825, 1974.
\newblock ISSN 00361429.

\bibitem[Jiang(2004)]{esboost1}
Wenxin Jiang.
\newblock Process consistency for adaboost.
\newblock \emph{The Annals of Statistics}, 32\penalty0 (1):\penalty0 13--29, 2004.
\newblock ISSN 00905364.

\bibitem[Yao et~al.(2007)Yao, Rosasco, and Caponnetto]{esboost2}
Yuan Yao, Lorenzo Rosasco, and Andrea Caponnetto.
\newblock On early stopping in gradient descent learning.
\newblock \emph{Constructive Approximation}, 26\penalty0 (2), 2007.

\bibitem[B{\"u}hlmann and Hothorn(2007)]{l2boost2}
Peter B{\"u}hlmann and Torsten Hothorn.
\newblock Boosting algorithms: Regularization, prediction and model fitting.
\newblock \emph{Statistical Science}, 22\penalty0 (4):\penalty0 477 -- 505, 2007.
\newblock \doi{10.1214/07-STS242}.
\newblock URL \url{https://doi.org/10.1214/07-STS242}.

\bibitem[Vito et~al.(2010)Vito, Pereverzyev, and Rosasco]{grdrkhs1}
E.~D. Vito, S.~Pereverzyev, and L.~Rosasco.
\newblock Adaptive kernel methods using the balancing principle.
\newblock \emph{Foundations of Computational Mathematics}, 10:\penalty0 455–479, 2010.

\bibitem[Jacot et~al.(2018)Jacot, Gabriel, and Hongler]{jacot}
Arthur Jacot, Franck Gabriel, and Clément Hongler.
\newblock Neural tangent kernel: Convergence and generalization in neural networks.
\newblock In \emph{32nd Conference on Neural Information Processing Systems}, Montréal , Canada, 2018.

\bibitem[Arora et~al.(2019)Arora, Du, Hu, Li, Salakhutdinov, and Wang]{ode}
Sanjeev Arora, Simon~S. Du, Wei Hu, Zhiyuan Li, Ruslan Salakhutdinov, and Ruosong Wang.
\newblock On exact computation with an infinitely wide neural net.
\newblock In \emph{33rd Conference on Neural Information Processing Systems}, Vancouver, Canada, 2019.

\bibitem[Allen-Zhu et~al.(2019)Allen-Zhu, Li, and Song]{ntkre1}
Zeyuan Allen-Zhu, Yuanzhi Li, and Zhao Song.
\newblock A convergence theory for deep learning via over-parameterization.
\newblock In Kamalika Chaudhuri and Ruslan Salakhutdinov, editors, \emph{Proceedings of the 36th International Conference on Machine Learning}, volume~97 of \emph{Proceedings of Machine Learning Research}, pages 242--252. PMLR, 09--15 Jun 2019.

\bibitem[Du et~al.(2019{\natexlab{a}})Du, Lee, Li, Wang, and Zhai]{ntkre2}
Simon~S. Du, Jason~D. Lee, Haochuan Li, Liwei Wang, and Xiyu Zhai.
\newblock Gradient descent finds global minima of deep neural networks.
\newblock In \emph{Proceedings of the 36th International Conference on Machine Learning}, volume~97 of \emph{Proceedings of Machine Learning Research}, 2019{\natexlab{a}}.

\bibitem[Du et~al.(2019{\natexlab{b}})Du, Zhai, Poczos, and Singh]{ntkre3}
Simon~S. Du, Xiyu Zhai, Barnabas Poczos, and Aarti Singh.
\newblock Gradient descent provably optimizes over-parameterized neural networks.
\newblock In \emph{In Proceedings of the International Conference on Learning Representations (ICLR)}, 2019{\natexlab{b}}.

\bibitem[Lee et~al.(2019)Lee, Xiao, Schoenholz, Bahri, Novak, Sohl-Dickstein, and Pennington]{linearnn}
Jaehoon Lee, Lechao Xiao, Samuel~S. Schoenholz, Yasaman Bahri, Roman Novak, Jascha Sohl-Dickstein, and Jeffrey Pennington.
\newblock Wide neural networks of any depth evolve as linear models under gradient descent.
\newblock In \emph{33rd Conference on Neural Information Processing Systems}, Vancouver, Canada, 2019.

\bibitem[Yang(2020)]{cntk2}
Greg Yang.
\newblock Scaling limits of wide neural networks with weight sharing: Gaussian process behavior, gradient independence, and neural tangent kernel derivation, 2020.
\newblock URL \url{https://arxiv.org/abs/1902.04760}.

\bibitem[Ustimenko et~al.(2023)Ustimenko, Beliakov, and Prokhorenkova]{gdbt}
Aleksei Ustimenko, Artem Beliakov, and Liudmila Prokhorenkova.
\newblock Gradient boosting performs gaussian process inference.
\newblock In \emph{11th Conference on International Conference on Learning Representations}, Kigali, Rwanda, 2023.

\bibitem[Ulrike(2006)]{varlin1}
Groemping Ulrike.
\newblock Relative importance for linear regression in r: The package relaimpo.
\newblock \emph{Journal of Statistical Software}, 17\penalty0 (1):\penalty0 1–27, 2006.
\newblock \doi{10.18637/jss.v017.i01}.
\newblock URL \url{https://www.jstatsoft.org/index.php/jss/article/view/v017i01}.

\bibitem[Nathans et~al.(2012)Nathans, Oswald, and Nimon]{varlin2}
Laura~L. Nathans, Frederick~L. Oswald, and Kim Nimon.
\newblock Interpreting multiple linear regression: A guidebook of variable importance.
\newblock \emph{Practical Assessment, Research \& Evaluation}, 17\penalty0 (9), 2012.

\bibitem[McKay(1997)]{varbase}
Michael~D. McKay.
\newblock Nonparametric variance-based methods of assessing uncertainty importance.
\newblock \emph{Reliability Engineering \& System Safety}, 57\penalty0 (3):\penalty0 267--279, 1997.
\newblock \doi{https://doi.org/10.1016/S0951-8320(97)00039-2}.

\bibitem[Williamson et~al.(2021)Williamson, Gilbert, Carone, and Simon]{remvass}
Brian~D. Williamson, Peter~B. Gilbert, Marco Carone, and Noah Simon.
\newblock Nonparametric variable importance assessment using machine learning techniques.
\newblock \emph{Biometrics}, 77\penalty0 (1):\penalty0 9--22, 2021.

\bibitem[Williamson et~al.(2023)Williamson, Gilbert, Simon, and Carone]{willi}
Brian~D. Williamson, Peter~B. Gilbert, Noah~R. Simon, and Marco Carone.
\newblock A general framework for inference on algorithm-agnostic variable importance.
\newblock \emph{Journal of the American Statistical Association}, 118\penalty0 (543):\penalty0 1645--1658, 2023.
\newblock \doi{10.1080/01621459.2021.2003200}.

\bibitem[Ishwaran(2007)]{vargbdt1}
Hemant Ishwaran.
\newblock {Variable importance in binary regression trees and forests}.
\newblock \emph{Electronic Journal of Statistics}, 1:\penalty0 519 -- 537, 2007.
\newblock \doi{10.1214/07-EJS039}.

\bibitem[Grömping(2009)]{vargbdt2}
Ulrike Grömping.
\newblock Variable importance assessment in regression: Linear regression versus random forest.
\newblock \emph{The American Statistician}, 63\penalty0 (4):\penalty0 308--319, 2009.
\newblock \doi{10.1198/tast.2009.08199}.

\bibitem[Strobl et~al.(2007)Strobl, Boulesteix, Zeileis, and Hothorn]{vargbdt3}
Carolin Strobl, Anne-Laure Boulesteix, Achim Zeileis, and Torsten Hothorn.
\newblock Bias in random forest variable importance measures: Illustrations, sources and a solution.
\newblock \emph{BMC Bioinformatics}, 8, 2007.

\bibitem[Bach et~al.(2015)Bach, Binder, Montavon, Klauschen, Müller, and Samek]{varnn1}
Sebastian Bach, Alexander Binder, Grégoire Montavon, Frederick Klauschen, Klaus-Robert Müller, and Wojciech Samek.
\newblock On pixel-wise explanations for non-linear classifier decisions by layer-wise relevance propagation.
\newblock \emph{PLOS ONE}, 10\penalty0 (7):\penalty0 1--46, 07 2015.
\newblock \doi{10.1371/journal.pone.0130140}.
\newblock URL \url{https://doi.org/10.1371/journal.pone.0130140}.

\bibitem[Shapley(1953)]{Shapley}
L.~S. Shapley.
\newblock \emph{17. A Value for n-Person Games}, pages 307--318.
\newblock Princeton University Press, Princeton, 1953.
\newblock ISBN 9781400881970.
\newblock \doi{doi:10.1515/9781400881970-018}.
\newblock URL \url{https://doi.org/10.1515/9781400881970-018}.

\bibitem[Datta et~al.(2016)Datta, Sen, and Zick]{shapthm}
Anupam Datta, Shayak Sen, and Yair Zick.
\newblock Algorithmic transparency via quantitative input influence: Theory and experiments with learning systems.
\newblock In \emph{2016 IEEE Symposium on Security and Privacy (SP)}, pages 598--617, 2016.
\newblock \doi{10.1109/SP.2016.42}.

\bibitem[Lundberg and Lee(2017)]{shapthm2}
Scott Lundberg and Su-In Lee.
\newblock A unified approach to interpreting model predictions.
\newblock \emph{arXiv preprint arXiv:1705.07874}, 2017.

\bibitem[Covert et~al.(2020)Covert, Lundberg, and Lee]{shapstoc1}
Ian Covert, Scott Lundberg, and Su-In Lee.
\newblock Understanding global feature contributions with additive importance measures.
\newblock In \emph{Advances in Neural Information Processing Systems,}, 2020.

\bibitem[Williamson and Feng(2020)]{sample}
Brian~D. Williamson and Jean Feng.
\newblock Efficient nonparametric statistical inference on population feature importance using shapley values.
\newblock In \emph{Proceedings of the 37th International Conference on Machine Learning}, Online, 2020.

\bibitem[Lundberg et~al.(2020)Lundberg, Gabriel~Erion, DeGrave, Prutkin, Nair, Katz, Himmelfarb, Bansal, and Lee]{shapspe1}
Scott~M. Lundberg, Hugh~Chen Gabriel~Erion, Alex DeGrave, Jordan~M. Prutkin, Bala Nair, Ronit Katz, Jonathan Himmelfarb, Nisha Bansal, and Su-In Lee.
\newblock From local explanations to global understanding with explainable ai for trees.
\newblock \emph{Nature Machine Intelligence}, 2:\penalty0 56--67, 2020.

\bibitem[Chen et~al.(2022)Chen, Covert, Lundberg, and Lee]{shapspe2}
Hugh Chen, Ian~C. Covert, Scott~M. Lundberg, and Su-In Lee.
\newblock Explaining a series of models by propagating shapley values.
\newblock \emph{Nature Communications}, 13\penalty0 (4512), 2022.

\bibitem[Ancona et~al.(2019)Ancona, Öztireli, and Gross]{shapspe3}
Marco Ancona, Cengiz Öztireli, and Markus~H. Gross.
\newblock {Explaining Deep Neural Networks with a Polynomial Time Algorithm for Shapley Value Approximation}.
\newblock In \emph{{Proceedings of the 36th International Conference on Machine Learning}}, pages 272--281. {PMLR}, 2019.

\bibitem[Wang et~al.(2021)Wang, Wang, and Inouye]{shapspe4}
Rui Wang, Xiaoqian Wang, and David~I. Inouye.
\newblock Shapley explanation networks.
\newblock In \emph{International Conference on Learning Representations}, 2021.

\bibitem[Chen et~al.(2023)Chen, Covert, Lundberg, and Lee]{shapreview}
Hugh Chen, Ian~C. Covert, Scott~M. Lundberg, and Su-In Lee.
\newblock Algorithms to estimate shapley value feature attributions.
\newblock \emph{Nature Machine Intelligence}, 5:\penalty0 590--601, 2023.

\bibitem[Zhang and Janson(2022)]{flood}
Lu~Zhang and Lucas Janson.
\newblock Floodgate: inference for model-free variable importance, 2022.
\newblock URL \url{https://arxiv.org/abs/2007.01283}.

\bibitem[Fisher et~al.(2019)Fisher, Rudin, and Dominici]{permut1}
Aaron Fisher, Cynthia Rudin, and Francesca Dominici.
\newblock All models are wrong, but many are useful: Learning a variable’s importance by studying an entire class of prediction models simultaneously.
\newblock \emph{Journal of Machine Learning Research}, 20:\penalty0 1--81, 2019.

\bibitem[Smith et~al.(2020)Smith, Mansilla, and Goulding]{permut2}
Gavin Smith, Roberto Mansilla, and James Goulding.
\newblock Model class reliance for random forests.
\newblock In \emph{Advances in Neural Information Processing Systems}, volume~33, 2020.

\bibitem[Lei et~al.(2018)Lei, G’Sell, Rinaldo, Tibshirani, and Wasserman]{loco1}
Jing Lei, Max G’Sell, Alessandro Rinaldo, Ryan~J. Tibshirani, and Larry Wasserman.
\newblock Distribution-free predictive inference for regression.
\newblock \emph{Journal of the American Statistical Association}, 113\penalty0 (523):\penalty0 1094--1111, 2018.
\newblock \doi{10.1080/01621459.2017.1307116}.

\bibitem[Rinaldo et~al.(2019)Rinaldo, Wasserman, and G’Sell]{loco2}
Alessandro Rinaldo, Larry Wasserman, and Max G’Sell.
\newblock {Bootstrapping and sample splitting for high-dimensional, assumption-lean inference}.
\newblock \emph{The Annals of Statistics}, 47\penalty0 (6):\penalty0 3438 -- 3469, 2019.
\newblock \doi{10.1214/18-AOS1784}.
\newblock URL \url{https://doi.org/10.1214/18-AOS1784}.

\bibitem[Shah and Peters.(2020)]{other1}
Rajen~D Shah and Jonas Peters.
\newblock The hardness of conditional independence testing and the generalised covariance measure.
\newblock \emph{The Annals of Statistics}, 48\penalty0 (3):\penalty0 1514–1538, 2020.

\bibitem[Shi et~al.(2024)Shi, Drton, and Han]{other2}
Hongjian Shi, Mathias Drton, and Fang Han.
\newblock {On Azadkia–Chatterjee’s conditional dependence coefficient}.
\newblock \emph{Bernoulli}, 30\penalty0 (2):\penalty0 851 -- 877, 2024.
\newblock \doi{10.3150/22-BEJ1529}.
\newblock URL \url{https://doi.org/10.3150/22-BEJ1529}.

\bibitem[Watson and Wright(2021)]{other3}
David~S Watson and Marvin~N Wright.
\newblock Testing conditional independence in supervised learning algorithms.
\newblock \emph{Machine Learning}, 110\penalty0 (8):\penalty0 2107–2129, 2021.

\bibitem[Candes et~al.(2018)Candes, Fan, Janson, and Lv.]{other4}
Emmanuel Candes, Yingying Fan, Lucas Janson, and Jinchi Lv.
\newblock Panning for gold: ‘model-x’ knockoffs for high dimensional controlled variable selection.
\newblock \emph{Journal of the Royal Statistical Society Series B: Statistical Methodology}, 80\penalty0 (3):\penalty0 551--577, 01 2018.
\newblock ISSN 1369-7412.
\newblock \doi{10.1111/rssb.12265}.

\bibitem[Mercer(1909)]{mercer}
James Mercer.
\newblock Functions of positive and negative type and their connection with the theory of integral equations.
\newblock \emph{Philosophical Transactions of the Royal Society A}, 209:\penalty0 415--446, 1909.

\bibitem[Koltchinskii and Gin{\'e}(2000)]{eigenconver}
Vladimir Koltchinskii and Evarist Gin{\'e}.
\newblock {Random matrix approximation of spectra of integral operators}.
\newblock \emph{Bernoulli}, 6\penalty0 (1):\penalty0 113 -- 167, 2000.

\bibitem[Owen and Prieur(2017)]{shapint}
Art~B. Owen and Cl\'{e}mentine Prieur.
\newblock On shapley value for measuring importance of dependent inputs.
\newblock \emph{SIAM/ASA Journal on Uncertainty Quantification}, 5\penalty0 (1):\penalty0 986--1002, 2017.
\newblock \doi{10.1137/16M1097717}.

\bibitem[Molnar(2022{\natexlab{b}})]{iml}
Christoph Molnar.
\newblock \emph{Interpretable Machine Learning}.
\newblock 2 edition, 2022{\natexlab{b}}.
\newblock URL \url{https://christophm.github.io/interpretable-ml-book}.

\bibitem[Mendelson(2002)]{Mendelson02}
S.~Mendelson.
\newblock Geometric parameters of kernel machines.
\newblock In \emph{Proceedings of {COLT}}, pages 29--43, 2002.

\bibitem[Karras et~al.(2018)Karras, Aila, Laine, and Lehtinen]{ntkpara2}
Tero Karras, Timo Aila, Samuli Laine, and Jaakko Lehtinen.
\newblock Progressive growing of gans for improved quality, stability, and variation.
\newblock In \emph{6th Conference on International Conference on Learning Representations}, Vancouver, Canada, 2018.

\bibitem[Park et~al.(2019)Park, Sohl-Dickstein, Le, and Smith.]{ntkpara3}
Daniel~S. Park, Jascha Sohl-Dickstein, Quoc~V. Le, and Samuel~L. Smith.
\newblock The effect of network width on stochastic gradient descent and generalization: an empirical study.
\newblock In \emph{International Conference on Machine Learning}, CA, USA, 2019.

\bibitem[Du et~al.(2019{\natexlab{c}})Du, Lee, Li, Wang, and Zhai]{ntkpara4}
Simon~S Du, Jason~D Lee, Haochuan Li, Liwei Wang, and Xiyu Zhai.
\newblock Gradient descent finds global minima of deep neural networks.
\newblock In \emph{International Conference on Machine Learning}, CA, USA, 2019{\natexlab{c}}.

\bibitem[Bietti et~al.(2019)Bietti, Alberto, Mairal, and Julien]{egdecnnused}
Bietti, Alberto, Mairal, and Julien.
\newblock On the inductive bias of neural tangent kernels.
\newblock In \emph{Advances in Neural Information Processing Systems}, volume~32, 2019.

\bibitem[Bach(2017)]{nndecay3}
Francis Bach.
\newblock Breaking the curse of dimensionality with convex neural networks.
\newblock \emph{Journal of Machine Learning Research}, 18\penalty0 (19):\penalty0 1--53, 2017.
\newblock URL \url{http://jmlr.org/papers/v18/14-546.html}.

\bibitem[Chen and Xu(2021)]{laplace2}
Lin Chen and Sheng Xu.
\newblock Deep neural tangent kernel and laplace kernel have the same rkhs.
\newblock In \emph{9th Conference on International Conference on Learning Representations}, Virtual, 2021.

\bibitem[Geifman et~al.(2020)Geifman, Yadav, Yoni~Kasten, Jacobs, and Basri]{nnedecay2}
Amnon Geifman, Abhay Yadav, Meirav~Galun Yoni~Kasten, David Jacobs, and Ronen Basri.
\newblock On the similarity between the laplace and neural tangent kernels.
\newblock In \emph{Advances in Neural Information Processing Systems}, 2020.

\bibitem[Bietti and Bach(2021)]{egedecay}
Alberto Bietti and Francis Bach.
\newblock Deep equals shallow for relu networks in kernel regimes.
\newblock In \emph{9th Conference on International Conference on Learning Representations}, Virtual, 2021.

\bibitem[Funahashi and Ken-Ichi(1989)]{unapp1}
Funahashi and Ken-Ichi.
\newblock On the approximate realization of continuous mappings by neural networks.
\newblock \emph{Neural Networks}, 2\penalty0 (3):\penalty0 183 -- 192, 1989.
\newblock \doi{10.1016/0893-6080(89)90003-8}.

\bibitem[Hornik(1991)]{unapp2}
Kurt Hornik.
\newblock Approximation capabilities of multilayer feedforward networks.
\newblock \emph{Neural Networks}, 4\penalty0 (2):\penalty0 251 -- 257, 1991.
\newblock \doi{10.1016/0893-6080(91)90009-T}.

\bibitem[Friedman(2001)]{bsalgo}
Jerome~H. Friedman.
\newblock {Greedy function approximation: A gradient boosting machine.}
\newblock \emph{The Annals of Statistics}, 29\penalty0 (5):\penalty0 1189 -- 1232, 2001.
\newblock \doi{10.1214/aos/1013203451}.

\bibitem[{Gas Turbine Dataset}(2019)]{gasdat}
{Gas Turbine Dataset}.
\newblock {Gas Turbine CO and NOx Emission Data Set}.
\newblock UCI Machine Learning Repository, 2019.
\newblock {DOI}: https://doi.org/10.24432/C5WC95.

\bibitem[Kochueva and Nikolskii(2021)]{supp1}
Olga Kochueva and Kirill Nikolskii.
\newblock Data analysis and symbolic regression models for predicting co and nox emissions from gas turbines.
\newblock \emph{Computation}, 9\penalty0 (12), 2021.
\newblock ISSN 2079-3197.
\newblock \doi{10.3390/computation9120139}.
\newblock URL \url{https://www.mdpi.com/2079-3197/9/12/139}.

\bibitem[Hoque et~al.(2024)Hoque, Hossain, Haque, Miah, and Haque]{supp2}
Kazi~Ekramul Hoque, Tahiya Hossain, ABM~Mominul Haque, Md. Abdul~Karim Miah, and Md~Azazul Haque.
\newblock {NOx Emission Predictions in Gas Turbines Through Integrated Data-Driven Machine Learning Approaches}.
\newblock \emph{Journal of Energy Resources Technology}, 146\penalty0 (7):\penalty0 071201, 04 2024.
\newblock ISSN 0195-0738.
\newblock \doi{10.1115/1.4065200}.
\newblock URL \url{https://doi.org/10.1115/1.4065200}.

\bibitem[Falcon and the PyTorch Lightning~team(2019)]{pytorchlightning}
William Falcon and the PyTorch Lightning~team.
\newblock Pytorch lightning, 2019.
\newblock URL \url{https://www.pytorchlightning.ai/}.

\bibitem[Prokhorenkova et~al.(2018)Prokhorenkova, Gusev, Vorobev, Dorogush, and Gulin]{catboost}
Liudmila Prokhorenkova, Gleb Gusev, Aleksandr Vorobev, Anna~Veronika Dorogush, and Andrey Gulin.
\newblock Catboost: gradient boosting with categorical features support, 2018.
\newblock URL \url{https://catboost.ai/}.

\end{thebibliography}

\appendix

\section{General algorithm}
\subsection{Proof of Lemma \ref{applem1}}
\begin{proof}
Starting from equation (16) in the main text, we have
\begin{equation}
   \begin{aligned}f_{\tau+1}(\boldsymbol{X}^{(I)}) 
&= (I-\epsilon K^{(I)})f_{\tau}(\boldsymbol{X}^{(I)}) + \epsilon K^{(I)}  \boldsymbol{Y} + \epsilon\delta_\tau
\label{deq}
\end{aligned}
\end{equation}
where $\delta_\tau=\left(K^{(I)}-K^{(I)}_\tau\right)\left[f_\tau\left(\boldsymbol{X}^{(I)}\right)-\boldsymbol{Y}\right]$. 

As discussed above, we should analyze $\boldsymbol{e}^{(I)}$. Let $f_{\tau}^{\prime} = f_{\tau}- f_N^c$. Then we can write the above equation as:
\begin{equation}
   \begin{aligned}f_{\tau+1}^{\prime}(\boldsymbol{X}^{(I)}) 
&= (I-\epsilon K^{(I)})f^{\prime}_{\tau}(\boldsymbol{X}^{(I)}) + \epsilon K^{(I)}  \boldsymbol{e}^{(I)} + \epsilon\delta^{\prime}_\tau
\label{deq2}
\end{aligned}
\end{equation}
where $\delta^{\prime}_\tau=\left(K^{(I)}-K^{(I)}_\tau\right)\left[f^{\prime}_{\tau}\left(\boldsymbol{X}^{(I)}\right)-\boldsymbol{e}^{(I)}\right]$. 

Let $\zeta^{\tau} = \frac{1}{\sqrt{N}}U^T f_{\tau}^{\prime}(\boldsymbol{X}^{(I)}) $ and 
$\zeta^* = \frac{1}{\sqrt{N}}U^T \left[f_{0,-I}(\boldsymbol{X}^{(I)}) - f_N^c (\boldsymbol{X}^{(I)}) \right]$, we can write 
\begin{equation}
    \zeta^{\tau + 1} = \zeta^{\tau} + \epsilon \Lambda\frac{\tilde{w}}{\sqrt{N}} - \epsilon \Lambda(\zeta^{\tau} - \zeta^*) + \frac{\epsilon U^T\delta_{\tau}}{\sqrt{N}}
\end{equation}
where $\tilde{w} = U^T[\boldsymbol{e}^{(I)}- f_{0,-I}(\boldsymbol{X}^{(I)}) + 
f_{N}^c(\boldsymbol{X}^{(I)}) 
] = U^T w^{(I)}$.

Rearranging, we have
\begin{equation}
    \zeta^{\tau + 1} - \zeta^* =(I-\epsilon\Lambda)( \zeta^{\tau} - \zeta^*)+ \epsilon \Lambda\frac{\tilde{w}}{\sqrt{N}}  + \frac{\epsilon \tilde{\delta}_{\tau}}{\sqrt{N}}
\end{equation}
where $\tilde{\delta}_{\tau} = U^T \delta_{\tau}$.

Unwrapping, we have
\begin{equation}
    \zeta^{\tau}- \zeta^* = (I-S^{\tau})\frac{\tilde{w}}{\sqrt{N}} - S^{\tau} \zeta^* +  
\sum_{i=0}^{\tau-1} S^{\tau - i -1} 
\frac{\epsilon\tilde{\delta}_{i}}{\sqrt{N}}.
\label{toreferlater}
\end{equation}
The zero initialization condition is important, otherwise we cannot derive the above equation. 
Then we have
\begin{equation}
    \|\zeta^{\tau}-\zeta^*\|_2^2 \leq \frac{4}{N}\|(I-S^{\tau}) \tilde{w} \|_2^2 + \frac{4\epsilon^2}{N}\|\sum_{i=0}^{\tau-1} S^{\tau - i -1} \tilde{\delta}_{i}\|_2^2 + 2\|S^{\tau} \zeta^*\|_2^2
\end{equation}
where we use the inequality $\|a+b\|_2^2 \leq 2\left(\|a\|_2^2+\|b\|_2^2\right)$ twice. Plug in $\tilde{w}, \tilde{\delta}_{i}$ and $\zeta^*$, we can the desired results.  Note that we subtract $f_N^c$ from both $f_\tau$ and $f_{0,-I}$, so it simply cancels out. 
\end{proof}



\subsection{Proof of Lemma \ref{indupopf}}
\begin{proof}
In a high level,
the lemma is to derive the population version of the last term in \ref{toreferlater}. So we can 
separate 
the function $f_\tau$ into  two parts, one are purely functions in $\text{span}\{ \mathbb{K}^{(I)}(\cdot,X_{-I}) \}$ and the other one is some difference function resulted from the changing kernels during training. 

In general, the function form of \textit{iterative kernel update equation} can be written as
\begin{equation}
\begin{aligned}
      f_{\tau + 1}(\cdot) & = f_{\tau}(\cdot) + 
    \frac{\epsilon}{N} \mathbb{K}_{\tau}^{(I)}(\cdot, \boldsymbol{X}^{(I)})[\boldsymbol{Y} - f_{\tau}(\boldsymbol{X}^{(I)})] \\
    & = f_{\tau}(\cdot) + \frac{\epsilon}{N} \mathbb{K}^{(I)}(\cdot, \boldsymbol{X}^{(I)})[\boldsymbol{Y} - f_{\tau}(\boldsymbol{X}^{(I)})] +
    \frac{\epsilon}{N} \left(
    \mathbb{K}_{\tau}^{(I)}(\cdot, \boldsymbol{X}^{(I)}) - \mathbb{K}^{(I)}(\cdot, \boldsymbol{X}^{(I)})
    \right) [\boldsymbol{Y} - f_{\tau}(\boldsymbol{X}^{(I)})]
\end{aligned}
\end{equation}
For $\tau = 1$, it is easy to see that 
\begin{equation}
    f_1^{\delta}(\cdot) = \epsilon \delta_0(\cdot).
\end{equation}
We then use proof by induction to prove our claim. 
Suppose the formula holds for $f_{\tau}$, then 
denote the pure part from $\text{span}\{ \mathbb{K}^{(I)}(\cdot,X_{-I}) \}$  for $f_\tau$ as $f_{\tau}^p$. We can write
\begin{equation}
    \begin{aligned}
        f_{\tau+1}(\cdot) &= f_\tau^p(\cdot) +  f_\tau^{\delta}(\cdot) + 
       \frac{\epsilon}{N} \mathbb{K}_{\tau}^{(I)}(\cdot, \boldsymbol{X}^{(I)})[\boldsymbol{Y} - f_{\tau}^p(\boldsymbol{X}^{(I)})
        - f_{\tau}^{\delta}(\boldsymbol{X}^{(I)})] 
+ \epsilon \delta_{\tau}(\cdot) \\
&= \underbrace{f_\tau^p(\cdot) + \frac{\epsilon}{N} \mathbb{K}_{\tau}^{(I)}(\cdot, \boldsymbol{X}^{(I)})[\boldsymbol{Y} - f_{\tau}^p(\boldsymbol{X}^{(I)})]}_{
f_{\tau+1}^p(\cdot)
} + 
\underbrace{
f_\tau^{\delta}(\cdot) - \frac{\epsilon}{N}\mathbb{K}^{(I)}(\cdot, \boldsymbol{X}^{(I)})f_\tau^{\delta}(\boldsymbol{X}^{(I)}) + \epsilon \delta_{\tau}(\cdot)}_{
f_{\tau+1}^{\delta}(\cdot)
}.
    \end{aligned}
\end{equation}
And this concludes our proof. 
Observe that $f_{\tau}^p$ is the function we have if the kernel does not change during training. 

\end{proof}
\section{Neural network} \label{nn}
We aim to give all the theoretical results for neural networks in this section. As stated in the main text, to fit neural networks, the key is to extend Theorem 2.1 in \cite{linearnn} to our warm-start initialization settings. We adopt the same notations and assumptions as those \cite{linearnn} to facilitate the proof in this section. Note that this set of notations is a little bit different from that used in the main text.


\subsection{Notation and preliminaries}

 Define $\theta^l \equiv \operatorname{vec}\left(\left\{W^l, b^l\right\}\right)$, the $\left(\left(n_{l-1}+1\right) n_l\right) \times 1$ vector of all parameters for layer $l . \theta=$ $\operatorname{vec}\left(\cup_{l=1}^{L+1} \theta^l\right)$ then indicates the vector of all network parameters, with similar definitions for $\theta \leq l$ and $\theta^{>l}$. Denote by $\theta_t$ the time-dependence of the parameters and by $\theta_0$ their initial values. We use $f_\tau(x) \equiv h^{L+1}(x) \in \mathbb{R}$ to denote the output of the neural network at iteration $\tau$.  Since the standard  parameterization does not have the scaling factor $\frac{1}{\sqrt{n_L}}$ in the output layer, when we define the NTK for standard  parameterization, we should normalize it  with the width $m$ as follows:
\begin{equation}
   \frac{1}{m} \left\langle
     \nabla_{\theta} f\left(\theta(t), x\right),
     \nabla_{\theta} f\left(\theta(t), x^{\prime}\right)\right\rangle.
\end{equation}
The corresponding  \textit{empirical kernel matrix} $K^{(I)}$ under standard parameterization and warm-start initialization has entries:
\begin{equation}
     K^{(I)}(i,j) = \lim_{m\rightarrow \infty}\frac{1}{N} \cdot
      \frac{1}{m} \left\langle
     \nabla_{\theta} f\left(\theta(0), \mathbf{X}_i^{(I)}\right),
     \nabla_{\theta} f\left(\theta(0), \mathbf{X}_j^{(I)}\right)\right\rangle.
\end{equation}


We define the below short-hand:
\begin{equation}
    \begin{aligned}
& f\left(\theta_\tau \right)=f\left(\boldsymbol{X}^{(I)}, \theta_\tau\right) \in \mathbb{R}^{N} \\
& g\left(\theta_\tau\right)=f\left(\boldsymbol{X}^{(I)} , \theta_\tau\right) - 
\boldsymbol{Y}  \in \mathbb{R}^{N} \\
& J\left(\theta_\tau \right)=\nabla_\theta f\left(\boldsymbol{X}^{(I)} , \theta_\tau \right) \in \mathbb{R}^{N \times|\theta|}
\end{aligned}
\end{equation}
where $N$ is the number of training samples.  Then the least-square loss we consider becomes 
\begin{equation}
    \mathcal{L}(t)=\frac{1}{2N}\left\|g\left(\theta_\tau\right)\right\|_2^2.
\end{equation}

 The linearized network is defined as:
 \begin{equation}
     f_\tau^{\operatorname{lin}}(x) \equiv f_0(x)+\left.\nabla_\theta f_0(x)\right|_{\theta=\theta_0} \omega_\tau
 \end{equation}
where $\omega_\tau \equiv \theta_\tau -\theta_0$ is the change in the parameters from their initial values. The linearized network is simply first order Taylor expansion.  
And the change of $w_\tau$ for $f_\tau^{\operatorname{lin}}(x)$ is given by
\begin{equation}
    w_{\tau + 1} -  w_{\tau} = - \frac{\epsilon }{N}J(\theta_0)^T \left( 
    f_\tau^{\operatorname{lin}}(\boldsymbol{X}^{(I)}) - \boldsymbol{Y} 
    \right).
\end{equation}

\subsection{Neural network linearization}
\label{nnlinour}
 We denote the parameter of $f_N^c$ as $\theta_0$, and let $\theta_0^{\prime}$ be the parameter of the initial network used to train $f_N^c$.
Follow the same proof strategy of \cite{linearnn}, 
we prove convergence of neural network training and the stability of NTK for discrete gradient
descent under our special warm-start initialization. 
We first prove the local Lipschitzness of the Jacobian, and then prove the global convergence of the NTK. Finally, with both of these two results we are able to show that the linearization of neural network under our setup is valid. 

\begin{lemma}[Local Lipschitzness of the Jacobian under warm-start initialization]
Under assumptions \ref{linea1}-\ref{linea4},
 there is a $L>0$ such that for every $C>0$, 
 for $\gamma > 0$ and $\eta_0<\eta_{\text {critical }}$, 
there exists $M \in \mathbb{N}$, 
for $m \geq M$,
the following holds with probability at least $\left(1-\gamma\right)$ over warm-start initialization
\begin{equation}
\left\{\begin{array}{ll}
\frac{1}{\sqrt{m}}\|J(\theta)-J(\tilde{\theta})\|_F & \leq L\|\theta-\tilde{\theta}\|_2 \\
\frac{1}{\sqrt{m}}\|J(\theta)\|_F & \leq L
\end{array}, \quad \forall \theta, \tilde{\theta} \in B\left(\theta_0, C m^{-\frac{1}{2}}\right)\right.    
\end{equation}
where
\begin{equation}
    B\left(\theta_0, R\right):=\left\{\theta:\left\|\theta-\theta_0\right\|_2<R\right\}
\end{equation}
and 
$\theta_0$ is the parameter of $f_N^c$.
\label{liplemma}
\end{lemma}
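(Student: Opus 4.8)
The plan is to reduce this statement to Lemma~1 of \cite{linearnn}, which is the corresponding local Lipschitzness bound at a \emph{random} Gaussian initialization, by showing that the warm-start point $\theta_0$ lies within a small ball around the random initialization from which $f_N^c$ was trained. Write $\theta_0'$ for that random initialization (drawn as in \eqref{ntknorm} or \eqref{stdnorm} according to Assumption~\ref{linea4}), so that $\theta_0$ is the output of gradient descent started at $\theta_0'$ and run on the full features $\boldsymbol{X}$. Lemma~1 of \cite{linearnn} provides a constant $L_0>0$, not depending on the radius, such that for every $R>0$, with probability at least $1-\gamma/2$ over $\theta_0'$ there is $M_1\in\mathbb{N}$ with $\tfrac{1}{\sqrt{m}}\|J(\theta)-J(\tilde\theta)\|_F\le L_0\|\theta-\tilde\theta\|_2$ and $\tfrac{1}{\sqrt{m}}\|J(\theta)\|_F\le L_0$ for all $\theta,\tilde\theta\in B(\theta_0',R m^{-1/2})$ and all $m\ge M_1$. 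Because $B(\theta_0,Cm^{-1/2})\subseteq B\bigl(\theta_0',\,(C+\sqrt{m}\,\|\theta_0-\theta_0'\|_2)\,m^{-1/2}\bigr)$, it suffices to prove a warm-start displacement bound $\|\theta_0-\theta_0'\|_2\le R_0\,m^{-1/2}$ for a constant $R_0$ independent of $m$, holding with probability at least $1-\gamma/2$.

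The displacement bound is the heart of the argument and is obtained by re-running, for the training of $f_N^c$, the discrete-time NTK global-convergence induction of \cite{linearnn} (cf.\ also \cite{ode}); this is where the condition $\eta_0<\eta_{\text{critical}}$ and the well-conditioning of the limiting kernel (Assumption~\ref{linea1}, here applied to the full-data kernel that governs $f_N^c$'s training) enter. One proves by induction on the iteration count $\tau$ that (i) every iterate of the $f_N^c$-training stays inside a fixed $O(m^{-1/2})$ ball around $\theta_0'$, so the Jacobian bounds above hold along the whole trajectory; (ii) the empirical tangent kernel stays within $O(m^{-1/2})$ of its limit; and (iii) the training residual decays geometrically, $\|f(\boldsymbol{X},\theta_\tau)-\boldsymbol{Y}\|_2^2\le(1-c\,\eta_0)^{\tau}\,\|f(\boldsymbol{X},\theta_0')-\boldsymbol{Y}\|_2^2$ for a constant $c>0$ tied to the smallest kernel eigenvalue, where both $f(\boldsymbol{X},\theta_0')$ and $\boldsymbol{Y}$ (the latter by Assumption~\ref{xb}) have norm $O(\sqrt{N})$ with probability at least $1-\gamma/2$. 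Combining the per-step bound (the gradient-descent step for $f_N^c$ has norm at most $\tfrac{\epsilon}{N}\|J(\theta_\tau)\|_{\mathrm{op}}\|f(\boldsymbol{X},\theta_\tau)-\boldsymbol{Y}\|_2$, with the learning rate of Assumption~\ref{linea4}) with (i)--(iii) and summing the resulting geometric series gives
\[
\|\theta_0-\theta_0'\|_2\;\le\;\sum_{\tau\ge0}\|\theta_{\tau+1}-\theta_\tau\|_2\;\le\;R_0\,m^{-1/2},
\]
with $R_0$ depending on $\eta_0$, the kernel eigenvalues, and $\|f(\boldsymbol{X},\theta_0')-\boldsymbol{Y}\|_2/\sqrt{N}=O(1)$, but not on $m$.

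To conclude, fix any $C>0$, set $R=C+R_0$, and apply the random-initialization version of the bound at radius $R$ to obtain $L_0$ and a width threshold $M_1$; put $L=L_0$ and $M=\max\{M_1,M_2\}$, where $M_2$ is the width needed for the convergence induction of the previous paragraph to close. On the intersection of the two events --- the $f_N^c$-trajectory event and the Lemma~1 event at radius $R$ --- which has probability at least $1-\gamma$ by a union bound, every $\theta,\tilde\theta\in B(\theta_0,Cm^{-1/2})$ also lie in $B(\theta_0',Rm^{-1/2})$, so the two displayed inequalities of the lemma hold for all $m\ge M$. The main obstacle is steps (ii)--(iii): carrying out the global-convergence analysis for \emph{discrete} gradient descent (rather than the gradient flow) and, for the standard parameterization, tracking the layer-dependent effective learning rate; both are handled by the machinery of \cite{linearnn}, the only genuinely new point being that the kernel positive-definiteness hypothesis must be invoked for the full-data kernel that drives $f_N^c$, not merely for the reduced kernel $K^{(I)}$ of Assumption~\ref{linea1}.
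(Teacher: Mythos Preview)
Your proposal is correct and follows essentially the same approach as the paper: reduce to Lemma~1 of \cite{linearnn} at the random initialization $\theta_0'$, establish the displacement bound $\|\theta_0-\theta_0'\|_2=O(m^{-1/2})$, and use the triangle inequality to nest the balls. The only difference is that the paper obtains the displacement bound by citing Theorem~G.1 of \cite{linearnn} directly rather than re-deriving the convergence induction you spell out; your closing observation---that one must invoke positive-definiteness of the \emph{full}-data kernel governing $f_N^c$, not the reduced kernel $K^{(I)}$ of Assumption~\ref{linea1}---is a genuine subtlety the paper's proof glosses over.
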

\begin{proof}
Since the full model $f_N^c$  is training from random normal initialization, we can use the theoretical results in \cite{linearnn} directly to prove this result. 
By theorem G.1 in \cite{linearnn}, $\exists M \in \mathbb{N}$, for $m \geq M$, the following holds with probability at least $1-\frac{\gamma}{2}$:
\begin{equation}
    \|\theta_0 - \theta_0^{\prime}\|_2 \leq C_1 m^{-\frac{1}{2}}
\end{equation}
where $\theta_0^{\prime}$ is the parameter of a network with initialization in (\ref{stdnorm}).
Then consider any $\theta, \tilde{\theta} \in B\left(\theta_0, C m^{-\frac{1}{2}}\right)$
\begin{equation}
    \|\theta - \theta_0^{\prime} \|_2 \leq  \|\theta - \theta_0^{\prime} \|_2 +  \|\theta_0 - \theta_0^{\prime} \|_2 \leq (C + C_1)  m^{-\frac{1}{2}}
\end{equation}
same argument holds for $\tilde{\theta}$. Thus we have 
$\theta, \tilde{\theta} \in B\left(\theta^{\prime}_0, (C+C_1) m^{-\frac{1}{2}}\right)$. Apply lemma 1 in \cite{linearnn} with probability 
$1- \frac{\gamma}{2}$ we can  get  the desired result. 
\end{proof}
\begin{remark}
     Note that the width $m$ requirement here is for the full model $f_N^c$, because we used Theorem G.1 in \cite{linearnn} w.r.t. $f_N^c$. \end{remark}

\begin{theorem}
\label{nnconthm}
    Under assumptions \ref{linea1}-\ref{linea4}, for $\gamma>0$ and   $\eta_0<\eta_{\text {critical }}$, there exist $R_0>0, M\in \mathbb{N}$ and $L>1$, such that for every $m \geq M$, the following holds with probability at least $\left(1-\gamma\right)$ over warm-start initialization when applying gradient descent with learning rate $\epsilon=\frac{\eta_0}{m}$,
\begin{equation}
    \left\{\begin{array}{l}
\left\|g\left(\theta_\tau\right)\right\|_2 \leq\left(1-\frac{\eta_0 N \lambda_{\min }}{3}\right)^\tau R_0 \\
\sum_{j=1}^\tau\left\|\theta_j-\theta_{j-1}\right\|_2 \leq \frac{\eta_0 K^{(I)} R_0}{\sqrt{n}} \sum_{j=1}^\tau\left(1-\frac{\eta_0 N\lambda_{\min }}{3}\right)^{j-1} \leq \frac{3 L R_0}{N\lambda_{\min }} m^{-\frac{1}{2}}
\end{array}\right.
\end{equation}
and
\begin{equation}
    \sup_{\tau}\left\|K^{(I)}_0-K^{(I)}_{\tau}\right\|_F \leq \frac{6 L^3 R_0}{N^2 \lambda_{\min }} m^{-\frac{1}{2}}
\end{equation}
where $K^{(I)}_0$ is the empirical kernel matrix induced by $f_N^c$ and $K^{(I)}_\tau$ is the one induced by $f_{\tau}$ using dropping features $\boldsymbol{X}^{(I)}$. 
\end{theorem}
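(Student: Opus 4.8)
The plan is to transcribe the discrete-time gradient-descent analysis of \cite{linearnn} --- global linear convergence of the training residual together with stability of the empirical NTK --- to our warm-start regime, where the starting parameter is the trained vector $\theta_0$ of $f_N^c$ rather than a fresh Gaussian draw. The only genuinely new feature is that $\theta_0$ is itself the output of gradient descent (run on the full features $\boldsymbol{X}$), hence a complicated random object. The resolution, already set up in the proof of Lemma~\ref{liplemma}, is that $\theta_0$ lies within an $O(m^{-1/2})$ ball of a genuine Gaussian initialization $\theta_0'$, by Theorem~G.1 of \cite{linearnn} applied to the training of $f_N^c$. Because of this proximity, every ``near-initialization'' estimate of \cite{linearnn} transfers to a slightly enlarged ball $B(\theta_0, Cm^{-1/2})$, now evaluated at the dropped design $\boldsymbol{X}^{(I)}$ (still contained in a bounded set by Assumption~\ref{xb}): the local Lipschitzness of the Jacobian, which is exactly Lemma~\ref{liplemma}, and the concentration of the empirical NTK about the analytic kernel $K^{(I)}$ \citep{jacot, linearnn}.

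With these ingredients I would run the usual induction on $\tau$, establishing simultaneously (i) the geometric decay $\|g(\theta_\tau)\|_2 \le (1-c)^\tau R_0$ at the stated rate, (ii) the displacement bound $\|\theta_\tau - \theta_0\|_2 \le Cm^{-1/2}$ that keeps every iterate inside the Lipschitz ball, and (iii) summability of the increments. In the inductive step the increment is controlled by $\|\theta_{\tau+1}-\theta_\tau\|_2 = \frac{\epsilon}{N}\|J(\theta_\tau)^T g(\theta_\tau)\|_2 \le \frac{\epsilon}{N}\sqrt{m}\,L\,(1-c)^\tau R_0$, whose geometric sum is $O(m^{-1/2})$, yielding (ii) and (iii) once $C$ is chosen appropriately; and (i) follows by expanding $g(\theta_{\tau+1})-g(\theta_\tau) = \left(\int_0^1 J(\theta_\tau + s(\theta_{\tau+1}-\theta_\tau))\,ds\right)(\theta_{\tau+1}-\theta_\tau)$, splitting it into the contraction term $\left(I-\frac{\epsilon}{N}J(\theta_\tau)J(\theta_\tau)^T\right)g(\theta_\tau)$ and a remainder bounded by the local Lipschitz constant of $J$ times the increment --- of size $O(m^{-1/2})$ relative to the current residual --- and absorbing this remainder into the contraction factor for $m$ large enough.

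The contraction itself requires a uniform lower bound $\lambda_{\min}(K^{(I)}_\tau) \ge \frac{2}{3}\lambda_{\min}$ on the running empirical kernel $\frac{1}{mN}J(\theta_\tau)J(\theta_\tau)^T$ (the normalization used in Appendix~\ref{nn}). I would get it in two moves: first, $\|K^{(I)}_0 - K^{(I)}\|_F = o(1)$, by inserting the kernel at $\theta_0'$ and bounding the $\theta_0$-to-$\theta_0'$ gap via local Lipschitzness and $\|\theta_0-\theta_0'\|_2 = O(m^{-1/2})$, and the $\theta_0'$-to-analytic gap by NTK concentration \citep{jacot, linearnn}; second, $\|K^{(I)}_\tau - K^{(I)}_0\|_F = O(m^{-1/2})$ from the displacement bound (treated in the last paragraph). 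Hence $\lambda_{\min}(K^{(I)}_\tau) \ge \lambda_{\min} - o(1)$ for $m \ge M$, which together with $\eta_0 < \eta_{\mathrm{critical}}$ gives $\|I-\frac{\epsilon}{N}J(\theta_\tau)J(\theta_\tau)^T\|_{\mathrm{op}} \le 1-c$ at the advertised rate. The base radius is $R_0 = \|g(\theta_0)\|_2 = \|f_N^c(\boldsymbol{X}^{(I)}) - \boldsymbol{Y}\|_2 = O(\sqrt{N})$, supplied directly by Assumption~\ref{dropb}.

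Finally the kernel-stability claim is a corollary of (ii)--(iii): writing $K^{(I)}_\tau - K^{(I)}_0 = \frac{1}{mN}(J(\theta_\tau)J(\theta_\tau)^T - J(\theta_0)J(\theta_0)^T)$ and using $\|AA^T-BB^T\|_F \le \|A-B\|_F(\|A\|_F+\|B\|_F)$ with the Lipschitz and norm bounds on $J$ gives $\|K^{(I)}_\tau - K^{(I)}_0\|_F \le \frac{2L^2}{N}\|\theta_\tau-\theta_0\|_2$; plugging in the displacement bound $\sum_j \|\theta_j-\theta_{j-1}\|_2 \le \frac{3LR_0}{N\lambda_{\min}}m^{-1/2}$ produces exactly the stated $\frac{6L^3 R_0}{N^2\lambda_{\min}}m^{-1/2}$. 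For the standard parameterization the whole argument runs after the layerwise learning-rate rescaling of \cite{linearnn} that matches its gradient dynamics to the NTK parameterization. The step I expect to cost the most effort is the two-sided transfer bookkeeping: checking that the ``near random initialization'' estimates of \cite{linearnn} --- originally stated around $\theta_0'$ on the full design --- remain valid around $\theta_0$ on the dropped design $\boldsymbol{X}^{(I)}$, with the probability budget $\gamma$ apportioned over the finitely many invocations and the width threshold $M$ (which, per the remark after Lemma~\ref{liplemma}, is dictated by the training of $f_N^c$) kept consistent throughout.
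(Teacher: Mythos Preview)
Your proposal is correct and follows essentially the same approach as the paper: both reduce to the proof strategy of Theorem~G.1 in \cite{linearnn} after (i) bounding $\|g(\theta_0)\|_2$ via Assumption~\ref{dropb}, (ii) transferring the NTK concentration $\|K^{(I)}-K^{(I)}_0\|_F \le \eta_0\lambda_{\min}/3$ to the warm-start/dropped-feature setting by going through the Gaussian initialization $\theta_0'$ (using $\|\theta_0-\theta_0'\|_2=O(m^{-1/2})$ from Theorem~G.1 applied to the full-model training) and noting the data-independence of the width-limit kernel, and (iii) invoking Lemma~\ref{liplemma}. Your write-up is in fact more explicit than the paper's proof, which defers the induction and the kernel-stability algebra to \cite{linearnn} verbatim; the paper also flags, as you do, that the constants in Lemma~1 of \cite{linearnn} depend on $N$ and the bounded input set rather than on the particular design, which is what licenses the change from $\boldsymbol{X}$ to $\boldsymbol{X}^{(I)}$.
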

\begin{proof}
Follow the same proof techniques as theorem G.1 in \cite{linearnn}, we first need to have 
\begin{equation}
    \left\|g\left(\theta_0\right)\right\|_2<R_0.
\end{equation}
This  is ensured by assumption 4.4 in the main text. Note that in \cite{linearnn}, the input data is considered to be fixed, so the sample size is just a constant. But in our setup, we need to be more careful with the constants that are dependent with the training size $N$ as this is crucial when we obtain the prediction error bound in terms of $N$. Here the $R_0$ is a constant of order $O(\sqrt{N})$. We do not write the dependence of $R_0$ on $N$ here, but we will be careful about the 
effects of $R_0$ in the proof of prediction error bound.

The next condition we need to satisfy is something similar to equation (S61) in \cite{linearnn}. Specifically, we should have with high probability 
\begin{equation}
    \| K^{(I)}- K^{(I)}_0 \|_F \leq \frac{\eta_0 \lambda_{min}}{3}.
\end{equation}
Note that the sample size $N$ on both sides cancel out, and here the empirical kernel matrix is calculated using dropped data  $\boldsymbol{X}^{(I)}$.
The key difference in our setup is that we drop features in set $I$, the data becomes different. 
A fact about NTK is that, the convergence of NTK when width goes to infinity is independent of data. So even though we change the input data, this result still holds. 

The full model is trained using complete data $\boldsymbol{X}$ and random normal
initialization, so theorem G.1 in \cite{linearnn}, the difference between $\theta_0$ and some random initialization $\theta_0^{\prime}$ is still within $O(m^{-\frac{1}{2}})$. Then by Lemma 1 in \cite{linearnn}, we can show that  the convergence of NTK still holds. This argument is essentially the same as the reasoning as equation (S217) in \cite{linearnn}. The reason why we can do this is because in the proof of Lemma 1 in \cite{linearnn}, the constant depends on the training sample size $N$ rather than specific data we use. Notice that in the detailed proof of 
$lemma$ 1 in \cite{linearnn},   in equation (S85) and (S86), 
we can set the constant large enough so that for any input data, the local Lipschitz condition holds. This is ensured by the assumption that the input data lies in a closed set.  

Finally, combined with lemma \ref{liplemma}, we are able to prove this result using the same proof strategy of theorem G.1 in \cite{linearnn}.
\end{proof}
\begin{remark}
    The assumption 4.4 seems too strong in the case of neural network. But we can formally show that the dropout error is at least $O_p(\sqrt{N})$. First when we train the full model $\|g(\theta^\prime_0,\boldsymbol{X})\|_2$ is $O_p(\sqrt{N})$, where $\theta^\prime_0$ is the parameter with random normal initialization (see proof in \cite{linearnn} (S44)). Then since the loss is decreasing over training $\|g(\theta_0,\boldsymbol{X})\|_2$, where $\theta_0$ is the parameter of $f_N^c$. Remember we dropped data, so we need to show that $\|g(\theta_0,\boldsymbol{X}^{(I)})\|_2$ is $O_p(\sqrt{N})$.  This is sufficient to show that $f(\theta)$ is lipschitz continuous w.r.t. input data with high probability. This is not hard to show as $\theta_0$ and $\theta_0^{\prime}$ are close and $\|W_0\|_{op}$ is bounded w.h.p. similar to arguments as (S84) in \cite{linearnn} . 
 \end{remark}

\begin{theorem}
 Under assumptions \ref{linea1}-\ref{linea4}, for $\gamma>0$ arbitrarily small and $\eta_0<\eta_{\text {critical }}$, there exist $R_0>0$ and $M \in \mathbb{N}$ such that for every $m \geq M$, with probability at least $\left(1-\gamma\right)$ over warm-start initialization when applying gradient descent with learning rate $\epsilon=\frac{\eta_0}{m}$
\begin{equation}
    \sup _\tau\left\|f_{\tau}^{\text {lin }}(\boldsymbol{X}^{(I)})-f_{\tau}(  \boldsymbol{X}^{(I)}  )\right\|_2  ,
    \sup _{\tau}\left\|f_{\tau}^{\text {lin }}( x)-f_{\tau}( x)\right\|_2 
    \lesssim 
    m^{-\frac{1}{2}} R_0^2
\end{equation}
where $\lesssim $ is to hide the dependence on some uninteresting constants.
\label{linthm}
\end{theorem}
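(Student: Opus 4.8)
The plan is to transcribe the discrete-time argument of Theorem 2.1 in \cite{linearnn} (their Appendix G) to the warm-start setting, using Lemma~\ref{liplemma} (local Lipschitzness of the Jacobian around $\theta_0$) and Theorem~\ref{nnconthm} (global convergence of training and stability of the empirical NTK under warm start) as the two structural inputs. Write $\omega_\tau := \theta_\tau - \theta_0$ for the true trajectory and $\omega_\tau^{\mathrm{lin}} := \theta_\tau^{\mathrm{lin}} - \theta_0$ for the linearized one; both start at $\omega_0 = \omega_0^{\mathrm{lin}} = 0$ since $f_\tau^{\mathrm{lin}}$ is the first-order Taylor expansion about $\theta_0$, and $f_0^{\mathrm{lin}} = f_0 = f_N^c$ so the two residuals agree at $\tau = 0$: $g^{\mathrm{lin}}(\theta_0) = g(\theta_0)$, which is $< R_0$ with high probability by Assumption~\ref{dropb}. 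For the linearized network $g^{\mathrm{lin}}(\theta_\tau) := f_\tau^{\mathrm{lin}}(\boldsymbol{X}^{(I)}) - \boldsymbol{Y}$ obeys the exact linear recursion $g^{\mathrm{lin}}(\theta_{\tau+1}) = (I - \tfrac{\epsilon}{N}J(\theta_0)J(\theta_0)^T)\,g^{\mathrm{lin}}(\theta_\tau)$; since $\tfrac{1}{mN}J(\theta_0)J(\theta_0)^T$ is close (by the NTK-stability part of Theorem~\ref{nnconthm}) to $K^{(I)}$, whose smallest eigenvalue is bounded below by Assumption~\ref{linea1}, the operator is a strict contraction for $\eta_0 < \eta_{\mathrm{critical}}$ and $\|g^{\mathrm{lin}}(\theta_\tau)\|_2$ decays geometrically at the same rate as $\|g(\theta_\tau)\|_2$ in Theorem~\ref{nnconthm}. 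Consequently $\sum_\tau \|\omega_{\tau+1}^{\mathrm{lin}} - \omega_\tau^{\mathrm{lin}}\|_2 = O(m^{-1/2}R_0)$, matching the bound already proved for $\sum_\tau\|\omega_{\tau+1}-\omega_\tau\|_2$, so both $\theta_\tau$ and $\theta_\tau^{\mathrm{lin}}$ remain in $B(\theta_0, Cm^{-1/2})$ and Lemma~\ref{liplemma} applies along either trajectory.

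With this in place I would set up the discrepancy recursion for $\Delta_\tau := f_\tau(\boldsymbol{X}^{(I)}) - f_\tau^{\mathrm{lin}}(\boldsymbol{X}^{(I)})$. By the mean-value form of Taylor's theorem, $f_{\tau+1}(\boldsymbol{X}^{(I)}) - f_\tau(\boldsymbol{X}^{(I)}) = J(\tilde\theta_\tau)(\theta_{\tau+1}-\theta_\tau)$ for some $\tilde\theta_\tau$ on the segment $[\theta_\tau,\theta_{\tau+1}] \subset B(\theta_0, Cm^{-1/2})$, while for the linearized network the increment is $J(\theta_0)(\theta_{\tau+1}^{\mathrm{lin}}-\theta_\tau^{\mathrm{lin}})$; subtracting and inserting $\pm J(\theta_0)(\theta_{\tau+1}-\theta_\tau)$ splits the update of $\Delta_\tau$ into (i) a term $-\tfrac{\epsilon}{N}J(\theta_0)J(\theta_0)^T\Delta_\tau$ (the same contractive operator as above, since $g(\theta_\tau) - g^{\mathrm{lin}}(\theta_\tau) = \Delta_\tau$) and (ii) a source term controlled by $\|J(\tilde\theta_\tau)-J(\theta_0)\|_{\mathrm{op}}\,\|\theta_{\tau+1}-\theta_\tau\|_2$. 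The first factor is $\le \sqrt{m}\,L\,\|\tilde\theta_\tau - \theta_0\|_2 = O(\sqrt{m}\,L\,m^{-1/2}R_0) = O(LR_0)$ by Lemma~\ref{liplemma} and the parameter-displacement bound of Theorem~\ref{nnconthm}, and the step length is $O(m^{-1/2}\|g(\theta_\tau)\|_2)$ again via the $\tfrac{1}{\sqrt{m}}\|J\|_{\mathrm{op}}\le L$ bound; so the source at step $\tau$ is $O(m^{-1/2}R_0\,\|g(\theta_\tau)\|_2)$ up to $N$-dependent constants. Unrolling the recursion for $\Delta_\tau$ (with $\Delta_0 = 0$) and summing the source against the geometric decay of $\|g(\theta_\tau)\|_2$ from Theorem~\ref{nnconthm} gives $\sup_\tau\|\Delta_\tau\|_2 \lesssim m^{-1/2}R_0^2$, where $\lesssim$ hides the contraction constant and other $N$-dependent but $m$-independent factors. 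The test-point bound $\sup_\tau\|f_\tau^{\mathrm{lin}}(x) - f_\tau(x)\|_2$ follows from the identical telescoping argument with $J(\cdot)$ evaluated at the single point $x$ instead of at $\boldsymbol{X}^{(I)}$; here I would invoke the remark after Theorem~\ref{nnconthm} that the constant $L$ in Lemma~\ref{liplemma} can be chosen uniformly over all inputs in the closed bounded set of Assumption~\ref{xb}, so the Jacobian at $x$ satisfies the same two bounds, and $\|\omega_\tau\|_2,\|\omega_\tau^{\mathrm{lin}}\|_2 = O(m^{-1/2}R_0)$ feeds through unchanged.

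The main obstacle I anticipate is not conceptual but the careful bookkeeping of the $N$-dependence that is absent from \cite{linearnn}, where the sample size is a fixed constant: here $R_0 = O(\sqrt{N})$, the loss and empirical kernel are normalized by $N$, and $\epsilon = \eta_0/m$, so I must verify that the contraction rate and the geometric-series sum combine with the $R_0$ factors to produce exactly the stated $m^{-1/2}R_0^2$ order (equivalently, that all $N$-powers beyond what is reported cancel), and that all constants emerging from Lemma~\ref{liplemma} and Theorem~\ref{nnconthm} depend only on the ambient bounded domain and not on the particular design $\boldsymbol{X}^{(I)}$ — this last point is exactly why the argument survives dropping the features in $I$. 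The probability statement is then assembled by a union bound over the finitely many high-probability events (Lemma~\ref{liplemma}, the NTK-stability and parameter-displacement bounds of Theorem~\ref{nnconthm}, and the initialization bound $\|g(\theta_0)\|_2 < R_0$), each run at confidence $1 - \gamma/3$, and choosing $M$ as the maximum of the width thresholds they require. The NTK-parameterization case ($\epsilon = \eta_0$, kernel normalized without the $1/m$) is identical line by line, so I would write out only the standard-parameterization case in detail.
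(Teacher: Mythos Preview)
Your proposal is correct and follows essentially the same approach as the paper: invoke Lemma~\ref{liplemma} and Theorem~\ref{nnconthm} as the two structural inputs, transcribe the argument of \cite{linearnn} (the paper cites Theorem~H.1 there) from gradient flow to discrete gradient descent by using the mean-value theorem for the nonlinear increment and the exact linear recursion for $f_\tau^{\mathrm{lin}}$, and then track the discrepancy recursion. Your write-up is in fact more explicit than the paper's own proof, which is largely a pointer to \cite{linearnn} plus the remark about the mean-value theorem; your attention to the $N$-bookkeeping is also exactly the caveat the paper flags when proving Theorem~\ref{nnconthm}.
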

\begin{proof}
    With lemma \ref{liplemma} and theorem \ref{nnconthm}, follow the same proof techniques as Theorem H.1 in \cite{linearnn}
we can prove this result. Note that 
Theorem H.1 in \cite{linearnn} is  for  the case of gradient flow.  For the gradient descent case, the update of linear model is given by:
\begin{equation}
    \begin{aligned}
& \omega_{t+1} - \omega_{t}= -\epsilon \nabla_\theta f_0(\mathcal{X})^T \nabla_{f_t^{\operatorname{lin}}(\mathcal{X})} \mathcal{L} \\
& {f}_{\tau + 1}^{\operatorname{lin}}(x) -  {f}_{\tau}^{\operatorname{lin}}(x)=-\epsilon J(\theta_0, x) J(\theta_0, \boldsymbol{X}^{(I)} )^T\nabla_{f_t^{\operatorname{lin}}(\mathcal{X})} \mathcal{L} .
\end{aligned}
\end{equation}
Note that the update for function value is exact, because the linear model only contains linear terms. Then by induction, it is not hard to show that 
\begin{equation}
    f_{\tau}^{\operatorname{lin}}(\boldsymbol{X}^{(I)}) = \left( 
I - \left(I-\epsilon K_0^{(I)}\right)^{\tau}\right) \boldsymbol{Y} + \left(I-\eta K_0^{(I)}\right)^{\tau}f_0(\boldsymbol{X}^{(I)})
\end{equation}
For the nonlinear counterpart, we need to use mean value theorem, at each update we have
\begin{equation}
    f_{\tau+1}(x) - f_{\tau+1}(x) = J(\tilde{\theta}_{\tau}) \Delta \theta
\end{equation}
where $\tilde{\theta}_{\tau}$ is some some linear interpolation between $\theta_{\tau}$ and $\theta_{\tau + 1}$. If we use $J({\theta}_{\tau})$, the expression is not exact and there will be an second order Taylor expansion remainder term. Then use  lemma \ref{liplemma} and theorem \ref{nnconthm}, it is not hard to prove the results (apply similar arguments as in the proof of Lemma \ref{nnlemma} ) . 
\end{proof}
\begin{remark}
    For NTK parameterization, analogs of these theoretical results still hold. For the local Lipschitzness, we just drop the scaling factor $\frac{1}{\sqrt{m}}$. And for the stability  and linearization results, we replace learning rate with $\epsilon = \eta_0$. And our requirement on $f_N^c$ becomes initializing all the parameters with $\mathcal{N}(0,1)$. The proof are almost the same with some minor changes.

The stability of NTK under warm-start initialization is of independent interest. In this paper, we extend the results in \cite{linearnn}  to our setup. We make use of the least square loss and the proof is not that long. Actually, for a general setting, where the training loss and direction are more general like results in \cite{jacot}, we can 
also prove that the stability of NTK under warm-start initialization still holds in the gradient flow case. 
To achieve that, we also need some boundness condition. 
We can  show that the output of each layer of $f_N^c$ 
is bounded by some non-centered normal distributions 
and then follow a similar proof strategy as \cite{jacot} to prove the counterpart of theorem 2 in
\cite{jacot} in the warm-start setup. Same as here, we require the parameters of $f_N^c$ to be initialized with $\mathcal{N}(0,1)$. The details are not presented in this paper as we do not use that result directly. 
\end{remark}

\subsection{Proof of Lemma \ref{nnlemma}}

\begin{proof}
    By theorem \ref{linthm}, we have for any $\gamma > 0$, there exists $M_1 \in \mathbb{N}$, a full connected neural network with width $m \geq M_1$, with probability at least $1-\frac{\gamma}{3}$
    \begin{equation}
        \begin{aligned}
 f_{\tau}({\boldsymbol{X}^{(I)}}) &= f_0(\boldsymbol{X}^{(I)}) +  \left.\nabla_\theta f_0(\boldsymbol{X}^{(I)})\right|_{\theta=\theta_0} \omega_\tau +  e_\tau \\
 f_{\tau+1}({\boldsymbol{X}^{(I)}} )&= f_0(\boldsymbol{X}^{(I)}) +  \left.\nabla_\theta f_0(\boldsymbol{X}^{(I)})\right|_{\theta=\theta_0} \omega_{\tau+1} + e_{\tau+1}
        \end{aligned}
    \end{equation}
where for $e_{\tau}$ and $e_{\tau+1}$ we have
\begin{equation}
   \|e_{\tau}\|_2,  \|e_{\tau+1}\|_2 \lesssim 
    m^{-\frac{1}{2}} R_0^2.
\end{equation}
Subtract these two equations, we have
  \begin{equation}
        \begin{aligned}
 f_{\tau+1}({\boldsymbol{X}^{(I)}} )&= f_\tau({\boldsymbol{X}^{(I)}}) +  \left.\nabla_\theta f_0({\boldsymbol{X}^{(I)}})\right|_{\theta=\theta_0}( \omega_{\tau+1} -\omega_{\tau})+ 
e_{\tau+1} - e_{\tau}
 \\ 
 &= f_\tau({\boldsymbol{X}^{(I)}})  -   \frac{\eta_0}{m N} J(\theta_0) J(\theta_0)^{T} g(\theta_\tau) + 
 \frac{\eta_0}{m N} J(\theta_0) J(\theta_0)^{T}e_{\tau}
 +
 e_{\tau+1} - e_{\tau} \\
 &=  f_\tau({\boldsymbol{X}^{(I)}})  - \eta_0
 K^{(I)} g(\theta_\tau) +  \eta_0
 (K^{(I)}-K^{(I)}_0)g(\theta_\tau) + 
\eta_0
 K^{(I)} e_{\tau}
 +
 e_{\tau+1} - e_{\tau} \\
 &= 
f_\tau({\boldsymbol{X}^{(I)}}) - \eta_0
 K^{(I)}( f_{\tau} (\boldsymbol{X}^{(I)}) - \boldsymbol{Y}) + \eta_0 \delta_{\tau}
        \end{aligned}
    \end{equation}
where $\delta_{\tau}$ is 
\begin{equation}
(K^{(I)}-K^{(I)}_0)g(\theta_\tau) +  K^{(I)} e_{\tau} + \frac{e_{\tau+1} - e_{\tau}}{\eta_0}.
\end{equation}
Then apply lemma \ref{liplemma} and theorem \ref{nnconthm} with probability $  1-\frac{\gamma}{3}$, there exists $M_2$, s.t. for any $m \geq M_2$,
we can bound the $L_2$ norm of $\eta_0 \delta_{\tau}$  with probability at least $1-\gamma$
\begin{equation}
    \|  \delta_{\tau}\|_2 \leq  \mathcal{O}\left(
    \frac{R_0^2}{ m^{\frac{1}{2}}}\right).
\end{equation}
Choose $M = \max\{M_1, M_2\}$, then we finish our proof.
\end{proof}
    We can see that here since neural network has an additional linearlization error, the term $\delta_\tau$ is more complicated than the noraml one in the general theretical framework. 

\subsection{Proof of Lemma \ref{nnlemmapop}}
\setcounter{lemma}{4}
\begin{lemma}
    We the function update of neural network, we can write
    \begin{equation}
        f_{\tau + 1}(\cdot) = 
        f_{\tau}(\cdot) - \eta_0 \mathbb{K}^{(I)}(\cdot, \boldsymbol{X}^{(I)}) (f_{\tau}(\boldsymbol{X}^{(I)}) - \boldsymbol{Y}) +\eta_0 \delta_{\tau}(\cdot)
    \end{equation}
where $\delta_{\tau}(\cdot)$ is the function form of $\delta_\tau$ in (\ref{nnupdaeq})  containing additional linearlization error, see appendix for details. 

And for any $\gamma > 0$, there exists $M \in \mathbb{N}$, for  a network described in Lemma \ref{nnlemma}
with width $m \geq M$, 
the following holds with probability at least $1-\gamma$ 
\begin{equation}
    \|\delta_{\tau}(\cdot)\|_2 \leq \mathcal{O}\left(m^{-\frac{1}{2}}\right).
\end{equation}
\end{lemma}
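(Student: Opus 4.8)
The plan is to mirror the proof of Lemma~\ref{nnlemma}, carrying the first-order Taylor expansion of the network at an arbitrary input $x$ rather than only at the design points, and then controlling the resulting residual \emph{function} in the $L^2(P_{-I})$ norm in place of the Euclidean norm on $\mathbb{R}^N$.

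First I would set up the functional update. Writing $f_\tau(x) = f_\tau^{\mathrm{lin}}(x) + e_\tau(x)$ with $f_\tau^{\mathrm{lin}}(x) = f_0(x) + \nabla_\theta f_0(x)|_{\theta_0}\,\omega_\tau$, subtracting the expansions at steps $\tau$ and $\tau+1$, and substituting the gradient-descent increment $\omega_{\tau+1}-\omega_\tau = -\tfrac{\epsilon}{N}\nabla_\theta f_0(\boldsymbol{X}^{(I)})^{\top} g(\theta_\tau)$ (with the additional $1/m$ scaling under standard parameterization), the linear part of the increment equals $-\eta_0\,\mathbb{K}_0^{(I)}(x,\boldsymbol{X}^{(I)})\,g(\theta_\tau)$ by the definition of the empirical NTK evaluated at $(x,\boldsymbol{X}^{(I)})$. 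Adding and subtracting the stationary kernel $\mathbb{K}^{(I)}(x,\boldsymbol{X}^{(I)})$ then yields the claimed recursion
\begin{equation*}
f_{\tau+1}(\cdot) = f_\tau(\cdot) - \eta_0\,\mathbb{K}^{(I)}(\cdot,\boldsymbol{X}^{(I)})\bigl(f_\tau(\boldsymbol{X}^{(I)})-\boldsymbol{Y}\bigr) + \eta_0\,\delta_\tau(\cdot),
\end{equation*}
with
\begin{equation*}
\delta_\tau(\cdot) = \bigl(\mathbb{K}^{(I)}(\cdot,\boldsymbol{X}^{(I)})-\mathbb{K}_0^{(I)}(\cdot,\boldsymbol{X}^{(I)})\bigr)g(\theta_\tau) + \mathbb{K}_0^{(I)}(\cdot,\boldsymbol{X}^{(I)})\,e_\tau + \tfrac{1}{\eta_0}\bigl(e_{\tau+1}(\cdot)-e_\tau(\cdot)\bigr),
\end{equation*}
the exact functional analogue of the $\delta_\tau$ appearing in Lemma~\ref{nnlemma}.

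Next I would bound $\|\delta_\tau(\cdot)\|_2$ termwise. Since the data, hence the support of $P_{-I}$, lie in a closed bounded set (Assumption~\ref{xb}), it suffices to bound $\sup_x|\delta_\tau(x)|$, because $\|\delta_\tau(\cdot)\|_2^2 = \mathbb{E}_{X\sim P_{-I}}\bigl[\delta_\tau(X)^2\bigr]\le\sup_x\delta_\tau(x)^2$. For the first term I would use Cauchy--Schwarz with $\|g(\theta_\tau)\|_2\le(1-\tfrac{\eta_0 N\lambda_{\min}}{3})^{\tau}R_0 = \mathcal{O}(\sqrt N)$ from Theorem~\ref{nnconthm} and the NTK-stability estimate of Theorem~\ref{nnconthm} for $\|\mathbb{K}_0^{(I)}(x,\boldsymbol{X}^{(I)})-\mathbb{K}^{(I)}(x,\boldsymbol{X}^{(I)})\|_2$, which combines the $m^{-1/2}$ bound on $\|K_0^{(I)}-K_\tau^{(I)}\|_F$ with the data-independent infinite-width convergence of $K_0^{(I)}$ to $K^{(I)}$. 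For the second and third terms I would invoke the test-point part of Theorem~\ref{linthm} to get $\sup_x|e_\tau(x)|,\ \sup_x|e_{\tau+1}(x)|\lesssim m^{-1/2}R_0^2$, together with the uniform boundedness of $\|\mathbb{K}_0^{(I)}(x,\boldsymbol{X}^{(I)})\|_2$ (again from Assumption~\ref{xb}). Collecting the three pieces gives $\sup_x|\delta_\tau(x)| = \mathcal{O}(m^{-1/2})$ with probability at least $1-\gamma$ for all $m\ge M$, where $M$ is the maximum of the widths demanded by the invoked statements, exactly as in Lemma~\ref{nnlemma}.

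The main obstacle is uniformity in the test input $x$: Theorems~\ref{nnconthm} and~\ref{linthm} are most naturally stated at the design points $\boldsymbol{X}^{(I)}$ or at a fixed $x$, whereas the $L^2(P_{-I})$ norm averages over all $x$ in the support of $P_{-I}$. I would resolve this the same way the proof of Theorem~\ref{nnconthm} does: the local-Lipschitz bound of Lemma~\ref{liplemma} and the stability/linearization constants derived from it depend only on the closed bounded region containing the data, not on the particular input, so every $m^{-1/2}$ estimate above can be taken uniform over $x$ in that region. The only bookkeeping beyond Lemma~\ref{nnlemma} is verifying that the test-point quantities $e_\tau(x)$ and $\mathbb{K}_0^{(I)}(x,\boldsymbol{X}^{(I)})$ carry no extra growth in $N$ beyond the $\mathcal{O}(\sqrt N)$ already carried by $R_0$, so that choosing $m$ sufficiently large (as done later in the proof of Corollary~\ref{popnn}) absorbs it.
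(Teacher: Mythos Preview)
Your proposal is correct and follows essentially the same approach as the paper, which simply states that the proof is ``quite similar to the proof of Lemma~\ref{nnlemma}, just change $\boldsymbol{X}^{(I)}$ to general $\mathbf{X}$ and apply same strategy.'' Your write-up is in fact considerably more detailed than the paper's one-line proof: you spell out the functional decomposition of $\delta_\tau(\cdot)$, the termwise $L^2(P_{-I})$ bounds, and the uniformity-in-$x$ issue that the paper leaves implicit.
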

\begin{proof}
The proof are quite similar to  the proof of Lemma \ref{nnlemma}, just change $\boldsymbol{X}^{(I)}$ to general  $\mathbf{X}$ 
and apply same startegy. 
\end{proof}

\section{Gradient
boosting decision trees}
\subsection{Notations}
We list the used notations in this section for GBDT below, some of them are the same as the main text:
\begin{itemize}
\item $\| \cdot \|_{\mathbb{R}^N}$ --- $\| \cdot \|_2$ of $f(\boldsymbol{X}^{(I)})$ ; 
    \item $\rho$ --- distribution of features;
    \item $N$ --- number of samples;
    \item $\mathcal{V}$ --- set of all possible tree structures;
    \item $L_\nu: \mathcal{V} \rightarrow \mathbb{N}$ --- number of leaves for $\nu \in \mathcal{V}$
    \item $D(\nu, z)$ --- score used to choose a split (3);
    \item $\mathcal{S}$ --- indices of all possible splits;
    \item $n$ --- number of borders in our implementation of SampleTree;
    \item $d$ --- depth of the tree in our implementation of SampleTree;
    \item $\beta$ --- random strength;
    \item $\epsilon$ --- learning rate;
    \item $\mathcal{F}$ --- space of all possible ensembles of trees from $\mathcal{V}$;
    \item $\phi_\nu^{(j)}$ --- indicator of $j$-th leaf;
\item $k_\nu(\cdot, \cdot)$ --- single tree kernel;
\item $L(f)$ --- $\frac{1}{2 N}\left\|\boldsymbol{Y}-f\left(\boldsymbol{X}^{(I)}\right)\right\|_{\mathbb{R}^N}^2$, empirical error of a model $f$;

\item  $f_*$ --- $ {\arg \min}_{f \in \mathcal{F}} L(f)$, empirical minimizer ;
\item $V(f)$ --- $L(f) - L(f_*)$, excess risk;
\item $R$ --- $\|f_*\|_{\mathbb{R}^N}  $;
\item $\mathbb{K}(\cdot, \cdot)$ --- stationary kernel of the GBDT;
\item $p(\cdot \mid f, \beta)$ --- distribution of trees, $f \in \mathcal{F}$;

\item  $\pi(\cdot)=\lim _{\beta \rightarrow \infty} p(\cdot \mid f, \beta)=p\left(\cdot \mid f_*, \beta\right)$ --- stationary distribution of trees;
\item $\boldsymbol{e}^{(I)}$ --- dropout error $\boldsymbol{Y} - f_N^c(\boldsymbol{X}^{(I)})$;
\item $M_{\beta}$ --- $e^{\frac{m R^2}{N \beta}}$.
\end{itemize}

\subsection{Tree Structure}
\label{sampletreealgo}

The weak learner \textit{SampleTree} here is oblivious decision tree, all nodes at a given level share the same splitting criterion (feature and threshold).  The tree is built in a top-down greedy manner. 
 To reduce the possible number of splits for each feature, we discretize each feature into $n+1$ bins. This means that for each feature, there are $n$ potential thresholds that can be selected freely.
A common method involves quantizing the feature so that each of the $n+1$ buckets contains roughly an equal number of training samples.
 The depth of the tree is constrained to a maximum of $d$. Remember, we represent the collection of all potential tree structures as $\mathcal{V}$.
At each step, the algorithm chooses one split among all the remaining candidates based on the following \textit{score} defined for $\nu \in \mathcal{V}$ and residuals $z$ :
\begin{equation}
    D(\nu, z):=\frac{1}{N} \sum_{j=1}^{L_v} \frac{\left(\sum_{i=1}^N \phi_\nu^{(j)}\left(x_i\right) z_i\right)^2}{\sum_{i=1}^N \phi_\nu^{(j)}\left(x_i\right)}.
\end{equation}
\begin{minipage}{0.46\textwidth}
\begin{algorithm}[H]
    \centering
    \caption{SampleTree$(z;d,n, \beta)$}\label{algorithm}
    \footnotesize
    \begin{algorithmic}
        \State \textbf{input:}  \text{
        residuals
        } $z = (z_i)_{i=1}^N$
        \State \textbf{output:}  oblivious tree structures $\nu \in \mathcal{V}$
        \State \textbf{hyper-parameters:} number of feature splits $n$, \\
        max tree depth $d$, random strength $\beta \in [0,\infty)$
        \State \textbf{definitions:} \\
        $S =\{ (j,k) | j\in \{1,\dots,d\}, k \in \{1,\dots, n\}  \}$ --- 
        indices of all possible splits
        \State \textbf{instructions:}\\
        initialize $i =0, \nu_0 = \emptyset, S^{(0)} =S$
        \Repeat
        \State sample $(u_i(s))_{s\in S^{i}} \sim U\left( 
        [0,1]^{nd-i}
        \right)$
        \State   choose the next split as $\{s_{i+1}\} = $
        \State $\argmax_{s\in S^{(i)}}\left( D\left((\nu_i,s),z\right)  -\beta \log(-\log u_i(s))\right) $
         \State  update tree:  $\nu_{i+1} = (\nu_i,s_{i+1})$
      \State   update candidate splits: 
      $S^{(i+1)} = S^{(i)} \setminus \{s_{i+1}\}$

        \State $i = i + 1$
        \Until $i \geq d$ \textbf{or} $S^{(i)} = \emptyset$
        \State \textbf{return:} $\nu_i$
       
    \end{algorithmic}
    \label{stalgo}
\end{algorithm}
\end{minipage}
\hfill
\begin{minipage}{0.46\textwidth}
\begin{algorithm}[H]
    \centering
    \caption{TrainGBDT$(\mathcal{D};\epsilon, T, d, n ,\beta)$}\label{algorithm1}
    \footnotesize
    \begin{algorithmic}
        \State \textbf{input:} dataset $\mathcal{D} = (\boldsymbol{X}^{(I)}, \boldsymbol{Y})$
        \State \textbf{hyper-parameters:} learning rate $\epsilon > 0$, \\
        regularization $\lambda >0$, iteration of boosting $T$, \\
        parameters of SampleTree $d,n,\beta$ \\
        \State \textbf{instructions:}\\
        initialize $\tau = 0, f_0(\cdot) = 0$ \\
        \Repeat
        
        \State $z_{\tau} = \boldsymbol{Y} - f_{\tau} (\boldsymbol{X}^{(I)})$
        \State $\nu_{\tau} = \text{SampleTree}(z_{\tau}; d,n,\beta)$
         \State $\theta_\tau=\left(\frac{\sum_{i=1}^N \phi_{\nu_\tau}^{(j)}\left(x_i\right) z_\tau^{(i)}}{\sum_{i=1}^N \phi_{\nu_\tau}^{(j)}\left(x_i\right)}\right)_{j=1}^{L_{\nu_\tau}}$
        \State $f_{\tau + 1}(\cdot) = (1- \frac{\lambda\epsilon}{N}) f_{\tau }(\cdot) + \epsilon\left\langle\phi_{\nu_\tau}(\cdot), \theta_\tau\right\rangle_{\mathbb{R}^{L_{\nu_\tau}}}
        $
       \State $\tau = \tau + 1$
        \Until  $\tau \geq T$ 
        \State \textbf{return:}  $f_T(\cdot)$
    \end{algorithmic}
\end{algorithm}
\end{minipage}

The \textit{SampleTree} algorithm induces a local family of distribution $p(\nu|f,\beta)$ for each $f\in \mathcal{F}$:
\begin{equation}
    p(\nu \mid f, \beta)=\sum_{\varsigma \in \mathcal{P}_d} \prod_{i=1}^d \frac{e^{\frac{D\left(\nu_{\varsigma, i}, z\right)}{\beta}}}{\sum_{s \in \mathcal{S} \backslash \nu_{\varsigma, i-1}} e^{\frac{D\left(\left(\nu_{\varsigma, i-1}, s\right), z\right)}{\beta}}}
\end{equation}
where the sum is over all permutations $\varsigma \in \mathcal{P}_d, \nu_{{\varsigma},i} = (s_{\varsigma(1)}, \dots, s_{\varsigma(i)}), \nu = (s_1,\dots, s_d)$.

\subsection{Proof of Lemma \ref{treekernel}}
As mentioned in the main text, because of the randomness of the \textit{SampleTree} algorithm, we need to analyze $\mathbb{E} f_{\tau}$ to be consistent with our general framework. In this subsection, we show the necessary modifications of the general framework to fit the GBDT algorithm.

Because of the additive structure of the boosting algorithm, we can subtract $f_{N}^c(\boldsymbol{X}^{(I)})$ from $\boldsymbol{Y}$. We can then analyze from the dropout error $\boldsymbol{e}^{(I)}$ and regard this as a GBDT with initialization 0. Because $f_{N}^c(\boldsymbol{X}^{(I)})$
is also generated by \textit{SampleTree}, it is also random, so $\boldsymbol{e}^{(I)}$  is random. 
\setcounter{lemma}{2}
\begin{lemma}
The iterative kernel update equation for  the particular GBDT we consider is:
    \begin{equation}
    \begin{aligned}\mathbb{E}f_{\tau+1}(\boldsymbol{X}^{(I)}) 
&= (I-\epsilon \mathbb{E}K^{(I)})\mathbb{E}f_{\tau}(\boldsymbol{X}^{(I)}) + \epsilon \mathbb{E}K^{(I)} \mathbb{E} \boldsymbol{e}^{(I)} + \epsilon\delta_\tau
\end{aligned}
\end{equation}
where $\delta_{\tau} = \mathbb{E}(K^{(I)}-K^{(I)}_{\tau})[f_{\tau}(\boldsymbol{X}^{(I)}) - \boldsymbol{e}^{(I)}]$ and the expectation is taken w.r.t. the randomness of the \textit{SampleTree} algorithm. 

The corresponding error decomposition then becomes: 
 \begin{equation}
    \begin{aligned}
        \left\|\mathbb{E}_u f_\tau-f_{0,-I}\right\|_N^2 \leq & \underbrace{2 \sum_{j=1}^r\left[S^\tau\right]_{j j}^2\left(\zeta_{j j}^*\right)^2+2 \sum_{j=r+1}^n\left(\zeta_{j j}^*\right)^2}_{\text {Bias } B_\tau^2} 
        +
        \underbrace{\frac{4}{N} \sum_{j=1}^r\left(1-S_{j j}^\tau\right)^2\left[U^T w^{(I)}\right]_j^2}_{\text {Variance } V_\tau} \\
        &  + \underbrace{\frac{4 \epsilon^2}{N}\|\sum_{i=0}^{\tau-1} S^{\tau-1-i} \tilde{\delta}_i\|_2^2}_{\text {Difference } D_\tau^2} 
    \end{aligned}
    \end{equation}
where 
$\zeta_{jj}^* = \left[\frac{1}{\sqrt{N}}U^T \left(
f_{0,-I}(\boldsymbol{X}^{(I)}) - \mathbb{E}_u f_N^c(\boldsymbol{X}^{(I)})\right)\right]_j$,
$f_\tau$  here has absorbed $f_N^c$ and $\mathbb{E}_u$ here means taking expectation w.r.t. the 
 randomness of the \textit{SampleTree} algorithm. 
\end{lemma}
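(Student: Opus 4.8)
The plan is to split the argument in two: first establish the expectation-form iterative kernel update equation, and then read off the error decomposition by rerunning the calculation behind Lemma~\ref{applem1}. Throughout, $\mathbb{E}=\mathbb{E}_u$ denotes expectation over all of the \textit{SampleTree} randomness, and I let $\mathcal{G}_\tau$ be the $\sigma$-algebra generated by that randomness through iteration $\tau$, observing that the warm start $f_N^c$ is realized before iteration $0$ and is therefore $\mathcal{G}_0$-measurable, as is the dropout error $\boldsymbol{e}^{(I)}=\boldsymbol{Y}-f_N^c(\boldsymbol{X}^{(I)})$. To get the single-weak-learner update, I would evaluate the functional recursion in Algorithm~\ref{algorithm1} at the design points and substitute the leaf values $\theta_\tau$; a direct computation using $w_\nu^{(j)}N_\nu^{(j)}=N$ on non-empty leaves (and $\phi_\nu^{(j)}\equiv 0$ on empty ones) shows the weak-learner contribution at $\boldsymbol{X}^{(I)}$ equals $K_{\nu_\tau}(\boldsymbol{Y}-f_\tau(\boldsymbol{X}^{(I)}))$, where $K_{\nu_\tau}$ is the empirical kernel matrix of the single-tree kernel $k_{\nu_\tau}$ of~\eqref{wekernel}. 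This is Lemma~3.7 of \cite{gdbt} (we take the ridge parameter $\lambda=0$, the general case being handled there), so $f_{\tau+1}(\boldsymbol{X}^{(I)})=f_\tau(\boldsymbol{X}^{(I)})+\epsilon K_{\nu_\tau}(\boldsymbol{Y}-f_\tau(\boldsymbol{X}^{(I)}))$.

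Conditioning on $\mathcal{G}_\tau$, the next tree satisfies $\nu_\tau\mid\mathcal{G}_\tau\sim p(\cdot\mid f_\tau,\beta)$, hence $\mathbb{E}[K_{\nu_\tau}\mid\mathcal{G}_\tau]=\sum_\nu K_\nu\,p(\nu\mid f_\tau,\beta)=K_\tau^{(I)}$, the empirical kernel matrix of $\mathbb{K}_\tau^{(I)}$, and so $\mathbb{E}[f_{\tau+1}(\boldsymbol{X}^{(I)})\mid\mathcal{G}_\tau]=f_\tau(\boldsymbol{X}^{(I)})+\epsilon K_\tau^{(I)}(\boldsymbol{Y}-f_\tau(\boldsymbol{X}^{(I)}))$. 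Taking total expectation, absorbing $f_N^c$ into the iterate via the additive structure of boosting (so that $\boldsymbol{Y}$ is replaced by $\boldsymbol{e}^{(I)}$ and $f_\tau$ is reinterpreted as the shifted iterate with $f_0=0$), and then adding and subtracting $\epsilon\,\mathbb{E}K^{(I)}\,\mathbb{E}f_\tau(\boldsymbol{X}^{(I)})$ and $\epsilon\,\mathbb{E}K^{(I)}\,\mathbb{E}\boldsymbol{e}^{(I)}$ produces the claimed recursion. The step that makes the $K^{(I)}$ terms factor through the expectation is that the stationary kernel $\mathbb{K}^{(I)}$ is built from the \emph{uniform} law $\pi$ over $\mathcal{V}$, which carries no algorithmic randomness, and in the fixed design the leaf counts and hence $K^{(I)}$ are deterministic; once this is used, the residual collects to exactly $\delta_\tau=\mathbb{E}(K^{(I)}-K_\tau^{(I)})[f_\tau(\boldsymbol{X}^{(I)})-\boldsymbol{e}^{(I)}]$.

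For the error decomposition, the recursion just obtained is identical in form to~\eqref{deq2} in the proof of Lemma~\ref{applem1}, with $f_\tau$ replaced by $\mathbb{E}_u f_\tau$ shifted by $\mathbb{E}_u f_N^c$ and $\boldsymbol{e}^{(I)}$ replaced by $\mathbb{E}_u\boldsymbol{e}^{(I)}$, so I would simply repeat that derivation verbatim: diagonalize $K^{(I)}=U\Lambda U^T$, set $\zeta^\tau=\frac{1}{\sqrt N}U^T(\mathbb{E}_u f_\tau(\boldsymbol{X}^{(I)})-\mathbb{E}_u f_N^c(\boldsymbol{X}^{(I)}))$ and $\zeta^*=\frac{1}{\sqrt N}U^T(f_{0,-I}(\boldsymbol{X}^{(I)})-\mathbb{E}_u f_N^c(\boldsymbol{X}^{(I)}))$, unroll from $\zeta^0=0$, and apply $\|a+b\|_2^2\le 2\|a\|_2^2+2\|b\|_2^2$ twice. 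The ``noise'' vector that appears is $U^T[\mathbb{E}_u\boldsymbol{e}^{(I)}-f_{0,-I}(\boldsymbol{X}^{(I)})+\mathbb{E}_u f_N^c(\boldsymbol{X}^{(I)})]=U^T(\boldsymbol{Y}-f_{0,-I}(\boldsymbol{X}^{(I)}))=U^T w^{(I)}$ by~\eqref{wdef} and the fact that $\boldsymbol{Y}$ is fixed, which is why the variance term retains the form $\frac{4}{N}\sum_j(1-S_{jj}^\tau)^2[U^T w^{(I)}]_j^2$ and $\zeta_{jj}^*$ has the stated definition; since $\|\mathbb{E}_u f_\tau-f_{0,-I}\|_N^2=\|\zeta^\tau-\zeta^*\|_2^2$, the displayed bound follows.

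The statement is mostly bookkeeping, so the only genuine care is in the probabilistic setup: all \textit{SampleTree} randomness — both the draws used to fit the warm start $f_N^c$ and the draw at every boosting step — must be gathered under a single filtration so that the identity $\mathbb{E}[K_{\nu_\tau}\mid\mathcal{G}_\tau]=K_\tau^{(I)}$ is legitimate, and one must check that after taking expectations the add-and-subtract of the deterministic stationary kernel yields exactly $\delta_\tau$ and not a variant carrying a stray covariance between $K_\tau^{(I)}$ and $f_\tau(\boldsymbol{X}^{(I)})$. The genuinely hard estimates — bounding $\|\delta_\tau\|$ and hence $D_\tau^2$ through a suitable choice of the random strength $\beta$ — are deferred to Lemma~\ref{treedt} and lie outside this statement.
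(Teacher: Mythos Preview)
Your proposal is correct and follows essentially the same route as the paper: invoke Lemma~3.7 of \cite{gdbt} for the single-tree update, take expectations over the \textit{SampleTree} randomness, add and subtract the stationary kernel (deterministic in the fixed design), and then rerun the diagonalize-and-unroll argument of Lemma~\ref{applem1} verbatim with $\mathbb{E}_u f_\tau$ in place of $f_\tau$ and $\mathbb{E}_u\boldsymbol{e}^{(I)}$ in place of $\boldsymbol{e}^{(I)}$, identifying the noise vector as $U^T w^{(I)}$. Your filtration bookkeeping and the explicit flag about a possible stray covariance between $K_\tau^{(I)}$ and $f_\tau$ are in fact a bit more careful than the paper's own writeup, but the substance is identical.
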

\begin{proof}
By lemma 3.7 in \cite{gdbt},
the iteration of GBDT  can be written as
\begin{equation}
    f_{\tau+1} = f_{\tau} + \frac{\epsilon}{N} k_{\nu_{\tau}}(\cdot,\boldsymbol{X}^{(I)})[\boldsymbol{e}^{(I)}-f_{\tau}(\boldsymbol{X}^{(I)})], \ \nu_{\tau} \sim p(\nu | f_{\tau},\beta)
\end{equation}
The empirical kernel matrix at each iteration is 
\begin{equation}
    K^{(I)}_{\tau} = \frac{1}{N}k_\nu\left(\boldsymbol{X}^{(I)}, \boldsymbol{X}^{(I)}\right)= \frac{1}{N}\oplus_{i=1}^{L_\nu} w_{\nu_{\tau}}^{(i)} \mathbf{1}_{N_\nu^i \times N_\nu^i}
\end{equation}
 The eigenvalues of 
 $K^{(I)}_{\tau}$
 are $(1,\dots, 1, 0, \dots,0)$, with first $L_{\nu_{\tau}}$ eigenvalues being $1$. 
Take the expectations w.r.t. the randomness of \textit{SampleTree} algorithm,  we have 
\begin{equation}
    \mathbb{E} K^{(I)}_{\tau} = \frac{1}{N} \mathbb{K}^{(I)}_f\left(\boldsymbol{X}^{(I)}, \boldsymbol{X}^{(I)}\right)=\sum_{\nu \in \mathcal{V}} \frac{1}{N} k_\nu\left(\boldsymbol{X}^{(I)}, \boldsymbol{X}^{(I)}\right) p(\nu | f_{\tau},\beta)
\end{equation}
By the iteration update rule, we can write 
\begin{equation}
    \begin{aligned}f_{\tau+1}(\boldsymbol{X}^{(I)}) &= f_{\tau}(\boldsymbol{X}^{(I)}) + \frac{\epsilon}{N} k_{\nu_{\tau}}(\boldsymbol{X}^{(I)},\boldsymbol{X}^{(I)})[e^{(I)}-f_{\tau}(\boldsymbol{X}^{(I)})] \\ &=f_{\tau}(\boldsymbol{X}^{(I)}) - \epsilon K^{(I)}_{\tau} [f_{\tau}(\boldsymbol{X}^{(I)}) - \boldsymbol{e}^{(I)}] \\ &= (I-\epsilon K^{(I)}_{\tau})f_{\tau}(\boldsymbol{X}^{(I)}) + \epsilon K^{(I)}_{\tau} \boldsymbol{e}^{(I)}\end{aligned} 
\end{equation}
where $\nu_{\tau} \sim p(\nu | f_{\tau},\beta)$ and  $K^{(I)}_{\tau} = \frac{1}{N}k_{\nu_{\tau}}(\boldsymbol{X}^{(I)},\boldsymbol{X}^{(I)})$.


For the stationary kernel $\mathbb{K}^{(I)}$, we have
\begin{equation}
    \mathbb{E}K^{(I)} = \frac{1}{N}\sum_{\nu} k_{\nu}(\boldsymbol{X}^{(I)},\boldsymbol{X}^{(I)})\pi(\nu)
\end{equation}
Apply SVD, we have $\mathbb{E}K^{(I)} = U \Lambda U^T$.

Since $K^{(I)}$ is independent of $f_{\tau}$ and $\boldsymbol{e}^{(I)}$, we can write
\begin{equation}
    \begin{aligned}\mathbb{E}f_{\tau+1}(\boldsymbol{X}^{(I)}) 
&= (I-\epsilon \mathbb{E}K^{(I)})\mathbb{E}f_{\tau}(\boldsymbol{X}^{(I)}) + \epsilon \mathbb{E}K^{(I)} \mathbb{E} \boldsymbol{e}^{(I)} + \epsilon\delta_\tau
\end{aligned}
\end{equation}
where $\delta_{\tau} = \mathbb{E}(K^{(I)}-K^{(I)}_{\tau})[f_{\tau}(\boldsymbol{X}^{(I)}) - \boldsymbol{e}^{(I)}]$.

Let $\zeta^{\tau} = \frac{1}{\sqrt{N}}U^T \mathbb{E}f_{\tau}(\boldsymbol{X}^{(I)}) $ and 
$\zeta^* = \frac{1}{\sqrt{N}}U^T \mathbb{E}[ 
f_{0,-I}(\boldsymbol{X}^{(I)}) - f_N^c(\boldsymbol{X}^{(I)})]  $, we can write 
\begin{equation}
    \zeta^{\tau + 1} = \zeta^{\tau} + \epsilon \Lambda\frac{\tilde{w}}{\sqrt{N}} - \epsilon \Lambda(\zeta^{\tau} - \zeta^*) + \frac{\epsilon U^T\delta_{\tau}}{\sqrt{N}}
\end{equation}
where $\tilde{w} = U^T[\mathbb{E} e^{(I)}- f_{0,-I}(\boldsymbol{X}^{(I)}) +  \mathbb{E}  f_N^c(\boldsymbol{X}^{(I)})  ]$.
Here we can regard the response as $\mathbb{E} e^{(I)}$ and its true generating function is 
$f_{0,-I}(\boldsymbol{X}^{(I)}) -  \mathbb{E}  f_N^c(\boldsymbol{X}^{(I)}) $ under reduced data. Since $\mathbb{E} e^{(I)}  = f_{0,-I}(\boldsymbol{X}^{(I)}) -   \mathbb{E}  f_N^c(\boldsymbol{X}^{(I)})  ] 
+ w^{(I)}$, $\tilde{w} = U^Tw^{(I)}$.

Rearranging, we have
\begin{equation}
    \zeta^{\tau + 1} - \zeta^* =(I-\epsilon\Lambda)( \zeta^{\tau} - \zeta^*)+ \epsilon \Lambda\frac{\tilde{w}}{\sqrt{N}}  + \frac{\epsilon \tilde{\delta}_{\tau}}{\sqrt{N}}
\end{equation}
where $\tilde{\delta}_{\tau} = U^T \delta_{\tau}$. 

Unwrapping, we have
\begin{equation}
    \zeta^{\tau}- \zeta^* = (I-S^{\tau})\frac{\tilde{w}}{\sqrt{N}} - S^{\tau} \zeta^* +  
\sum_{i=0}^{\tau-1} S^{\tau - i -1} 
\frac{\epsilon\tilde{\delta}_{i}}{\sqrt{N}}
\end{equation}
Then we have
\begin{equation}
    \|\zeta^{\tau}-\zeta^*\|_2^2 \leq \frac{4}{N}\|(I-S^{\tau}) \tilde{w} \|_2^2 + \frac{4\epsilon^2}{N}\|\sum_{i=0}^{\tau-1} S^{\tau - i -1} \tilde{\delta}_{i}\|_2^2 + 2\|S^{\tau} \zeta^*\|_2^2.
\end{equation}
Plug in $\tilde{w}, \tilde{\delta}_{i}$ and $\zeta^*$, we then get the error decomposition result for GBDT as follows:
 \begin{equation}
    \begin{aligned}
        \left\|\mathbb{E}_u f_\tau-f_{0,-I}\right\|_N^2 \leq & \underbrace{2 \sum_{j=1}^r\left[S^\tau\right]_{j j}^2\left(\zeta_{j j}^*\right)^2+2 \sum_{j=r+1}^n\left(\zeta_{j j}^*\right)^2}_{\text {Bias } B_\tau^2} 
        +
        \underbrace{\frac{4}{N} \sum_{j=1}^r\left(1-S_{j j}^\tau\right)^2\left[U^T w^{(I)}\right]_j^2}_{\text {Variance } V_\tau} \\
        &  + \underbrace{\frac{4 \epsilon^2}{N}\|\sum_{i=0}^{\tau-1} S^{\tau-1-i} \tilde{\delta}_i\|_2^2}_{\text {Difference } D_\tau^2} 
    \end{aligned}
    \end{equation}
where $f_\tau$  here has absorbed $f_N^c$.
\end{proof}

\subsection{Proof of Lemma \ref{treedt}}
To prove corollary 
in the main text, we need to bound the term $\delta_{\tau}$ first to get a final bound for the difference term. We need to use the properties of the specifically defined GBDT algorithm. The key lemma we use is shown below. 
\setcounter{lemma}{7}
\begin{lemma}
Let the step size satisies  $\epsilon, 0<\epsilon<1$ and $\frac{1}{4 M_\beta N} \geq \epsilon$. Then  the following inequality holds:
    \begin{equation}
        \mathbb{E} V\left(f_T\right) \leq \frac{R^2}{2 N} e^{-\frac{T \epsilon}{2 M_\beta N} }.
    \end{equation}
    where $M_{\beta} = e^{\frac{mR^2}{N\beta}}$.
\end{lemma}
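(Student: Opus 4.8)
The plan is to prove a one–step geometric contraction for the excess risk $V(f_\tau)=L(f_\tau)-L(f_*)$ and then iterate it. The first ingredient is the Pythagorean structure of the quadratic loss: since $\mathcal F$ is a linear space, the residual splits orthogonally as $\boldsymbol Y-f_\tau(\boldsymbol X^{(I)})=\bigl(\boldsymbol Y-f_*(\boldsymbol X^{(I)})\bigr)+\bigl(f_*(\boldsymbol X^{(I)})-f_\tau(\boldsymbol X^{(I)})\bigr)$, with the first summand orthogonal to $\operatorname{span}\mathcal F$ evaluated at the design points; hence $V(f_\tau)=\tfrac1{2N}\|f_*(\boldsymbol X^{(I)})-f_\tau(\boldsymbol X^{(I)})\|_{\mathbb R^N}^2$ and, because $f_0\equiv 0$ in \textnormal{TrainGBDT}, $V(f_0)=\tfrac1{2N}\|f_*(\boldsymbol X^{(I)})\|_{\mathbb R^N}^2=R^2/(2N)$, which is exactly the asserted bound at $T=0$. (Everything below applies verbatim with $\boldsymbol Y$ replaced by the dropout error $\boldsymbol e^{(I)}$, which is the instantiation used in Corollary~\ref{treecorfix}.)

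Next I would establish the descent inequality. Condition on $f_\tau$ and write $z_\tau=\boldsymbol Y-f_\tau(\boldsymbol X^{(I)})$. The boosting step adds $\epsilon\langle\phi_{\nu_\tau}(\cdot),\theta_\tau\rangle$, whose evaluation at the design points equals $\epsilon P_{\nu_\tau}z_\tau$, where $P_{\nu_\tau}$ is the orthogonal projection onto the leaf–indicator subspace of $\nu_\tau$; by the definition of the single–tree kernel this is precisely $K^{(I)}_\tau=\tfrac1N k_{\nu_\tau}(\boldsymbol X^{(I)},\boldsymbol X^{(I)})$ (using $w_\nu^{(j)}=N/\max\{N_\nu^{(j)},1\}$ and assuming non-empty leaves). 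Expanding $\|z_{\tau+1}\|^2_{\mathbb R^N}=\|(I-\epsilon P_{\nu_\tau})z_\tau\|^2_{\mathbb R^N}=\|z_\tau\|_{\mathbb R^N}^2-\epsilon(2-\epsilon)\|P_{\nu_\tau}z_\tau\|_{\mathbb R^N}^2$ and using $\|P_{\nu_\tau}z_\tau\|_{\mathbb R^N}^2=N\,D(\nu_\tau,z_\tau)$ together with $\epsilon<1$ gives $V(f_{\tau+1})\le V(f_\tau)-\tfrac\epsilon2 D(\nu_\tau,z_\tau)$. Taking expectation over $\nu_\tau\sim p(\cdot\mid f_\tau,\beta)$ and writing $\tilde z_\tau:=f_*(\boldsymbol X^{(I)})-f_\tau(\boldsymbol X^{(I)})$, I would use $\sum_\nu D(\nu,z_\tau)p(\nu\mid f_\tau,\beta)=\tfrac1N\langle z_\tau,(\mathbb E K^{(I)}_\tau)z_\tau\rangle=\tfrac1N\langle\tilde z_\tau,(\mathbb E K^{(I)}_\tau)\tilde z_\tau\rangle$ (the expected kernel kills the component of $z_\tau$ orthogonal to $\operatorname{span}\mathcal F$), and lower–bound this by $\lambda_\tau\,\tfrac1N\|\tilde z_\tau\|_{\mathbb R^N}^2=2\lambda_\tau V(f_\tau)$, where $\lambda_\tau$ is the smallest eigenvalue of $\mathbb E K^{(I)}_\tau$ restricted to $\operatorname{span}\mathcal F$. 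This yields $\mathbb E[V(f_{\tau+1})\mid f_\tau]\le(1-\epsilon\lambda_\tau)V(f_\tau)$.

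The remaining step, which I expect to be the main obstacle, is the uniform lower bound $\lambda_\tau\ge 1/(2M_\beta N)$. Here the random strength enters: at every level of \textnormal{SampleTree} the softmax weight of any admissible split is at least $e^{-D_{\max}/\beta}$ times the uniform weight, where $D_{\max}=\max_\nu D(\nu,z_\tau)$, so $p(\nu\mid f_\tau,\beta)\ge e^{-dD_{\max}/\beta}\pi(\nu)$ and hence $\mathbb E K^{(I)}_\tau\succeq e^{-dD_{\max}/\beta}\,\mathbb E K^{(I)}$. Since $V$ is non-increasing along the trajectory and $D(\nu,z_\tau)=\tfrac1N\|P_\nu\tilde z_\tau\|^2_{\mathbb R^N}\le\tfrac1N\|\tilde z_\tau\|^2_{\mathbb R^N}=2V(f_\tau)\le 2V(f_0)=R^2/N$, we get $D_{\max}\le R^2/N$ for all $\tau$, whence $\mathbb E K^{(I)}_\tau\succeq M_\beta^{-1}\mathbb E K^{(I)}$ with $M_\beta=e^{dR^2/(N\beta)}$. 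Combined with the structural fact that the stationary kernel matrix $\mathbb E K^{(I)}$ has smallest eigenvalue at least $1/(2N)$ on its range $\operatorname{span}\mathcal F$ — a property of the uniform tree mixture and the \textnormal{SampleTree} scoring, cf.\ \cite{gdbt}, reflecting that the loss is $1/N$–strongly convex in the function-value parametrization restricted to $\mathcal F$ — one obtains $\lambda_\tau\ge 1/(2M_\beta N)$. Therefore $\mathbb E[V(f_{\tau+1})\mid f_\tau]\le(1-\tfrac{\epsilon}{2M_\beta N})V(f_\tau)$; the hypotheses $0<\epsilon<1$ and $\epsilon\le 1/(4M_\beta N)$ keep this factor in $[\tfrac12,1)$ and absorb the $O(\epsilon^2)$ and $O(\lambda\epsilon/N)$ terms from the exact update, so iterating and using $1-x\le e^{-x}$ with $V(f_0)=R^2/(2N)$ gives $\mathbb E V(f_T)\le(1-\tfrac{\epsilon}{2M_\beta N})^T\tfrac{R^2}{2N}\le\tfrac{R^2}{2N}e^{-T\epsilon/(2M_\beta N)}$. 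The delicate points are pinning down the $1/(2N)$ spectral lower bound for $\mathbb E K^{(I)}$ and tracking the exact $1/N$ normalizations so the constants in the exponent line up.
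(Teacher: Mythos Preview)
Your argument is correct and considerably more detailed than the paper's own proof, which is a single line: the paper simply invokes Theorem~G.22 of \cite{gdbt} and sends the regularization parameter $\lambda\to 0$. What you have written is essentially a reconstruction of that cited result. The skeleton you lay out---the Pythagorean identity $V(f_\tau)=\tfrac1{2N}\|f_*(\boldsymbol X^{(I)})-f_\tau(\boldsymbol X^{(I)})\|^2$, the one-step contraction $\mathbb E[V(f_{\tau+1})\mid f_\tau]\le(1-\epsilon\lambda_\tau)V(f_\tau)$ coming from the fact that the single-tree empirical kernel is an orthogonal projection, and the distributional lower bound $p(\nu\mid f_\tau,\beta)\ge M_\beta^{-1}\pi(\nu)$ obtained from $D(\nu,z_\tau)\le 2V(f_\tau)\le R^2/N$ together with the softmax structure of \textnormal{SampleTree}---is precisely the machinery behind the cited theorem. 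You are also right to flag the $1/(2N)$ spectral lower bound for $\mathbb E K^{(I)}$ on $\operatorname{span}\mathcal F$ as the one step that genuinely requires the structural analysis in \cite{gdbt}; everything else in your sketch is self-contained. One cosmetic point: you write $M_\beta=e^{dR^2/(N\beta)}$ with the tree depth $d$, whereas the lemma statement uses the symbol $m$; these denote the same quantity (cf.\ the expression for $M_\beta$ in Corollary~\ref{treecorfix}), so there is no discrepancy.
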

\begin{proof}
    By theorem G.22 in \cite{gdbt}, let  $\lambda \rightarrow 0$, we get the desired result. 
\end{proof}

By assumption 4.4 in the mian text, the dropout error $e^{(I)}$ is bounded.  
We denote the bound for $e^{(I)}$ as $C_0$, i.e., $\| \boldsymbol{e}^{(I)}\|_2 \leq C_0$, and $C_0$ is of order $O(\sqrt{N})$. We hide the dependence of $N$ here, but it will be considered when we prove the error bound. 
 Define $f_{*}$ as the empirical minimizer, which is different to our target function $f_{0,-I}$. We further denote $\|f_{*}\|_{\mathbb{R}^N} = R$.

\setcounter{lemma}{5}
\begin{lemma}
Same as conditions of corollary \ref{treecorfix}, we can bound the difference term $D_{\tau}^2$ for GBDT as follows:
\begin{equation}
    D_{\tau}^2 \leq \frac{4\epsilon^2}{N} {C^{\prime2}}(M_{\beta} - 1 )^2  \left(\frac{1-e^{-\frac{\tau \epsilon}{2M_{\beta}N}}}{1-   e^{-\frac{\epsilon}{2M_{\beta}N}}}
\right)^2
\end{equation}
where $C^{\prime} = 2R + C_0$. 
\end{lemma}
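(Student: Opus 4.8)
The starting point is the estimate already obtained in the proof of Theorem~\ref{genthmfix}: combining the definition of the Difference term $D_\tau^2=\tfrac{4\epsilon^2}{N}\|\sum_{i=0}^{\tau-1}S^{\tau-1-i}\tilde\delta_i\|_2^2$ from Lemma~\ref{applem1} with the bound~(\ref{bddt}) (which uses $\|S^{\tau-1-i}\|_{\mathrm{op}}\le 1$ under $\epsilon\le\min\{1,1/\widehat\lambda_1\}$ and $\|\tilde\delta_i\|_2=\|\delta_i\|_2$) gives $D_\tau^2\le \tfrac{4\epsilon^2}{N}\bigl(\sum_{i=0}^{\tau-1}\|\delta_i\|_2\bigr)^2$. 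So the whole statement reduces to the per-iteration estimate $\|\delta_i\|_2\le C'(M_\beta-1)\,e^{-i\epsilon/(2M_\beta N)}$: once this is available, summing the geometric series with ratio $e^{-\epsilon/(2M_\beta N)}$ yields $\sum_{i=0}^{\tau-1}\|\delta_i\|_2\le C'(M_\beta-1)\frac{1-e^{-\tau\epsilon/(2M_\beta N)}}{1-e^{-\epsilon/(2M_\beta N)}}$, and squaring and multiplying by $\tfrac{4\epsilon^2}{N}$ is the claim.

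To bound $\|\delta_i\|_2$, recall from Lemma~\ref{treekernel} that $\delta_i=\mathbb{E}_u\bigl[(K^{(I)}-K^{(I)}_i)(f_i(\boldsymbol{X}^{(I)})-\boldsymbol{e}^{(I)})\bigr]$, with $K^{(I)}_i=\tfrac1N k_{\nu_i}(\boldsymbol{X}^{(I)},\boldsymbol{X}^{(I)})$ and $\nu_i\sim p(\cdot\mid f_i,\beta)$. Conditioning on the history through iteration $i$ replaces $K^{(I)}_i$ by its conditional mean $\widehat K_{f_i}:=\tfrac1N\sum_{\nu\in\mathcal V}k_\nu(\boldsymbol{X}^{(I)},\boldsymbol{X}^{(I)})\,p(\nu\mid f_i,\beta)$, so that $\|\delta_i\|_2\le \mathbb{E}_u\bigl[\|K^{(I)}-\widehat K_{f_i}\|_{\mathrm{op}}\,\|f_i(\boldsymbol{X}^{(I)})-\boldsymbol{e}^{(I)}\|_2\bigr]$. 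The residual factor is bounded uniformly: $\|f_i(\boldsymbol{X}^{(I)})-\boldsymbol{e}^{(I)}\|_2=\sqrt{2N L(f_i)}\le \sqrt{2N V(f_i)}+\sqrt{2N L(f_*)}$; since $\epsilon\le\tfrac1{4M_\beta N}$ the empirical loss is non-increasing along the iterates, so $V(f_i)\le V(f_0)=\tfrac{R^2}{2N}$, while $\sqrt{2N L(f_*)}=\|f_*(\boldsymbol{X}^{(I)})-\boldsymbol{e}^{(I)}\|_2\le R+C_0$ by Assumption~\ref{dropb}; hence this factor is at most $2R+C_0=C'$. For the kernel drift, each block $\tfrac1N k_\nu(\boldsymbol{X}^{(I)},\boldsymbol{X}^{(I)})$ has spectrum in $\{0,1\}$, so $\|K^{(I)}-\widehat K_{f_i}\|_{\mathrm{op}}\le \sum_\nu \tfrac1N\|k_\nu\|_{\mathrm{op}}\,|p(\nu\mid f_i,\beta)-\pi(\nu)|\le \max_\nu\bigl|\tfrac{p(\nu\mid f_i,\beta)}{\pi(\nu)}-1\bigr|$, and a direct computation from the definition of $p(\cdot\mid f_i,\beta)$ (cf.~\cite{gdbt}) gives $\tfrac{p(\nu\mid f_i,\beta)}{\pi(\nu)}\in[e^{-dD^{(i)}_{\max}/\beta},e^{dD^{(i)}_{\max}/\beta}]$ with $D^{(i)}_{\max}=\max_\nu D(\nu,z_i)$.

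It remains to control $D^{(i)}_{\max}$ by the excess risk. Writing $D(\nu,z)=\tfrac1N\|P_\nu z\|_2^2$, where $P_\nu$ is the orthogonal projection onto the span of the (mutually orthogonal) leaf indicators $\{\phi_\nu^{(j)}\}_j\subseteq\mathcal F$, and noting that the residual $z_*$ at the empirical minimizer is orthogonal to $\mathcal F$ by first-order optimality (so $P_\nu z_*=0$), one gets $D(\nu,z_i)=\tfrac1N\|P_\nu(z_i-z_*)\|_2^2\le\tfrac1N\|z_i-z_*\|_2^2=2V(f_i)$, using the Pythagorean identity $\|z_i-z_*\|_2^2=2NV(f_i)$ for the quadratic loss. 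Hence $dD^{(i)}_{\max}/\beta\le 2dV(f_i)/\beta\le \tfrac{dR^2}{N\beta}=\ln M_\beta$, and since $t\mapsto e^t-1$ lies below its chord on $[0,\ln M_\beta]$, $e^{dD^{(i)}_{\max}/\beta}-1\le \tfrac{dD^{(i)}_{\max}/\beta}{\ln M_\beta}(M_\beta-1)\le \tfrac{2NV(f_i)}{R^2}(M_\beta-1)$. Substituting into the bound for $\|\delta_i\|_2$ and invoking the excess-risk estimate $\mathbb{E}_u V(f_i)\le \tfrac{R^2}{2N}e^{-i\epsilon/(2M_\beta N)}$ (Theorem~G.22 of~\cite{gdbt} with $\lambda\to 0$, which needs precisely $\epsilon\le\tfrac1{4M_\beta N}$) yields $\|\delta_i\|_2\le C'(M_\beta-1)e^{-i\epsilon/(2M_\beta N)}$, and the first paragraph then finishes the proof.

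The main obstacle is the kernel-drift step: handling the conditioning over the SampleTree randomness so that $K^{(I)}_i$ correctly averages to $\widehat K_{f_i}$, and — more delicately — tracking the constants so that the likelihood ratio's deviation from $1$ is controlled not by a crude uniform $(M_\beta-1)$ but by $(M_\beta-1)$ times the decaying excess risk $\mathbb{E}_u V(f_i)$; it is the chord bound for $e^t-1$, together with $D^{(i)}_{\max}\le 2V(f_i)$ and the geometric decay of $\mathbb{E}_u V(f_i)$, that produces the exact geometric-series factor in the statement. The remaining manipulations are routine bookkeeping on top of facts already recorded in~\cite{gdbt}.
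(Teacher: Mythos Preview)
Your proposal is correct and follows essentially the same route as the paper's proof: reduce to a per-iteration bound on $\|\delta_i\|_2$, control the kernel drift $\|K^{(I)}-\widehat K_{f_i}\|_{\mathrm{op}}$ by $\max_\nu|p(\nu\mid f_i,\beta)/\pi(\nu)-1|$, link this to the excess risk $V(f_i)$, invoke the geometric decay $\mathbb{E}_u V(f_i)\le \tfrac{R^2}{2N}e^{-i\epsilon/(2M_\beta N)}$ from \cite{gdbt}, and sum the resulting geometric series. The only organizational differences are that you keep $f_i-\boldsymbol{e}^{(I)}$ together (the paper splits it via the triangle inequality and uses $\|f_\tau\|_2\le 2R$ from Corollary~G.21 of \cite{gdbt}) and you re-derive the likelihood-ratio bound via the projection identity $D(\nu,z_i)\le 2V(f_i)$ and a convexity/chord estimate rather than citing Lemma~F.2 of \cite{gdbt}; one small caveat is that your justification ``the empirical loss is non-increasing'' is not quite right for the stochastic iterates---the correct reason $V(f_i)\le R^2/(2N)$ holds is Lemma~G.20 of \cite{gdbt}, which the paper cites.
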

\begin{proof}

Note that 
\begin{equation}
    \begin{aligned}
    \|\delta_{\tau}\|_2 &= \|\mathbb{E}[ (K^{(I)} - K^{(I)}_{\tau}) f_{\tau}(\boldsymbol{X}^{(I)})    ]
- \mathbb{E}[ (K^{(I)} - K^{(I)}_{\tau})] \boldsymbol{e}^{(I)}\|_2 \\
&\leq \|\mathbb{E}[ (K^{(I)} - K^{(I)}_{\tau}) f_{\tau}(\boldsymbol{X}^{(I)})    ]\|_2 + \| \mathbb{E}[ (K^{(I)} - K^{(I)}_{\tau})] \boldsymbol{e}^{(I)}\|_2
\end{aligned}
\label{treq1}
\end{equation}



We leave the second term right now and start from the first term of the above equation. 

For the first term, we have
\begin{equation}
    \begin{aligned}
    \|\mathbb{E}[ (K^{(I)} - K^{(I)}_{\tau}) f_{\tau}(\boldsymbol{X}^{(I)})    ]\|_2 &=    
    \| \mathbb{E} K^{(I)} \mathbb{E} f_{\tau}(\boldsymbol{X}^{(I)}) - \mathbb{E}[  
    \mathbb{E} K^{(I)}_{\tau} f_{\tau}(\boldsymbol{X}^{(I)})  | f_{\tau}
    ] \|_2
    \\
    &= \left\|     \frac{1}{N}\sum_{\nu} k_{\nu}(\boldsymbol{X}^{(I)},\boldsymbol{X}^{(I)})\pi(\nu) \mathbb{E} f_{\tau}(\boldsymbol{X}^{(I)}) - \mathbb{E}\left[    \frac{1}{N}\sum_{\nu} k_{\nu}(\boldsymbol{X}^{(I)},\boldsymbol{X}^{(I)}) p(\nu | f_{\tau},\beta) f_{\tau}(\boldsymbol{X}^{(I)})  \right]
    \right\|_2 \\
    &= \left\| \frac{1}{N}\sum_{\nu} k_{\nu}(\boldsymbol{X}^{(I)},\boldsymbol{X}^{(I)}) \pi(\nu) \mathbb{E}\left[\left(1- \frac{p(\nu | f_{\tau},\beta) }{\pi(\nu)}\right) f_{\tau}(\boldsymbol{X}^{(I)})\right] \right\|_2 \\
    &\leq \mathbb{E} \left[ \max_{\nu \in \mathcal{V}}  \left| 1- \frac{p(\nu | f_{\tau},\beta)}{\pi(\nu)}  \right|
    \left\| f_{\tau}(\boldsymbol{X}^{(I)}) \right\|_2 \right]    \quad \quad  \quad   (\star)
\end{aligned}
\end{equation}
In the last inequality, we use the fact that  the max eigenvalues of 
$\frac{1}{N}k_\nu\left(\boldsymbol{X}^{(I)}, \boldsymbol{X}^{(I)}\right)$  is 1. 
Then by lemma F.2 in \cite{gdbt}, we can bound $  \max_{\nu \in \mathcal{V}} 
 \left| 1- \frac{p(\nu | f_{\tau},\beta)}{\pi(\nu)}  \right|$
as follows
\begin{equation}
    \begin{aligned}  \max_{\nu \in \mathcal{V}} 
    \left| 1- \frac{p(\nu | f_{\tau},\beta)}{\pi(\nu)}  \right| &\leq 
    \max \left\{ 1-e^{-\frac{2mV(f_\tau)}{\beta}}, 
    e^{\frac{2mV(f_\tau)}{\beta}} -1 \right\} \\
    &\leq  C V(f_\tau) 
\end{aligned}
\end{equation}
the second inequality is because $V(f_\tau) \leq \frac{R^2}{2N}$ 
according to lemma G.20 in \cite{gdbt},
we have $\|f_{\tau} - f_*\|_{\mathbb{R}^N} \leq \| f_*\|_{\mathbb{R}^N}$.
We then can find  $C V(f_\tau)$ to bound 
both the exponential terms. It is easy to see this from graphs. The constant $C$ here can be
\begin{equation}
    \frac{2N}{R^2} \left( 
e^{\frac{mR^2}{N\beta}} - 1
\right) = \frac{2N}{R^2} \left( 
M_{\beta} - 1
\right).
\end{equation}

Then by Corollary G.21 in \cite{gdbt}, we have 
$\|f_{\tau} \|_{\mathbb{R}^N} \leq 2\| f_*\|_{\mathbb{R}^N}$, so 
we can bound $(\star)$ as 
\begin{equation}
    \begin{aligned}
   \mathbb{E}\left[  \max_{\nu \in \mathcal{V}}  \left| 1- \frac{p(\nu | f_{\tau},\beta)}{\pi(\nu)}  \right|
    \left\| f_{\tau}(\boldsymbol{X}^{(I)}) \right\|_2 \right] & \leq 2 C R \mathbb{E} V(f_{\tau}) \leq
    2R(M_{\beta} - 1) e^{-\frac{\tau\epsilon}{2M_{\beta}N}}.
\end{aligned}
\end{equation}


For the second term in (\ref{treq1}) involving  $e^{(I)}$, by assumption 4.4, we can bound it similarly as:
\begin{equation}
    \| \mathbb{E}[ (K^{(I)} - K^{(I)}_{\tau})] e^{(I)}\|_2 \leq C_0 (M_{\beta} - 1) e^{-\frac{\tau\epsilon}{2M_{\beta}N}}.
\end{equation}

Combining all of this, we can bound the difference term 
$ D_{\tau }$
as
\begin{equation}
    D_{\tau } \leq \frac{2\epsilon}{\sqrt{N}} \sum_{i=0}^{\tau -1} (C_0 +2R)(M_{\beta} - 1 ) e^{-\frac{i\epsilon}{2M_{\beta}N}} = \frac{2\epsilon}{\sqrt{N}} (C_0 +2R)(M_{\beta} - 1 )  \frac{1-e^{-\frac{\tau \epsilon}{2M_{\beta}N}}}{1-   e^{-\frac{\epsilon}{2M_{\beta}N}}}.
\end{equation}
Let $C^{\prime} = 2R + C_0$, and square the RHS of the above equation, we get the same difference term as stated in the corollary.
Note that
for the stopping threshold for GBDT, we need to use the spectrum of $\mathbb{E}K^{(I)}$ instead of $K^{(I)}$ as illustrated in the proof. 

\end{proof}

\subsection{Proof of Lemma \ref{treedtpop}}
\begin{proof}
    For the GBDT variation of Lemma \ref{indupopf}, we also need to take expectation w.r.t. the randomness of the algorithm. So we need to add $\mathbb{E}_u$ for the $f_{\tau}^{\delta}$. 

The next thing to do is prove the analog bound of $\|\delta(\cdot)\|_2$. For this, we just use the similar proof strategy as in the proof of Lemma \ref{treedt}. We have the following bound 
\begin{equation}
    \begin{aligned}
    \|\delta_{\tau}(\cdot)\|_2 
    &= \left\| \frac{1}{N}\sum_{\nu} k_{\nu}(\cdot,\boldsymbol{X}^{(I)}) \pi(\nu) \mathbb{E}\left[\left(1- \frac{p(\nu | f_{\tau},\beta) }{\pi(\nu)}\right) f_{\tau}(\boldsymbol{X}^{(I)})\right] \right\|_2 \\
    &\leq \sum_{\nu} \pi(\nu)
    \mathbb{E}
    \left\| \frac{1}{N} k_{\nu}(\cdot,\boldsymbol{X}^{(I)}) \left[\left(1- \frac{p(\nu | f_{\tau},\beta) }{\pi(\nu)}\right) f_{\tau}(\boldsymbol{X}^{(I)})\right] \right\|_2 \\
    &=  \sum_{\nu} \pi(\nu)
    \mathbb{E}
    \left\|  
    \left(1- \frac{p(\nu | f_{\tau},\beta) }{\pi(\nu)}\right)
    \sum_i \frac{1}{N}
    k_{\nu}(\cdot,\mathbf{X}_i^{(I)}) f_{\tau}(\mathbf{X}_i^{(I)})\right\|_2
    \\
    &\leq \sqrt{N} \mathbb{E} \left[ \max_{\nu \in \mathcal{V}}  \left| 1- \frac{p(\nu | f_{\tau},\beta)}{\pi(\nu)}  \right|
    \left\| f_{\tau}(\boldsymbol{X}^{(I)}) \right\|_2 \right]    \quad \quad  \quad   (\star)
\end{aligned}
\end{equation}
where in the last inequality we use the fact that $0 \leq \frac{1}{N}
    k_{\nu}(\cdot,\mathbf{X}_i^{(I)}) \leq 1$. 
Then similar to the argument in the proof of Lemma \ref{treedt}, we have
\begin{equation}
    \| \delta_{\tau}(\cdot)\|_2 \leq 
     (C_0 + 2R) (M_{\beta} - 1) e^{-\frac{\tau\epsilon}{2M_{\beta}N}}.
     \label{treeeqida}
\end{equation}
For notation simplicity, we denote the RHS of (\ref{treeeqida}) as $err_\tau$. 
Then according to the recursion updating, we have 
\begin{equation}
     \mathbb{E}_uf_{\tau + 1}^{\delta}(\cdot) = \mathbb{E}_uf_{\tau }^{\delta}(\cdot) + \epsilon \mathbb{E}_u\delta_{\tau}(\cdot) - \frac{\epsilon}{N}\mathbb{E}_u \mathbb{K}^{(I)} (\cdot, \boldsymbol{X}^{(I)}) f_{\tau }^{\delta}(\boldsymbol{X}^{(I)}).
\end{equation}
We just need to bound each term separately. 
For the last term, note that 
$f_{\tau }^{\delta}(\boldsymbol{X}^{(I)})$ is simply the different term $D_{\tau}^2$, so we have the following bound according to the proof of Lemma \ref{treedt}
\begin{equation}
    \|\frac{1}{N}\mathbb{E}_u \mathbb{K}^{(I)} (\cdot, \boldsymbol{X}^{(I)}) f_{\tau }^{\delta}(\boldsymbol{X}^{(I)})\|_2 \leq 
\sqrt{N} \sum_{i = 0}^{\tau - 1} err_i
\end{equation}
where we also apply the same claim above.

Then use proof by induction, it is not hard to show that
\begin{equation}
   \| \mathbb{E}_uf_{\tau }^{\delta}(\cdot)\|_2 \leq 
   \epsilon \sqrt{N}
   \sum_{i=0}^{\tau - 1}(\tau - i ) err_i \leq 
   \epsilon\sqrt{N}
(C_0 +2R)(M_{\beta} - 1 )  \frac{\tau }{1-   e^{-\frac{\epsilon}{2M_{\beta}N}}}.
\end{equation}
\end{proof}


\section{Implementation and experiments details} \label{exp}

\subsection{Practical guidance} \label{dicprac}
As stated in Section 5 of the main text, the proposed optimal stopping rule is not feasible in practice. In addition to the method described in the main text, we could also use cross-validation. However, this would increase the computational cost of our algorithm without necessarily ensuring better fitting results. It is important to note that the primary computational expense comes from the gradient calculations.

An alternative approach is to perform a single update with a larger step size. Cross-validation could also be used to select an appropriate step size for this one-iteration update. In this scenario, the method becomes similar to kernel ridge regression. According to \cite{esnonpara}, kernel ridge regression and early stopping are equivalent, with the running sum $\eta_{\tau}$ functioning similarly to the regularization parameter $\lambda$ in kernel ridge regression. In the kernel ridge regression framework of lazy training, \cite{lazyvi} uses cross-validation to select an optimal $\lambda$. Hence, we can view early stopping as a comparable regularization technique.

\subsection{Implementation} 
We implemented the neural networks and related experiments using PyTorch Lightning \citep{pytorchlightning}. Our neural network structure follows the design described in \cite{jacot}, which includes a scaling factor of $\frac{1}{\sqrt{m}}$. For the GBDT experiments, we utilized the standard CatBoost library \citep{catboost}. The experiments were conducted on a high-performance computing (HPC) system equipped with an Intel Xeon E5-2680 v3 @ 2.50GHz CPU and an NVIDIA RTX 2080Ti GPU. The running time for the neural networks was measured using the NVIDIA RTX 2080Ti, while the GBDT experiments were timed using the CPU.


\subsection{Verifying theoretical bounds}
The features for the neural networks are generated as 
\(\boldsymbol{X} \sim \mathcal{N}(0, I_{3 \times 3})\),
while the features for the GBDT are drawn from a discrete uniform distribution,
\(X_i \sim \text{Uniform}(i-1, i+2)\).
We use a discrete uniform distribution for GBDT because, if we were to use a continuous distribution, 
the CatBoost library would partition the feature space into too many bins.
This would significantly increase the computational cost when calculating the exact empirical kernel matrix.
By using a discrete uniform distribution, the number of bins is limited to a manageable size. And we use $\beta_1 = 3$ in the construction of  $f_0 := f(\boldsymbol{X}^{(1)}) + \beta_1 X_1$ defined in Section 5.1 of the main text.

The reason why we use independent features in this simulation is because if $X_1$ is correlated with some other features, 
when we drop $X_1$, $f_{0,-1}$ would not simply be $f$ as constructed in Section 5.1 in the main text, 
and it is complex to derive the true $f_{0,-1}$ in this setting.

As for the use of shallow networks and GBDT, in neural networks, the calculation of the Neural Tangent Kernel (NTK) is implemented recursively. 
For deeper networks, this significantly increases computational cost. 
In the case of GBDT, by the definition of the weak learner's kernel in equation (\ref{wekernel}), deeper tree structures would lead to an increased number of trees. 
To compute the empirical kernel matrix, we would need to perform calculations for each possible tree, making the implementation for a tree of depth 2 the simplest.

To compute the constant $C_{\mathcal{H}}$ in the stopping rule, 
we assume that $f_N^c, f_{0,-I} \in \text{span} \left\{ 
\mathbb{K}^{(I)}(\cdot, \mathbf{X}_i^{(I)}), i = 1,\dots, N
\right\}$. When samples are large enough, this is fine. Note that this is simply an approximation. Then we can write 
\begin{equation}
   f_N^c(\cdot) - f_{0,-I}(\cdot)  = \sum_{i=1}^N \alpha_i \mathbb{K}^{(I)}(\cdot, \mathbf{X}_i^{(I)}).
\end{equation}
Plug into the data, we can solve $\boldsymbol{\alpha}$ as
\begin{equation}
    \boldsymbol{\alpha} = \frac{1}{N}{K^{(I)}}^{-1} \left[f_N^c(\boldsymbol{X}^{(I)} )-f_{0,-I}(\boldsymbol{X}^{(I)}) \right]
\end{equation}
where we use pesudo-inverse if $K^{(I)}$ is not ivnertible. 
By property of the RKHS, the Hilbert norm of $f_N^c(\cdot) - f_{0,-I}(\cdot)$ can be computed using the emprical kernel matrix as: 
\begin{equation}
    C_{\mathcal{H}}^2 = N \boldsymbol{\alpha}^T K^{(I)}\boldsymbol{\alpha} = \frac{1}{N}
    \boldsymbol{c}^T {K^{(I)}}^{-1} 
     \boldsymbol{c}
\end{equation}
where $ \boldsymbol{c} := f_N^c(\boldsymbol{X}^{(I)}) - f_{0,-I}(\boldsymbol{X}^{(I)})$.

Even though we provide the optimal stopping time \(\widehat{T}_{\text{op}}\) in the main text, 
the upper bound for the difference term \(g(\tau)\) is still quite complex.
As a result, computing the exact \(\widehat{T}_{\text{op}}\) is infeasible.
This is why we use \(\widehat{T}_{\max}\) in this simulation.
As shown in the proof, \(\widehat{T}_{\max}\) also achieves the desired convergence rate of \(\mathcal{O} \left( \frac{1}{\sqrt{N}} \right)\).
The advantage of \(\widehat{T}_{\max}\) is that it can be calculated exactly using the  straightforward formula:
\begin{equation}
    \widehat{T}_{\max }:=\arg \min \left\{\tau \in \mathbb{N} \left\lvert\, \widehat{\mathcal{R}}_K\left(1 / \sqrt{\eta_\tau}\right)>\frac{C^2_{\mathcal{H}}}{2 e \sigma \eta_\tau}\right.\right\}-1.
\end{equation}

In this simulation, we made several simplifications to ease the implementation and employed approximations at various steps.
Even under these simplified conditions, and with knowledge of the true data-generating functions, 
computing the theoretical optimal stopping time \(\widehat{T}_{\text{op}}\) remains challenging.
The main goal of this simulation is to empirically validate our theoretical results and provide readers with a clearer understanding of the theoretical framework proposed in this paper.

\subsection{Shapley value estimation} 
 \label{shapsample} 
The running times for estimating all 10 Shapley values using both the neural network and GBDT models are listed in Table \ref{shaptime}.
Our proposed approach is approximately twice as fast as the retrain method in both models.
The running time for the neural network is quite long, largely due to our unoptimized implementation and limited computing infrastructure.
Parallelization could certainly be applied to estimate Shapley values, as the calculation for each value is independent, though we did not implement it here.
The running times should be interpreted in the context of consistent implementation and infrastructure, our proposed method demonstrates significant computational savings compared to the retrain approach.
Given that GBDT runs much faster than neural networks while providing similar estimation results, it is recommended to use GBDT in practice when advanced computing infrastructure and optimization techniques are not available.
\begin{table}[htp!]
    \centering
 \begin{tabular}{l| l l }
\toprule
     Model &   Method  & Time (Mean $\pm$ Std)  \\
     \midrule
     Neural network &  Early stopping  &  $2.98 \pm 0.24$ hours      \\
       &  Retrain &        $5.44 \pm 0.37$ hours       \\
     GBDT &    Early stopping & $56.94 \pm 1.59$ seconds     \\
         &  Retrain &   $154.12 \pm 4.27$ seconds \\
\bottomrule
\end{tabular}
    \caption{Shapley value etimation running time comparison.}
    \label{shaptime}
\end{table}

\subsection{Predicting flue gas emissions}
The feature correlation heat map of the NOx emissions data, used in Section 5.4 of the main text, is depicted in Figure \ref{heatmap}.
The variable importance (VI) estimation results using GBDT are shown in Figure \ref{treevi}. Note that we use decision trees of depth 8 here to enable a better fitting of the data. 
It is evident that the features are highly correlated.
The GBDT estimation results are similar to those from the neural network.
For VI, defined using negative MSE, all values are relatively small.
Regarding Shapley values, the neural network estimates tend to be higher than those from GBDT.
Similar to the neural network, we observe that our early stopping approach closely matches the estimates obtained through re-training, 
while the dropout method tends to overestimate both VI and Shapley values.

\begin{figure}[htp!]
     \centering
 \begin{subfigure}[b]{0.45\textwidth}
         \centering
         \includegraphics[width=1\textwidth]{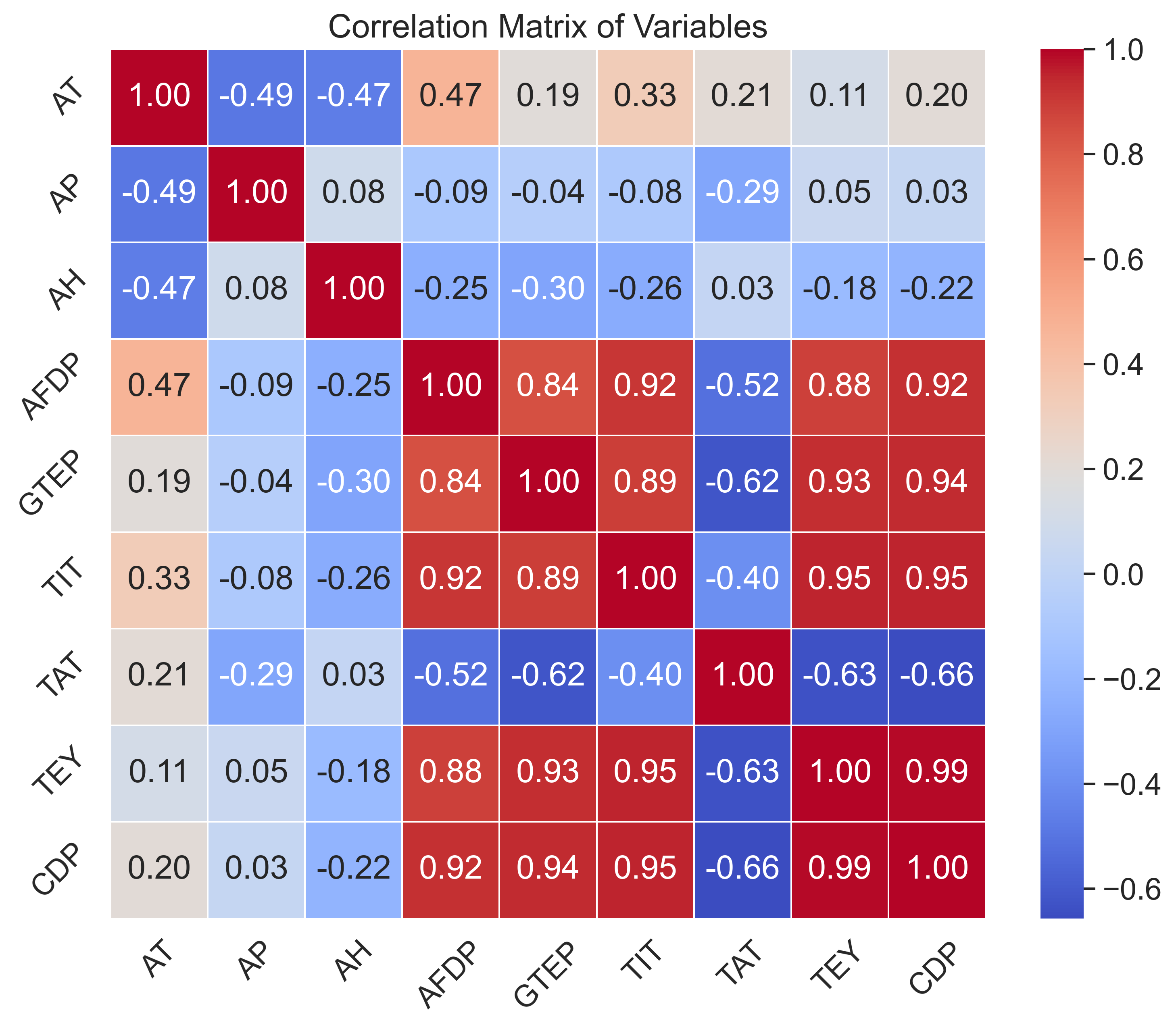}
     \end{subfigure}
\caption{Correlation heat map.}
\label{heatmap}
\end{figure}

\begin{figure}[htp!]
     \centering
    \begin{subfigure}[b]{0.45\textwidth}
         \centering
         \includegraphics[width=1\textwidth]{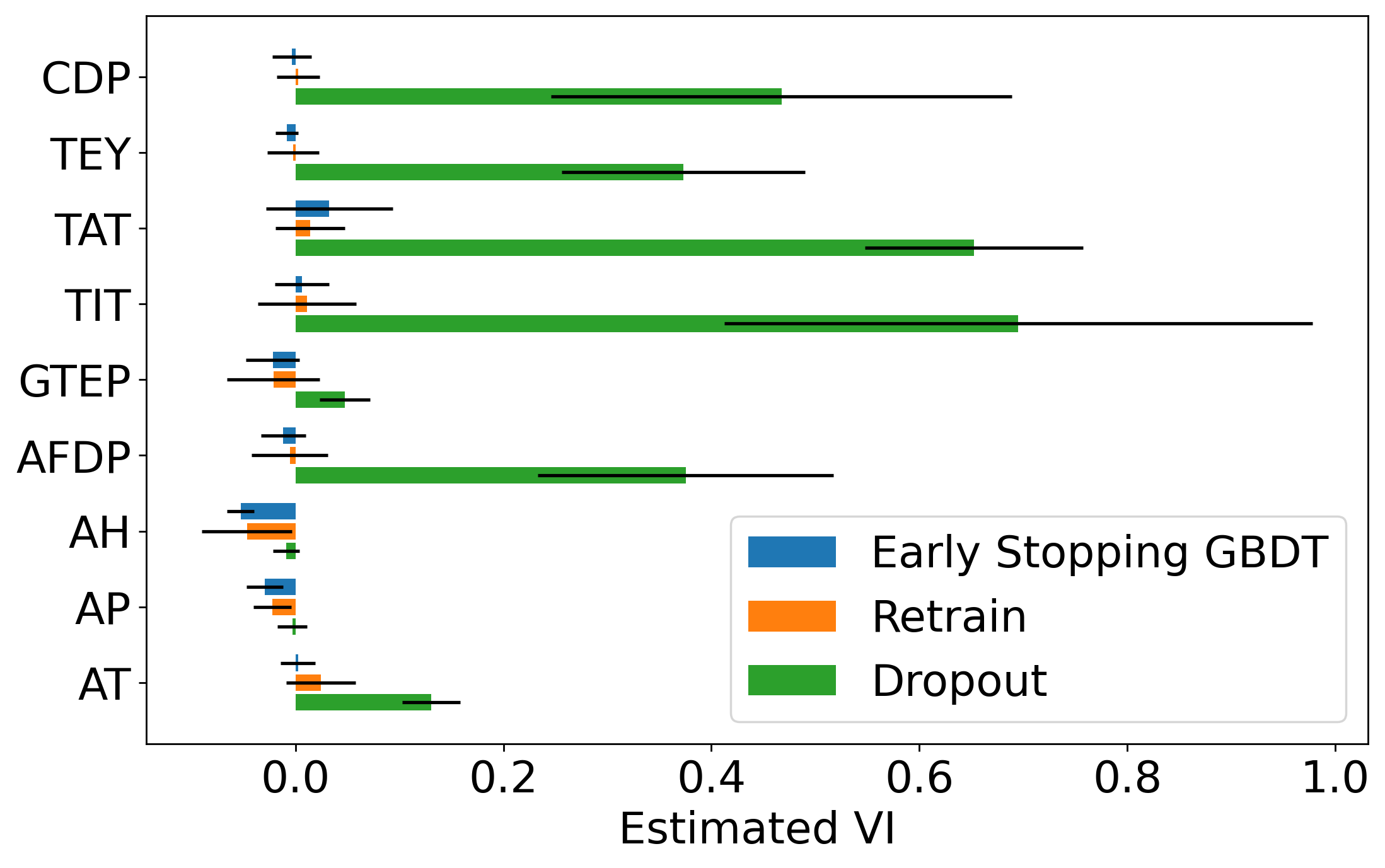}
         \caption{VI estimation}  
     \end{subfigure}
     \begin{subfigure}[b]{0.45\textwidth}
         \centering
         \includegraphics[width=1\textwidth]{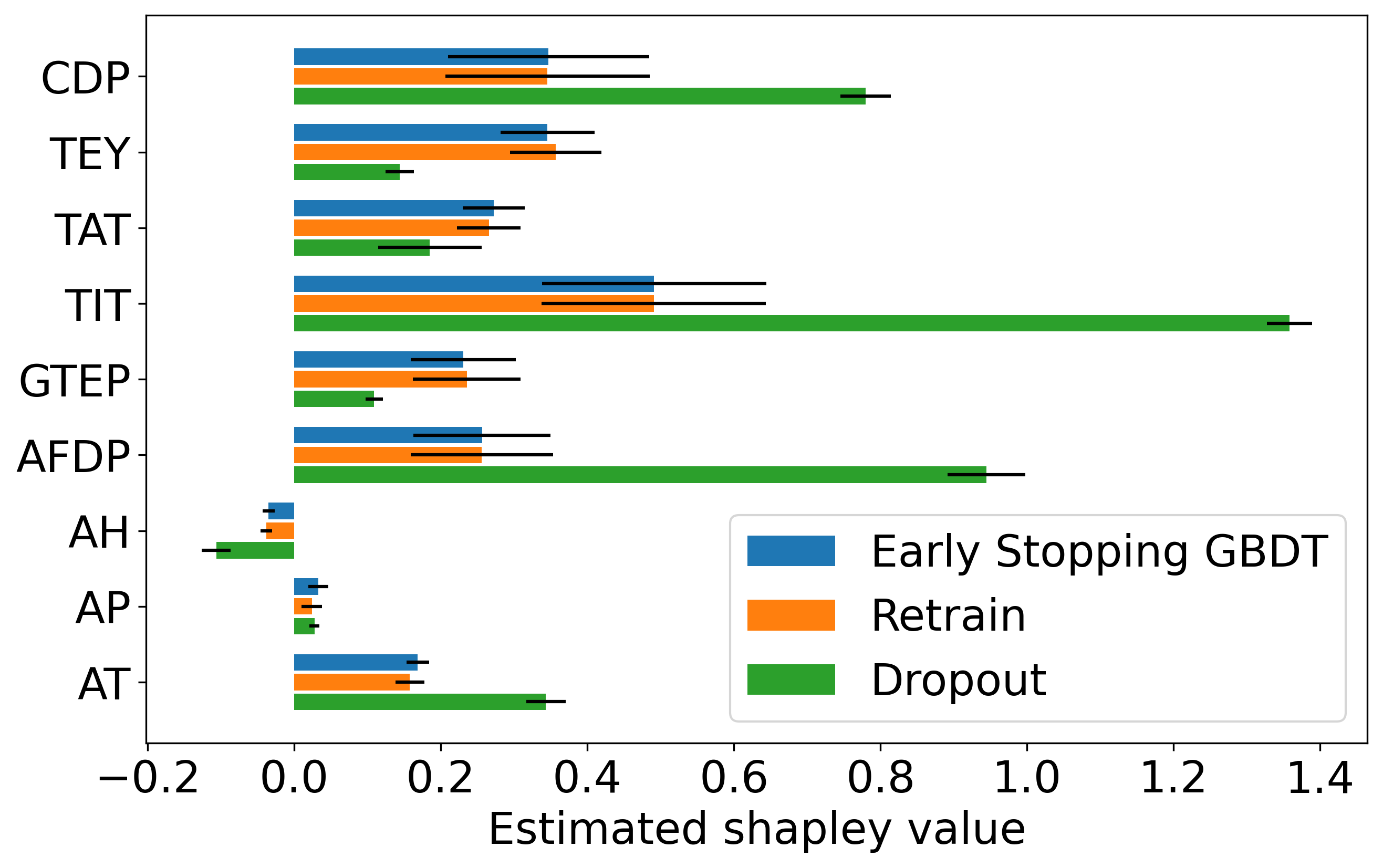}  
         \caption{Shapley value estimation}
     \end{subfigure}
\caption{Flue gas emissions variable importance estimation results using GBDT.}
\label{treevi}
\end{figure}

\end{document}